\author{Onay Urfalioglu and Orhan Arikan}
\newtheorem{theorem}{Theorem}[section]
\newtheorem{lemma}[theorem]{Lemma}
\begin{document}

\title{Symmetry Breaking in Neuroevolution: A Technical Report}



\maketitle

\begin{abstract}
Artificial Neural Networks (ANN) comprise important symmetry properties, which can influence the performance of Monte Carlo methods in Neuroevolution. 
The problem of the symmetries is also known as the competing conventions problem or simply as the permutation problem. In the literature, symmetries are mainly addressed in Genetic Algoritm based approaches. However, investigations in this direction based on other Evolutionary Algorithms (EA) are rare or missing. Furthermore, there are different and contradictionary reports on the efficacy of symmetry breaking. By using a novel viewpoint, we offer a possible explanation for this issue. As a result, we show that a strategy which is invariant to the global optimum can only be successfull on certain problems, whereas it must fail to improve the global convergence on others. We introduce the \emph{Minimum Global Optimum Proximity} principle as a generalized and adaptive strategy to symmetry breaking, which depends on the location of the global optimum. We apply the proposed principle to Differential Evolution (DE) and Covariance Matrix Adaptation Evolution Strategies (CMA-ES), which are two popular and conceptually different global optimization methods. Using a wide range of feedforward ANN problems, we experimentally illustrate significant improvements in the global search efficiency by the proposed symmetry breaking technique.
\end{abstract}

\section{Introduction}
Artificial Neural Networks (ANN) are general function approximators~\cite{haykin-1998} and can be used to find a functional representation of a data set. Another point of view is that ANN's represent a way of data compression~\cite{Arbib2002}. The compression ratio depends on the number of neurons used in the ANN which encodes the data: the less neurons at the same representation quality, the better the compression. 

Given a problem, there are generally two kinds of optimization tasks for the learning process of ANN's. The first one is to find a network topology, i.e., the optimal number of layers and the optimal number of neurons per layer. The second task is to find the parameters of the network, given a topology. In this paper, we focus on the second task and assume a predefined topology.

The estimation of the ANN-parameters is generally a computationally demanding task~\cite{sirma2001}. The corresponding Maximum-Likelihood derived error function comprises many local optima. Therefore, local search techniques to find an optimal solution generally fail and typically converge to a suboptimal solution~\cite{haykin-1998}. In addition, local search techniques are mainly sequential methods and parallel implementations are limited. On the other hand, global optimization techniques based on Monte Carlo methods such as the Genetic Algorithm (GA)~\cite{Goldberg1989,Michalewicz1994}, Covariance Matrix Adaptation Evolution Strategies (CMA-ES)~\cite{Hansen:1996,Hansen:2003} or Differential Evolution (DE)~\cite{Storn95a,Pri96,Vesterstrom2004} are generally very well parallelizable. Differential Evolution is one of the most popular and robust Monte Carlo global search methods, which outperforms many other evolutionary algorithms on a wide range of problems~\cite{BersiniDLSG96,tusar2007,Xu2007}. DE is successfully used in various engineering problems such as multiprocessor synthesis~\cite{Rae1998}, optimization of radio network designs~\cite{Mendes2006}, training Radial Basis Function networks~\cite{Liu2005}, training multi layer neural networks~\cite{Ilonen2003} and many others~\cite{Chakraborty:2008:ADE}. On the other hand, CMA-ES is a state-of-the-art evolutionary algorithm, which is also used for ANN-learning~\cite{Siebel2007,EANT2-CMAES-APP-SiebBoet09,Gomez2008} and other engineering tasks~\cite{Shir2006,jiang2008,Siebel2008}.

Due to inherent symmetries in the parametric representation of ANN's, there are multiple \emph{global} optima in the parameter space. The multiple global optima result from point symmetries and permutation symmetries~\cite{Sussmann1992,Thierens96}. In the literature, this problem is also known as the \emph{competing conventions problem}, or simply the \emph{permutation problem}. In~\cite{Thierens1993,Thierens96}, significant improvements are reported by different approaches to symmetry breaking for GA's. However, in both publications, the improvement is shown using only one single test-case, respectively. On the other hand, in~\cite{Hancock1992,Haflidason2009} contradictionary results are presented, where the effect of removing these symmetries on GA's is reported to be minimal and negligable, and even leading to reduced performance.

Furthermore, crossover operators used in GA's are reported to be a source of the problems caused by symmetries~\cite{Garcia-ANN-Crossover2006}. Therefore, some researchers disable crossover or apply EA's which do not have crossover at all~\cite{Yao199883}.

To our best knowledge, there are no reports on the impact of the ANN-symmetries regarding the performance of the DE and CMA-ES methods.
In this paper, we show that the performance of DE and CMA-ES are highly sensitive to the presence of multiple global optima, and that symmetries are also an issue on the performance of EA's without crossover operators. We show that there are infinitely many ways of symmetry breaking, which differ in the way they partititon the parameter space. Furthermore, we argue that an effective way of partitioning should depend on the location of the global optimum and its symmetric replicas. Therefore, we derive a symmetry breaking operator based on considerations about the partitioning of the ANN-parameter space, which is optimal according to a \emph{Minimum Global Optimum Proximity} condition. By theoretical considerations and numerous experimental studies on offline supervised learning problems, we show that typical approaches to symmetry breaking, which are invariant to the global optimum, may lead to superior or inferior results, depending on the ANN-problem. \\ 
On the other hand, we show that the proposed global optimum variant approach for symmetry breaking leads to consistent and significant improvements in the estimation of ANN-parameters. \\

The paper is organized as follows. In the following Section, we briefly review Artificial Feedforward Neural Networks (ANN). Section~\ref{sec:ANN-symmetries} defines the term 'symmetry' and introduces the types of symmetries found in the optimization of ANN-parameters. In Section~\ref{seq:go-inv-sb}, we discuss existing approaches to symmetry breaking. In this Section, we also reformulate the rules applied by existing approaches to prepare a more general view to the topic. In Section~\ref{sec:mgopp}, we introduce the 'Minimum global optimum proximity' principle and propose symmetry breaking methods based on this principle. In Section~\ref{sec:experiments}, we present the conducted experiments and obtained results, followed by the Section of Conclusions, where the main contributions are emphasized.

\section{Brief review of Artificial Feedforward Neural Networks}\label{sec:ANN-brief}
Artificial (Feedforward) Neural Networks (ANN) are used for approximation of functions $f: \mathbb{R}^d \rightarrow \mathbb{R}^q$. ANN's typically have multiple layers of artificial neurons. Assuming that an ANN has $L$ layers, the first and the last layer are called as the input and the output layer, respectively. Remaining $L-2$ layers are called as hidden layers.

For the $n$-th neuron $(l,n)$ in layer $l$, we denote a parameter vector by 
\begin{equation}
\bm{\eta}^l_{n}=(\bm{w}^l_{n},\tau^l_{n}),\ n=1,...,N_l,
\end{equation}  
where $\bm{w}^l_{n}$ is the weight vector of dimension equal to the number of inputs available to the neuron and $\tau^l_{n}$ is the shift scalar. The output of a tanh-type sigmoid neuron $(l,n)$ is given by 
\begin{equation}\label{eq:tanh}
 x^l_{n}=\tanh({\bm{w}^l_{n}}^{\top}\bm{x}^{l-1}+\tau^l_{n}),
\end{equation} 
where $\bm{x}^l=(x^l_{1}, \ldots, x^l_{N_l})$ is the output vector of layer $l$. After all hidden layers $l=2,3,...,L-1$ are evaluated, the output layer component $\hat{y}_n$ of the output vector $\hat{\bm{y}}$ is typically obtained by the following two alternative ways:
\begin{eqnarray}
\hat{y}_n={\bm{w}^L_{n}}^{\top}\bm{x}^{L-1},\ && n=1,...,q \text{ (regression)},\label{eq:regr-lin-ouput-layer}\\
\hat{y}_n=\tanh({\bm{w}^L_{n}}^{\top}\bm{x}^{L-1}),\ && n=1,...,q \text{ (classification).}\label{eq:class-lin-ouput-layer}
\end{eqnarray}
We denote the parameter vector of all neurons in a layer $l$ by $\bm{\lambda}^l$, where
\begin{equation}
 \bm{\lambda}^l=(\bm{\eta}^l_{1},\ldots,\bm{\eta}^l_{N_l}).
\end{equation} 
The vector of all the parameters in the network is given by 
\begin{equation}
\bm{\theta}_a=(\bm{\lambda}^2,\ldots,\bm{\lambda}^{L-1},\bm{w}^L_{1},\ldots,\bm{w}^L_{q}),
\end{equation} 
where $\bm{w}_n^L=(w^L_{n,1},\ldots,w^L_{n,N_{L-1}}),\ n=1,...,q$, is the vector of the output layer weights for output $\hat{y}_n$. The function defined by the network is denoted by
\begin{equation}
 \hat{\bm{y}}=\Omega(\bm{\theta}_a;\bm{x}),
\end{equation} 
where $\bm{x}$ is the input vector, which is notationwise equal to the output of the input layer, so that $\bm{x}^1\equiv\bm{x}$.
\\
Assuming additive normal i.i.d. noise on the available data $(\bm{x}_k,\bm{y}_k), k=1,...,K$, the ML-estimate $\hat{\bm{\theta}}_a$ of the parameters $\bm{\theta}_a$ can be obtained by the minimizer to the following least squares optimization problem:
\begin{equation}\label{eq:ANN-argmin}
 \hat{\bm{\theta}}_a=\arg\min_{\bm{\theta}_a}\sum_{k=1}^K(\bm{y}_k-\Omega(\bm{\theta}_a;\bm{x}_k))^{\top}(\bm{y}_k-\Omega(\bm{\theta}_a;\bm{x}_k)).
\end{equation} 
For regression problems, the output layer is linear as shown in Eqn.~\eqref{eq:regr-lin-ouput-layer}. Thus, the corresponding weights $\bm{w}^{L}$ can be determined by a least squares method, as described in~\cite{Masters93}, which we adopt in this paper. This has the advantage that global search is applied only to the non-linear part of the parameter space, which generally speeds up convergence. For classification problems, we assume that 
an output vector $\bm{y}$ of a data-sample designating class $i$ has the following format
\begin{equation}
 y_j=\left\{
\begin{matrix}
1 & \mathrm{for}\ j=i\\
0 & \mathrm{else.}             
\end{matrix}\right.
\end{equation} 

Although the output layer is non-linear as shown in Eqn.~\eqref{eq:class-lin-ouput-layer}, corresponding weights $\bm{w}_n^L$ can still be determined linearly in the training phase. For this, the output vectors of the training data are rescaled by factor 20, such that $\tanh(20)\approx 1$ and $\tanh(0)=0$. The weights of the output layer are determined by a least squares method using the rescaled data. Given the remaining parameters, Eqn.~\eqref{eq:ANN-argmin} is applied by using the non-rescaled data.

Consequently, the parameter vector $\bm{\theta}$ for the global optimization can be reduced to
\begin{equation}\label{go-pars}
 \bm{\theta}=(\bm{\lambda}^2,\ldots,\bm{\lambda}^{L-1}).
\end{equation} 
The important problem of how to choose the net topology is not considered in this paper. For a given net-topology, we focus on the effect of  symmetry breaking on the efficiency of the optimization of the parameters in~(\ref{go-pars}). In the following Section, we investigate the symmetries in the ANN-parameter space.

\section{Symmetries in ANN's}\label{sec:ANN-symmetries}
A \emph{symmetry} is an operator $\Phi$ which does not change the output of an ANN when applied to the parameter vector $\bm{\theta}$:
\begin{equation}\label{symm-def}
 \Omega(\bm{\theta};\bm{x})=\Omega(\Phi(\bm{\theta});\bm{x}), \ \forall \bm{\theta},\bm{x}.
\end{equation} 
Non reducable ANN's comprise two types of symmetries~\cite{Sussmann1992}. The first type is a \emph{point symmetry} on the neuron parameter level, since
\begin{equation}\label{eq:point-symm-eq}
w\tanh(x)=-w\tanh(-x),\ \forall w,x. 
\end{equation}
The following definition of a point symmetry operator $O^l_{n}$
\begin{equation}\label{eq:point-symm}
 O^l_{n}(\bm{\theta}): \left\{\begin{array}{lcl}
\bm{\eta}^l_{n} & \rightarrow & -\bm{\eta}^l_{n}\\
w^{l+1}_{i,n} & \rightarrow & -w^{l+1}_{i,n},\ i=1,\ldots,N_{l+1}                          
\end{array}\right.
\end{equation} 
changes the sign of the parameters of neuron $(l,n)$ and the $n$-th weight component $w^{l+1}_{i,n}$ of all neurons $(l+1,i)$ in the following layer $l+1$. It satisfies the symmetry condition because of Eqn.~\eqref{eq:point-symm-eq}. In Fig.~\ref{fig:point-symm-example}, an example for the application of $O^2_{1}$ 
is shown. For each layer $l$, the point symmetry yields $2^{N_l}$ symmetric replicas of the parameter vector $\bm{\theta}$.
\begin{figure}[h]
   \begin{center}
      \scalebox{0.5}{\input{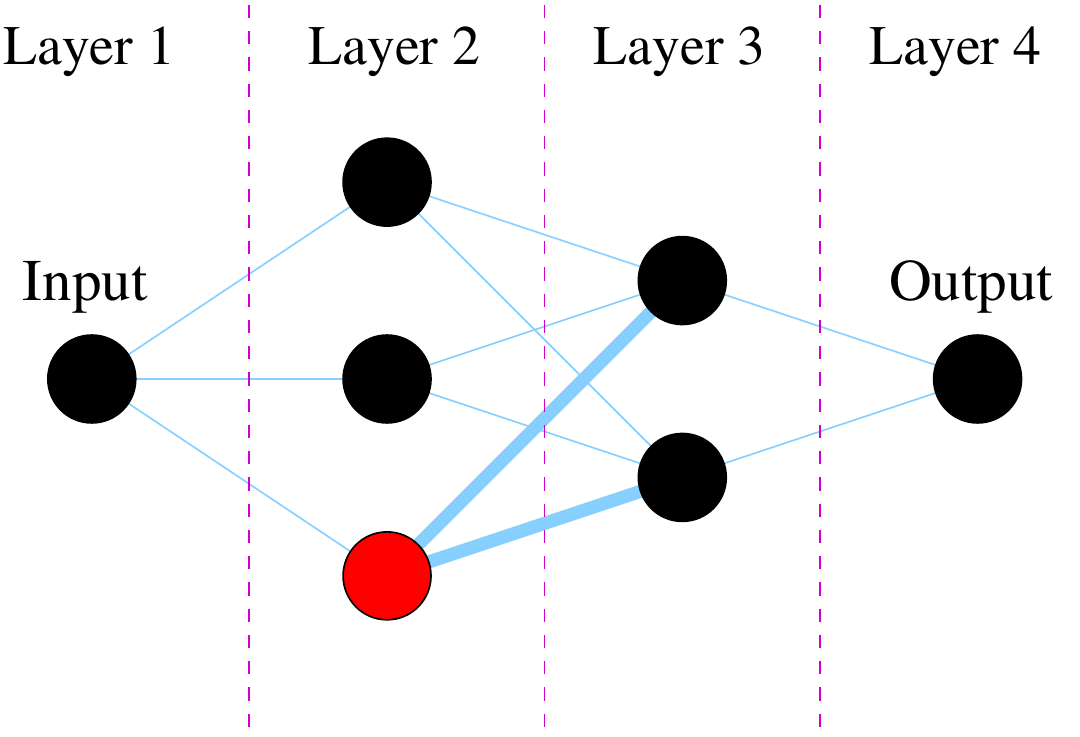tex_t}}
      \caption{\label{fig:point-symm-example} \it Application of the point symmetry operator $O^2_{1}$, which changes the signs of 
$\bm{\eta}^2_1$-parameters in layer two and $w^3_{i,1}$-parameters in layer three, respectively.}
   \end{center} 
\end{figure}

The second type of symmetry is a \emph{permutation symmetry} by the neuron parameters $\bm{\eta}$ and the corresponding weight 
parameters in the next layer. A permutation operator $P^l_{j,k}$ defined by 
\begin{equation}\label{eq:perm-symm}
 P^l_{j,k}(\bm{\theta}):\left\{\begin{array}{lcl}
\bm{\eta}^l_{j}\leftrightarrow\bm{\eta}^l_{k}\\
w^{l+1}_{i,j}\leftrightarrow w^{l+1}_{i,k},\ i=1,\ldots,N_{l+1}
\end{array}\right.
\end{equation} 
leaves the output invariant. Note that $P^l_{j,k}=P^l_{k,j}$. In Fig.~\ref{fig:perm-symm-example}, the application of $P^2_{1,2}=P^2_{2,1}$ is illustrated. In each layer $l$, 
there are $N_l!$ symmetric replicas of the parameter vector $\bm{\theta}$ due to permutation symmetries. Combining both symmetries, the total count of symmetric replicas per layer $l$ is $2^{N_l}N_l!$. Another important property is that the length of the vector $\bm{\theta}$ is invariant under such symmetry operators,
\begin{equation}\label{eq:length-invar}
||\Phi(\bm{\theta})||=||\bm{\theta}||,\ \forall \bm{\theta},
\end{equation}  
since the point symmetry operator only changes the sign of some components of the parameter vector, whereas the permutation symmetry operator only swaps some components. 
\begin{figure}[tb]
   \begin{center}
      \scalebox{0.5}{\input{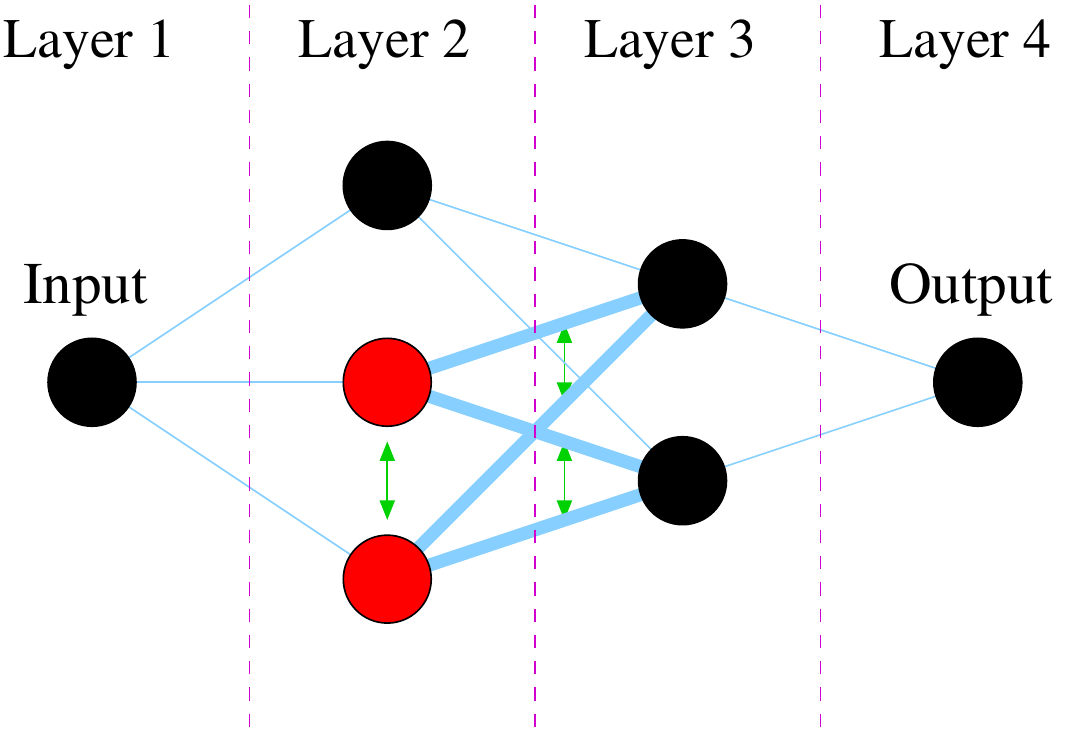tex_t}}
      \caption{\label{fig:perm-symm-example} \it Application of the permutation symmetry operator $P^2_{1,2}=P^2_{2,1}$, which exchanges 
the parameters $\bm{\eta}^2_1\leftrightarrow\bm{\eta}^2_2$ in layer two and the parameters $w^3_{1,1}\leftrightarrow w^3_{1,2}$, $w^3_{2,1}\leftrightarrow w^3_{2,2}$ in layer three.}
   \end{center} 
\end{figure}
\begin{lemma}
 Symmetry operators are linear and orthogonal operators.
\end{lemma}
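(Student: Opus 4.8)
The plan is to reduce everything to the two explicit families of generators, $O^l_{n}$ and $P^l_{j,k}$, together with the fact that linearity and orthogonality are both preserved under composition. First I would write each generator as a matrix acting in the standard coordinates of the parameter vector. The point symmetry operator $O^l_{n}$ of~\eqref{eq:point-symm} simply multiplies a fixed subset of coordinates (those of $\bm{\eta}^l_{n}$ together with the $w^{l+1}_{i,n}$) by $-1$ and fixes all the others; in matrix form this is a diagonal matrix $D$ with entries in $\{+1,-1\}$, which is obviously linear and satisfies $D^{\top}D=D^{2}=I$, hence orthogonal. The permutation operator $P^l_{j,k}$ of~\eqref{eq:perm-symm} interchanges the coordinate block of $\bm{\eta}^l_{j}$ with that of $\bm{\eta}^l_{k}$ and the coordinates $w^{l+1}_{i,j}$ with $w^{l+1}_{i,k}$, fixing the rest; this is exactly the action of a permutation matrix $\Pi$, which is linear and satisfies $\Pi^{\top}\Pi=I$.

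Next I would invoke the structure of the symmetry group: for a non-reducible network every operator $\Phi$ satisfying~\eqref{symm-def} is a finite composition of operators of these two types (this is the classification underlying the discussion above, cf.~\cite{Sussmann1992}). Since a composition of linear maps is linear and a product of orthogonal matrices is again orthogonal, it follows at once that every symmetry operator is linear and orthogonal. As a consistency check, orthogonality immediately re-derives the length invariance~\eqref{eq:length-invar}, via $\|\Phi(\bm{\theta})\|^{2}=\bm{\theta}^{\top}\Phi^{\top}\Phi\,\bm{\theta}=\|\bm{\theta}\|^{2}$.

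The computations here are entirely routine, so the only point that deserves care is the appeal to the classification of symmetries: one must be sure that $O^l_{n}$ and $P^l_{j,k}$ and their compositions exhaust all operators meeting~\eqref{symm-def}. If one prefers to keep the argument self-contained, it suffices to read the lemma as a statement about the operators explicitly introduced in~\eqref{eq:point-symm} and~\eqref{eq:perm-symm} and their compositions, in which case the matrix descriptions above already constitute a complete proof with no appeal to external results.
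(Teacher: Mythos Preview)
Your argument is correct, but it proceeds in the opposite direction from the paper's. The paper first records the length invariance~\eqref{eq:length-invar} as an observation about sign-flips and coordinate swaps, declares linearity trivial, and then deduces orthogonality abstractly from norm preservation via
\[
\|\Phi\bm{\theta}\|^{2}=\|\bm{\theta}\|^{2}\ \forall\bm{\theta}\ \Rightarrow\ \bm{\theta}^{\top}\Phi^{\top}\Phi\bm{\theta}=\bm{\theta}^{\top}\bm{\theta}\ \forall\bm{\theta}\ \Rightarrow\ \Phi^{-1}=\Phi^{\top}.
\]
You instead identify each generator with its explicit matrix (a $\pm1$ diagonal for $O^l_{n}$, a permutation matrix for $P^l_{j,k}$), read off linearity and orthogonality directly, extend to all symmetry operators by closure under composition, and only then recover~\eqref{eq:length-invar} as a corollary. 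Your route is more concrete and self-contained, and it sidesteps the small linear-algebra step the paper tacitly uses (that a symmetric matrix with vanishing quadratic form is zero); the paper's route is shorter on the page because it leverages~\eqref{eq:length-invar} already stated. Your closing remark about whether the generators exhaust all symmetries is a fair caveat, but for the purposes of this paper the lemma is only ever applied to operators built from $O^l_{n}$ and $P^l_{j,k}$, so your self-contained reading is exactly what is needed.
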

\begin{proof}
 The proof for the linearity of these operators is trivial and therefore omitted in this paper. The orthogonality follows from Eqn.~\eqref{eq:length-invar}: 
\begin{align}
 ||\Phi \bm{\theta}||=||\bm{\theta}||, \forall \bm{\theta}  & \Rightarrow ||\Phi \bm{\theta}||^2=||\bm{\theta}||^2, \forall \bm{\theta}  \\
& \Rightarrow (\Phi \bm{\theta})^{\top}(\Phi \bm{\theta}) = \bm{\theta}^{\top}\bm{\theta}, \forall \bm{\theta} \\
& \Rightarrow \bm{\theta}^{\top}\Phi^{\top}\Phi\bm{\theta} = \bm{\theta}^{\top}\bm{\theta}, \forall \bm{\theta} \Rightarrow \Phi^{-1}=\Phi^{\top}\label{eq:orthogonal}.
\end{align} 
\end{proof}
Furthermore, applying the same point symmetry operator two times subsequently does not change the parameter vector, since switching the signs of selected components a second time reverts the first sign-change. The same holds also for the permutation symmetry operator: swapping the selected components a second time reverts the first swapping. Therefore, we can write
\begin{equation}\label{eq:identity-as prod}
O^l_iO^l_i=\mathcal{I},\ P^l_{i,j}P^l_{i,j}=\mathcal{I},
\end{equation} 
where $\mathcal{I}$ is the identity operator. As a result, point symmetry, permutation symmetry as well as joint symmetry operators correspond to rotations and all symmetric replicas of a global optimum lie on a hypersphere.
Since such symmetries multiply the local and global optima count in the parameter space, the ultimate goal of symmetry breaking is to reduce the total number of local optima in the parameter space by avoiding all but one symmetrically equivalent space partitions. 

There are infinitely many ways for symmetry breaking by using the operators $O^l_{n}$ and $P^l_{j,k}$, which depend on the \emph{condition} upon which these operators are applied. As an example, consider a 2-D point symmetry as illustrated in Fig.~\ref{fig:point-symm-r1}. Limiting the search space to the upper half plane ($y>0$) is one possibility to break the symmetry, where only one global optimum remains and the space is separated into two partitions. In this case, the point symmetry operator is to be applied only for $y<0$. Another possibility is to reduce the space to the right half plane ($x>0$). This is realized by applying the point symmetry operator only on the condition $x<0$. By rotating the coordinate system, we obtain infinitely many other ways to separate and reduce the space. As a result, there is a degree of freedom on the choice of a specific condition or separation. We derive similar results also for the permutation symmetry. In Section~\ref{sec:mgopp}, we argue that there is an optimal choice for a specific symmetry breaking condition (separation) based on considerations about the location of the global optimum. We exploit the degree of freedom on the choice of a specific condition by choosing a condition such that the distance of the global optimum to the separating region is maximal. In other words, we demand that the proximity of the global optimum to the separating region is minimal. This way, the influence of neighboring global optima is minimized and the symmetry breaking can be realized most effectively.

A detailed discussion about an optimal separation follows in Section~\ref{sec:mgopp}.
\section{Existing approaches to deal with symmetries}\label{seq:go-inv-sb}
A commonly used method is to reduce the parameter space to one single symmetrically equivalent region, also called partition. To achieve this, the following rules can be applied~\cite{Thierens96}:
\begin{description}
\item[rule-1] The shift parameter of \emph{all} neurons is ensured to be positive by flipping the signs of the parameters when required, for each neuron.
\item[rule-2] In each hidden layer, neurons are sorted according to the shift parameter.
\end{description}
This method and all other similar methods can be realized by applying a chain of the operators $O^l_{n}$ and $P^l_{j,k}$. 
In the following, we show that these rules are suboptimal, and in some cases may even cause inferior performance. We show that rules for symmetry breaking should take the position of the global optimum into account in order to be effective. Therefore, we denote rule-1 and rule-2 as \emph{global optimum invariant}, and rules which depend on the global optimum as \emph{global optimum variant}. 
\subsection{Global optimum invariant point symmetry breaking}\label{sec:subopt-point-symm}
Assuming a point symmetric function $f(x,y)=f(-x,-y),\ \forall x,y$, Fig.~\ref{fig:point-symm-r1} shows two cases where rule-1 is applied such that all $y$-coordinates are forced to be positive. As a consequence, all solution candidates are located in the upper half plane and the parameter space is effecively reduced. There is only one remaining global optimum $\bar{\bm{\eta}}$. In the left plot, the global optima $\bar{\bm{\eta}}$ and $-\bar{\bm{\eta}}$ are relatively far away from the $x$-axis, whereas in the right plot, the global optima are close to the $x$-axis, although they have the same distance to the origin in both plots. In case of the right plot, there exists an 'artificial' local optimum due to the proximity of the hidden global optimum $-\bar{\bm{\eta}}$, where some solution candidates may be attracted to. The main problem is that after applying symmetry breaking, some solution candidates may still be closer to the hidden global optimum $-\bar{\bm{\eta}}$ than to $\bar{\bm{\eta}}$. As a result, the goal of reducing the influence of other global optima is not fully achieved. Furthermore, the introduced artificial local optimum may trap some solution candidates without having a chance to ever reach the corresponding 'hidden' global optimum $-\bar{\bm{\eta}}$. We believe that this is the main reason why an inferior performance is reported by some symmetry breaking approaches. Note that this situation depends on the location of the global optimum, which in turn depends on the problem at hand. Therefore, this issue arises on some problems, whereas on others, a symmetry breaking with increased performance can be achieved by these rules.
\begin{figure}[t]
   \begin{center}
      \scalebox{0.5}{\input{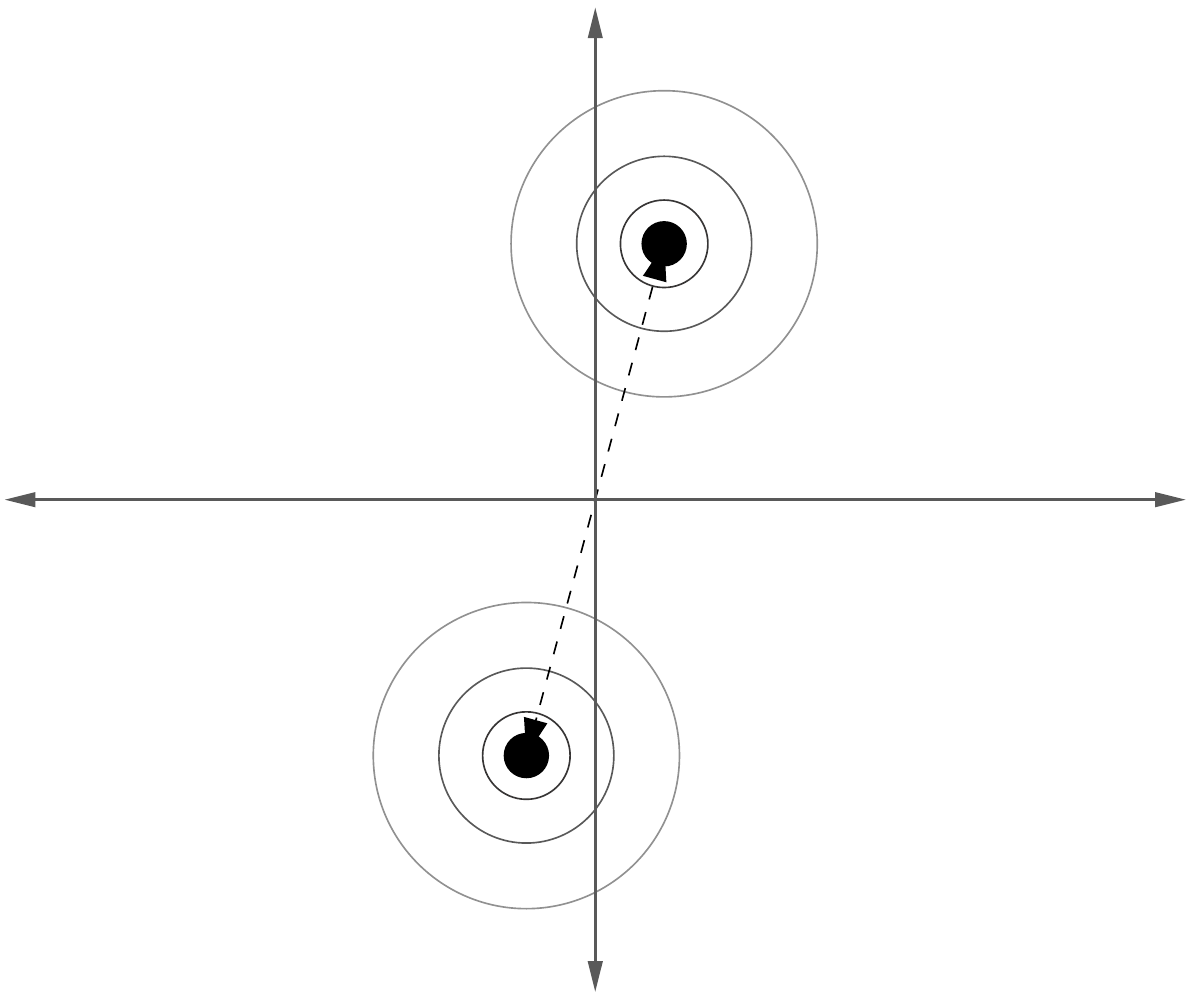tex_t}}
      \scalebox{0.5}{\input{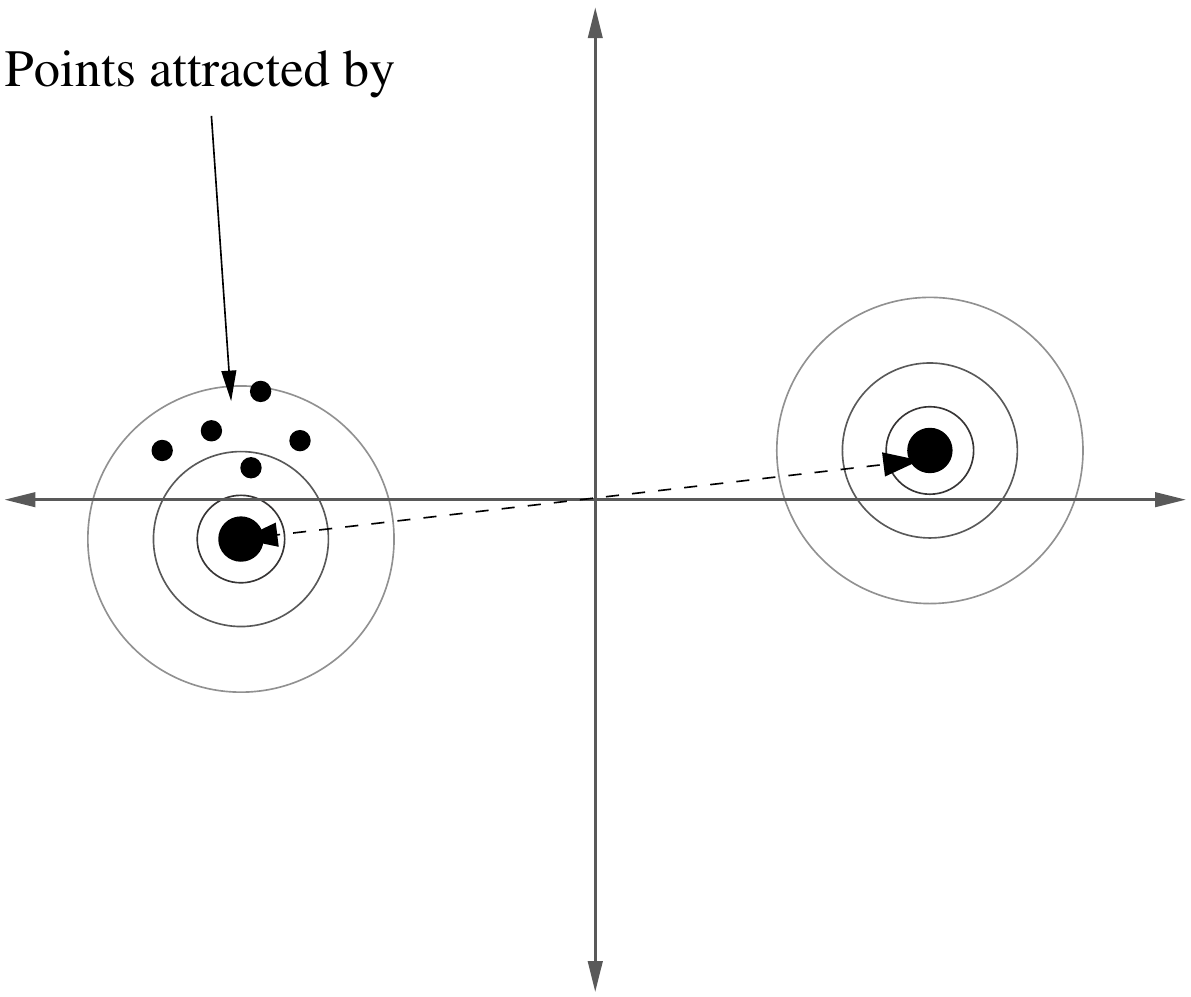tex_t}}
      \caption{\label{fig:point-symm-r1} \it Example for a point symmetry in 2-D, where $f(x,y)=f(-x,-y)\ \forall x,y$. It is assumed that rule-1 is applied to force all solution candidates to be in the upper half plane ($y \geq 0$). As a result, the parameter space is effecively reduced and there is only one remaining global optimum $\bar{\bm{\eta}}$. In the left plot, the global optima $\bar{\bm{\eta}}$ and $-\bar{\bm{\eta}}$ are relatively far away from the $x$-axis, whereas in the right plot, the global optima are close to the $x$-axis, although they have the same distance to the origin in both cases. In case of the right plot, there exists an 'artificial' local optimum due to the proximity of the hidden global optimum $-\bar{\bm{\eta}}$, where some solution candidates may be attracted to. The main problem is that after applying symmetry breaking, some solution candidates may still be closer to the hidden global optimum $-\bar{\bm{\eta}}$ than to $\bar{\bm{\eta}}$.}
   \end{center} 
\end{figure}
In Fig.~\ref{fig:point-symm-r1}, the $x$-axis is the region $\mathcal{S}$ of separation
\begin{equation}
\mathcal{S}=\{\lambda:(\lambda,0)\}.
\end{equation} 
The separating region depends on the rule and divides the parameter space into partitions. As an example, an alternative rule, which would force all $x$ coordintates to be positive, would have the $y$-axis as the separating region. We repeat that the distance of the global optimum to the separating region is crucial for effective symmetry breaking, and that it should be arranged to have this distance as large as possible. Another equivalent goal is to apply symmetry breaking such that no solution candidate is closer to the hidden global optimum than to the global optimum of the selected partition.
\subsection{Global optimum invariant permutation symmetry breaking}\label{sec:subopt-perm-symm}
Similar problems caused by rule-1 also arise by the application of rule-2. This is shown in the following example. We use a 2x2 parameter structure, i.e., two neurons with two parameters $(a_i,b_i)$ per neuron $i$: $\bm{\theta}=(a_1,b_1,a_2,b_2)$. From the permutation symmetry follows that 
\begin{equation}
 f((a_1,b_1,a_2,b_2))=f((a_2,b_2,a_1,b_1)),\ \forall a_1,b_1,a_2,b_2,
\end{equation} 
where $f$ shall be the error function.
Let the global optimum be at $\bar{\bm{\theta}}=(2,1,-2,3)$. There are two possibilities to apply rule-2: sorting by parameter $a$ or sorting by parameter $b$, respectively. The separating region varies for each choice. Choosing to sort by parameter $a$ yields $\mathcal{S}_a$, whereas sorting by parameter $b$ yields $\mathcal{S}_b$:
\begin{equation}
\mathcal{S}_a=\{\alpha,\beta,\lambda:(\lambda,\alpha,\lambda,\beta)\},\ \mathcal{S}_b=\{\alpha,\beta,\lambda:(\alpha,\lambda,\beta,\lambda)\}.
\end{equation}
We show that each separation region has a different distance to the global optimum $\bar{\bm{\theta}}$. The closest point on $\mathcal{S}_a$ to $\bar{\bm{\theta}}$ is at $\lambda=0,\alpha=1,\beta=3$, which yields the distance $\sqrt{8}$. On the other hand, the closest point on $\mathcal{S}_b$ to $\bar{\bm{\theta}}$ is at $\lambda=2,\alpha=2,\beta=-2$, which yields the distance $\sqrt{2}$. In this example, applying rule-2 by ordering the $a$-coordinates results in a better sparation of the partitions. Would the global optimum be at $\bar{\bm{\theta}}=(1,2,3,-2)$, the opposite case would apply. Consequently, similar to rule-1 in the previous Section~\ref{sec:subopt-point-symm}, rule-2 can only be effective on some problems. 

\section{Minimum global optimum proximity principle}~\label{sec:mgopp}
In this Section we propose new methods for symmetry breaking to avoid the problems described in Section~\ref{seq:go-inv-sb}. Here, we assume that the \emph{basin}, or the region of influence of the global optimum is isotropic. Although this assumption does not apply in general, it is introduced to simplify the discussion. Also, this simplification enables us to easily derive theoretically motivated methods, which prove to be very effective in a wide range of problems. In the presentation, we first consider the point symmetry, then the permutation symmetry and finally the general joint symmetry as a combination of both point and permutation symmetries.
\subsection{Minimum global optimum proximity principle for point symmetry}
The differences between possible rules to apply the point symmetry operator arise from the \emph{condition} on which the operator is to be applied. Fig.~\ref{fig:point-symm-go1} shows different rules with corresponding separation regions for breaking a point symmetry in relation to the global optimum.
\begin{figure}[h]
   \begin{center}
      \scalebox{0.37}{\input{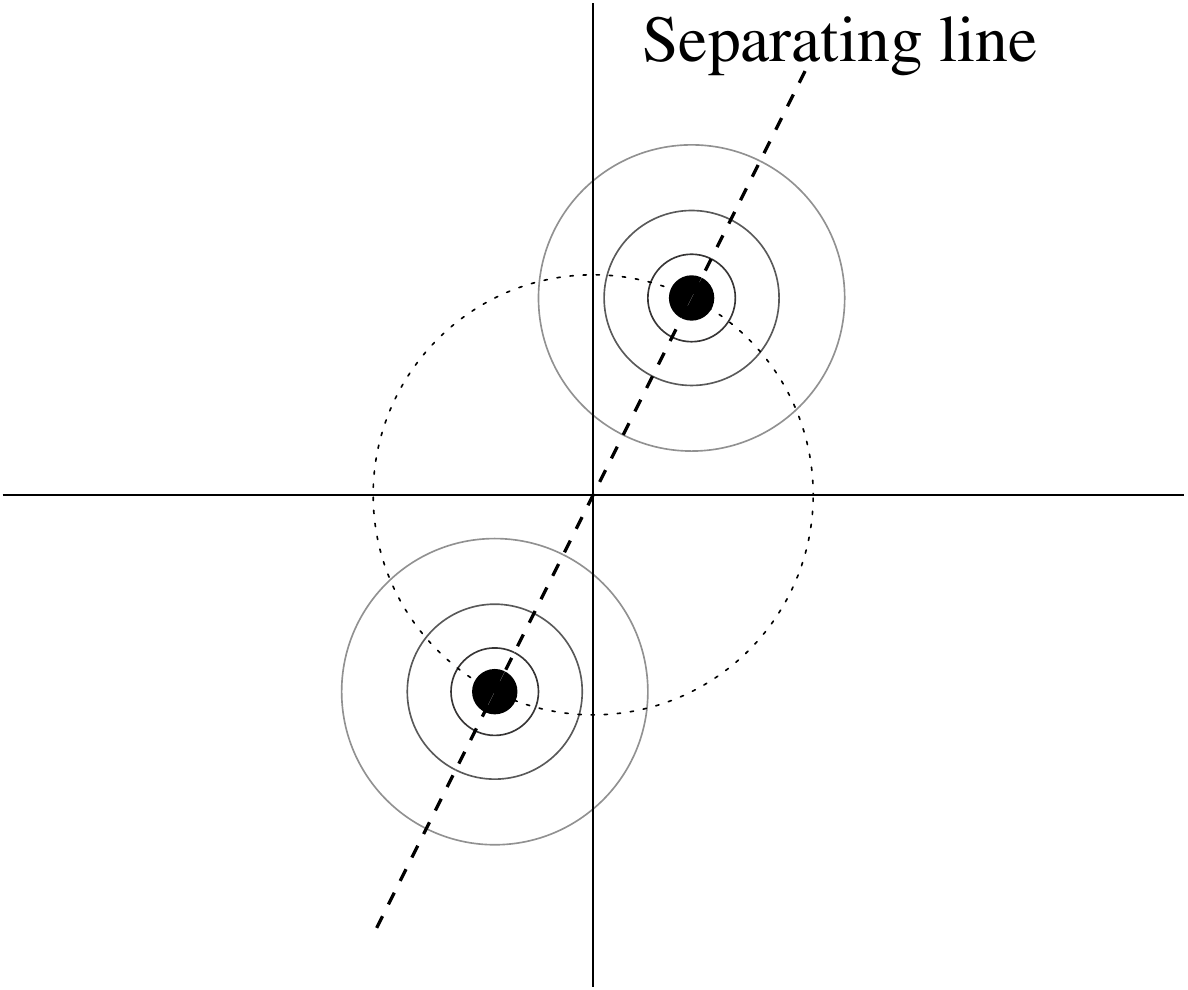tex_t}}
      \scalebox{0.37}{\input{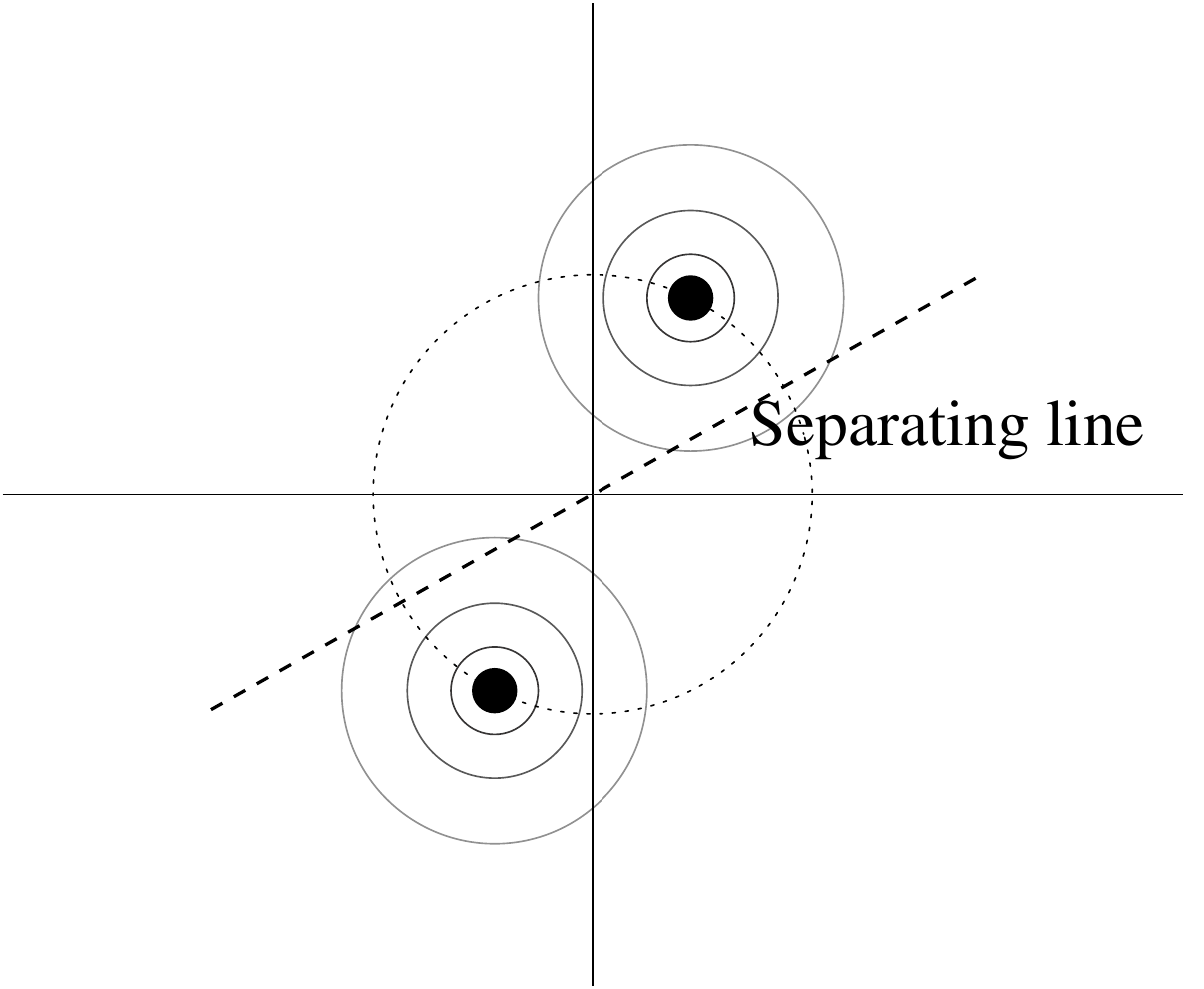tex_t}}
      \scalebox{0.37}{\input{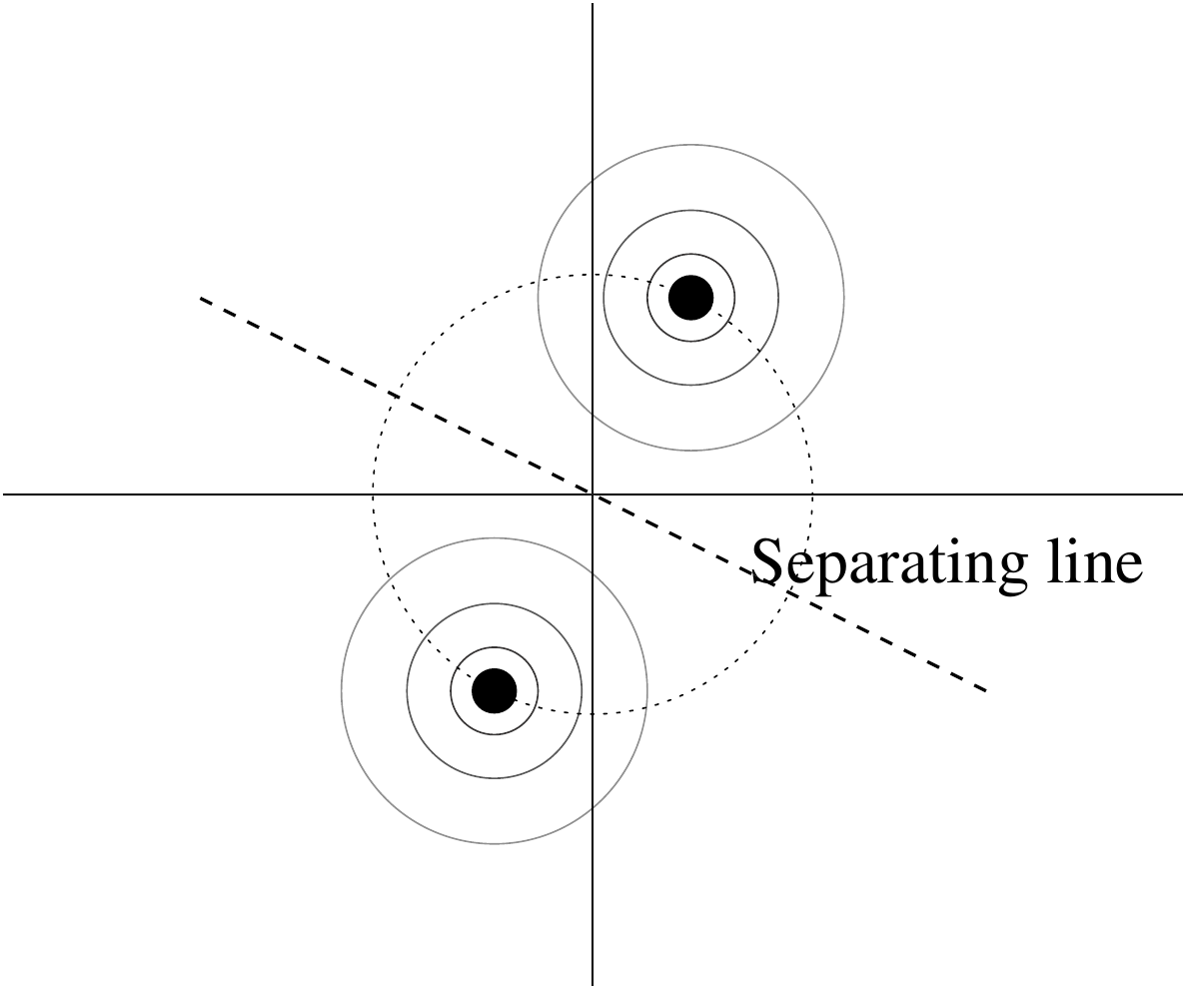tex_t}}
      \caption{\label{fig:point-symm-go1} \it Example for a point symmetry in 2-D, where $f(\bm{\eta})=f(-\bm{\eta})\ \forall \bm{\eta}$. The plots show worst case (left), suboptimal (middle) and optimal separation lines (right) for point symmetry breaking. The separating line divides the parameter space in two parts, where each partition contains a global optimum ($\bar{\bm{\eta}}$ and $-\bar{\bm{\eta}}$).}
   \end{center} 
\end{figure}
It can be seen that the separating region which has 
maximum distances to the global optima, which means that the according \emph{proximity} is minimal, enables the optimal separation or partitioning. This way, an optimal isolation between all 
symmetric replicas of the global optimum is achieved. As a result, the disturbing influence of other neighboring global optima is decreased to a minimum, which in turn effectively maximizes the attraction of the global optimum of the selected partition.

The following Lemma provides a more general perspective for rule-1 presented in Section~\ref{seq:go-inv-sb}. Note that the shift parameter is the last entry in the parameter vector. 
\begin{lemma}\label{lm:gen-point-symm}
Rule-1 from Section~\ref{seq:go-inv-sb} modifies a parameter vector $\bm{\eta}$ as:
\begin{equation}\label{eq:point-subopt}
\bm{\eta}'=\left\{\begin{array}{lcl}
\bm{\eta} & \mathrm{if} & ||\bm{\eta}-(0,...,0,1)||^2 \leq ||-\bm{\eta}-(0,...,0,1)||^2\\
-\bm{\eta} & \mathrm{if} & ||\bm{\eta}-(0,...,0,1)||^2 > ||-\bm{\eta}-(0,...,0,1)||^2.
\end{array}\right.
\end{equation}
\end{lemma}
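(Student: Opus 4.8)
The plan is to reduce the set-valued description of rule-1 to a single scalar sign condition and then recognize the stated inequality as exactly that condition rewritten by the polarization identity. First I would fix notation: write $\bm{e}=(0,\ldots,0,1)$ for the unit vector along the shift-parameter coordinate. Since the shift scalar $\tau^l_n$ is by convention the last entry of the neuron parameter vector $\bm{\eta}^l_n$, we have $\bm{\eta}^{\top}\bm{e}=\tau$. Rule-1, applied to a single neuron, keeps $\bm{\eta}$ unchanged when $\tau\geq 0$ and replaces it by $-\bm{\eta}$ when $\tau<0$; the accompanying sign flips of the outgoing weights $w^{l+1}_{i,n}$ are carried automatically by the operator $O^l_n$ of Eqn.~\eqref{eq:point-symm}. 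Hence it suffices to show that the inequality $||\bm{\eta}-\bm{e}||^2\leq||-\bm{\eta}-\bm{e}||^2$ is equivalent to $\tau\geq 0$.

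The computation I would carry out is to expand both squared norms by bilinearity of the inner product and subtract:
\begin{equation*}
||\bm{\eta}-\bm{e}||^2-||-\bm{\eta}-\bm{e}||^2=\bigl(||\bm{\eta}||^2-2\bm{\eta}^{\top}\bm{e}+||\bm{e}||^2\bigr)-\bigl(||\bm{\eta}||^2+2\bm{\eta}^{\top}\bm{e}+||\bm{e}||^2\bigr)=-4\bm{\eta}^{\top}\bm{e}=-4\tau.
\end{equation*}
Thus $||\bm{\eta}-\bm{e}||^2\leq||-\bm{\eta}-\bm{e}||^2\Leftrightarrow-4\tau\leq 0\Leftrightarrow\tau\geq 0$, and the strict inequality corresponds to $\tau<0$. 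Substituting this back into the case distinction of Eqn.~\eqref{eq:point-subopt} reproduces rule-1 exactly, and applying it neuron-by-neuron over all hidden neurons recovers rule-1 as stated in Section~\ref{seq:go-inv-sb} (the orthogonality of the operators, Eqn.~\eqref{eq:orthogonal}, is consistent with the length-preservation used implicitly here but is not needed for the argument).

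There is essentially no hard step; the proof is a one-line application of the polarization identity. The only point requiring care is bookkeeping: rule-1 is phrased as a test on the \emph{scalar} $\tau$ together with a flip of the outgoing weights, whereas Eqn.~\eqref{eq:point-subopt} is phrased purely in terms of the \emph{vector} $\bm{\eta}$, so one must observe that these coincide because $\tau$ is the last component of $\bm{\eta}$ and because the weight flips are already built into $O^l_n$, so restricting the statement to $\bm{\eta}$ loses nothing. It is also worth noting, for the narrative of Section~\ref{sec:mgopp}, that the equivalent form exhibits rule-1 as ``replace $\bm{\eta}$ by whichever of $\pm\bm{\eta}$ lies closer to the fixed reference point $(0,\ldots,0,1)$'' — a nearest-point rule whose reference is chosen independently of the global optimum, which is precisely why it is global-optimum invariant and, by the discussion above, generally suboptimal.
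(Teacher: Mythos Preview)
Your proof is correct and follows essentially the same route as the paper: both expand the two squared norms and cancel to obtain $x_D\geq 0$ (equivalently $\tau\geq 0$), with the only cosmetic difference being that you phrase the expansion via the inner product and polarization while the paper writes it out coordinatewise. Your additional bookkeeping remark about the outgoing-weight flips being absorbed into $O^l_n$ is not in the paper's proof but is a welcome clarification.
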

\begin{proof} 
From the first line in Eqn.~\eqref{eq:point-subopt} follows with $\bm{\eta}=(x_1,...,x_D)$ and a reference vector $\bm{r}=(0,...,0,1)$
\begin{align}
 ||\bm{\eta}-\bm{r}||^2 &\leq ||-\bm{\eta}-\bm{r}||^2\label{rel:point-symm}\\
 ||\bm{\eta}-(0,...,0,1)||^2 &\leq ||-\bm{\eta}-(0,...,0,1)||^2\\
\Leftrightarrow \left(\sum_{i=1}^{D-1}x_i^2\right)+(x_D-1)^2 &\leq \left(\sum_{i=1}^{D-1}x_i^2\right)+(-x_D-1)^2.
\end{align}
Further simplifying both sides of the equation yields
 \begin{equation}
 -x_D \leq x_D\ \Leftrightarrow x_D \geq 0.
\end{equation}
This means that the conditional Equation~(\ref{eq:point-subopt}) is equivalent to rule-1 which demands that the shift parameters shall be positive.
\end{proof}

The rule-structure introduced by Lemma~\ref{lm:gen-point-symm} can be used to formulate the following strategy to maximize the distance of the global optimum $\bar{\bm{\eta}}$ to the separating region.
\begin{equation}\label{eq:point-opt}
\bm{\eta}'=\left\{\begin{array}{lcl}
\bm{\eta} & \mathrm{for} & ||\bm{\eta}-\bar{\bm{\eta}}||^2 \leq ||-\bm{\eta}-\bar{\bm{\eta}}||^2\\
-\bm{\eta} & \multicolumn{2}{c}{\mathrm{otherwise}}
\end{array}\right.
\end{equation}
\begin{theorem}\label{theorem-optimal-point-symm}
The solution candidate $\bm{\eta}'$ determined by rule~(\ref{eq:point-opt}) is always closer to $\bar{\bm{\eta}}$ than to $-\bar{\bm{\eta}}$.
\end{theorem}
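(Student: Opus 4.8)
The plan is to reduce the whole statement to a single scalar inequality involving the inner product $\bm{\eta}^{\top}\bar{\bm{\eta}}$, and then to verify that rule~\eqref{eq:point-opt} always enforces the sign of that product which makes the desired inequality hold. First I would record the elementary identity, valid for any two vectors $\bm{a},\bm{b}$ of equal dimension,
\[
||\bm{a}-\bm{b}||^2-||\bm{a}+\bm{b}||^2=-4\,\bm{a}^{\top}\bm{b},
\]
obtained by expanding both squared norms. Taking $\bm{a}=\bm{\eta}$, $\bm{b}=\bar{\bm{\eta}}$ shows that the branching condition $||\bm{\eta}-\bar{\bm{\eta}}||^2\leq||-\bm{\eta}-\bar{\bm{\eta}}||^2=||\bm{\eta}+\bar{\bm{\eta}}||^2$ in~\eqref{eq:point-opt} is equivalent to $\bm{\eta}^{\top}\bar{\bm{\eta}}\geq 0$, and its negation to $\bm{\eta}^{\top}\bar{\bm{\eta}}<0$. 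By the same identity, the assertion ``$\bm{\eta}'$ is closer to $\bar{\bm{\eta}}$ than to $-\bar{\bm{\eta}}$'', i.e. $||\bm{\eta}'-\bar{\bm{\eta}}||^2\leq||\bm{\eta}'+\bar{\bm{\eta}}||^2$, is equivalent to ${\bm{\eta}'}^{\top}\bar{\bm{\eta}}\geq 0$.

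Then I would carry out the two-case split dictated by~\eqref{eq:point-opt}. In the first case $\bm{\eta}^{\top}\bar{\bm{\eta}}\geq 0$ and $\bm{\eta}'=\bm{\eta}$, so ${\bm{\eta}'}^{\top}\bar{\bm{\eta}}=\bm{\eta}^{\top}\bar{\bm{\eta}}\geq 0$. In the second case $\bm{\eta}^{\top}\bar{\bm{\eta}}<0$ and $\bm{\eta}'=-\bm{\eta}$, hence ${\bm{\eta}'}^{\top}\bar{\bm{\eta}}=-\bm{\eta}^{\top}\bar{\bm{\eta}}>0$. In both cases ${\bm{\eta}'}^{\top}\bar{\bm{\eta}}\geq 0$, which by the equivalence above is exactly the claim. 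If one prefers to avoid inner products, the same conclusion is reached by directly substituting $\bm{\eta}'=\bm{\eta}$, respectively $\bm{\eta}'=-\bm{\eta}$, into $||\bm{\eta}'\pm\bar{\bm{\eta}}||^2$ and reading the required inequality off the branch condition, using $||-\bm{\eta}-\bar{\bm{\eta}}||=||\bm{\eta}+\bar{\bm{\eta}}||$ and $||-\bm{\eta}+\bar{\bm{\eta}}||=||\bm{\eta}-\bar{\bm{\eta}}||$.

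I do not expect a genuine obstacle: the theorem is essentially immediate once the branch condition is rewritten in terms of $\bm{\eta}^{\top}\bar{\bm{\eta}}$. The only point worth a word of care is the boundary case $\bm{\eta}^{\top}\bar{\bm{\eta}}=0$ (i.e. $\bm{\eta}$ lies on the separating hyperplane), where $\bm{\eta}'$ is equidistant from $\bar{\bm{\eta}}$ and $-\bar{\bm{\eta}}$; so ``closer'' must be read in the non-strict sense ``no farther than'', consistently with the $\leq$ appearing in~\eqref{eq:point-opt}. Nothing here uses the orthogonality of the symmetry operators or the hypersphere remark; only the trivial fact $||\bm{\eta}||=||-\bm{\eta}||$ enters, implicitly, through the substitutions above.
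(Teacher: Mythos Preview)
Your proof is correct and complete. It differs from the paper's route, though: the paper does not prove Theorem~\ref{theorem-optimal-point-symm} directly but defers it to Section~\ref{sec:ideal-symm-brk}, where it is obtained as the special case $\mathcal{P}=\{\mathcal{I},-\mathcal{I}\}$ of the general Theorem~\ref{theorem-sep-cond}. That general proof argues by contradiction and relies on the orthogonality (length-preserving property) of symmetry operators: if some replica $\Phi\bar{\bm{\theta}}$ were strictly closer to $\bm{\theta}'$ than $\bar{\bm{\theta}}$ is, then applying $\Phi^{-1}$ would produce a symmetric replica of $\bm{\theta}'$ strictly closer to $\bar{\bm{\theta}}$, contradicting the defining minimality of $\bm{\theta}'$.

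Your reduction to the sign of $\bm{\eta}^{\top}\bar{\bm{\eta}}$ is more elementary and more transparent for the two-element group at hand; it also makes the boundary behaviour explicit, which the paper leaves implicit. What the paper's approach buys in exchange is uniformity: the same contradiction argument simultaneously covers the point-symmetry, permutation-symmetry, and joint-symmetry cases (Theorems~\ref{theorem-optimal-point-symm}, \ref{theorem-perm}, \ref{theorem-sep-cond}), whereas your inner-product identity is tailored to the involution $\bm{\eta}\mapsto-\bm{\eta}$ and would need to be redone for each operator. Your closing remark that orthogonality is not needed here is accurate for this special case, but it is precisely that property which lets the paper treat all symmetries at once.
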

We will prove Theorem~\ref{theorem-optimal-point-symm} in a more general setting in Section~\ref{sec:ideal-symm-brk}.

\subsection{Minimum global optimum proximity principle for permutation symmetry}
In this Section we introduce an optimal rule for breaking a permutation symmetry for parameter spaces with two blocks of permutation-invariant parameters. We define a parameter vector $\bm{\theta}$ as
\begin{equation}
 \bm{\theta}=(\bm{\eta}_1,\bm{\eta}_2)\equiv (\bm{\eta}_1|\bm{\eta}_2),
\end{equation}
where the notation $(\bm{\eta}_1|\bm{\eta}_2)$ is used to emphasize the block structure.
The permutation symmetry is given by
\begin{equation}
 f(\bm{\theta})=f((\bm{\eta}_1,\bm{\eta}_2))=f(P\bm{\theta})=f((\bm{\eta}_2,\bm{\eta}_1)),\ \forall \bm{\theta},
\end{equation} 
where $f$ is the error function and $P$ is a permutation operator defined by 
\begin{equation}
P(\bm{\eta}_1,\bm{\eta}_2)=(\bm{\eta}_2,\bm{\eta}_1),\ \forall \bm{\eta}_1,\bm{\eta}_2.
\end{equation} 
The following Lemma restates rule-2 as a distance dependent rule.
\begin{lemma}\label{lm:gen-perm-symm}
Assuming the shift parameter is the last parameter in the parameter block $\bm{\eta}$, rule-2, presented in Section~\ref{seq:go-inv-sb}, can alternatively be described in a more general form by the following rule: 
\begin{equation}\label{eq:perm-subopt}
\bm{\theta}'=\left\{\begin{array}{lcl}
(\bm{\eta}_1|\bm{\eta}_2) & \mathrm{for} & ||(\bm{\eta}_1|\bm{\eta}_2)-(0,...,0|0,...,1)||^2 \leq ||(\bm{\eta}_2|\bm{\eta}_1)-(0,...,0|0,...,1)||^2\\
(\bm{\eta}_2|\bm{\eta}_1) & \multicolumn{2}{c}{\mathrm{otherwise}}
\end{array}\right.
\end{equation}
\end{lemma}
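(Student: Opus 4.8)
The plan is to expand the two squared distances appearing in rule~\eqref{eq:perm-subopt} and to observe that, because the reference vector $\bm{r}=(0,\ldots,0|0,\ldots,0,1)$ has a single nonzero entry placed in the shift coordinate of the \emph{second} block, almost everything cancels on both sides: the comparison of distances degenerates to a plain inequality between the two shift parameters, which is exactly the ordering that rule-2 enforces. This is the block-structured analogue of the proof of Lemma~\ref{lm:gen-point-symm}, where the lone $1$ in the reference vector sat in the shift coordinate and the comparison collapsed to the sign of $x_D$.

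Concretely, I would write $\bm{\eta}_1=(\bm{w}_1,s_1)$ and $\bm{\eta}_2=(\bm{w}_2,s_2)$, where $s_1,s_2$ are the shift parameters (the last entries of the respective blocks) and $\bm{w}_1,\bm{w}_2$ collect the remaining components, so that $||\bm{\eta}_i||^2=||\bm{w}_i||^2+s_i^2$. Since permuting the two blocks leaves the multiset of components unchanged, $||(\bm{\eta}_1|\bm{\eta}_2)||=||(\bm{\eta}_2|\bm{\eta}_1)||$ by Eqn.~\eqref{eq:length-invar}. For the reference vector $\bm{r}$ one computes directly
\begin{equation}
||(\bm{\eta}_1|\bm{\eta}_2)-\bm{r}||^2=||\bm{w}_1||^2+s_1^2+||\bm{w}_2||^2+(s_2-1)^2=||\bm{\eta}_1||^2+||\bm{\eta}_2||^2-2s_2+1,
\end{equation}
and, swapping the roles of the two blocks,
\begin{equation}
||(\bm{\eta}_2|\bm{\eta}_1)-\bm{r}||^2=||\bm{\eta}_1||^2+||\bm{\eta}_2||^2-2s_1+1.
\end{equation}

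Subtracting, the first line of~\eqref{eq:perm-subopt} holds if and only if $-2s_2+1\le-2s_1+1$, i.e. if and only if $s_1\le s_2$, and the second line is its complement $s_1>s_2$. Hence rule~\eqref{eq:perm-subopt} always outputs the ordering of the two blocks for which the shift parameter of the first block does not exceed that of the second; equivalently, it sorts the blocks by their shift parameter, which is precisely rule-2. I do not expect a genuine obstacle: the only point requiring care is the bookkeeping of the block structure of $\bm{r}$ — checking that its single nonzero entry lines up with the shift component rule-2 sorts on, so that the terms $||\bm{w}_i||^2$, $s_i^2$ and the constant $1$ all cancel and only the cross term $-2s_i$ survives — together with the tacit assumption that the two blocks have equal length (both being neuron-parameter blocks $\bm{\eta}$), so that $P$ is the genuine block-swap and $\bm{r}$ is well defined.
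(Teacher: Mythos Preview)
Your proof is correct and follows essentially the same approach as the paper: write out both squared distances, cancel the common terms, and reduce the comparison to a plain inequality between the two shift parameters $s_1,s_2$ (the paper's $x_{1,D},x_{2,D}$). The paper's last displayed line reads $x_{2,D}\le x_{1,D}$ whereas your computation gives $s_1\le s_2$; your direction is the one that actually follows from the arithmetic, and in any case either order is ``sorting by the shift parameter,'' i.e.\ rule-2.
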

\begin{proof}
From Eqn.~\eqref{eq:perm-subopt} follows with $\bm{\eta}_i=(x_{i,1},...,x_{i,D})$
\begin{align}
 &||(x_{1,1},...,x_{1,D}|x_{2,1},...,x_{2,D})-(0,...,0|0,...,1)||^2 \nonumber\\
\leq & ||(x_{2,1},...,x_{2,D}|x_{1,1},...,x_{1,D})-(0,...,0|0,...,1)||^2\\
\Leftrightarrow & x_{1,D}^2+(x_{2,D}-1)^2 \leq (x_{1,D}-1)^2+x_{2,D}^2\\
\Leftrightarrow & x_{2,D} \leq x_{1,D}
\end{align}
\end{proof}
We state the following proposal in order to maximize the distance of the global optimum $\bar{\bm{\theta}}$ to the separating region, according to the rule-structure introduced by Lemma~\ref{lm:gen-perm-symm}
\begin{equation}\label{eq:perm-opt}
\bm{\theta}'=\left\{\begin{array}{lcl}
(\bm{\eta}_1|\bm{\eta}_2) & \mathrm{for} & ||(\bm{\eta}_1|\bm{\eta}_2)-\bar{\bm{\theta}}||^2 \leq ||(\bm{\eta}_2|\bm{\eta}_1)-\bar{\bm{\theta}}||^2\\
(\bm{\eta}_2|\bm{\eta}_1) & \multicolumn{2}{c}{\mathrm{otherwise}}
\end{array}\right.
\end{equation}
\begin{theorem}\label{theorem-perm}
The solution candidate $\bm{\theta}'$ determined by rule~(\ref{eq:perm-opt}) is always closer to $\bar{\bm{\theta}}$ than to $P\bar{\bm{\theta}}$.
\end{theorem}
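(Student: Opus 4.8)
The plan is to reduce the statement to two elementary properties of the block-swap operator $P$ from~\eqref{eq:perm-opt}: that $P$ is \emph{orthogonal} and that it is an \emph{involution}. Orthogonality is the Lemma of Section~\ref{sec:ANN-symmetries} (equivalently Eqn.~\eqref{eq:orthogonal}), and $P^2=\mathcal{I}$ holds because swapping the two blocks twice restores the vector, as in Eqn.~\eqref{eq:identity-as prod}; together they give $P^{\top}=P^{-1}=P$. Applying the isometry $P^{-1}=P$ inside the Euclidean norm then yields the two identities $\|P\bm{\theta}-\bar{\bm{\theta}}\|^2=\|\bm{\theta}-P\bar{\bm{\theta}}\|^2$ and $\|P\bm{\theta}-P\bar{\bm{\theta}}\|^2=\|\bm{\theta}-\bar{\bm{\theta}}\|^2$. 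The first is the crucial one: seen from a candidate $\bm{\theta}$, the two symmetric replicas $\bar{\bm{\theta}}$ and $P\bar{\bm{\theta}}$ exchange their roles exactly when $\bm{\theta}$ is replaced by $P\bm{\theta}$.

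With $\bm{\theta}=(\bm{\eta}_1|\bm{\eta}_2)$ we have $(\bm{\eta}_2|\bm{\eta}_1)=P\bm{\theta}$, so rule~\eqref{eq:perm-opt} simply selects, between $\bm{\theta}$ and $P\bm{\theta}$, whichever is at least as close to $\bar{\bm{\theta}}$ (ties resolved in favour of $\bm{\theta}$). I would finish with a case split on the rule. If $\|\bm{\theta}-\bar{\bm{\theta}}\|^2\le\|P\bm{\theta}-\bar{\bm{\theta}}\|^2$ then $\bm{\theta}'=\bm{\theta}$, and chaining this with the first identity gives $\|\bm{\theta}'-\bar{\bm{\theta}}\|^2\le\|P\bm{\theta}-\bar{\bm{\theta}}\|^2=\|\bm{\theta}-P\bar{\bm{\theta}}\|^2=\|\bm{\theta}'-P\bar{\bm{\theta}}\|^2$, as required. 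In the remaining case $\bm{\theta}'=P\bm{\theta}$, and the target inequality $\|\bm{\theta}'-\bar{\bm{\theta}}\|^2\le\|\bm{\theta}'-P\bar{\bm{\theta}}\|^2$ becomes, after substituting both identities, $\|\bm{\theta}-P\bar{\bm{\theta}}\|^2\le\|\bm{\theta}-\bar{\bm{\theta}}\|^2$, which is precisely (the non-strict weakening of) the hypothesis of this case. This also shows that the conclusion is "closer or equidistant", matching the phrasing used for Theorem~\ref{theorem-optimal-point-symm}.

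I do not anticipate a genuine obstacle. The argument is structurally the same as the one announced for Theorem~\ref{theorem-optimal-point-symm} --- there the relevant orthogonal involution is the sign flip $\bm{\eta}\mapsto-\bm{\eta}$ rather than the block swap $P$ --- and both are the two-replica instances of the general principle that one relocates a candidate into the partition of $\bar{\bm{\theta}}$ by replacing it with the symmetric image nearest to $\bar{\bm{\theta}}$. The only point needing a little care is the direction in which $P$ is moved across the norm, i.e.\ not conflating $\|P\bm{\theta}-\bar{\bm{\theta}}\|$ with $\|\bm{\theta}-\bar{\bm{\theta}}\|$; this is settled once $P=P^{-1}=P^{\top}$ is in place. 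Accordingly I would present the proof essentially as above and remark that it is the $k=2$ special case of the general optimality result proved in Section~\ref{sec:ideal-symm-brk}.
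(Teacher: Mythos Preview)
Your proposal is correct and is essentially the same argument the paper gives: the paper defers Theorem~\ref{theorem-perm} to the general Theorem~\ref{theorem-sep-cond} in Section~\ref{sec:ideal-symm-brk}, whose proof (by contradiction) hinges on exactly the orthogonality identity you isolate, namely $\|\Phi^{-1}\bm{\theta}'-\bar{\bm{\theta}}\|=\|\bm{\theta}'-\Phi\bar{\bm{\theta}}\|$, specialized here to $\Phi=P=P^{-1}$. Your direct two-case version is simply the unwound $|\mathcal{P}|=2$ instance of that proof, as you yourself note.
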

Theorem~\ref{theorem-perm} will be proved in a more general setting in Section~\ref{sec:ideal-symm-brk}.
\subsection{Ideal symmetry breaking}\label{sec:ideal-symm-brk}
For a given ANN-optimization problem, let $\mathcal{P}$ be the set of all possible symmetry operators. Note that a symmetry operator $\Phi\in\mathcal{P}$ may be a point symmetry, a permutation symmetry or a joint symmetry operator. A joint symmetry operator is generally composed of a chain of point symmetry and permutation symmetry operators. As an example, $\Phi=O^2_{2}\circ P^3_{2,4}$ applies a permutation symmetry followed by a point symmetry operator. 
The following properties of symmetry operators are relevant in the following discussion. According to Eqn.~\eqref{symm-def}, a symmetry operator does not change the output of the ANN when applied to the parameter vector $\bm{\theta}$. According to Eqn.~\eqref{eq:length-invar} a symmetry operator does not change the length of a parameter vector. Furthermore, according to Eqn.~\eqref{eq:orthogonal}, symmetry operators are orthogonal. 

Given a parameter vector $\bm{\theta}$, the set $\mathcal{R}_{\bm{\theta}}$ of all symmetric replicas of $\bm{\theta}$ is defined by
\begin{equation}
 \mathcal{R}_{\bm{\theta}}=\{\Phi \in \mathcal{P}: \Phi\bm{\theta}\}=\mathcal{P}\bm{\theta}.
\end{equation} 
Recall that the ultimate goal of symmetry breaking is to minimize the influence of all symmetric replicas of the selected global optimum and to concentrate the global search to the partition where the selected global optimum is located. To achieve this, we propose the following joint separation condition:
\begin{equation}\label{eq:sep-cond}
\bm{\theta}'=\arg\min_{\tilde{\bm{\theta}}\in \mathcal{R}_{\bm{\theta}}}||\tilde{\bm{\theta}}-\bar{\bm{\theta}}||^2.
\end{equation} 
In other words, this optimization selects the closest symmetric replica of $\bm{\theta}$ to the selected global optimum $\bar{\bm{\theta}}$. Finding the closest symmetric replica of $\bm{\theta}$ means finding the corresponding symmetry operator $\Phi'$, where
\begin{equation}\label{eq:phi-prime}
 \bm{\theta}'=\Phi'\bm{\theta}.
\end{equation} 
In case the parameter vector $\bm{\theta}$ is already close to $\bar{\bm{\theta}}$, i.e., it is in the corresponding partition, the solution for $\Phi'$ is the identity operator $\mathcal{I}$. Note that, according to Eqn.~\eqref{eq:identity-as prod}, the identity operator $\mathcal{I}$ is in $\mathcal{P}$. In Fig.~\ref{fig:sb-proof-illustr}, ideal symmetry breaking according to Eqn.~\eqref{eq:sep-cond} is illustrated on a hypothetical 2-D space.
\begin{figure}[tb]
\centering
\scalebox{0.75}{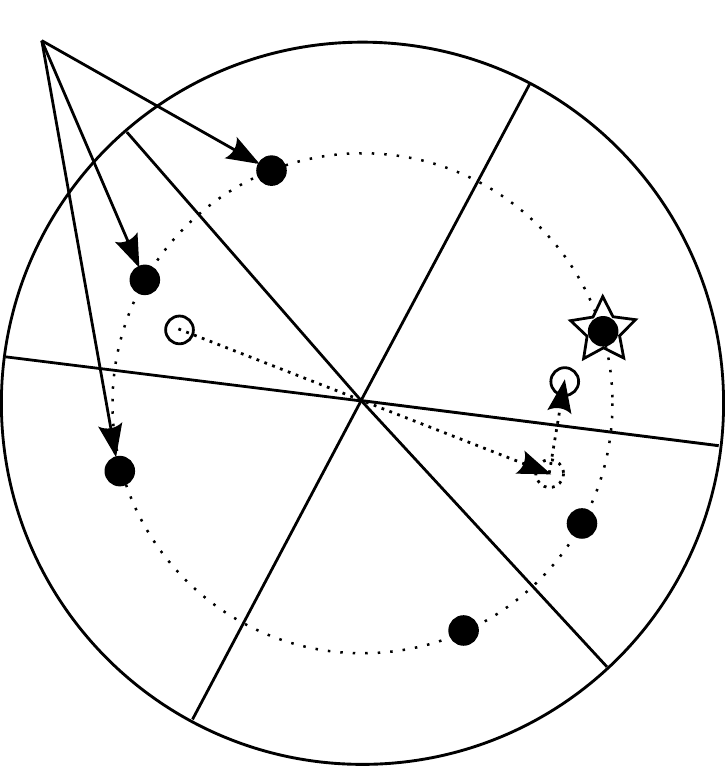}
\caption{\label{fig:sb-proof-illustr} \it Ideal symmetry breaking according to Eqn.~\eqref{eq:sep-cond} shown on a hypothetical 2-D space. In this example, applying a point symmetry operator followed by a permutation symmetry operator maps $\bm{\theta}$ to $\bm{\theta}'$, which is located at the partition of the selected global optimum $\bar{\bm{\theta}}$, marked by a star. Note that a symmetry operator corresponds to a rotation, which preserves lengths as well as angles.}
\end{figure}

\begin{theorem}\label{theorem-sep-cond}
The solution $\bm{\theta}'$ determined by Equation~\eqref{eq:sep-cond} ensures that no other symmetric replica of the selected global optimum $\bar{\bm{\theta}}$ is closer to $\bm{\theta}'$ than $\bar{\bm{\theta}}$. In other words, it minimizes the influence of the symmetric replicas of the selected global optimum.
\end{theorem}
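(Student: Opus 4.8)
The plan is to exploit the group structure of the set $\mathcal{P}$ of symmetry operators together with their orthogonality, established in Eqn.~\eqref{eq:orthogonal}. First I would observe that $\mathcal{P}$ is in fact a (finite) group: it is closed under composition, since a chain of point and permutation operators is again a joint symmetry operator; it contains the identity $\mathcal{I}$ by Eqn.~\eqref{eq:identity-as prod}; and it is closed under inversion, because each generator $O^l_n$ and $P^l_{j,k}$ is its own inverse (Eqn.~\eqref{eq:identity-as prod}), so the inverse of a chain of generators is the reversed chain, which is again in $\mathcal{P}$. Consequently, for any $\Phi,\Psi\in\mathcal{P}$ we have $\Psi^{-1}\Phi\in\mathcal{P}$; the orbit $\mathcal{R}_{\bm{\theta}}=\mathcal{P}\bm{\theta}$ is finite, so the minimizer in Eqn.~\eqref{eq:sep-cond} exists; and $\Phi'\bm{\theta}\in\mathcal{R}_{\bm{\theta}}$ for every $\Phi'\in\mathcal{P}$.

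Next I would argue by contradiction. Write $\bm{\theta}'=\Phi'\bm{\theta}$ for the minimizer attaining Eqn.~\eqref{eq:sep-cond}, and suppose some symmetric replica $\Psi\bar{\bm{\theta}}$ of the selected global optimum, with $\Psi\in\mathcal{P}$, were strictly closer to $\bm{\theta}'$ than $\bar{\bm{\theta}}$, i.e.\ $\|\bm{\theta}'-\Psi\bar{\bm{\theta}}\| < \|\bm{\theta}'-\bar{\bm{\theta}}\|$. Since $\Psi$ (hence $\Psi^{-1}=\Psi^{\top}$) is orthogonal, applying $\Psi^{-1}$ preserves this distance: $\|\Psi^{-1}\bm{\theta}'-\bar{\bm{\theta}}\| = \|\Psi^{-1}\bigl(\bm{\theta}'-\Psi\bar{\bm{\theta}}\bigr)\| = \|\bm{\theta}'-\Psi\bar{\bm{\theta}}\| < \|\bm{\theta}'-\bar{\bm{\theta}}\|$. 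But $\Psi^{-1}\bm{\theta}' = (\Psi^{-1}\Phi')\bm{\theta}\in\mathcal{R}_{\bm{\theta}}$, so $\Psi^{-1}\bm{\theta}'$ is a symmetric replica of $\bm{\theta}$ lying strictly closer to $\bar{\bm{\theta}}$ than $\bm{\theta}'$ --- contradicting the minimality of $\bm{\theta}'$ in Eqn.~\eqref{eq:sep-cond}. Hence $\|\bm{\theta}'-\Psi\bar{\bm{\theta}}\| \ge \|\bm{\theta}'-\bar{\bm{\theta}}\|$ for every $\Psi\in\mathcal{P}$, which is precisely the assertion of the theorem; note that ties are allowed, matching the wording ``no other symmetric replica $\ldots$ is closer''.

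Finally I would note that Theorems~\ref{theorem-optimal-point-symm} and~\ref{theorem-perm} then follow as the special cases $\mathcal{P}=\{\mathcal{I},-\mathcal{I}\}$ acting on the neuron block (where $-\mathcal{I}$ realizes the point-symmetry operator, cf.\ Eqn.~\eqref{eq:point-opt}) and $\mathcal{P}=\{\mathcal{I},P\}$ with $P$ the block swap (cf.\ Eqn.~\eqref{eq:perm-opt}); in each case the minimization in Eqn.~\eqref{eq:sep-cond} collapses to the two-way comparison stated there. The main obstacle is conceptual rather than computational: one must be careful that $\mathcal{P}$ is genuinely closed under inverses, so that ``pulling back'' the offending replica $\Psi\bar{\bm{\theta}}$ by $\Psi^{-1}$ produces an element that is still in the \emph{same} orbit $\mathcal{R}_{\bm{\theta}}$ over which Eqn.~\eqref{eq:sep-cond} optimizes; once this is in place, the length preservation (orthogonality) of symmetry operators reduces the whole statement to a one-line contradiction.
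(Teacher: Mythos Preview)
Your proof is correct and follows essentially the same route as the paper's: a contradiction argument that uses the orthogonality (length preservation) of symmetry operators to pull back an allegedly closer replica $\Psi\bar{\bm{\theta}}$ of the global optimum to a replica $\Psi^{-1}\bm{\theta}'\in\mathcal{R}_{\bm{\theta}}$ that would violate the minimality in Eqn.~\eqref{eq:sep-cond}. Your explicit verification that $\mathcal{P}$ is closed under composition and inversion (so that $\Psi^{-1}\Phi'\in\mathcal{P}$ and hence $\Psi^{-1}\bm{\theta}'\in\mathcal{R}_{\bm{\theta}}$) makes precise a point the paper uses only tacitly, and your treatment of ties is cleaner than the paper's unnecessary inference $\Phi\neq\mathcal{I}\Rightarrow\Phi^{-1}\bm{\theta}'\neq\bm{\theta}'$.
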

\begin{proof}
We prove this by contradiction. According to Eqn.~\eqref{eq:sep-cond}, $||\bm{\theta}'-\bar{\bm{\theta}}||^2$ is minimal. Assume that there exists a global optimum replica $\bar{\bm{\theta}}'\neq \bar{\bm{\theta}}$ with
\begin{equation}\label{eq:dist-smaller}
 ||\bm{\theta}'-\bar{\bm{\theta}'}||^2 < ||\bm{\theta}'-\bar{\bm{\theta}}||^2.
\end{equation} 
Due to the underlying symmetry properties, each global optimum replica can be mapped to another replica by a symmetry operator, i.e., there exists a symmetry operator $\Phi\in\mathcal{P}/\{\mathcal{I}\}$ which satisfies
\begin{equation}\label{eq:phi-prime-proof}
 \bar{\bm{\theta}}'=\Phi\bar{\bm{\theta}} \Rightarrow \bar{\bm{\theta}}=\Phi^{-1}\bar{\bm{\theta}}'.
\end{equation}  
Due to length-preserving property of symmetry operators, using Eqn.~\eqref{eq:phi-prime}, the left-hand side of the Relation~\eqref{eq:dist-smaller} can be written as
\begin{equation}\label{eq:transf-diff}
 ||\bm{\theta}'-\bar{\bm{\theta}'}||^2=||\Phi^{-1}(\bm{\theta}'-\bar{\bm{\theta}'})||^2=||\Phi^{-1}\bm{\theta}'-\Phi^{-1}\bar{\bm{\theta}'}||^2=||\Phi^{-1}\bm{\theta}'-\bar{\bm{\theta}}||^2
\end{equation} 
Since $\Phi\neq \mathcal{I}$ and therefore $\Phi^{-1} \neq \mathcal{I}$, it follows that $\Phi^{-1}\bm{\theta}'\neq \bm{\theta}'$. But this means that $\bm{\theta}'$ does not minimize the distance to $\bar{\bm{\theta}}$, which contradicts Eqn.~\eqref{eq:sep-cond}. 
\end{proof}

\subsection{Approximations of the ideal separation}
In order to take advantage of these results, we have to address two issues. First, the global optimum is not known a priori. 
Second, the brute force method for finding an optimal solution to~(\ref{eq:sep-cond}) has exponential complexity, but a low-complexity algorithm is desired. In order to circumvent the first problem, we propose to use an estimate for the global optimum, which can be determined by the population of solution candidates at each iteration of the applied Monte Carlo method. Naturally, this estimate improves with increasing iteration number. The second problem can be addressed by using an approximation for the ideal separation achieved by~(\ref{eq:sep-cond}). 

To describe the proposed method, for each neuron $(l,n)$, we define a symmetry relevant parameter block $\bm{\beta}^l_n$ as
\begin{eqnarray}
 \bm{\beta}^l_n&=&(\bm{\eta}^l_{n},w^{l+1}_{1,n},\ldots,w^{l+1}_{N_{l+1},n}), \ l=2,...,L-2,\\
 \bm{\beta}^{L-1}_n&=&\bm{\eta}^{L-1}_{n},
\end{eqnarray}
which includes also some corresponding parameters from the next layer $l+1$.
Given a parameter vector $\bm{\theta}$ and an estimate of the global optimum $\hat{\bm{\theta}}$ with corresponding parameter blocks $\bm{\beta}^l_n$ and $\hat{\bm{\beta}}^l_n$, the pseudocode~\ref{alg:heuristic} describes the proposed approximation for ideal symmetry breaking.
\begin{algorithm}[h]
{\small
\caption{\it Proposed symmetry breaking method. A symmetry operator $\Phi$ is only applied to the parameter vector $\bm{\theta}$ when it decreases the distance to the global optimum $\hat{\bm{\theta}}$, i.e., $||\Phi\bm{\theta}-\hat{\bm{\theta}}||<||\bm{\theta}-\hat{\bm{\theta}}||$. Algorithm input: $\bm{\theta}$ and $\hat{\bm{\theta}}$. Effect: modification of the parameter vector $\bm{\theta}$ when appropriate.}\label{alg:heuristic}
\begin{algorithmic}
\STATE {{\bf [breaking point symmetry]}}
\FOR {all hidden layers $l=2,\ldots,L-1$}
    \FOR {all neurons $(l,n),\ n=1,\ldots,N_l$ per layer $l$}
        \STATE {// would the point symmetry operator $O^l_{n}$ decrease the distance? ($||O^l_{n}\bm{\theta}-\hat{\bm{\theta}}|| \stackrel{?}{<} ||\bm{\theta}-\hat{\bm{\theta}}||$)}
        \STATE calculate distance-square for NOT applying $O^l_{n}$: $D_1=||\bm{\beta}^l_n-\hat{\bm{\beta}}^l_n||^2$
        \STATE calculate distance-square for applying $O^l_{n}$: $D_2=||-\bm{\beta}^l_n-\hat{\bm{\beta}}^l_n||^2$
        \IF {$D_1 > D_2$} 
            \STATE apply point symmetry operator $O^l_{n}$: set $\bm{\beta}^l_n=-\bm{\beta}^l_n$
        \ENDIF
    \ENDFOR
\ENDFOR
\STATE {{\bf [breaking permutation symmetry]}}
\FOR {all hidden layers $l=2,\ldots,L-1$}
    \STATE {randomly choose two neurons $(l,m),\ (l,n) \in\{1,\ldots,N_l\}$ in hidden layer $l$ with $m\neq n$}
    \STATE {// would the permutation operator $P^l_{m,n}$ decrease the distance? ($||P^l_{m,n}\bm{\theta}-\hat{\bm{\theta}}|| \stackrel{?}{<} ||\bm{\theta}-\hat{\bm{\theta}}||$)}
    \STATE calculate distance-square for NOT applying $P^l_{m,n}$: $D_1=||\bm{\beta}^l_n-\hat{\bm{\beta}}^l_n||^2+||\bm{\beta}^l_m-\hat{\bm{\beta}}^l_m||^2$
    \STATE calculate distance-square for applying $P^l_{m,n}$: $D_2=||\bm{\beta}^l_n-\hat{\bm{\beta}}^l_m||^2+||\bm{\beta}^l_m-\hat{\bm{\beta}}^l_n||^2$
    \IF {$D_1 > D_2$} 
        \STATE apply permutation symmetry operator $P^l_{m,n}$: swap $\bm{\beta}^l_m \leftrightarrow \bm{\beta}^l_n$
    \ENDIF
\ENDFOR
\end{algorithmic}}
\end{algorithm}
In Fig.~\ref{fig:sb-examples}, the effect of the several symmetry breaking approaches is demonstrated on a hypothetical 2-D parameter space.
\begin{figure*}[h!]
   \begin{center}
      \scalebox{0.5}{\includegraphics{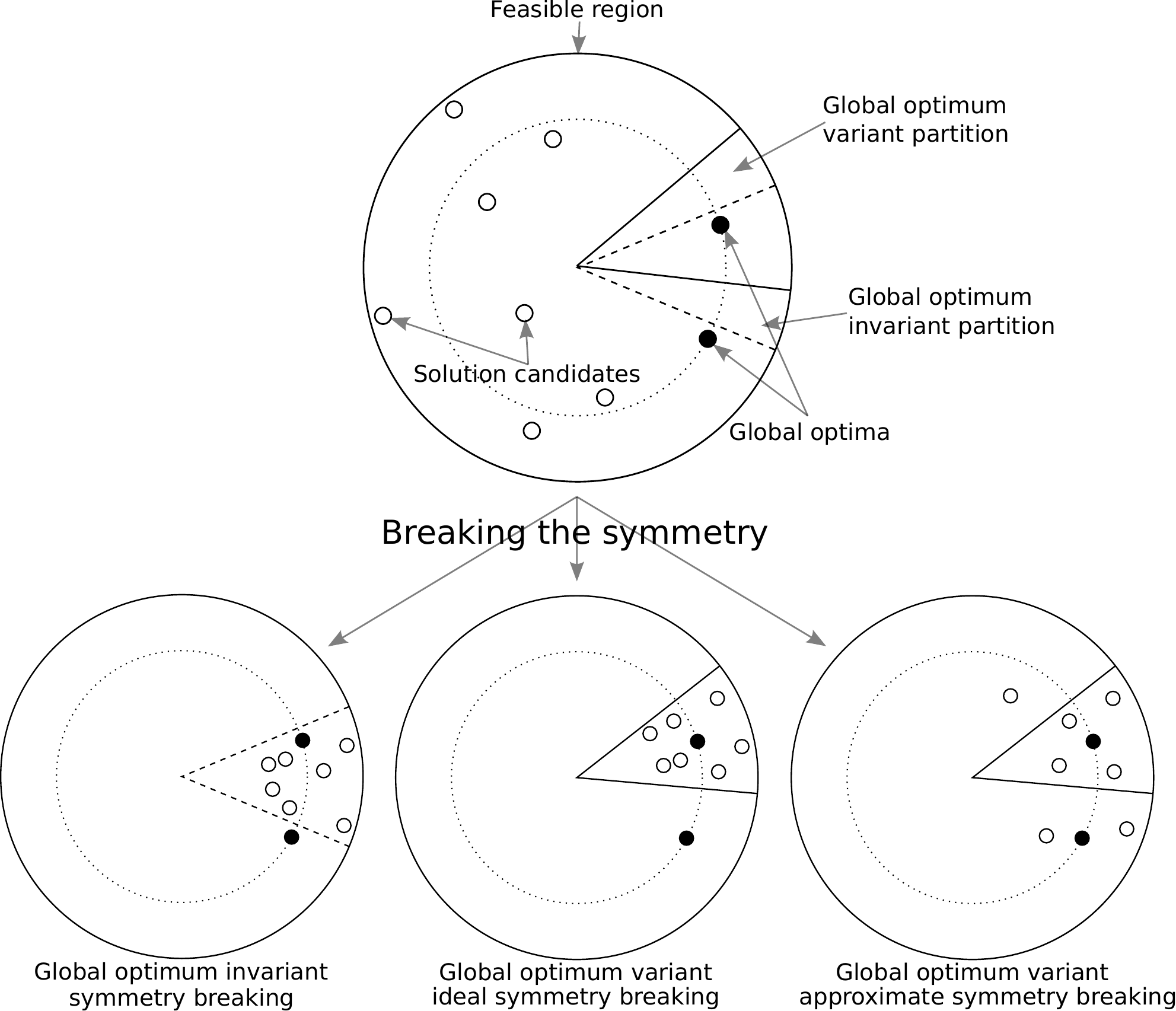}}
      \caption{\label{fig:sb-examples} \it Examples of symmetry breaking methods. Given a distribution of solution candidates as shown in the upper circle, typical outcomes of three different symmetry breaking methods are shown. In the left-bottom case, all solution candidates are mapped into the selected partition, but the global optimum is not necessarily centered within the partition. As a downside, there is a relatively strong influence of the global optimum from the neighbor partition. In the center-bottom case, the selected partition is chosen such that the distance to other symmetric replica of the global optimum are maximized, and all solution candidates are mapped into the selected partition. The right-bottom case shows the proposed approximate global optimum variant symmetry breaking. It equals the center-bottom case, except that the solution candidates are not necessarily mapped into the selected partition, but also to other partitions close to the selected one.}
   \end{center}
\end{figure*}

\subsubsection{DE with symmetry breaking}
The DE method~\cite{Storn95a,Pri96} comprises a population of solution candidates $\bm{\theta}_{i}$, which are iteratively updated and moved towards an optimal solution. We propose to choose the centroid of the population at each iteration as an estimate for the global optimum $\hat{\bm{\theta}}$. 

The DE method extended by the global optimum invariant symmetry breaking~\cite{Thierens96} is denoted by DE-INV-SB, DE extended by the proposed global optimum variant symmetry breaking, described by Algorithm~\ref{alg:heuristic}, is denoted by DE-SB and DE with global optimum variant ideal symmetry breaking using brute force search is denoted by DE-SB-BF.
As shown in Fig.~\ref{fig:de-scheme}, in DE-based symmetry breaking approaches, symmetry breaking is always applied on each solution candidate $\bm{\theta}_{i}$ right after it has been updated for the next iteration. 
Only in DE-SB, we apply an additional step by increasing the error yield of some solution candidates which are not in the same partition as the selected partition holding $\hat{\bm{\theta}}$. This increases the probability that these solution candidates are updated and moved closer to the selected partition. This is not required for symmetry breaking approaches which map each solution cadidate exactly to the selected partition, such as DE-INV-SB or DE-SB-BF. The DE-SB method is described in Algorithm~\ref{alg:de-sb}. 
\begin{algorithm}[h]
{\small
\caption{\it DE-SB. Algorithm input: population of candidate vectors $\bm{\theta}_j,\ j=1,...,N_p$ and the centroid of the population as the estimate for the global optimum $\hat{\bm{\theta}}$. Effect: modify candidate vectors  $\bm{\theta}_j,\ j=1,...,N_p$ when appropriate.}\label{alg:de-sb}
\begin{algorithmic}
\FOR {all candidate vectors $\bm{\theta}_j,\ j=1,...,N_p$}
    \STATE {apply symmetry breaking on $\bm{\theta}_j$, see Algorithm~\ref{alg:heuristic}}
    \IF {$\bm{\theta}_j$ modified (a symmetry operator was applied) \AND $j<N_p/2$}
        \STATE {multiply the stored error yield of $\bm{\theta}_j$ by factor $100$}
    \ENDIF
\ENDFOR
\end{algorithmic}}
\end{algorithm}
\begin{figure*}[h!]
   \begin{center}
      \scalebox{0.5}{\includegraphics{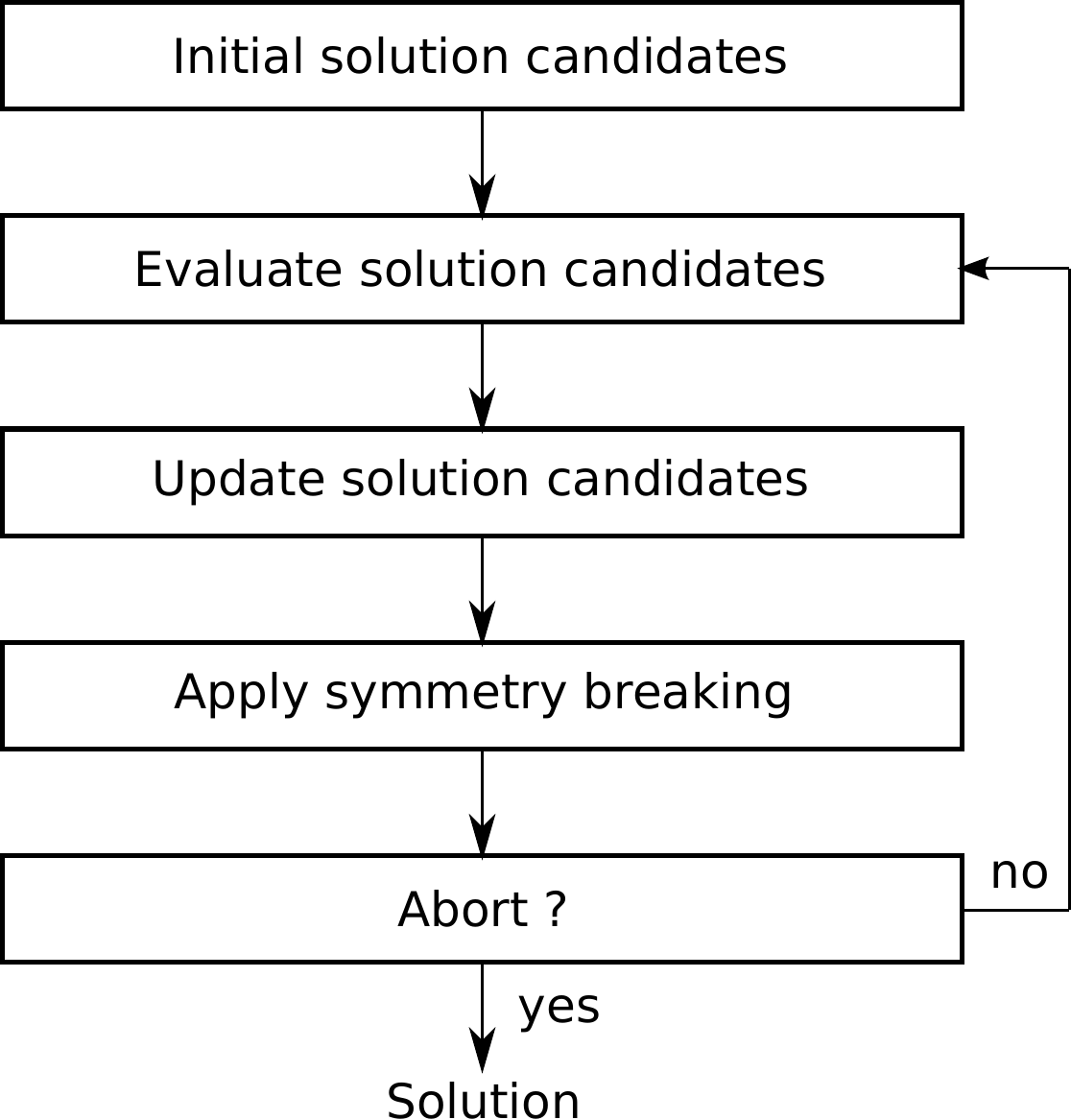}}
      \caption{\label{fig:de-scheme} \it Flowgraph for DE with symmetry breaking.}
   \end{center}
\end{figure*}

\subsubsection{CMA-ES with symmetry breaking}
The CMA-ES method~\cite{Hansen:1996,Hansen:2003} adapts a global step size $\sigma$, the mean $\bf{m}$ and a covariance matrix $C$ at each iteration. According to the Gaussian distribution $\mathcal{N}(\bf{m},\sigma C)$ with mean $\bf{m}$ and covariance matrix $\sigma C$, $N_p$ solution candidate vectors are drawn. After sorting the population by the error each candidate vector yields, the best $N_p/2$ samples are used to update the mean, covariance matrix and the step size for the next iteration. 

In the following discussion, the CMA-ES method extended by the global optimum invariant symmetry breaking~\cite{Thierens96} is denoted by CM-ES-INV-SB, CMA-ES extended by the proposed global optimum variant symmetry breaking, described by Algorithm~\ref{alg:heuristic}, is denoted by CMA-ES-SB and CMA-ES with global optimum variant ideal symmetry breaking using brute force search is denoted by CMA-ES-SB-BF.

In CMA-ES-INV-SB, CMA-ES-SB and CMA-ES-SB-SF, symmetry breaking is applied right after the evaluation of all candidate vectors and prior to updating the parameters of the Gaussian distribution. In CMA-ES-SB, we propose to use the best candidate vector (yielding the smallest error) so far as the estimate for the global optimum, denoted by $\hat{\bm{\theta}}$. In Fig.~\ref{fig:CMAES-SB-scheme}, the flowgraph for CMA-ES-based symmetry breaking approaches is shown. For CMA-ES-SB, the update of the mean is described in Algorithm~\ref{alg:cmaes-sb}. In all other CMA-ES-based methods, the original update formula for the mean is applied.

In CMA-ES, applying symmetry breaking introduces a bias in the mean, which can lead to an excessive increase of the global step size and negatively affect the performance. This bias results from the rotations caused by the symmetry operators. These rotations move solution candidates to the vicinity of one partiton, which typically increases the radius of the population mean, as shown in Fig.~\ref{fig:sb-examples}. In order to prevent such an increase, in all CMA-ES-based symmetry breaking methods, we modify the damping term for the update of the global step size $\sigma$. Let $\bm{s}$ be the shift vector of the centroid of the best $N_p/2$ solution candidates induced by applying symmetry breaking. The regular update formula for $\sigma$
\begin{equation}
 \sigma_{k+1} = \sigma_k \exp(\chi)
\end{equation} 
is changed to
\begin{equation}
 \sigma_{k+1} = \sigma_k \exp\left[\chi \exp(-0.05 D^2 ||\bm{s}||)\right],
\end{equation}
where $k$ is the iteration number and $\chi$ is a term depending on the difference of the previous mean and the current mean, and several other parameters.
\begin{algorithm}[h]
{\small
\caption{\it CMA-ES-SB. Algorithm input: population of candidate vectors $\bm{\theta}_j,\ j=1,...,N_p$, the estimate for the global optimum $\hat{\bm{\theta}}$ and weights $w_j,\ j=1,...,N_p$. Effect: modify candidate vectors  $\bm{\theta}_j,\ j=1,...,N_p$ when appropriate.}\label{alg:cmaes-sb}
\begin{algorithmic}
\FOR {all candidate vectors $\bm{\theta}_j,\ j=1,...,N_p$}
    \STATE {set mean vector $\bm{m}:=\bm{0}$}
    \STATE {apply symmetry breaking on $\bm{\theta}_j$, see Algorithm~\ref{alg:heuristic}}
    \IF {$\bm{\theta}_j$ modified (a symmetry operator was applied)}
        \STATE {add weighted global optimum estimate to mean vector: $\bm{m}:=\bm{m}+w_j \hat{\bm{\theta}}$}
    \ELSE
        \STATE {add weighted candidate vector to mean vector: $\bm{m}:=\bm{m}+w_j \bm{\theta}$} 
    \ENDIF
\ENDFOR
\end{algorithmic}}
\end{algorithm}
\begin{figure*}[h!]
   \begin{center}
      \scalebox{0.5}{\includegraphics{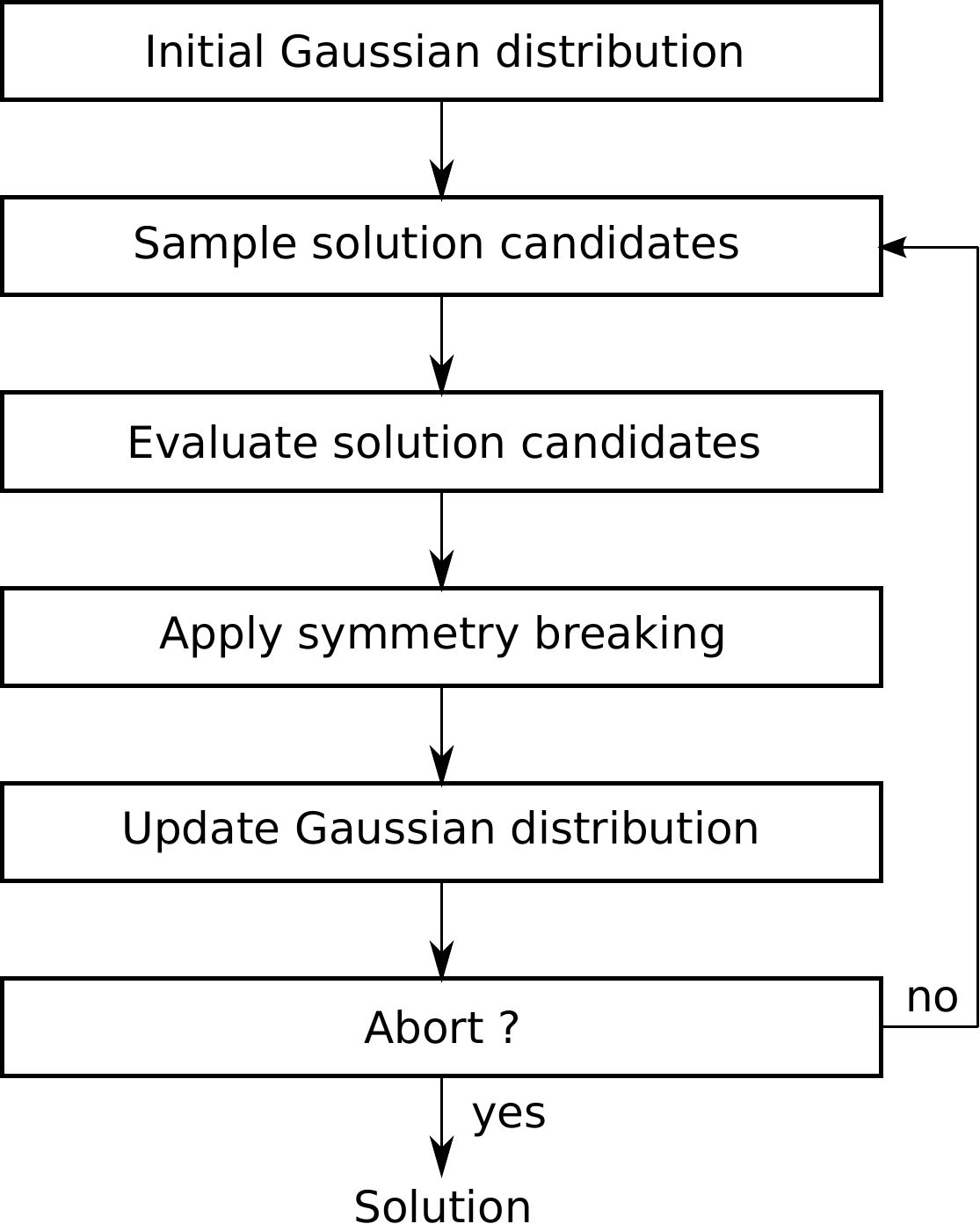}}
      \caption{\label{fig:CMAES-SB-scheme} \it Flowgraph for CMA-ES with symmetry breaking.}
   \end{center}
\end{figure*}
\section{Experiments}\label{sec:experiments}
In this section, we introduce results of experiments to demonstrate the performance improvements by symmetry breaking. The following methods are compared using regression and classification tests. From the DE-family: Differential Evolution (DE), DE with global optimum invariant symmetry breaking (DE-INV-SB), DE with global optimum variant symmetry breaking (DE-SB) and DE with global optimum variant ideal symmetry breaking using brute force search (DE-SB-BF). From the CMA-ES-family: Covariance Matrix Adaptation Evolution Strategies (CMA-ES), CMA-ES with global optimum invariant symmetry breaking (CMA-ES-INV-SB), CMA-ES with global optimum variant symmetry breaking (CMA-ES-SB) and CMA-ES with global optimum variant ideal symmetry breaking using brute force search (CMA-ES-SB-BF). It should be noted that the purpose of this investigation is not to present the best global optimization method for ANN-learning, but to demonstrate the benefits of symmetry breaking.\\
With a $D$-dimensional parameter space, all tests are performed with following settings:
\begin{itemize}
 \item DE, DE-SB, DE-INV-SB and DE-SB-BF settings: $F=0.5$, $C_r=0.9$, initial population is randomly generated in $D$-dim. hypercube $[-1,1]^D$ (uniformly),
 \item CMA-ES, CMA-ES-SB, CMA-ES-INV-SB and CMA-ES-SB-BF settings: we used suggested settings for enhanced global search abilities, mentioned in the C-code reference implementation.
 \item in all experiments, the optimization is finished when a maximum number of ANN-function-evaluations is reached.
\end{itemize}
Given a parameter $\bm{\theta}$ and a data set $(\bm{x}_i,\bm{y}_i)$, we define the Mean Squared Error (MSE) $\epsilon$ according to Eqn.~\eqref{eq:ANN-argmin}:
\begin{equation}\label{eq:errorf}
 \epsilon=\frac{1}{K\cdot q}\sum_{k=1}^K(\bm{y}_k-\Omega(\bm{\theta};\bm{x}_k))^{\top}(\bm{y}_k-\Omega(\bm{\theta};\bm{x}_k)).
\end{equation} 

In order to limit the $D$-dimensional parameter space to a feasible region, we apply a penalty approach. Due to the length-invariance by the symmetry operators as shown in Eqn.~\eqref{eq:length-invar}, the feasible region is defined by a hypersphere. In case of $||\bm{\theta}|| > \sqrt{D}$, the error function~(\ref{eq:errorf}) is evaluated at a rescaled parameter vector $\frac{\bm{\theta}}{||\bm{\theta}||}$ and a penalty term $50(||\bm{\theta}||-\sqrt{D})$ is added to the error $\epsilon$.

In self-generated data sets, we add normal distributed noise with zero mean and variance $\sigma^2$ to the function values $y_i$
\begin{equation}
 y_i=f(\bm{x}_i)+\mu, \ \mu\sim \mathcal{N}(0, \sigma^2),\ \sigma=5\!\!\times\!\! 10^{-3}.
\end{equation}

\subsection{Experimental setup}
In all experiments, data is normalized such that mean is zero and variance is one. The population size $N_p$ used in DE and CMA-ES depends on the problem and the choice of the optimization method. Therefore, it is manually adapted accordingly. For each problem and each optimization method, we conduct 50 independent repetitions of the optimization process and record the error over the number of ANN-evaluations. To test for statistical significance of the obtained results, first the Kruskal-Wallis test~\cite{Hollander-nonpara} for the hypothesis that all performance means are equal is applied. In case this hypothesis is rejected, the Wilcoxon rank sum test~\cite{Wilcoxon45} is applied to all pairs of means to identify significantly different results. All tests are based on a significance level of $0.05$.
In Table~\ref{tab:summary-res}, normalized training set errors for the regression and the autoencoding problems, and normalized test set errors for the classification problems are shown.
\begin{table}[]\label{tab:summary-res}
\caption{\it Normalized training set errors for the regression and the autoencoding problems, and normalized test set errors for the classification problems. The best results are printed in {\bf boldface}. For each problem and method, errors are normalized by the maximum error from within the corresponding regular method, its extension by global optimization invariant symmetry breaking and its extension by global optimization variant symmetry breaking.}
\center
{\tiny
\begin{tabular}{l|c|c|c||c|c|c}
 & DE & DE-INV-SB & DE-SB & CMA-ES & CMA-ES-INV-SB & CMA-ES-SB\\\noalign{\hrule height 1.25pt}
{\bf syn5}  & 0.958 $\pm$ 0.079 & 1.000 $\pm$ 0.186 & {\bf 0.949 $\pm$ 0.039} & 1.000 $\pm$ 4.446 & 0.386 $\pm$ 0.469 & {\bf 0.093 $\pm$ 0.007}\\\hline
{\bf sinc} & 1.000 $\pm$ 0.859 & 0.412 $\pm$ 0.166 & {\bf 0.114 $\pm$ 0.008} & 0.459 $\pm$ 0.271 & 1.000 $\pm$ 0.784 & {\bf 0.139 $\pm$ 0.051}\\\hline
{\bf inc-sinc} & 1.000 $\pm$ 0.963 & 0.337 $\pm$ 0.155 & {\bf 0.089 $\pm$ 0.016} & 0.287 $\pm$ 0.336 & 1.000 $\pm$ 0.707 & {\bf 0.082 $\pm$ 0.035}\\\hline
{\bf sinc2d} & 1.000 $\pm$ 0.387 & 0.995 $\pm$ 0.094 & {\bf 0.875 $\pm$ 0.029} & 0.975 $\pm$ 0.139 & 1.000 $\pm$ 0.241 & {\bf 0.089 $\pm$ 0.253}\\\hline
{\bf sinc3d} & 0.622 $\pm$ 0.029 & 1.000 $\pm$ 0.572 & {\bf 0.603 $\pm$ 0.033} & 1.000 $\pm$ 1.401 & 0.090 $\pm$ 0.013 & {\bf 0.043 $\pm$ 0.021}\\\noalign{\hrule height 1.25pt}
{\bf autoenc-circle} & 0.057 $\pm$ 0.082 & 1.000 $\pm$ 1.850 & {\bf 0.020 $\pm$ 0.030} & 1.000 $\pm$ 0.295 & 0.626 $\pm$ 0.548 & {\bf 0.077 $\pm$ 0.164}\\\hline
{\bf autoenc-spiral} & 0.341 $\pm$ 0.545 & 1.000 $\pm$ 0.932 & {\bf 0.116 $\pm$ 0.308} & 0.248 $\pm$ 0.232 & 1.000 $\pm$ 0.882 & {\bf 0.030 $\pm$ 0.024}\\\hline
{\bf autoenc-sphere} & 0.554 $\pm$ 0.321 & 1.000 $\pm$ 0.064 & {\bf 0.022 $\pm$ 0.012}
 & 0.050 $\pm$ 0.012 & 1.000 $\pm$ 0.416 & {\bf 0.032 $\pm$ 0.008}\\\noalign{\hrule height 1.25pt}
{\bf two-circles} & 0.450 $\pm$ 0.225 & 1.000 $\pm$ 0.182 & {\bf 0.269 $\pm$ 0.074} & 0.635 $\pm$ 0.368 & 1.000 $\pm$ 0.284 & {\bf 0.326 $\pm$ 0.169}\\\hline
{\bf two-spirals} & 0.918 $\pm$ 0.260 & 1.000 $\pm$ 0.213 & {\bf 0.426 $\pm$ 0.197}
 & 1.000 $\pm$ 0.228 & 0.930 $\pm$ 0.201 & {\bf 0.683 $\pm$ 0.293}\\\hline
{\bf digits} & 0.325 $\pm$ 0.087 & 1.000 $\pm$ 0.111 & {\bf 0.272 $\pm$ 0.062} & 1.000 $\pm$ 0.352 & 0.805 $\pm$ 0.113 & {\bf 0.668 $\pm$ 0.099}
\end{tabular}}
\end{table}

\clearpage
\subsection{Regression problems}\label{sec:regressionproblems}
As in~\cite{castillo06very}, we apply learning only on a training set to compare the performance of the introduced methods. In the following, the regression problems are introduced and corresponding results are shown.
\subsubsection{Dataset {\bf syn5}}
The {\bf syn5} dataset is generated by the fourth-degree polynome $(x-0.5)^2\cdot(0.1+(x+0.65)^2)$ with uniformly distributed random input values $x_i\in(-1,1)$. We use a 1-3-1 net and 200 data samples. The population size for all DE-based methods is $N_p=80$, and $N_p=48$ for all CMA-ES-based methods. Fig.~\ref{fig:syn5-res} shows the resulting convergence curves and box plots for the learning process.
\begin{figure*}[h!]
   \begin{center}
      \scalebox{0.3}{\includegraphics{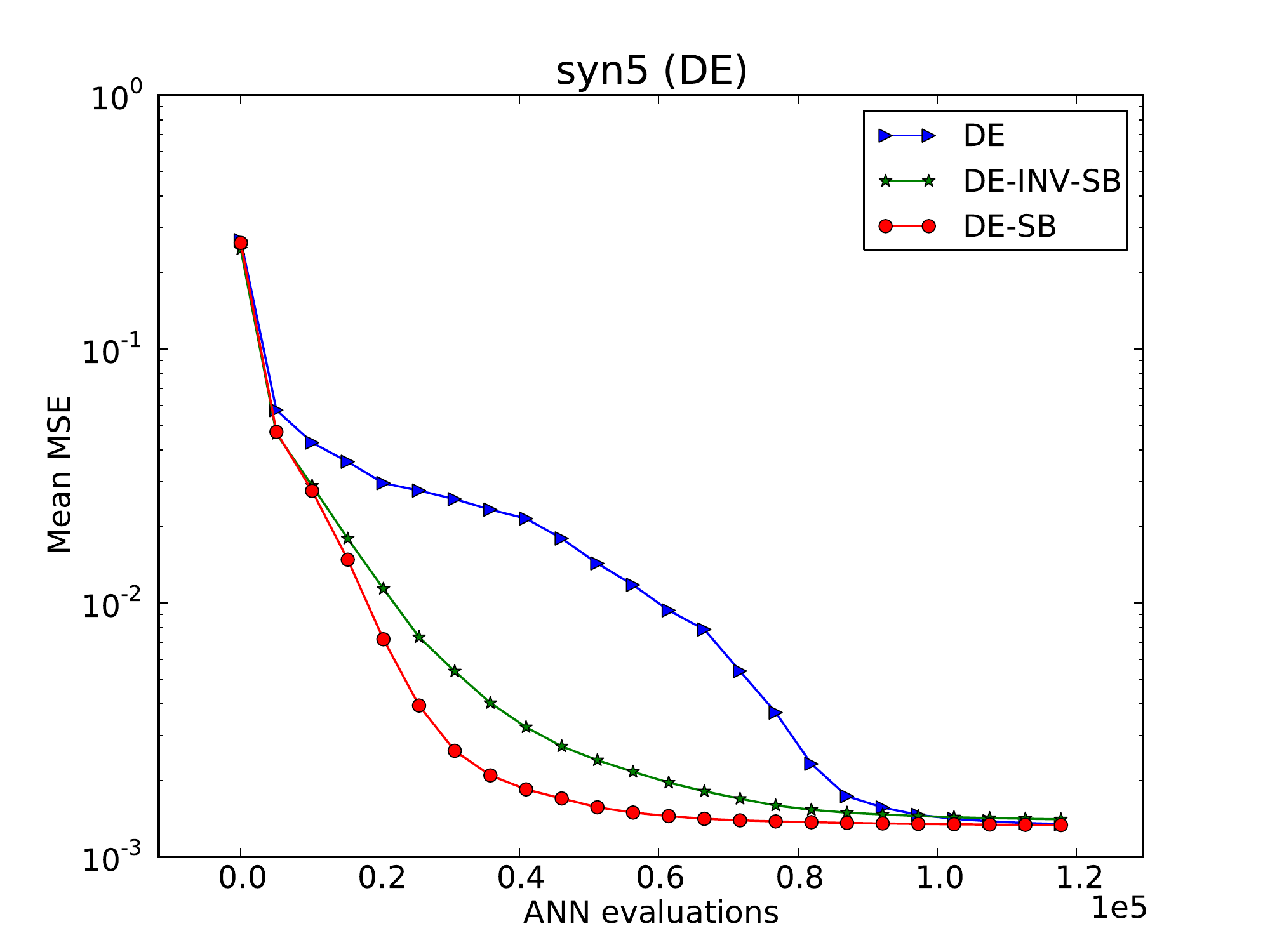}}
      \scalebox{0.3}{\includegraphics{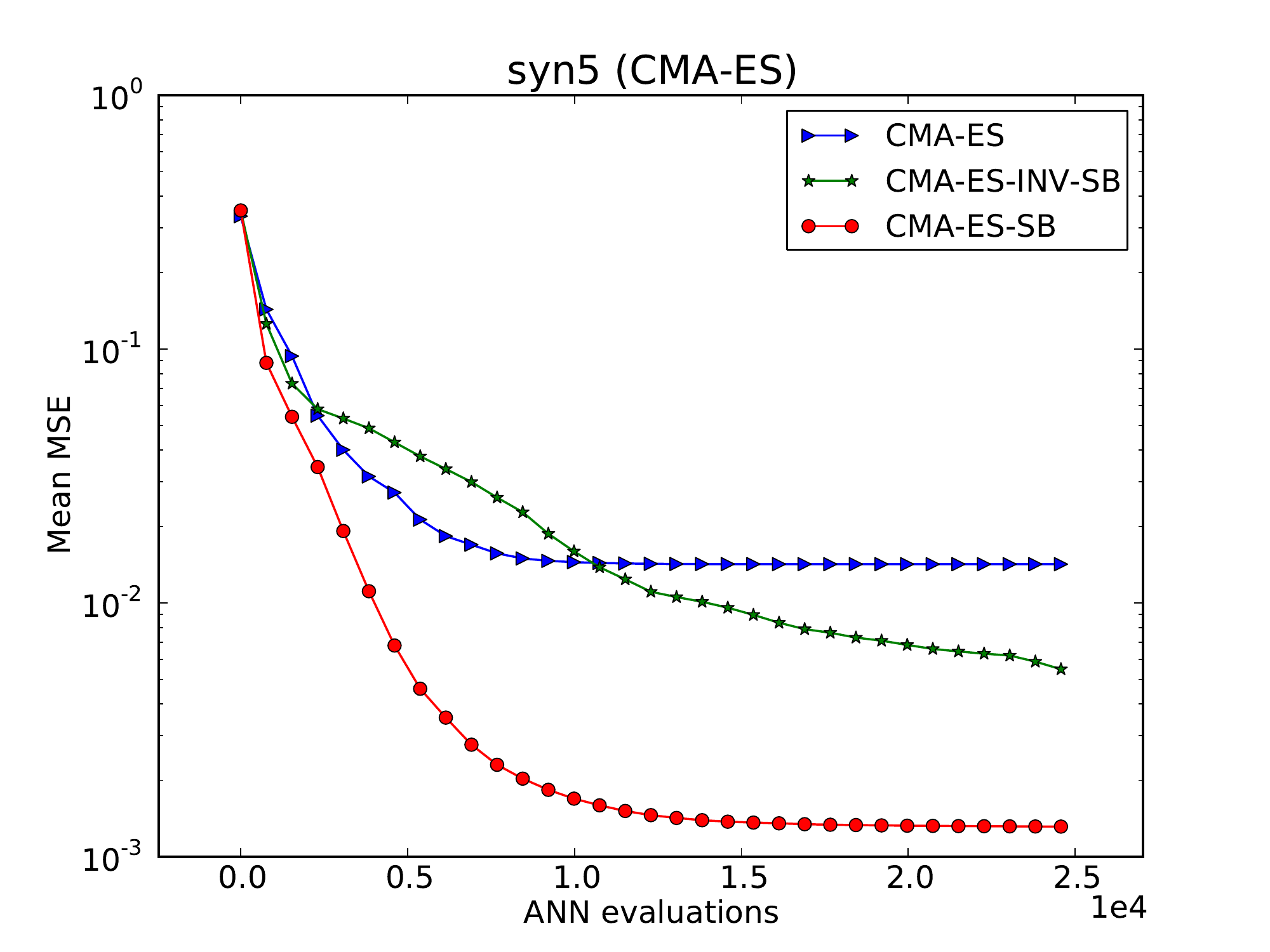}}
      \scalebox{0.3}{\includegraphics{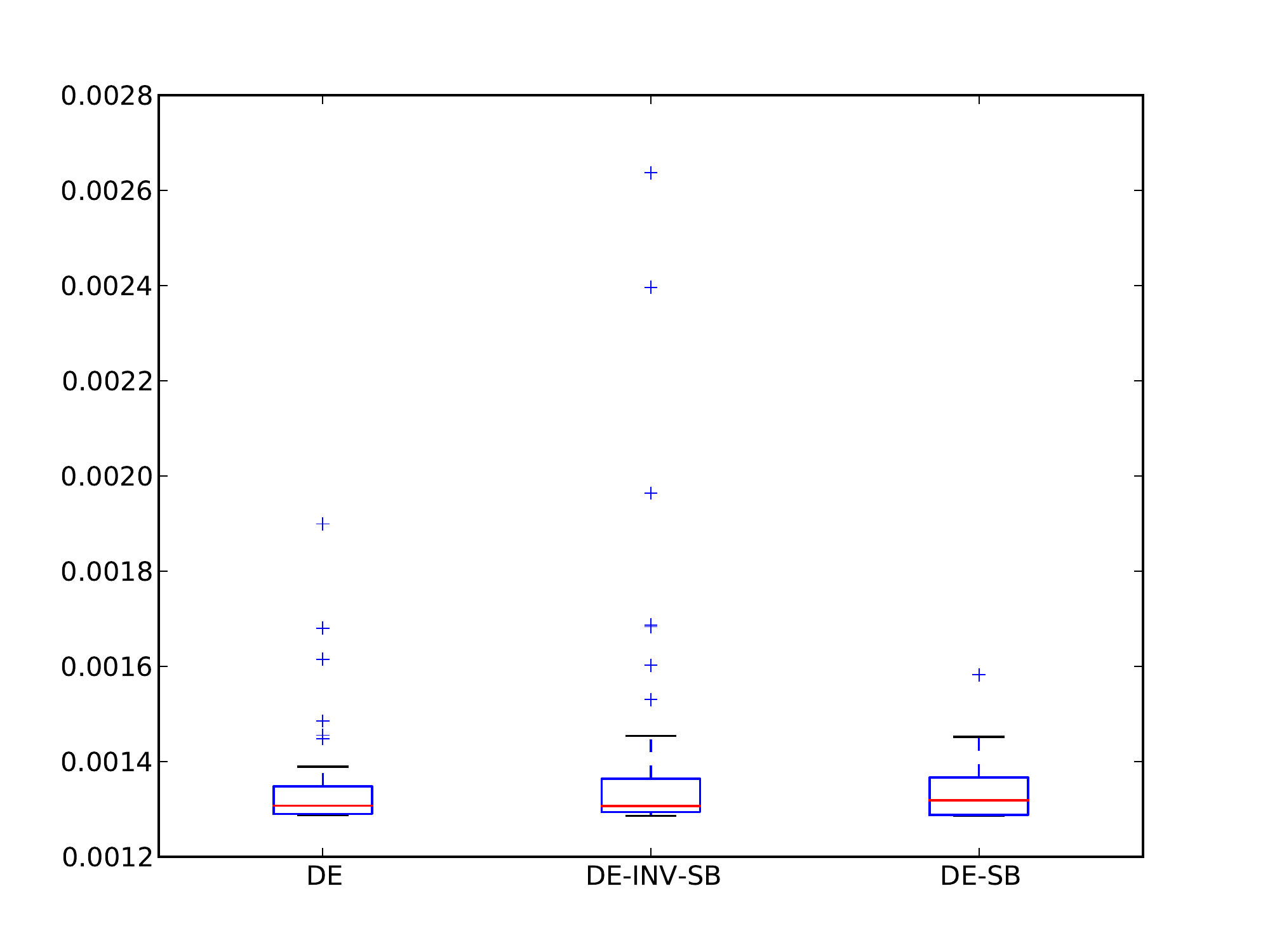}}
      \scalebox{0.3}{\includegraphics{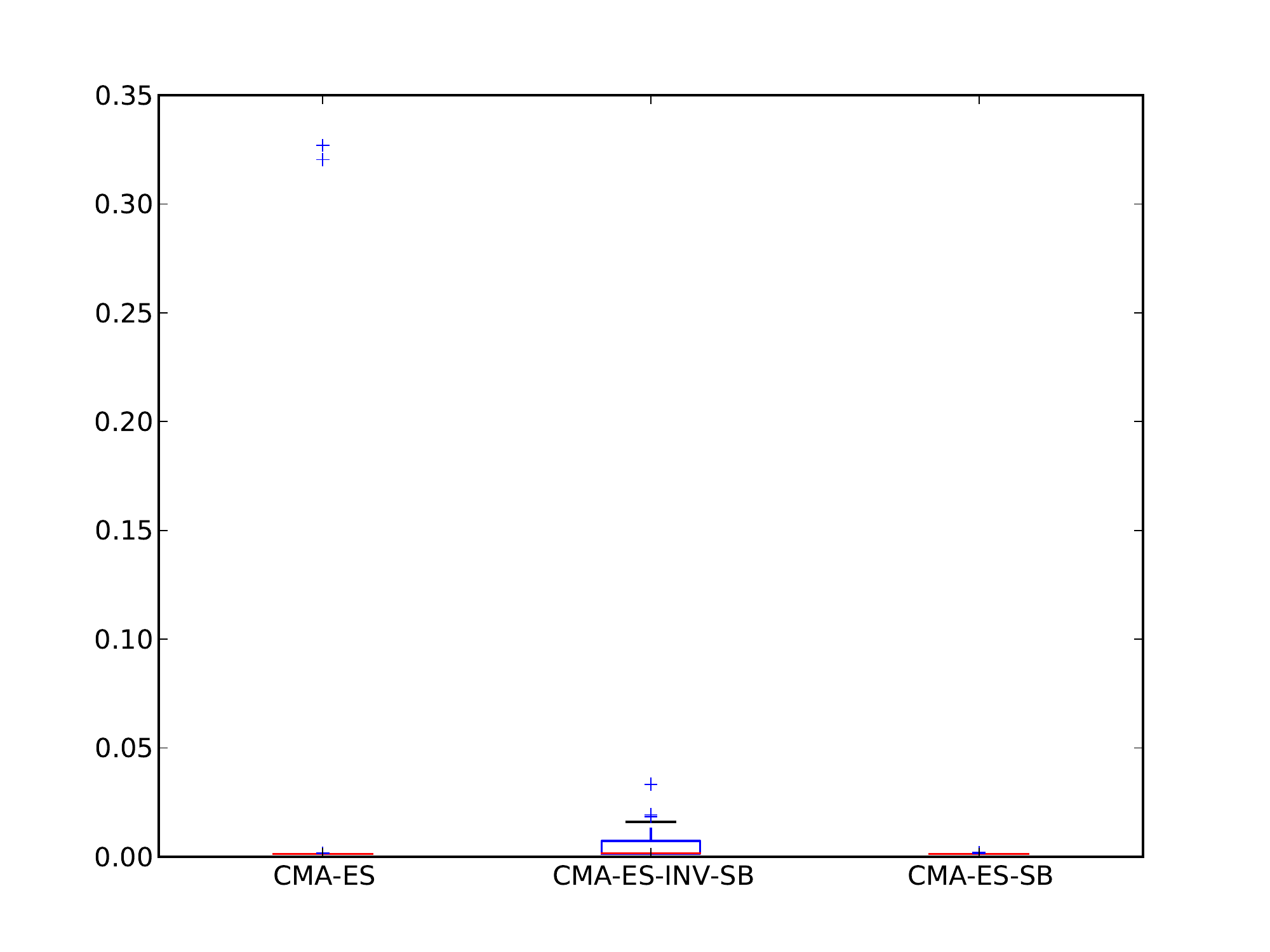}}
      \caption{\label{fig:syn5-res} \it Convergence curves for regression by DE (left) and CMA-ES (right) using the {\bf syn5} dataset.}
   \end{center}
\end{figure*}
For the DE-family, the Kruskal-Wallis test showed no significant difference in means. In contrast, according to the Wilcoxon tests, the inequality of means of CMA-ES and CMA-ES-SB is rejected by a narrow margin, with a corresponding p-value of $0.08$. The other means are significantly different. All DE variants reach the same low-error, where DE-SB shows the fastest decrease in error. As for the CMA-ES variants, CMA-ES fails to reach a low error in a few runs, which leads to a larger mean error in average. In contrast, CMA-ES-SB proves to be more robust and reaches a relatively low error in all runs.
\clearpage
\subsubsection{Dataset {\bf sinc}}
The {\bf sinc} dataset is generated by the function $\frac{\sin(10x)}{10x}$ with uniformly distributed random input values $x_i\in(-1,1)$. We use a 1-5-1 net and 200 data samples. The population size for all DE-based methods is $N_p=120$, and $N_p=400$ for all CMA-ES-based methods. Fig.~\ref{fig:sinc-res} shows the resulting convergence curves and box plots for the learning process.
\begin{figure*}[h!]
   \begin{center}
      \scalebox{0.35}{\includegraphics{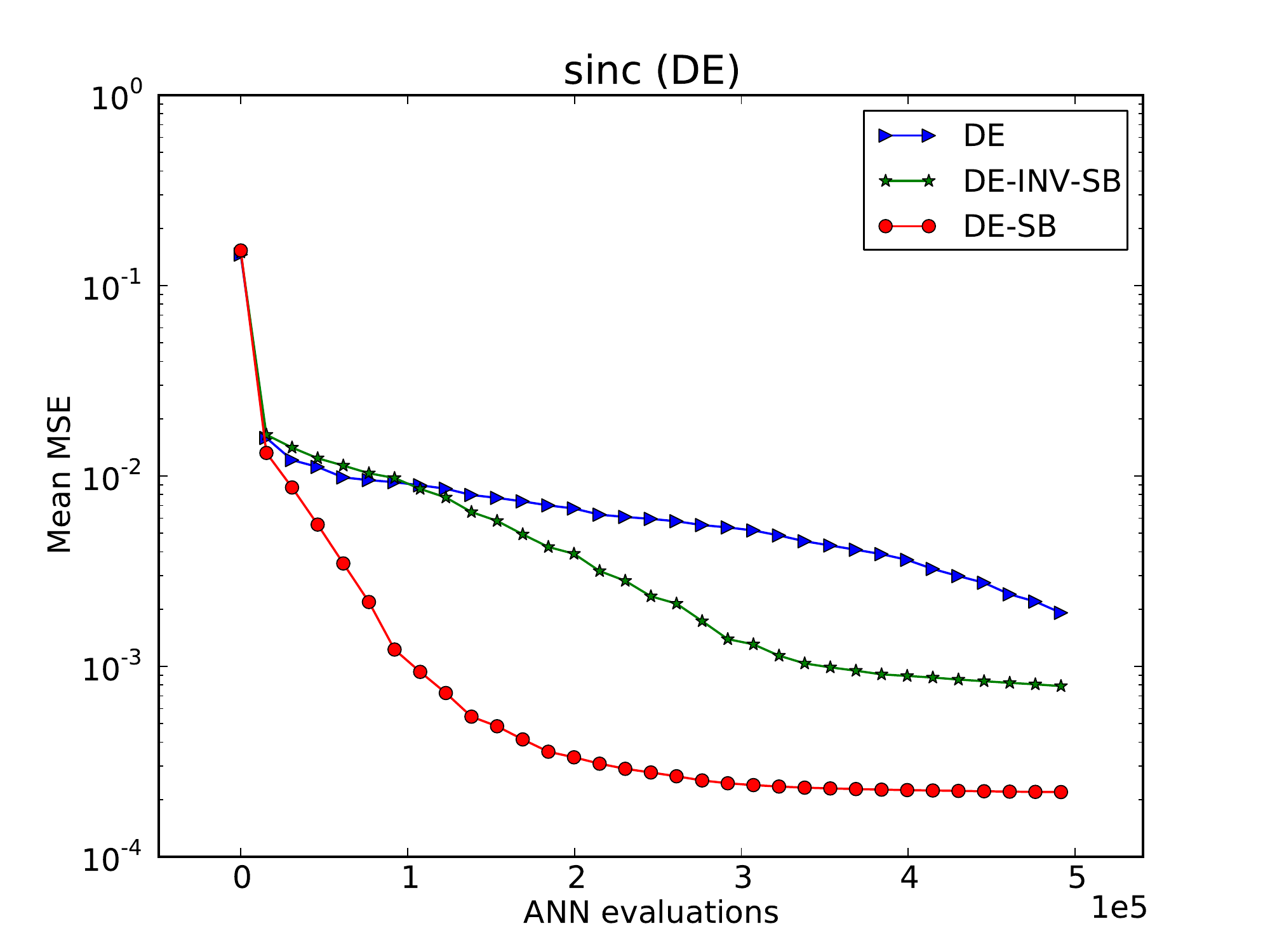}}
      \scalebox{0.35}{\includegraphics{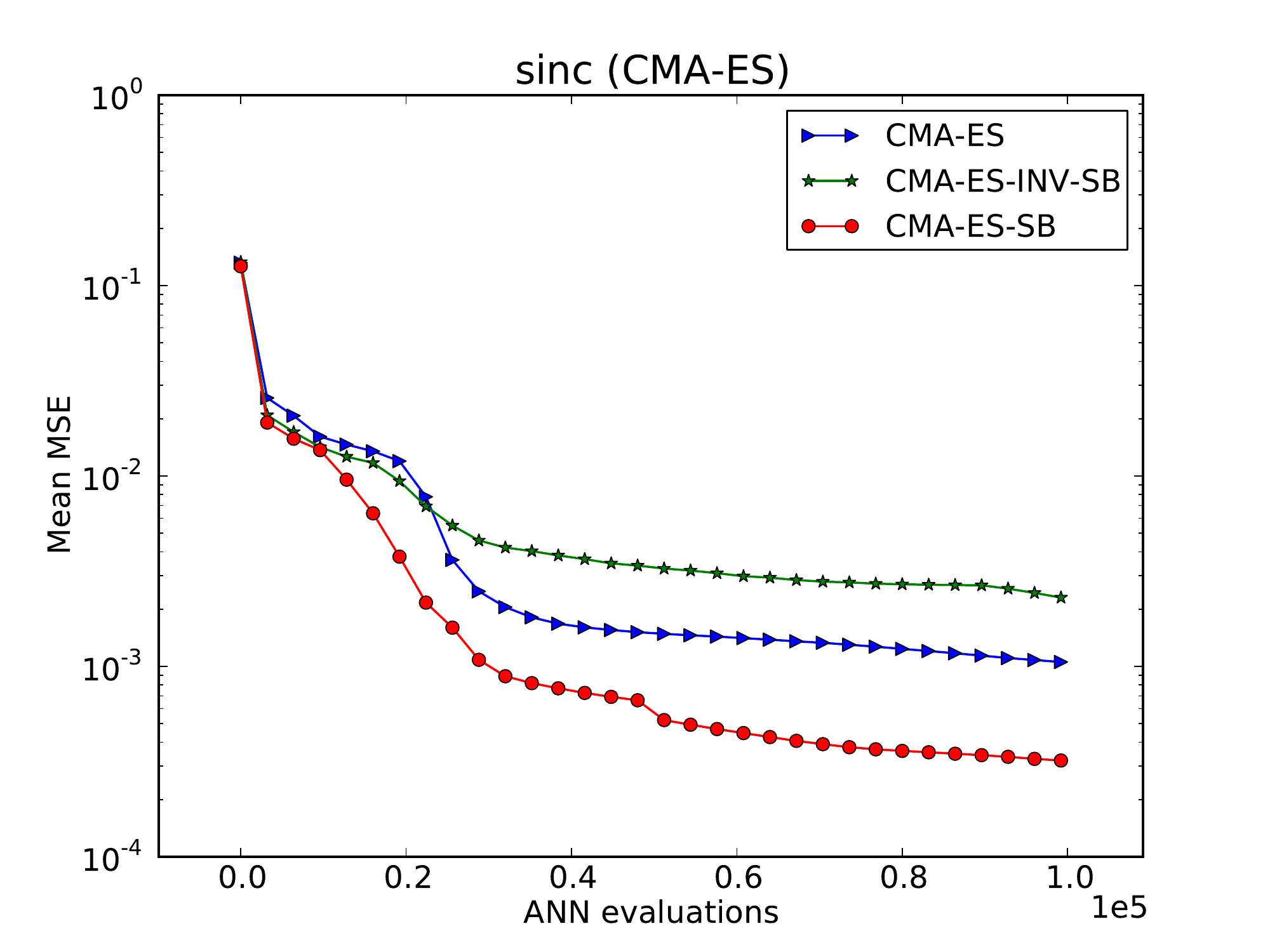}}
      \scalebox{0.35}{\includegraphics{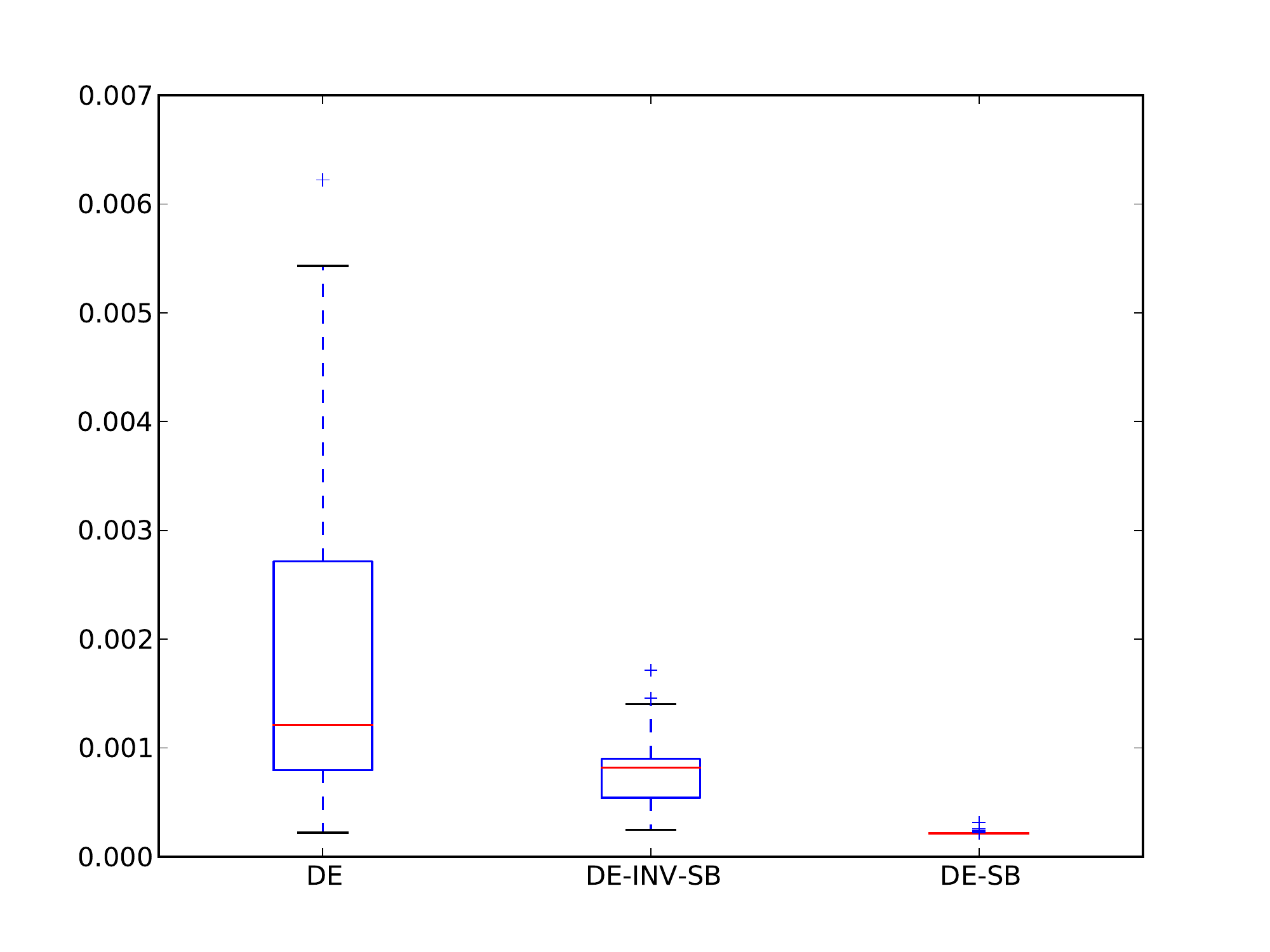}}
      \scalebox{0.35}{\includegraphics{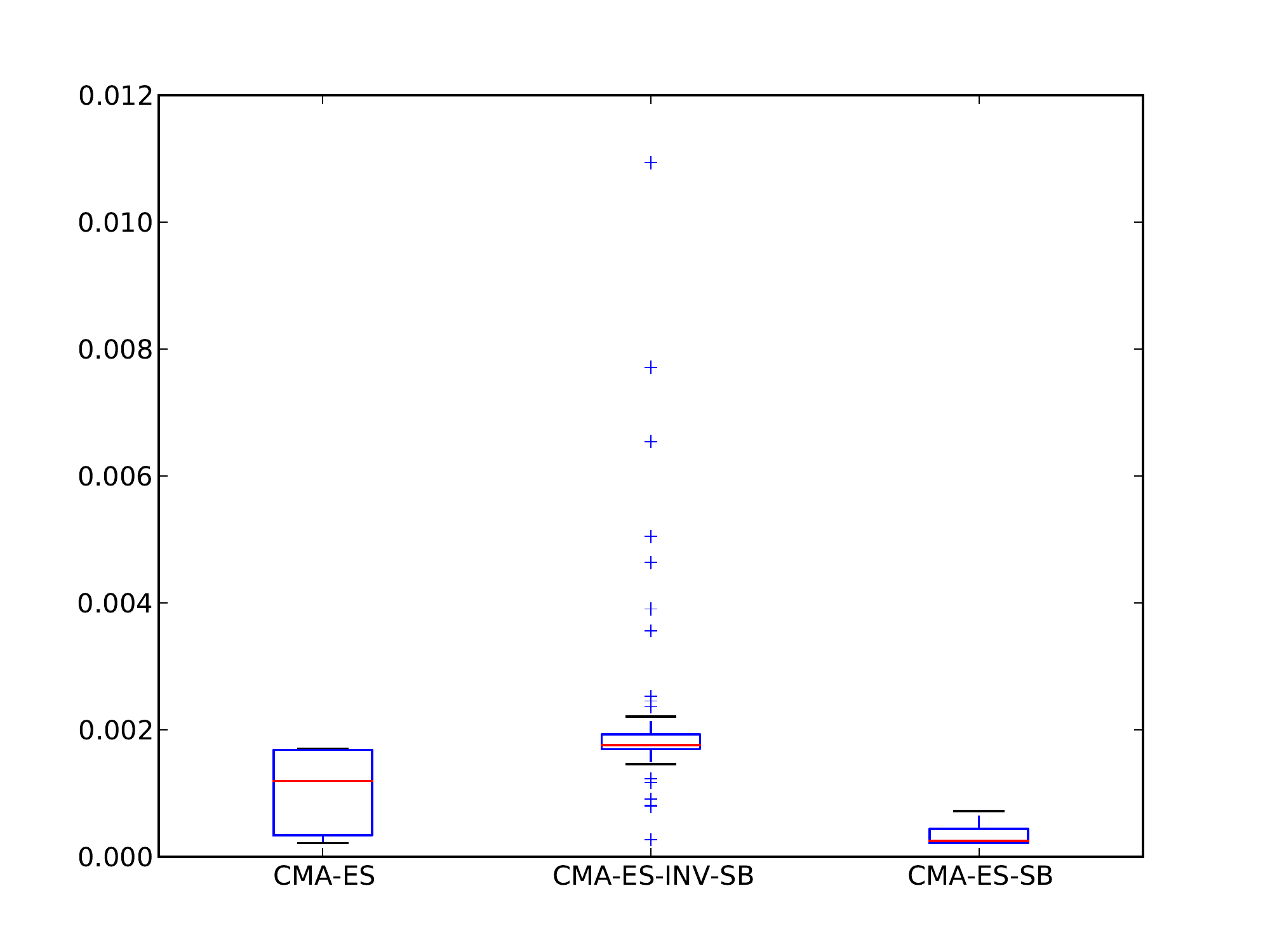}}
      \caption{\label{fig:sinc-res} \it Convergence curves for regression by DE (left) and CMA-ES (right) using the {\bf sinc} dataset.}
   \end{center}
\end{figure*}
According to the Wilcoxon tests, all pairwise differences are significant. DE-SB clearly outperforms DE and DE-INV-SB. Similarly, CMA-ES-SB is the fastest among the CMA-ES-based methods.

\clearpage
\subsubsection{Dataset {\bf inc-sinc}}
The {\bf inc-sinc} dataset is generated by the function $\frac{x}{2}+\frac{\sin(10x)}{10x}$ with uniformly distributed random input values $x_i\in(-1,1)$. We use a 1-5-1 net and 200 data samples. The population size for all DE-based methods is $N_p=144$, and $N_p=400$ for all CMA-ES-based methods. Fig.~\ref{fig:inc-sinc-res} shows the resulting convergence curves and box plots for the learning process. 
\begin{figure*}[h!]
   \begin{center}
      \scalebox{0.35}{\includegraphics{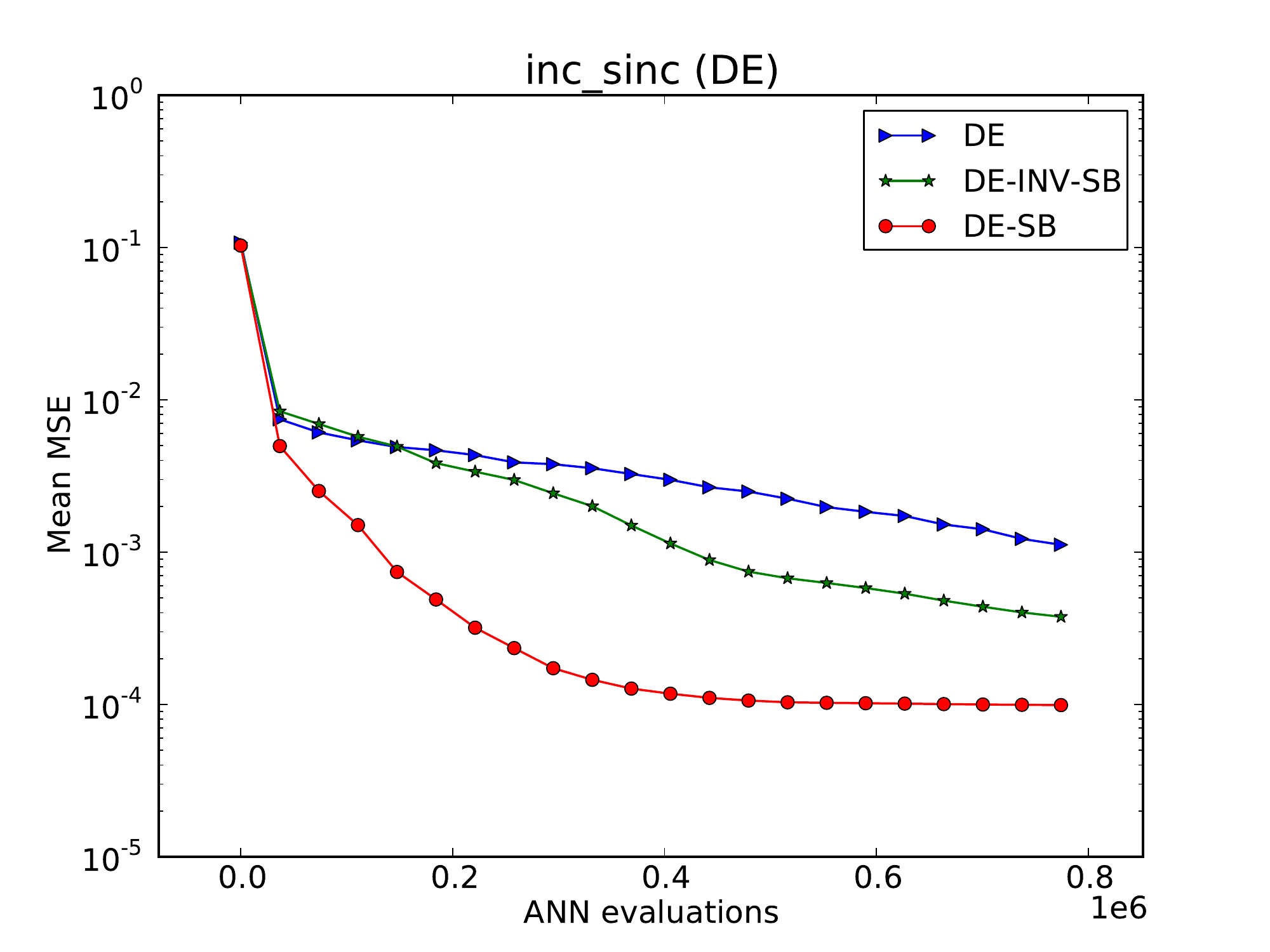}}
      \scalebox{0.35}{\includegraphics{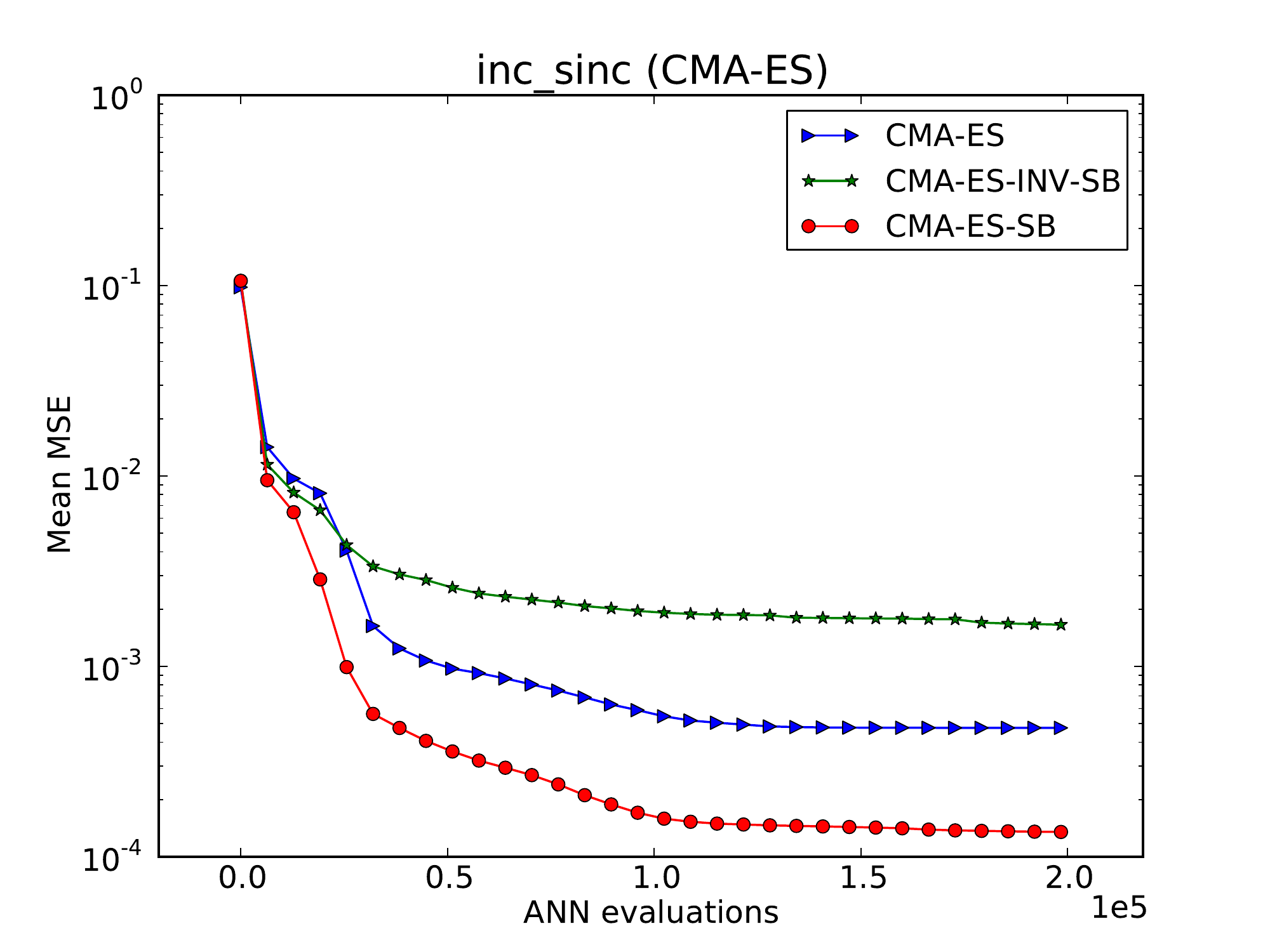}}
      \scalebox{0.35}{\includegraphics{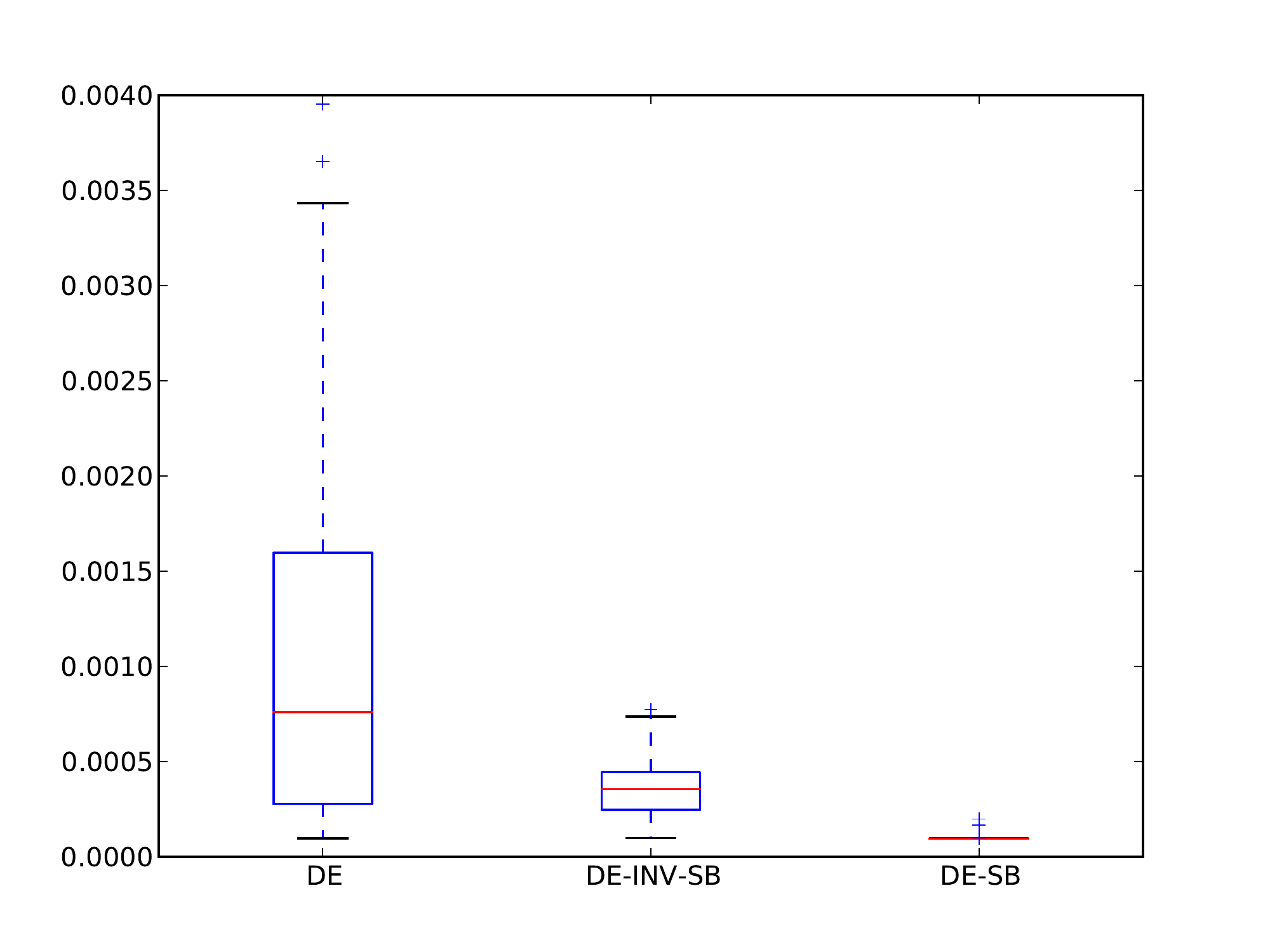}}
      \scalebox{0.35}{\includegraphics{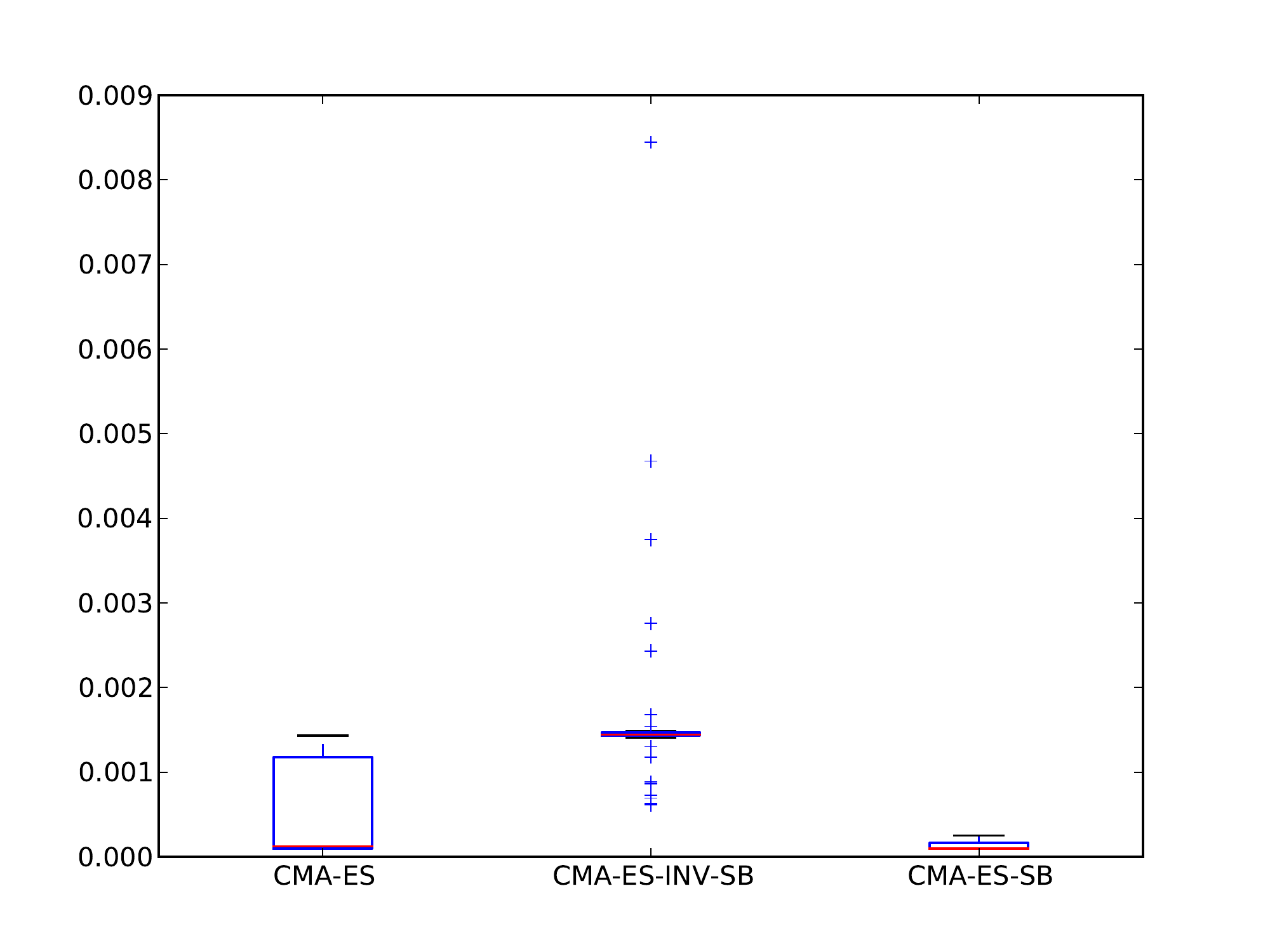}}
      \caption{\label{fig:inc-sinc-res} \it Convergence curves for regression by DE (left) and CMA-ES (right) using the {\bf inc-sinc} dataset.}
   \end{center}
\end{figure*}
According to the Wilcoxon tests, all pairwise differences are significant. Interestingly, the global optimum invariant symmetry breaking approach leads to in improvement for DE (DE-INV-SB), but shows inferior performance on CMA-ES (CMA-ES-INV-SB). This proves that symmetry breaking approaches should be specific to the selected global optimization method. Again, DE-SB and CMA-ES-SB are the fastest methods.
\clearpage
\subsubsection{Dataset {\bf sinc2d}}
The {\bf sinc2d} dataset is generated by the function $\frac{\sin(5||x||)}{15||x||}$ with uniformly distributed random input values $x_i\in(-1,1)^2$. We use a 2-3-1-3-1 net and 1000 data samples. The population size for all DE-based methods is $N_p=96$, and $N_p=1000$ for all CMA-ES-based methods. Fig.~\ref{fig:sinc2d-res} shows the resulting convergence curves and box plots for the learning process.
\begin{figure*}[h!]
   \begin{center}
      \scalebox{0.35}{\includegraphics{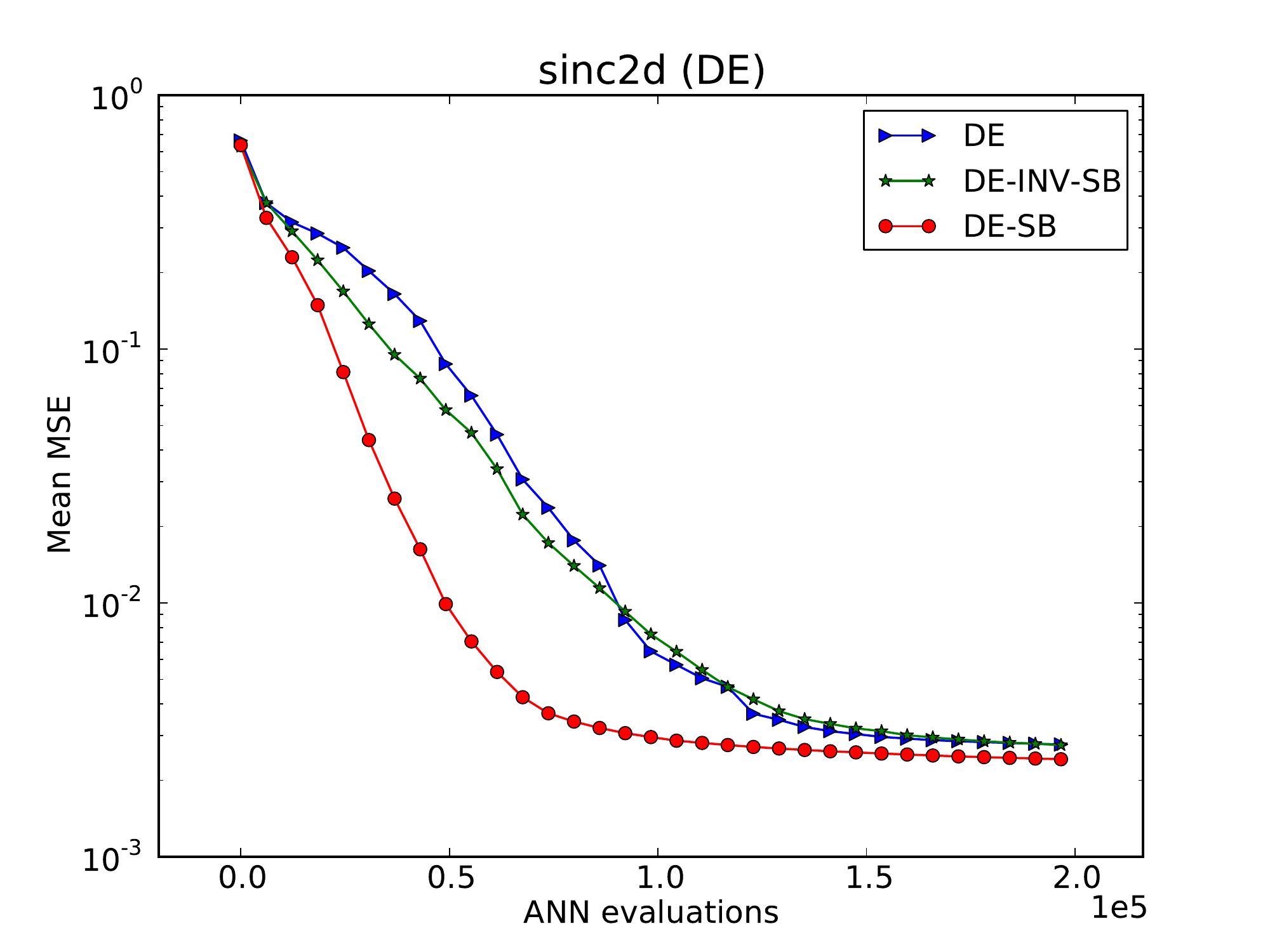}}
      \scalebox{0.35}{\includegraphics{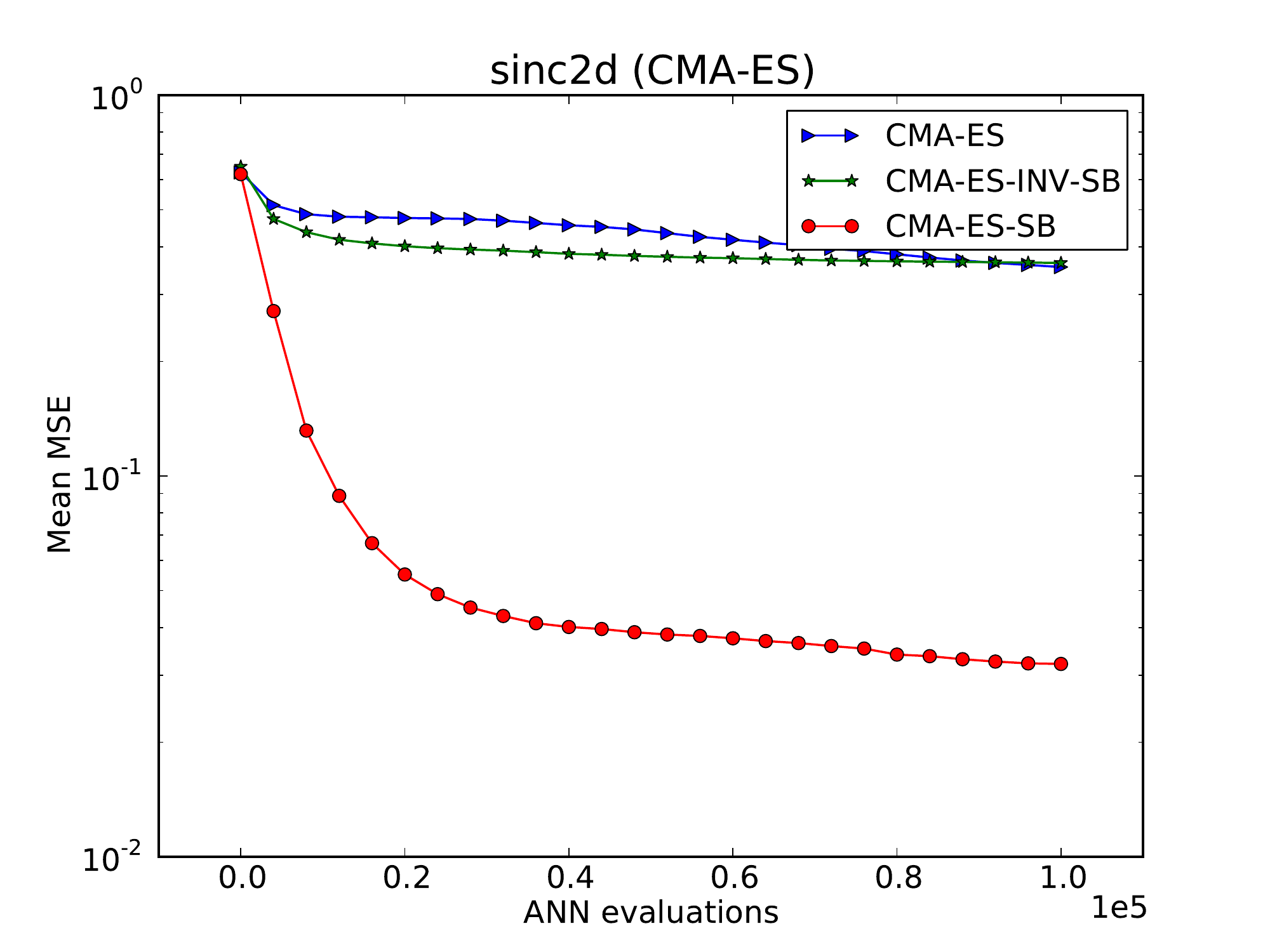}}
      \scalebox{0.35}{\includegraphics{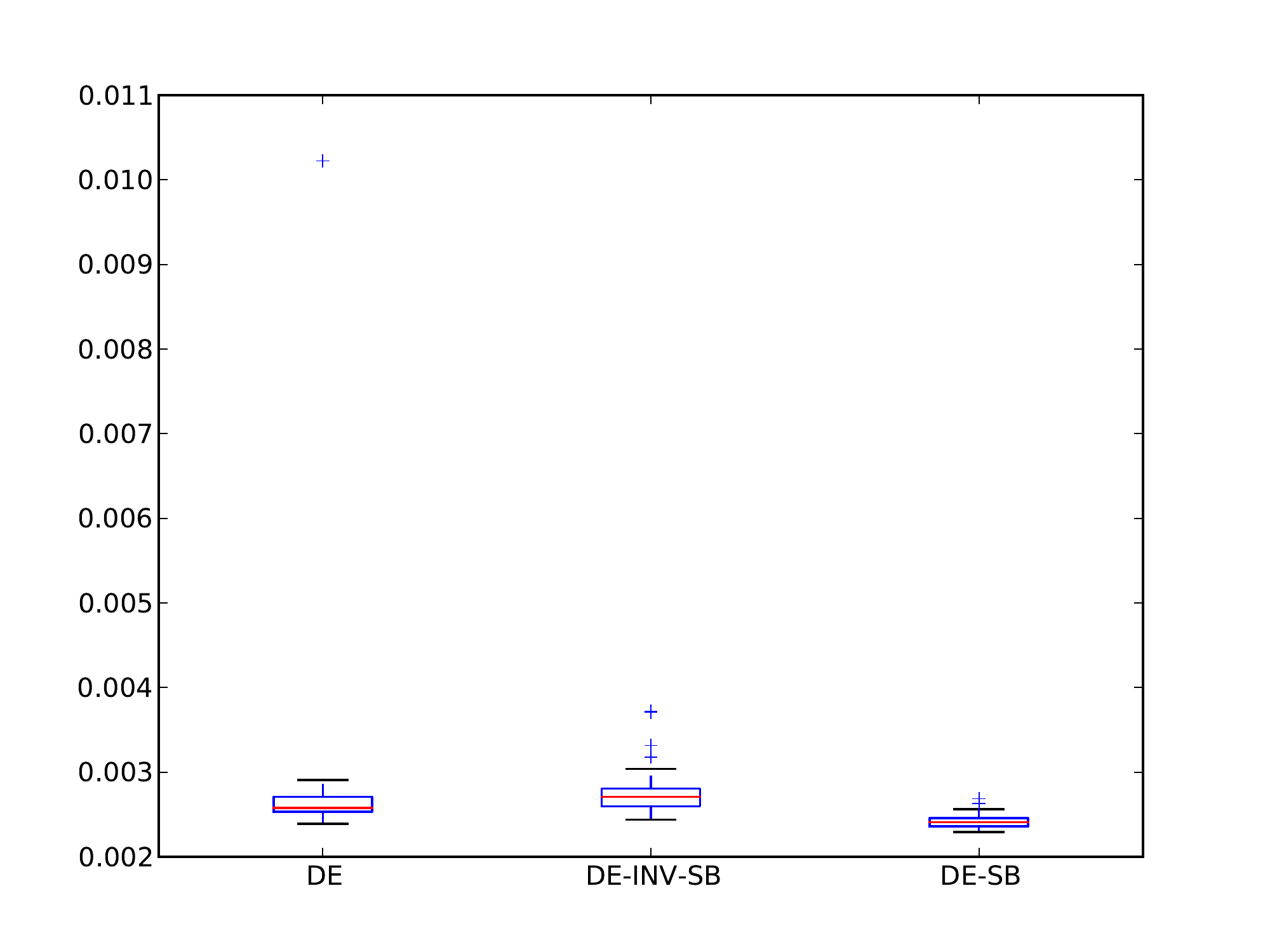}}
      \scalebox{0.35}{\includegraphics{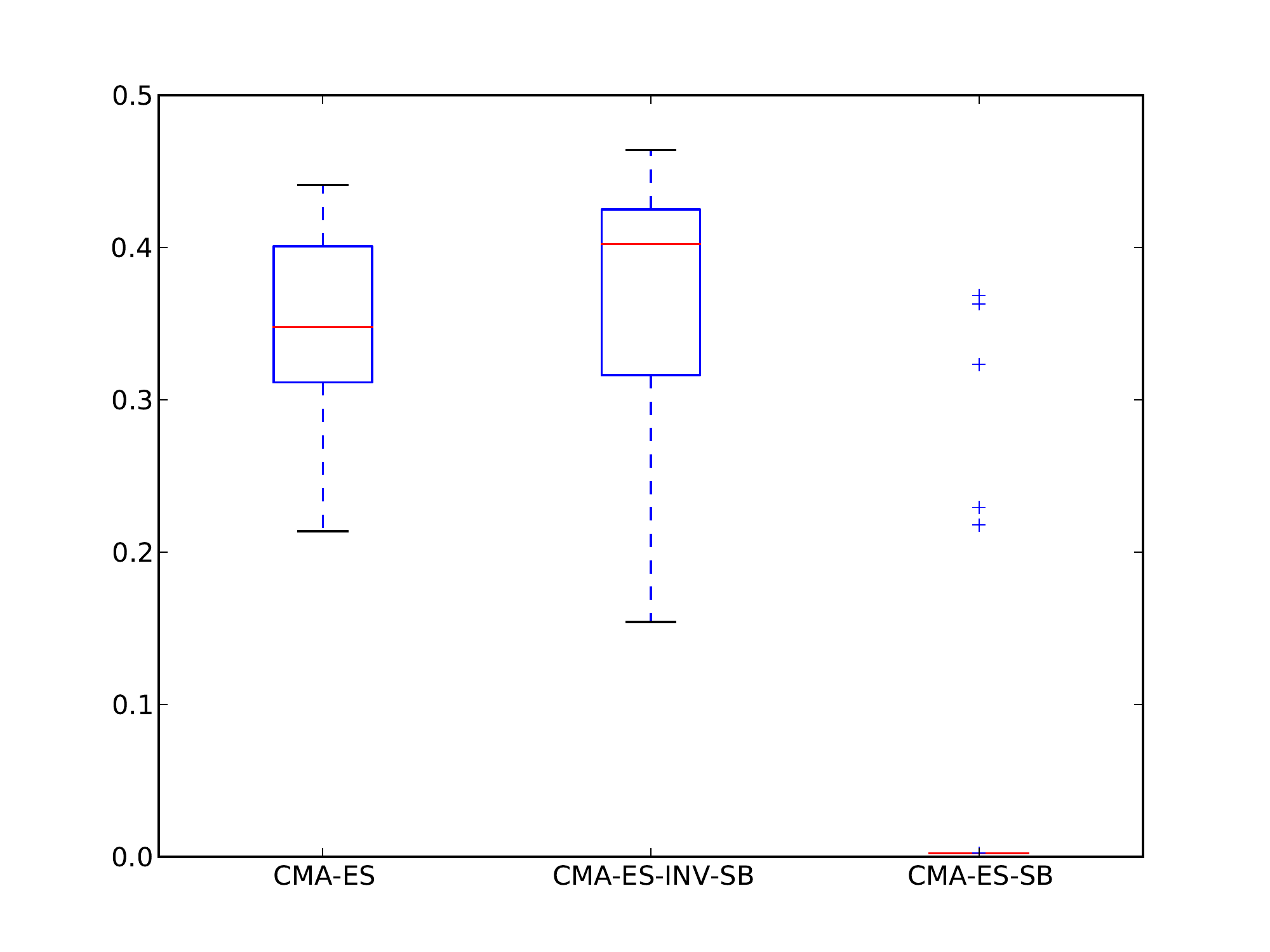}}
      \caption{\label{fig:sinc2d-res} \it Convergence curves for regression by DE (left) and CMA-ES (right) using the {\bf sinc2d} dataset.}
   \end{center}
\end{figure*}
According to the Wilcoxon tests, all pairwise differences are significant, except the difference between CMA-ES and CMA-ES-INV-SB.
The proposed symmetry breaking approach shows a very clear impact on the CMA-ES-variants. While CMA-ES and CMA-ES-INV-SB fail to solve this problem completely, CMA-ES-SB successfully trains the ANN in the majority of the 50 runs.
\clearpage
\subsubsection{Dataset {\bf sinc3d}}
The {\bf sinc3d} dataset is generated by the function $\frac{\sin(5||x||)}{15||x||}$ with uniformly distributed random input values $x_i\in(-1,1)^3$. We use a 3-4-1-4-1 net and 1000 data samples. The population size for all DE-based methods is $N_p=120$, and $N_p=1000$ for all CMA-ES-based methods. Fig.~\ref{fig:sinc3d-res} shows the resulting convergence curves and box plots for the learning process.
\begin{figure*}[h!]
   \begin{center}
      \scalebox{0.35}{\includegraphics{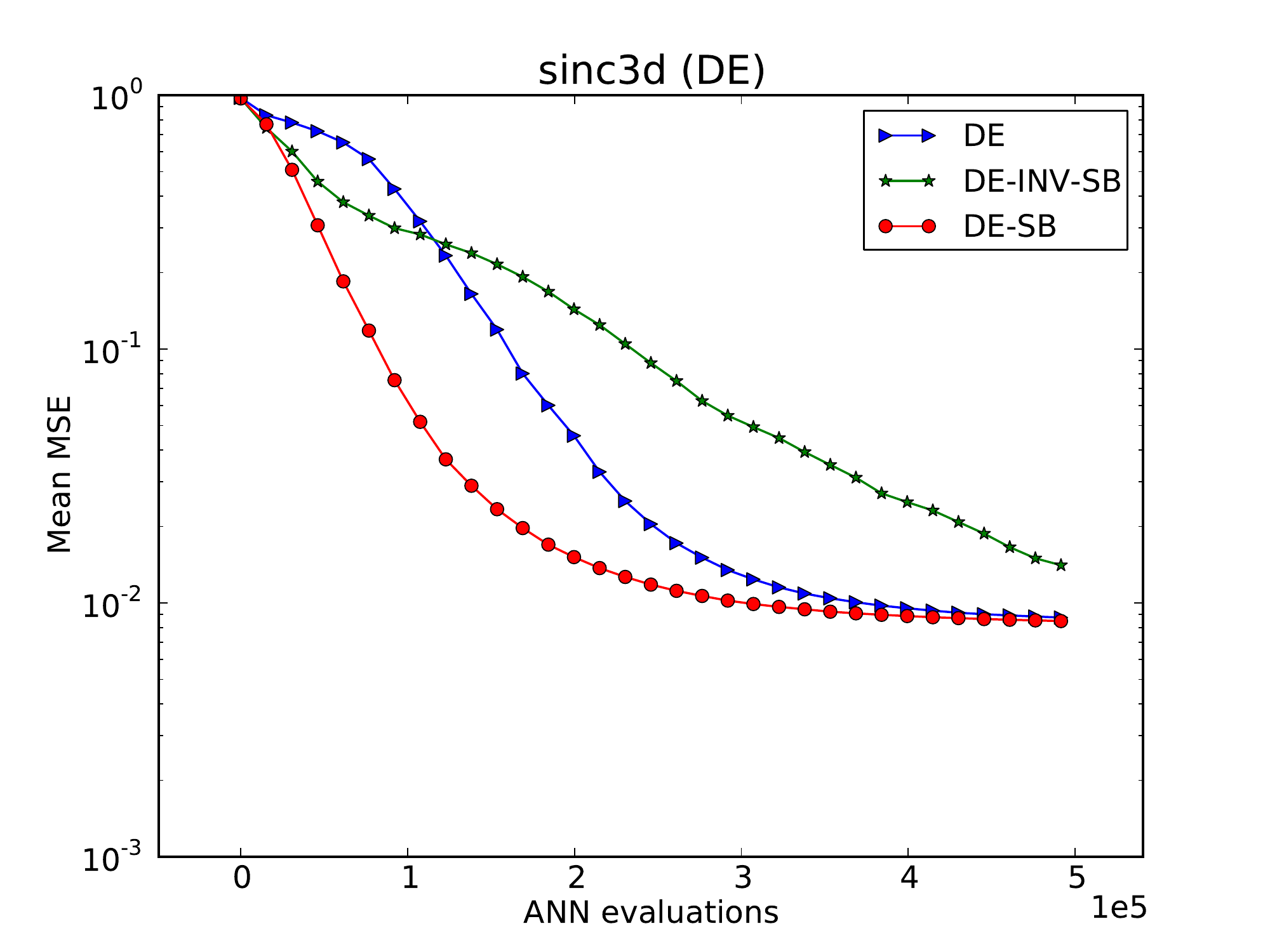}}
      \scalebox{0.35}{\includegraphics{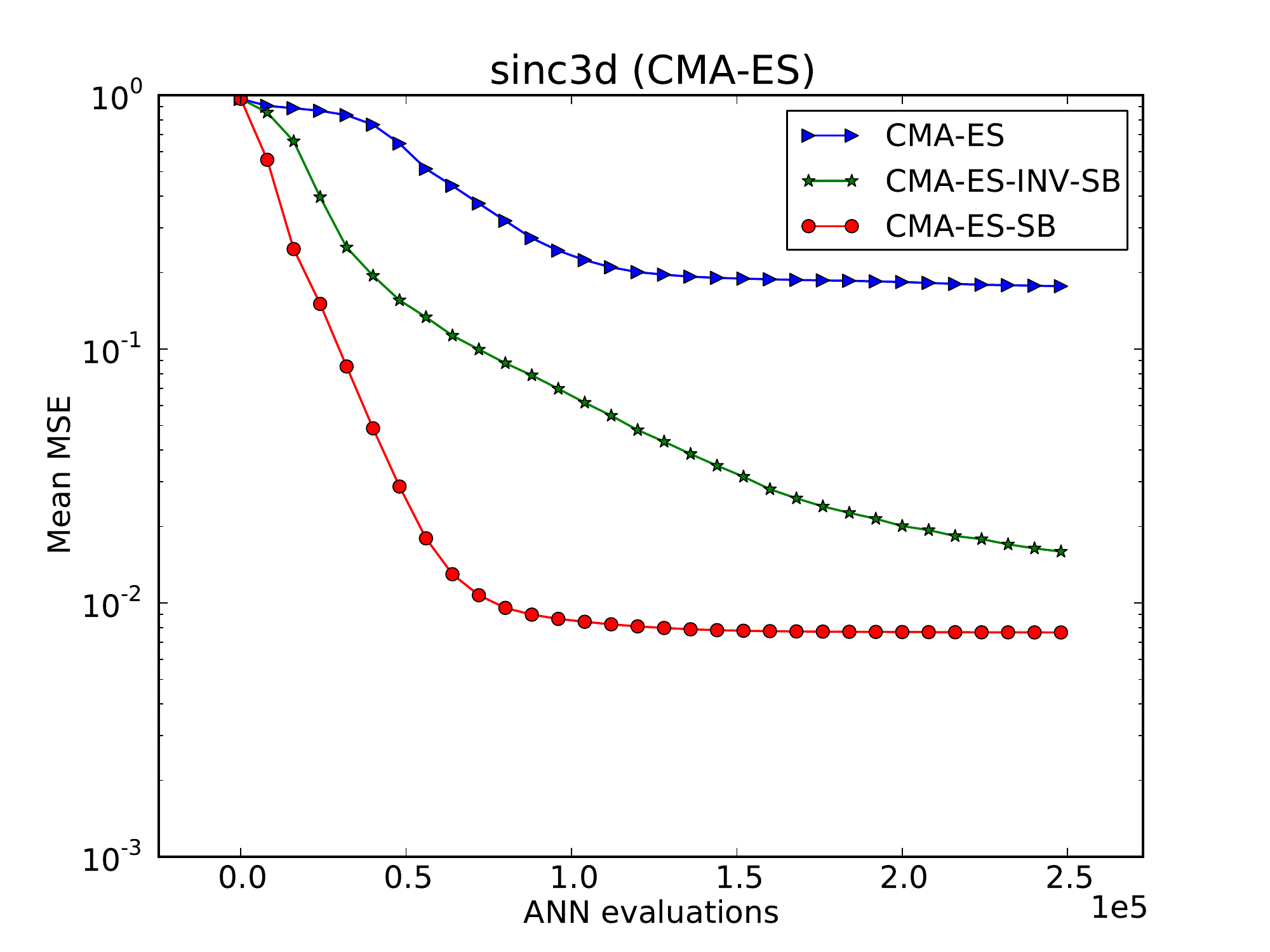}}
      \scalebox{0.35}{\includegraphics{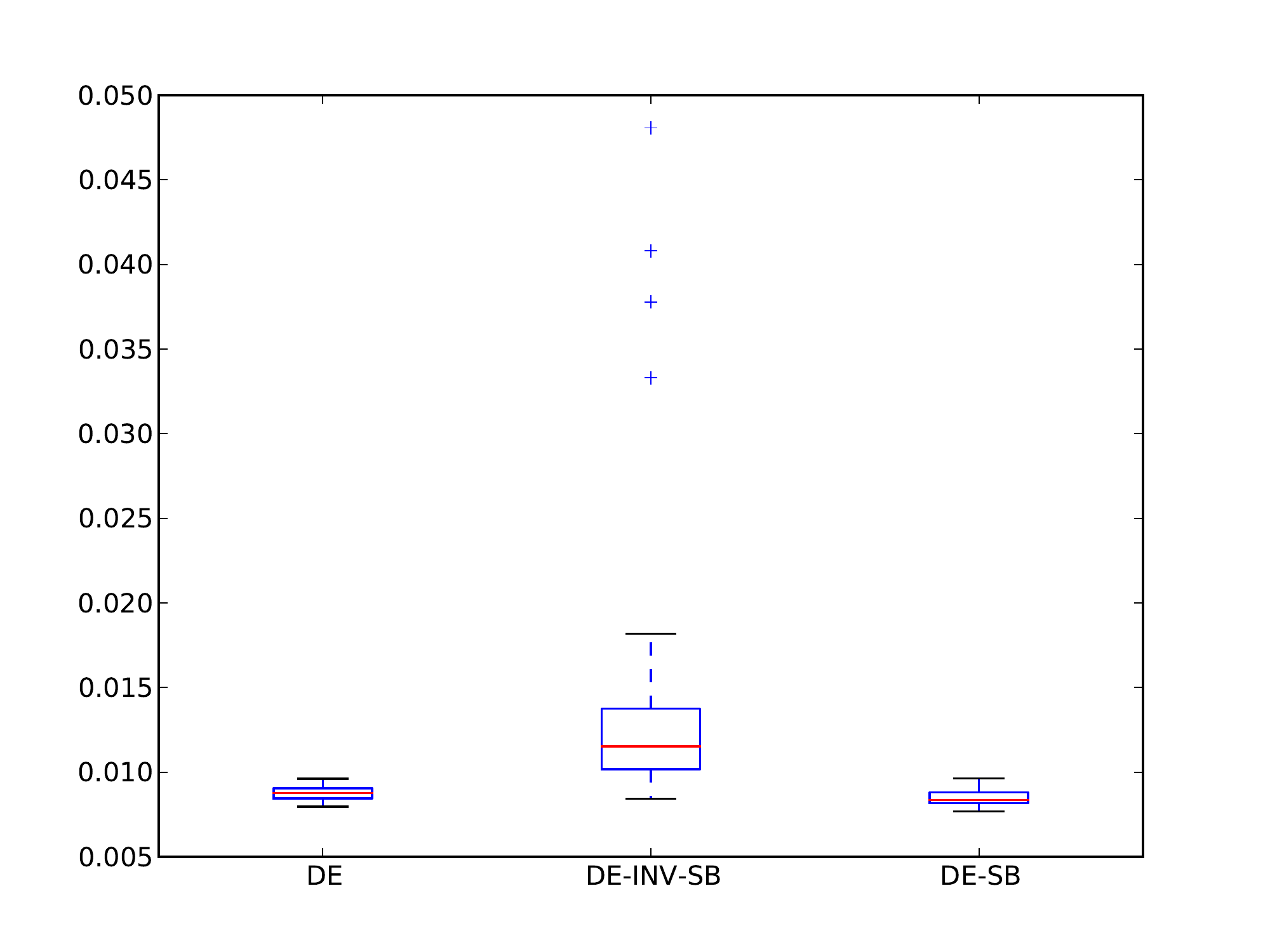}}
      \scalebox{0.35}{\includegraphics{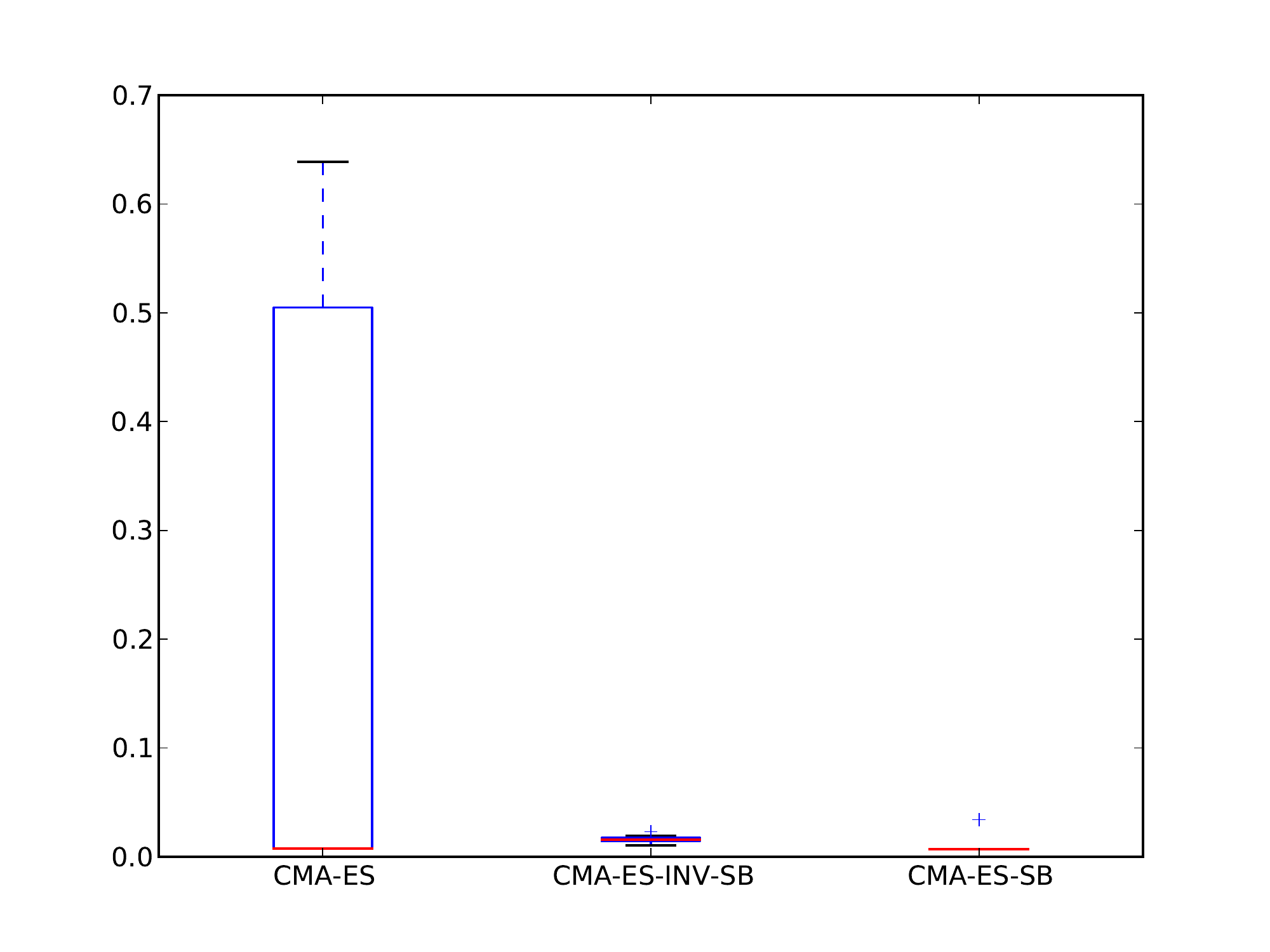}}
      \caption{\label{fig:sinc3d-res} \it Convergence curves for regression by DE (left) and CMA-ES (right) using the {\bf sinc3d} dataset.}
   \end{center}
\end{figure*}
According to the Wilcoxon tests, all pairwise differences are significant. Again, DE-SB and CMA-ES-SB are the fastest methods. This time, in contrast to previous experiments, CMA-ES-INV-SB clearly outperforms CMA-ES.

\subsection{Autoencoding problems}\label{sec:autoencproblems}
In this section, all $d$-dimensional data samples lie on a $s$-dimensional set, where $s<d$. As a result, the data can be described, or 'encoded' by an $s$-dimensional subset. On the other hand, there is also a $s$-D to $d$-D mapping to 'decode' the data. The task is to approximate both the encoding and decoding mapping by an ANN. As in the case of the regression problems, the performance is compared only on the training using a training set.
\clearpage
\subsubsection{Dataset {\bf autoenc-circle}}
In this problem, the data samples lie on a 2-D circle centered at the origin with radius one. We use a 2-5-3-2-1-2-3-5-2 net and 200 data samples to encode from 2-D to 1-D and decode back to 2-D. The population size for all DE-based methods is $N_p=64$, and $N_p=4000$ for all CMA-ES-based methods. Fig.~\ref{fig:autoenc-circle-res} shows the resulting convergence curves and box plots for the learning process.
\begin{figure*}[h!]
   \begin{center}
      \scalebox{0.35}{\includegraphics{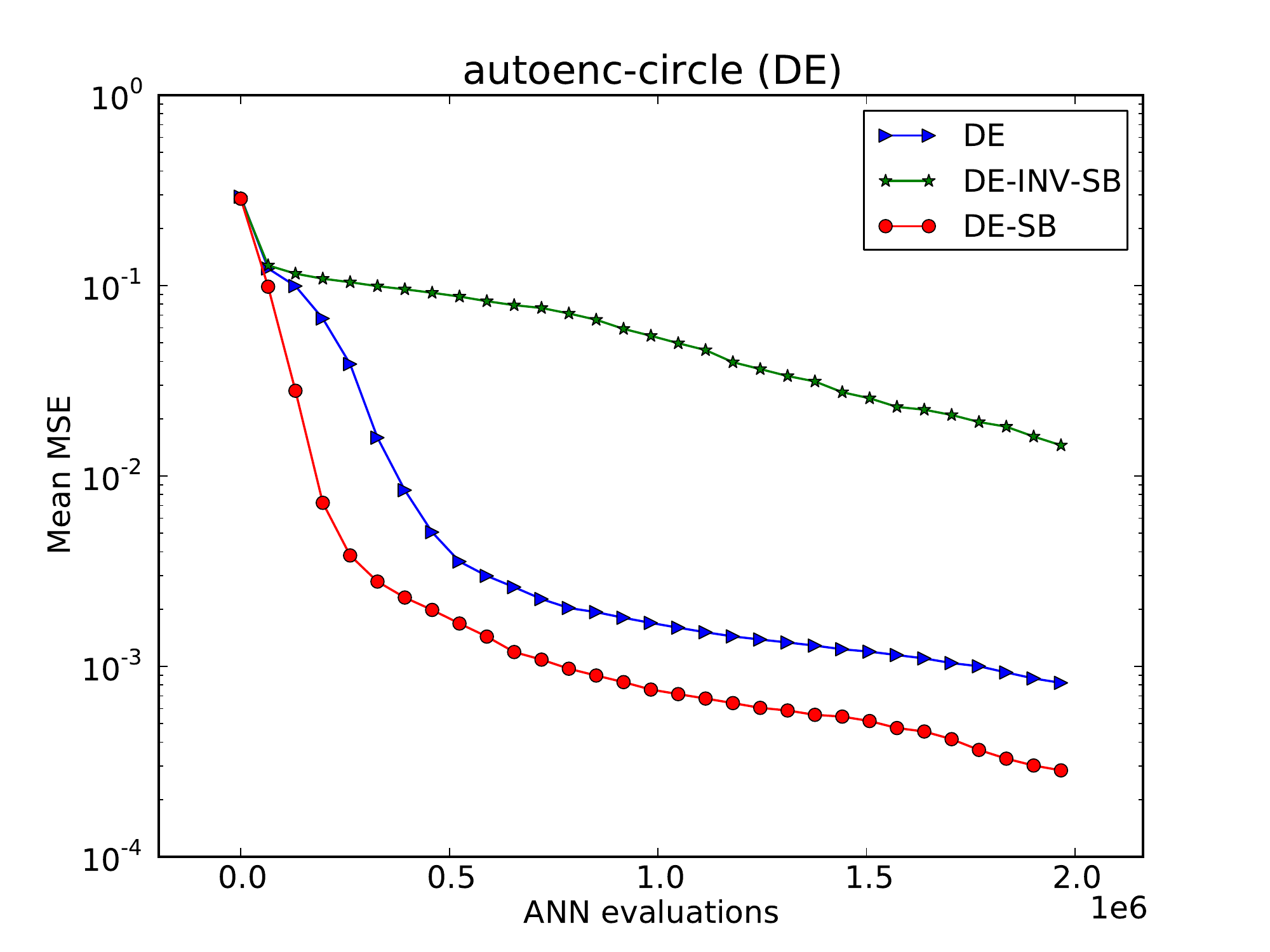}}
      \scalebox{0.35}{\includegraphics{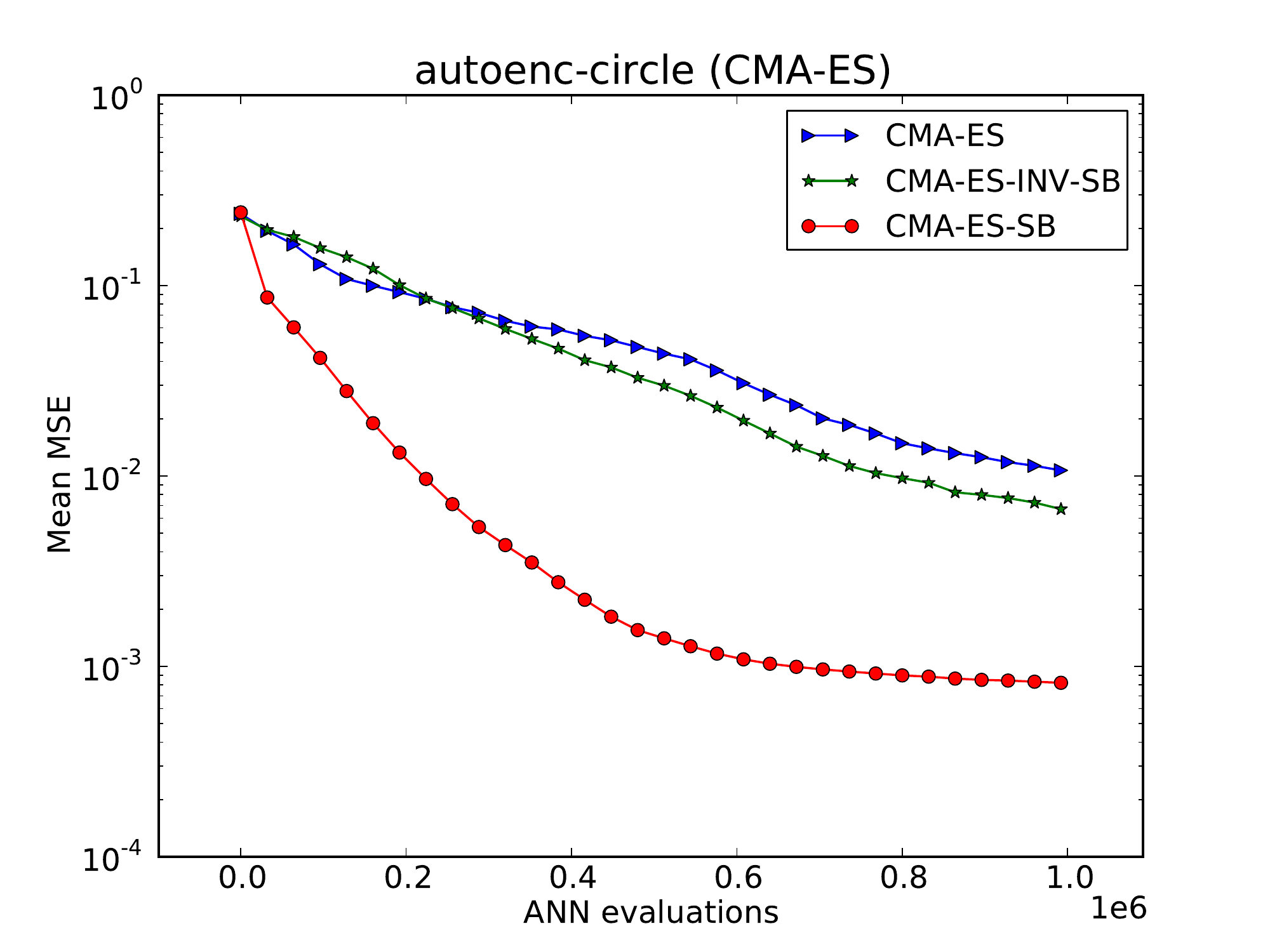}}
      \scalebox{0.35}{\includegraphics{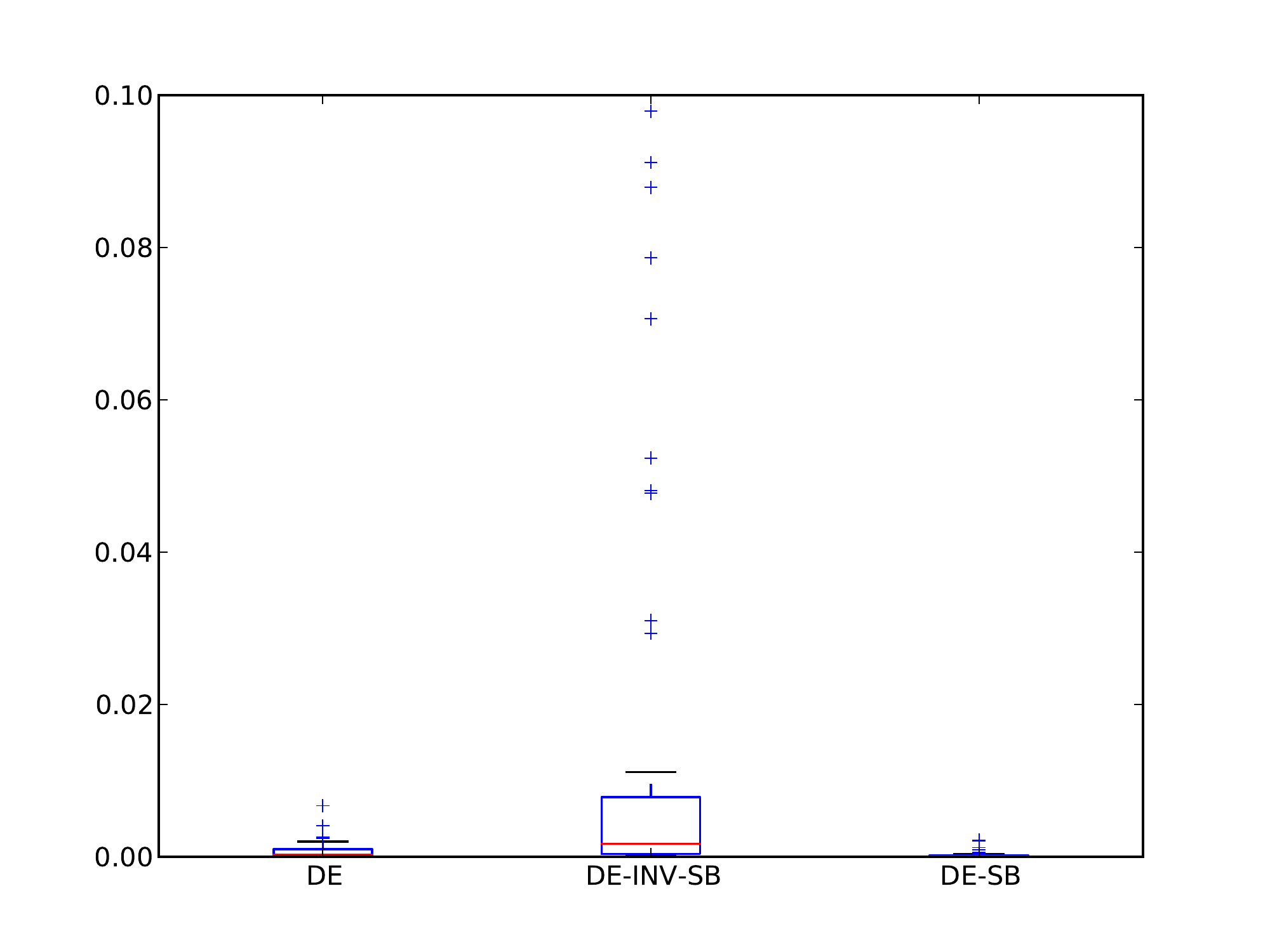}}
      \scalebox{0.35}{\includegraphics{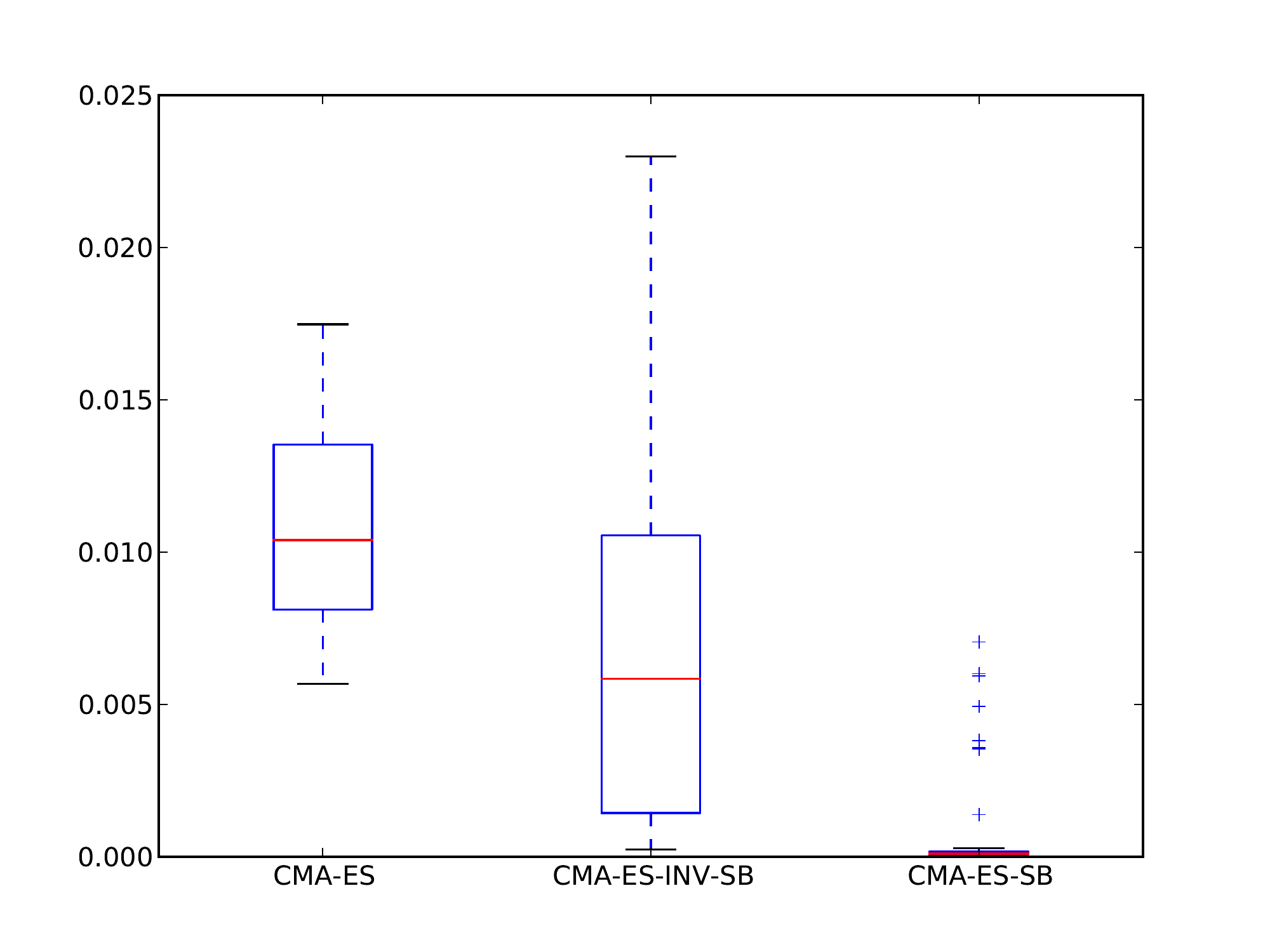}}
      \caption{\label{fig:autoenc-circle-res} \it Convergence curves for regression by DE (left) and CMA-ES (right) using the {\bf sinc3d} dataset.}
   \end{center}
\end{figure*}
All pairwise differences prove to be statistically significant. The proposed symmetry beraking approach improves the training in both methods. On CMA-ES-SB, the difference turns out to be quite significant.
\clearpage
\subsubsection{Dataset {\bf autoenc-spiral}}
In this problem, the data samples lie on a 3-D spiral with radius one, defined by $$(\cos(\phi), \sin(\phi), \phi), \phi\in[0,6\pi].$$ We use a 3-1-3-4-7-3 net and 1000 data samples to encode from 3-D to 1-D and decode back to 3-D. The population size for all DE-based methods is $N_p=80$, and $N_p=400$ for all CMA-ES-based methods. Fig.~\ref{fig:autoenc-spiral-res} shows the resulting convergence curves and box plots for the learning process.
\begin{figure*}[h!]
   \begin{center}
      \scalebox{0.35}{\includegraphics{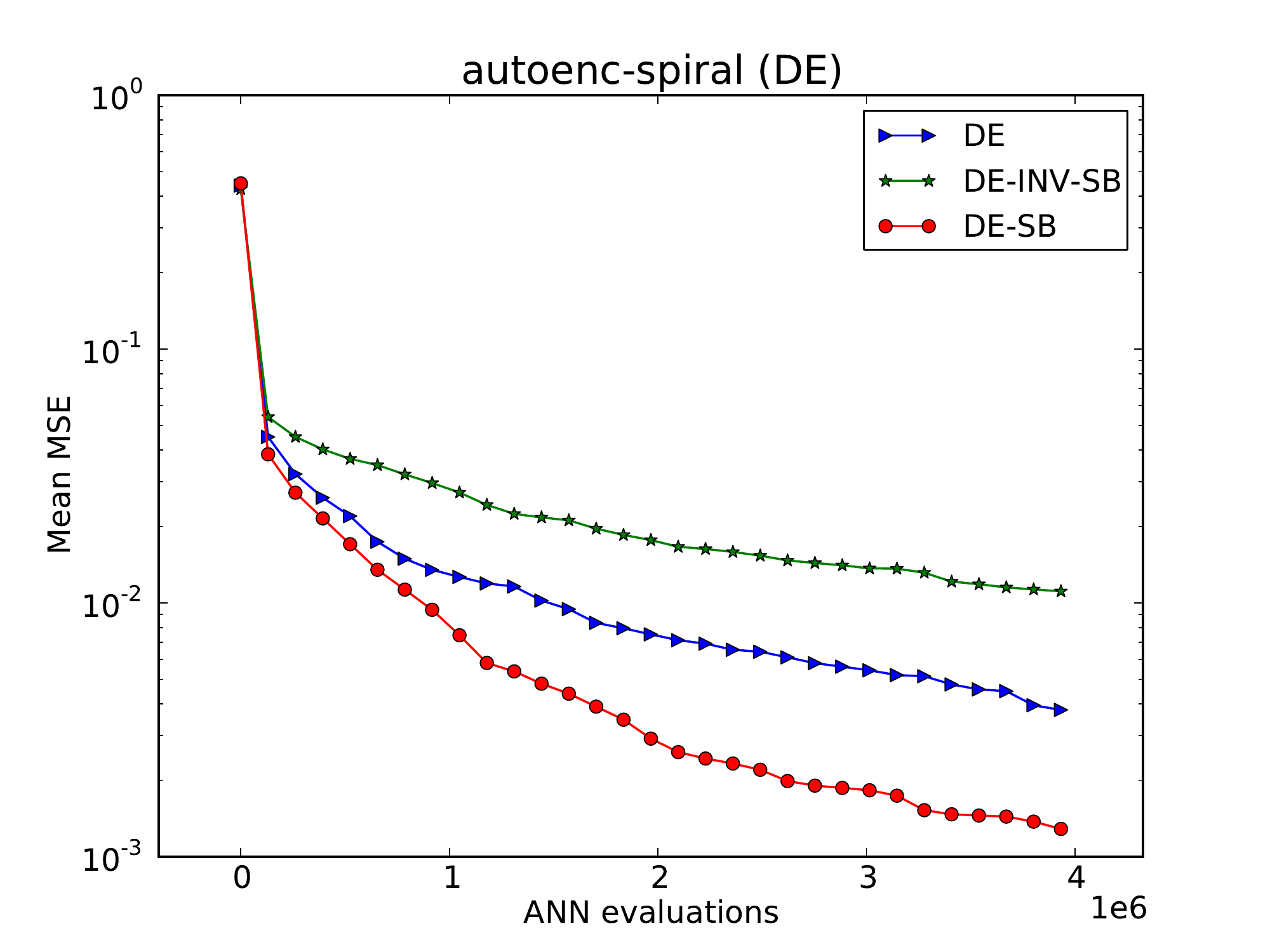}}
      \scalebox{0.35}{\includegraphics{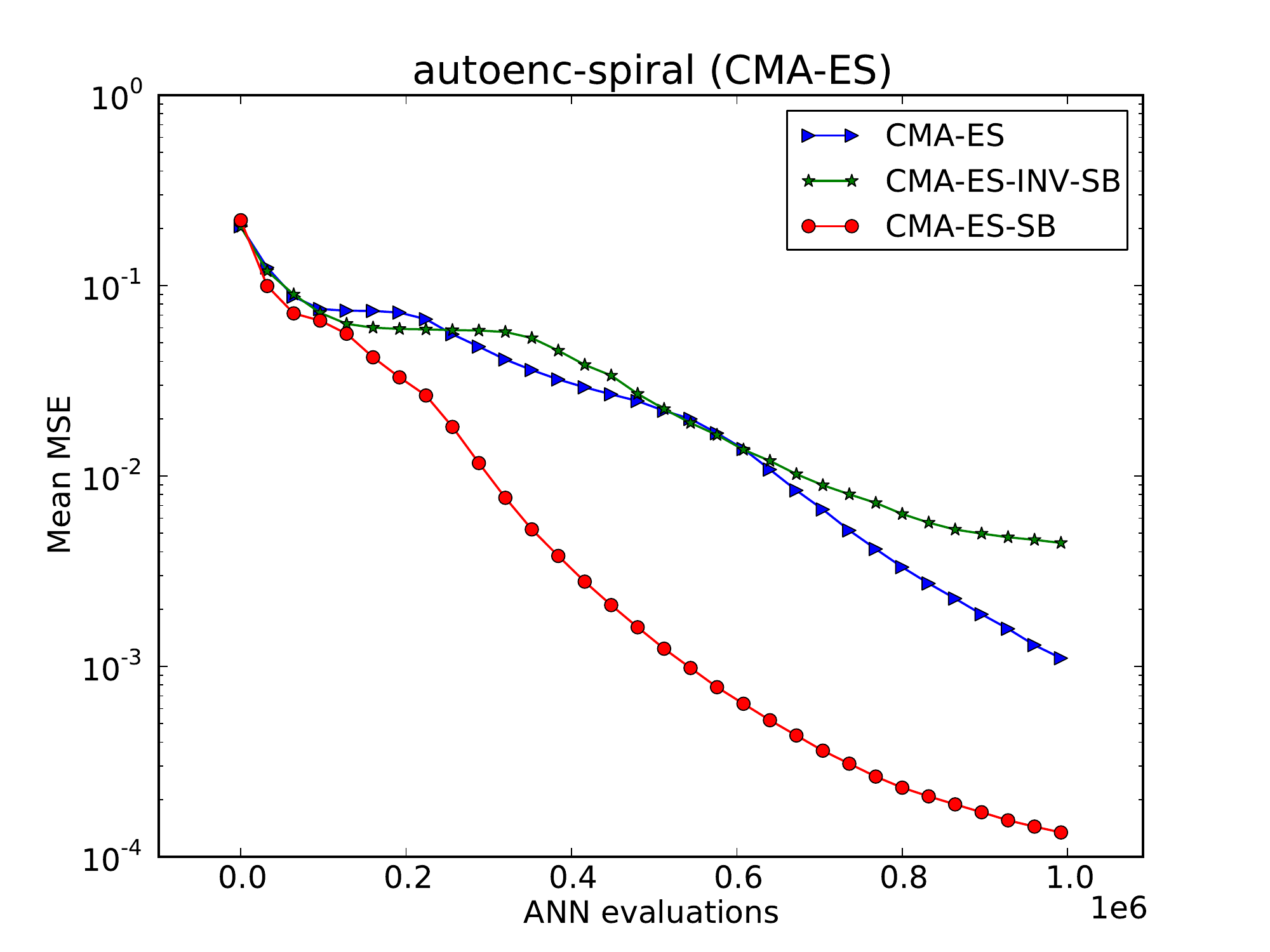}}
      \scalebox{0.35}{\includegraphics{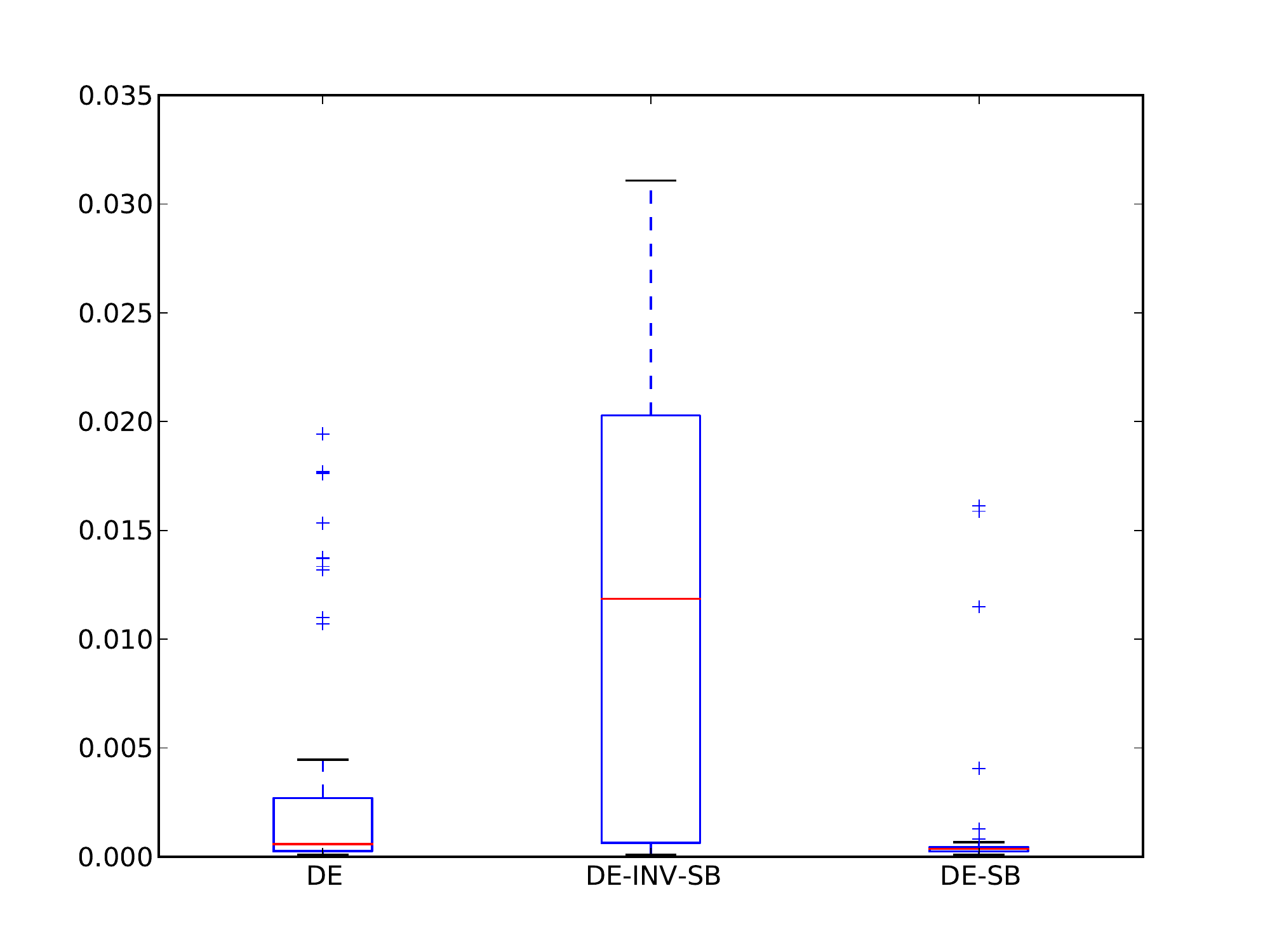}}
      \scalebox{0.35}{\includegraphics{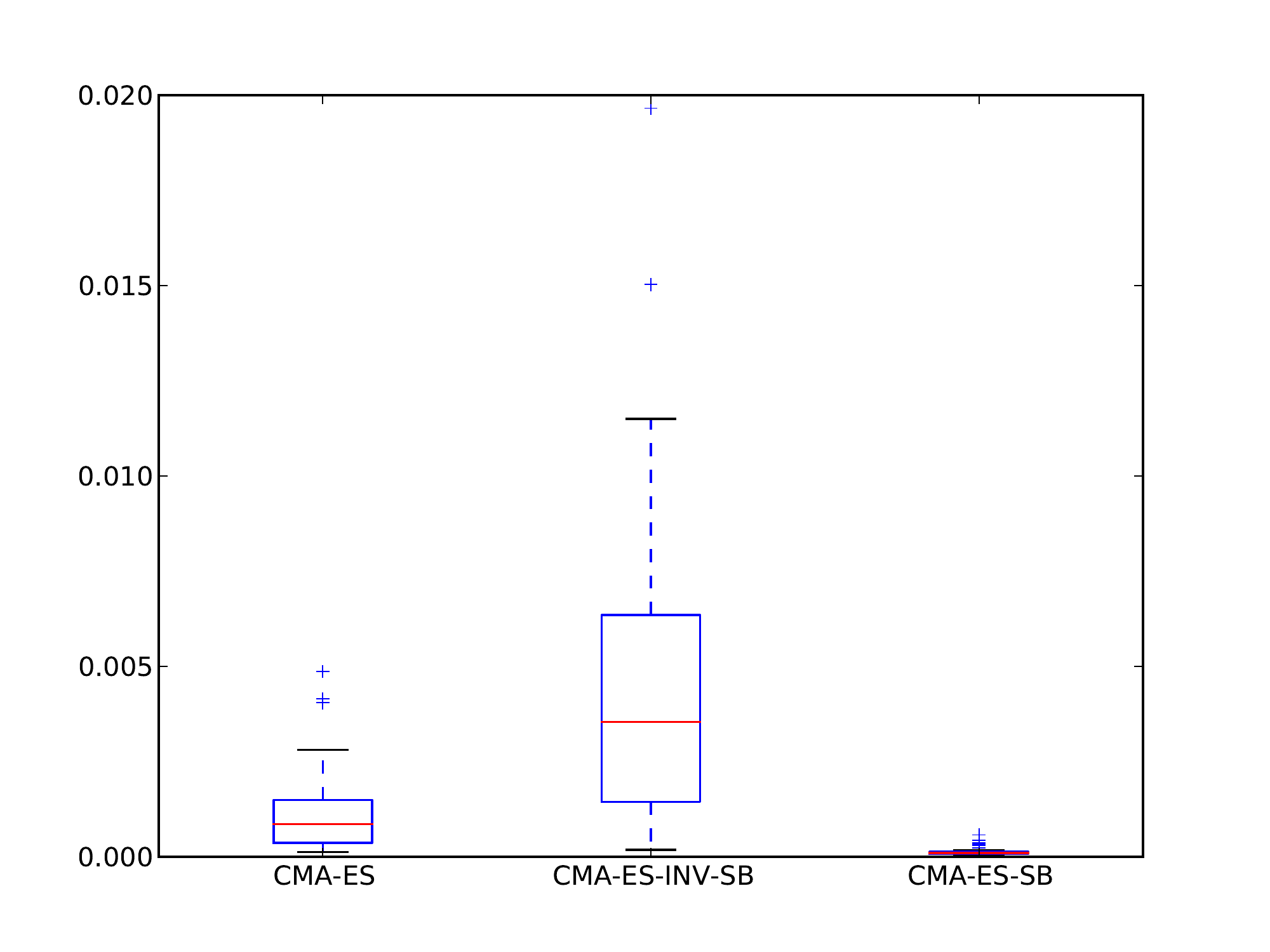}}
      \caption{\label{fig:autoenc-spiral-res} \it Convergence curves for regression by DE (left) and CMA-ES (right) using the {\bf sinc3d} dataset.}
   \end{center}
\end{figure*}
All pairwise differences prove to be statistically significant.
\clearpage
\subsubsection{Dataset {\bf autoenc-sphere}}
In this problem, the data samples lie on a 3-D sphere centered at the origin with radius one. We use a 3-8-5-2-5-8-3 net and 1000 data samples to encode from 3-D to 2-D and decode back to 3-D. The population size for all DE-based methods is $N_p=96$, and $N_p=1000$ for all CMA-ES-based methods. Fig.~\ref{fig:autoenc-sphere-res} shows the resulting convergence curves and box plots for the learning process.
\begin{figure*}[h!]
   \begin{center}
      \scalebox{0.35}{\includegraphics{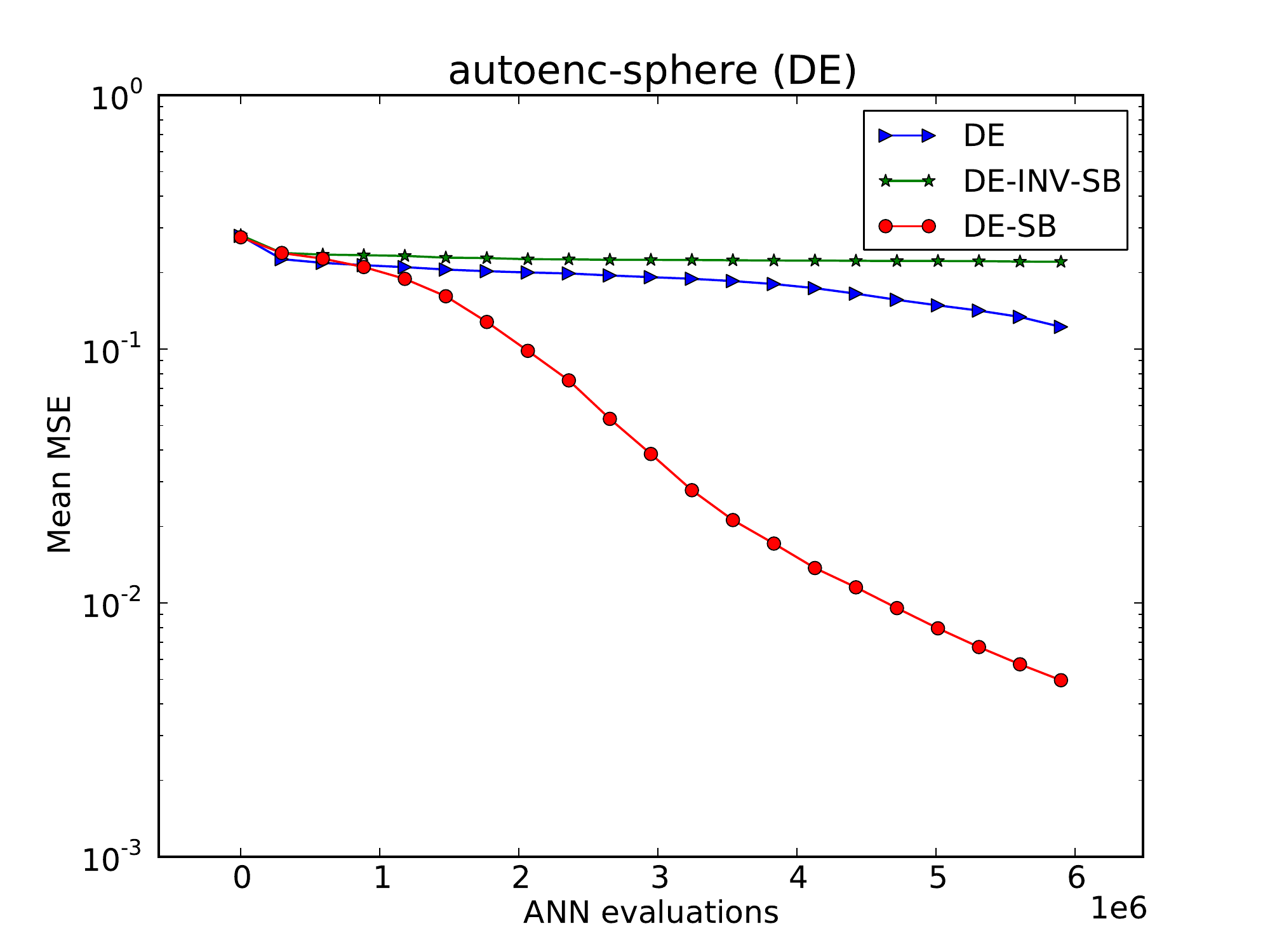}}
      \scalebox{0.35}{\includegraphics{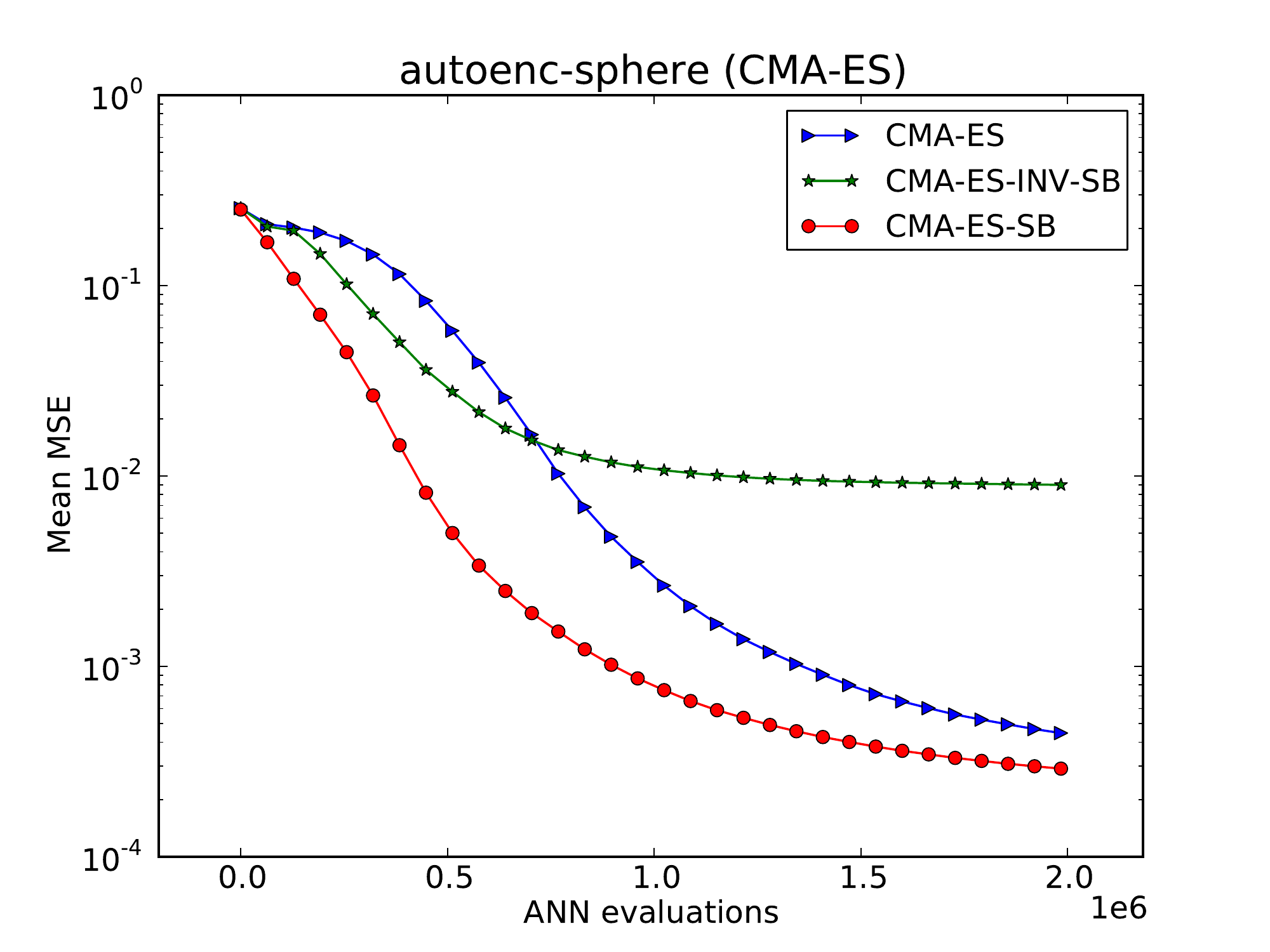}}
      \scalebox{0.35}{\includegraphics{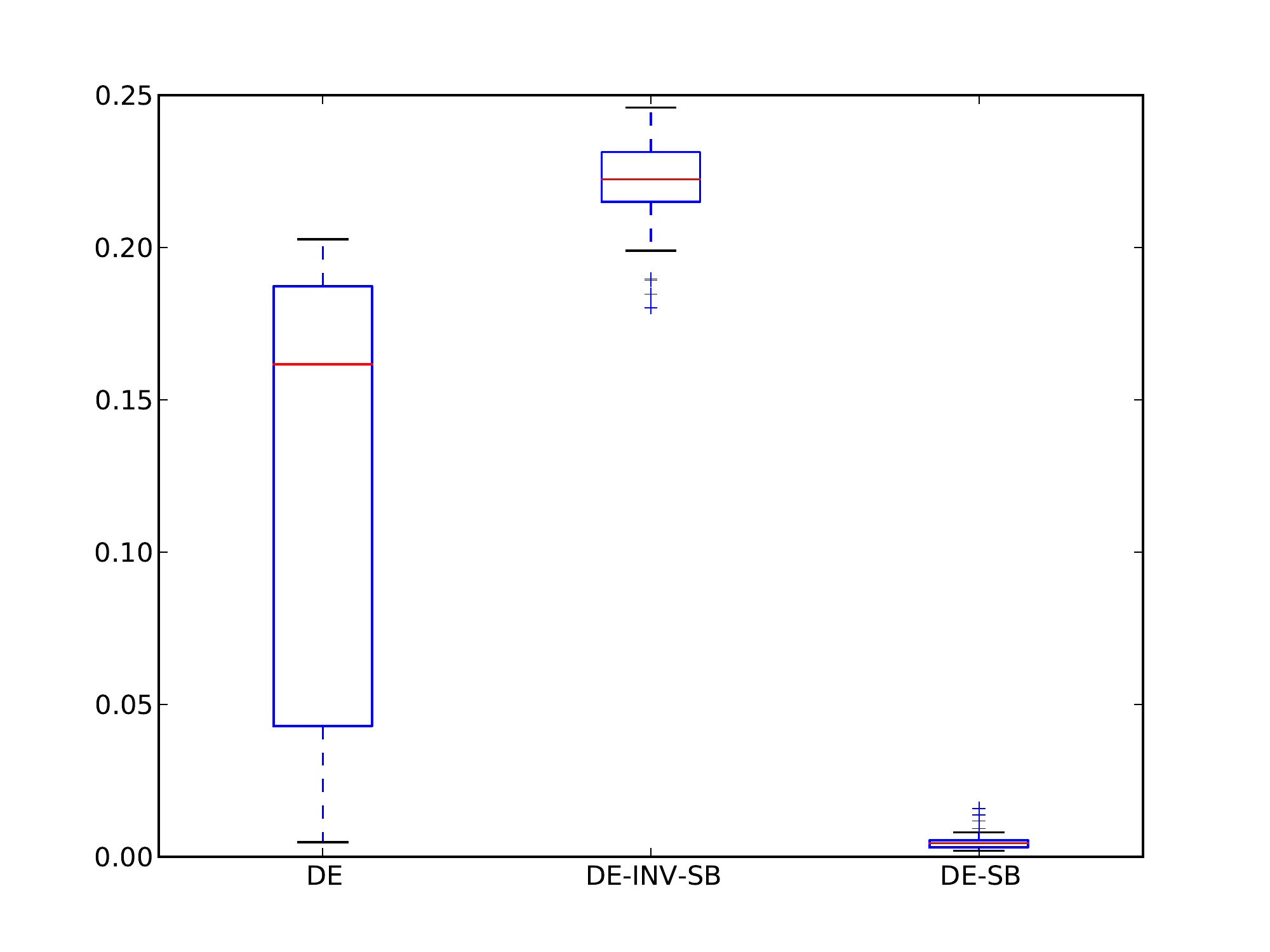}}
      \scalebox{0.35}{\includegraphics{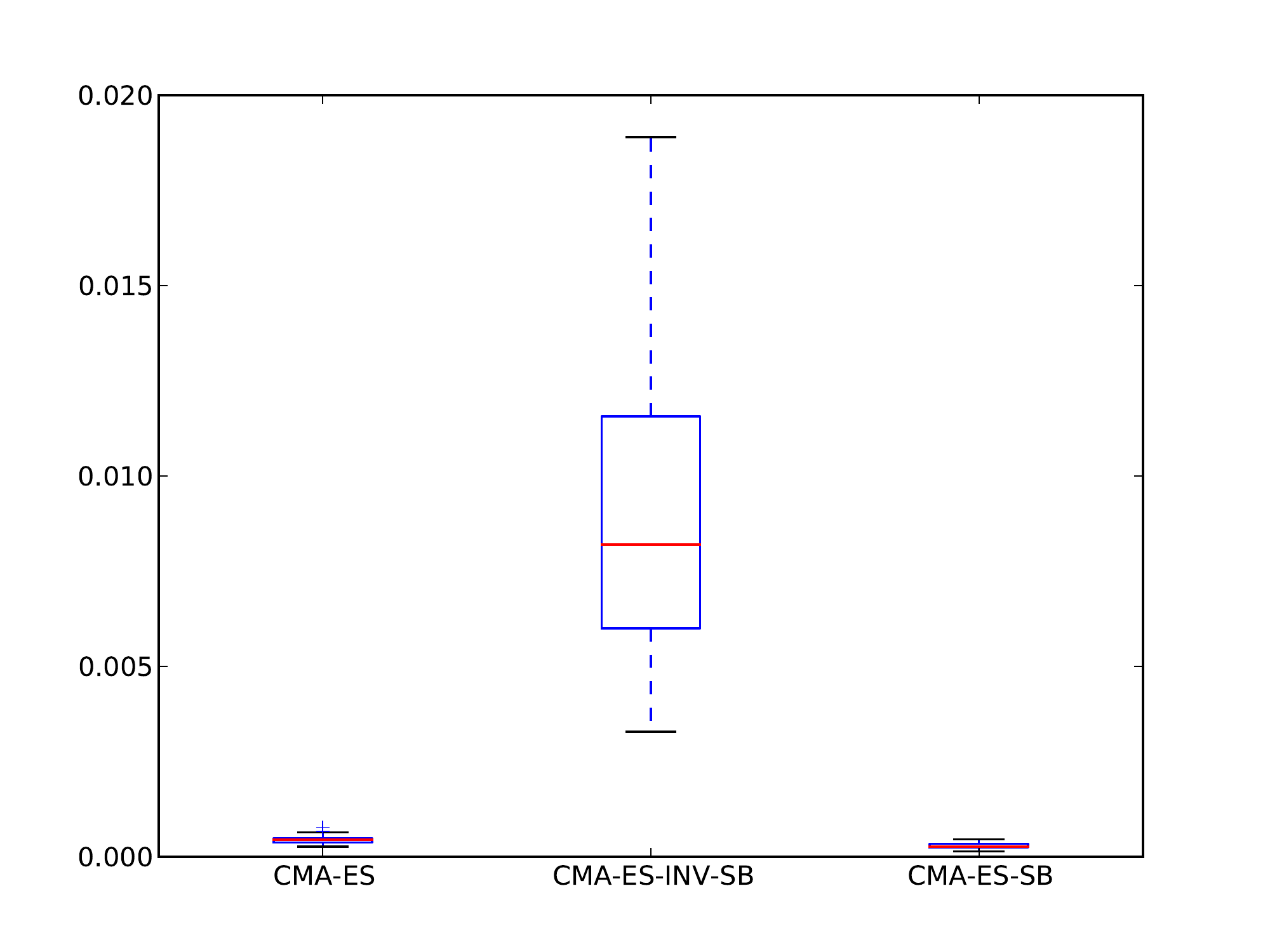}}
      \caption{\label{fig:autoenc-sphere-res} \it Convergence curves for regression by DE (left) and CMA-ES (right) using the {\bf autoenc-sphere} dataset.}
   \end{center}
\end{figure*}
All pairwise differences prove to be statistically significant. Clearly, DE-SB and CMA-ES-SB are significantly faster then the other methods. 

\subsection{Classification problems}\label{sec:classproblems}
In classification problems, data samples are divided into a training set, a validation set and a test set. All three sets are generated by random selection of samples. A winner-takes-all scheme is applied to distinguish different classes, i.e., given an input, the ANN-output component with the greatest value determines the class. In order to improve generalization, classification performance measures on the training and test set are updated only on each improvement of the validation set classification performance.

\clearpage
\subsubsection{Dataset: {\bf Two-Circles}}
In this problem, the 2-D data domain $[-1,1]^2$ is divided into two parts, where one part is given by the union area of two circles and the remaining part is the disjunct space. Hence, there are two classes: samples which lie inside any circle and samples which lie outside of both circles. One circle is specified by center $(0.5,0.5)$ and radius $r_1=0.39894$, and the other circle by center $(-0.5,-0.5)$ and same radius $r_2=0.39894$. We use a 2-4-2-4-2 net with 400 samples for each training, validation and test set, having a total of 1200 samples. The population size for all DE-based methods is $N_p=80$, and $N_p=400$ for all CMA-ES-based methods. Fig.~\ref{fig:autoenc-spiral-res} shows the resulting convergence curves and box plots for the learning process.
\begin{figure*}[h!]
   \begin{center}
      \scalebox{0.26}{\includegraphics{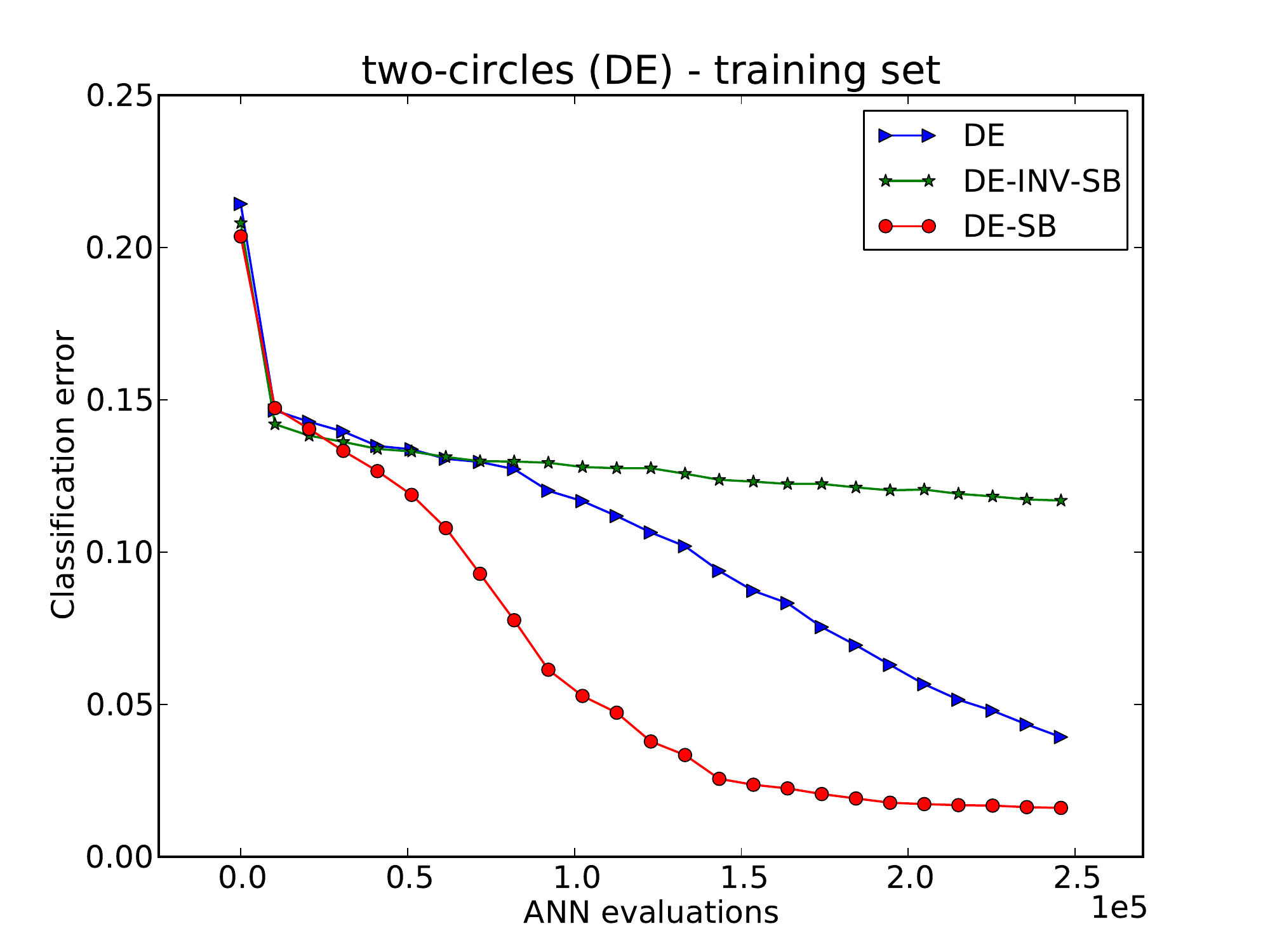}}\hspace{-0.6cm}
      \scalebox{0.26}{\includegraphics{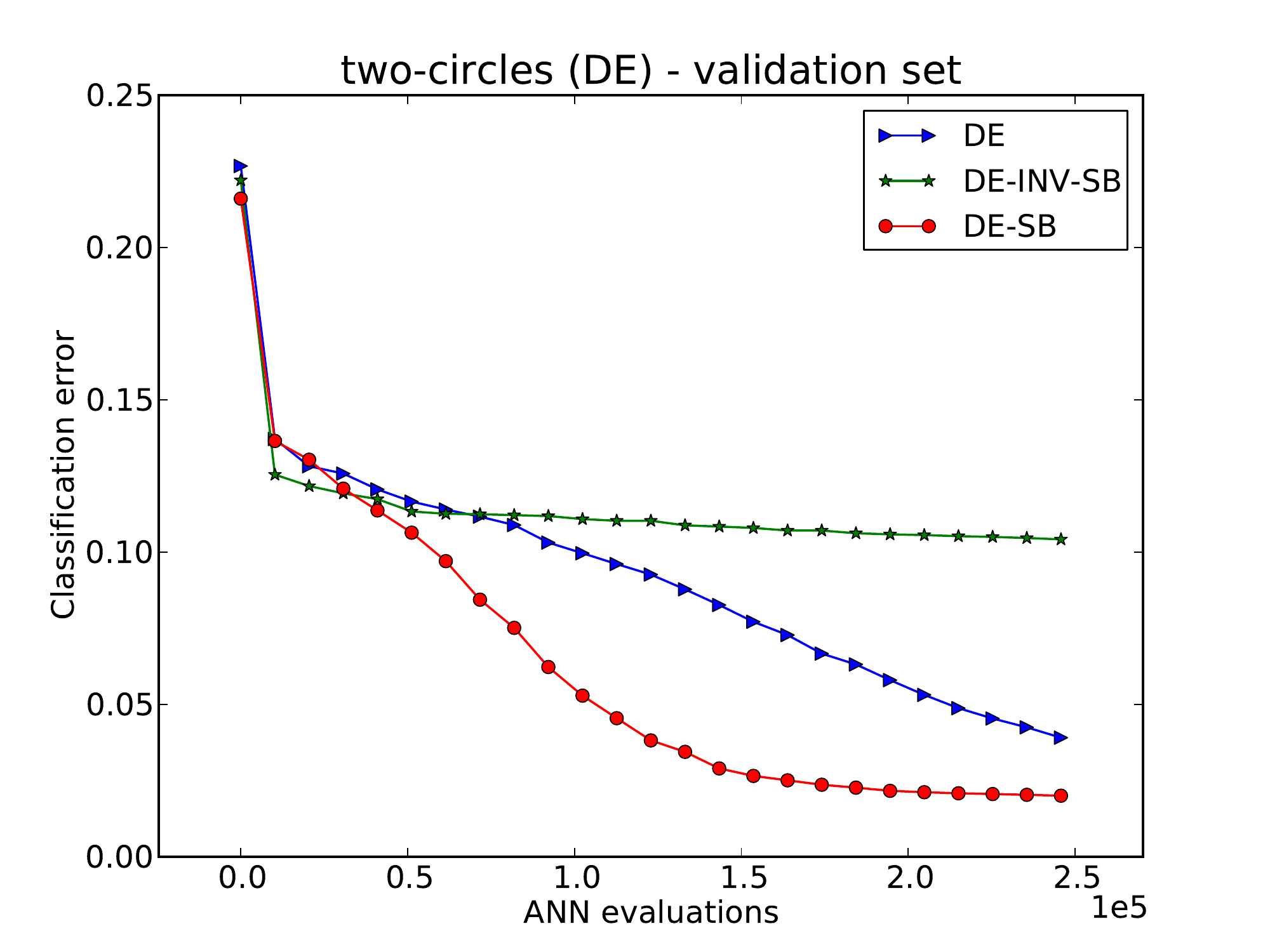}}\hspace{-0.6cm}
      \scalebox{0.26}{\includegraphics{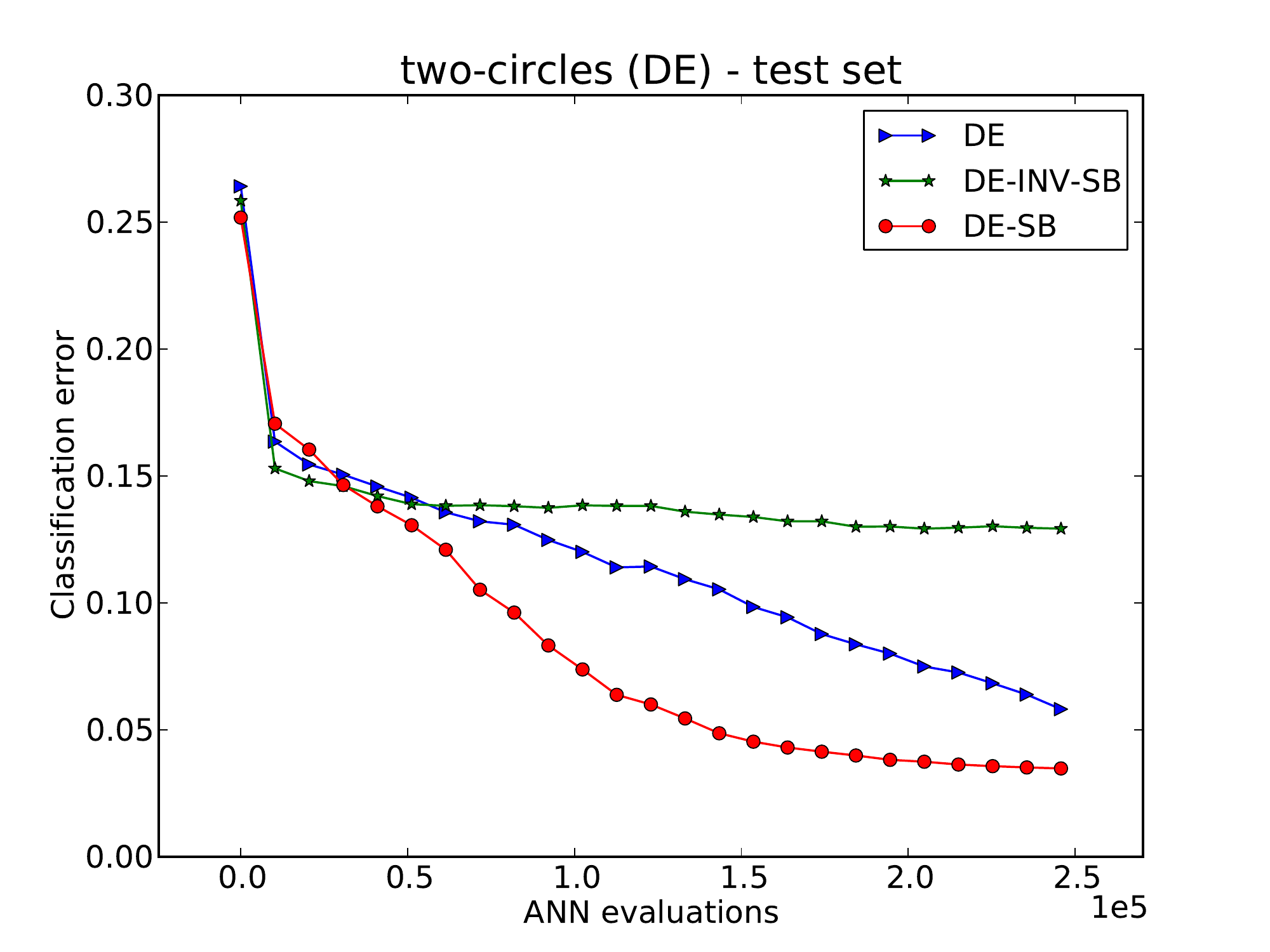}}\\
      \scalebox{0.26}{\includegraphics{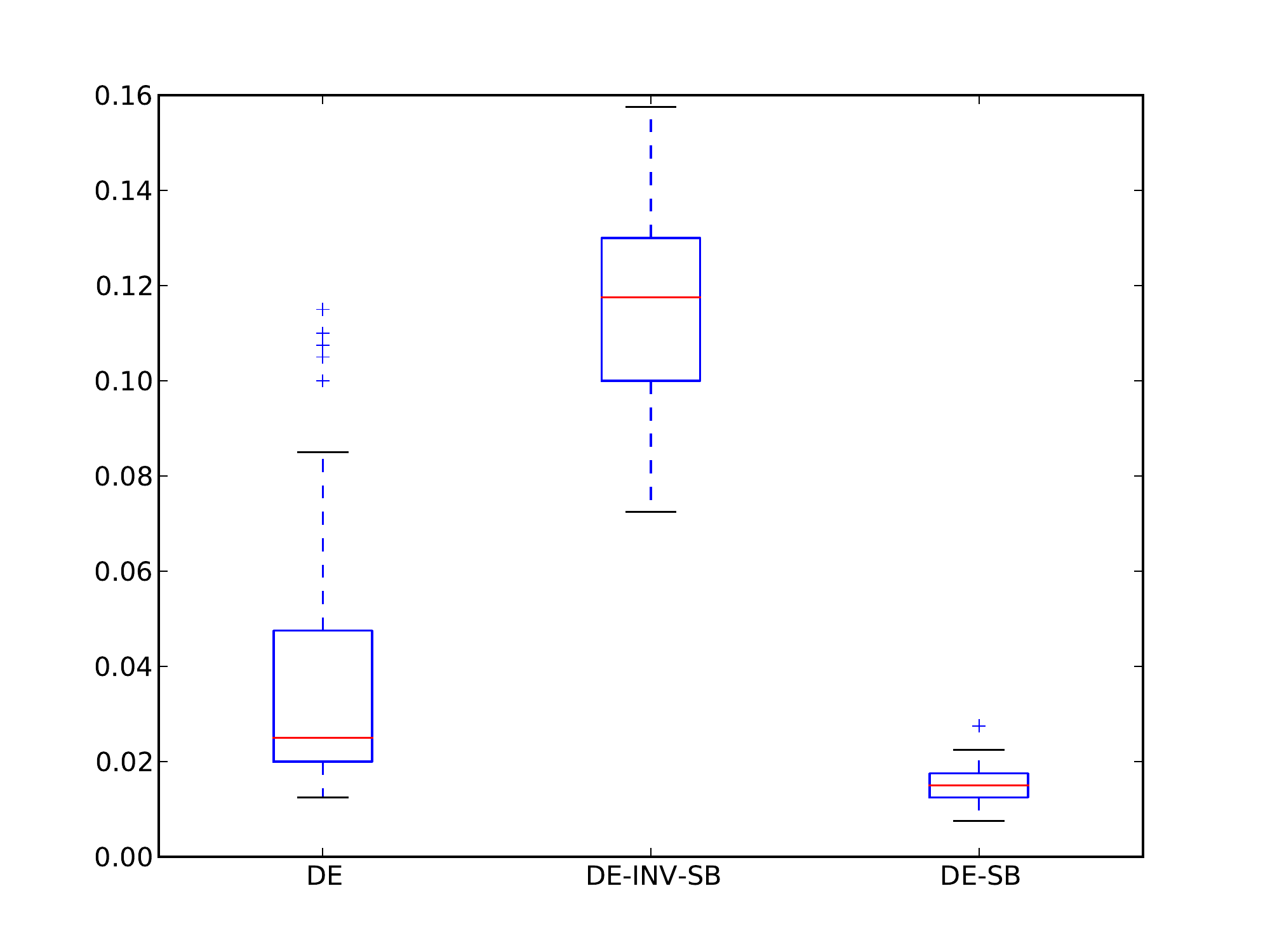}}\hspace{-0.6cm}
      \scalebox{0.26}{\includegraphics{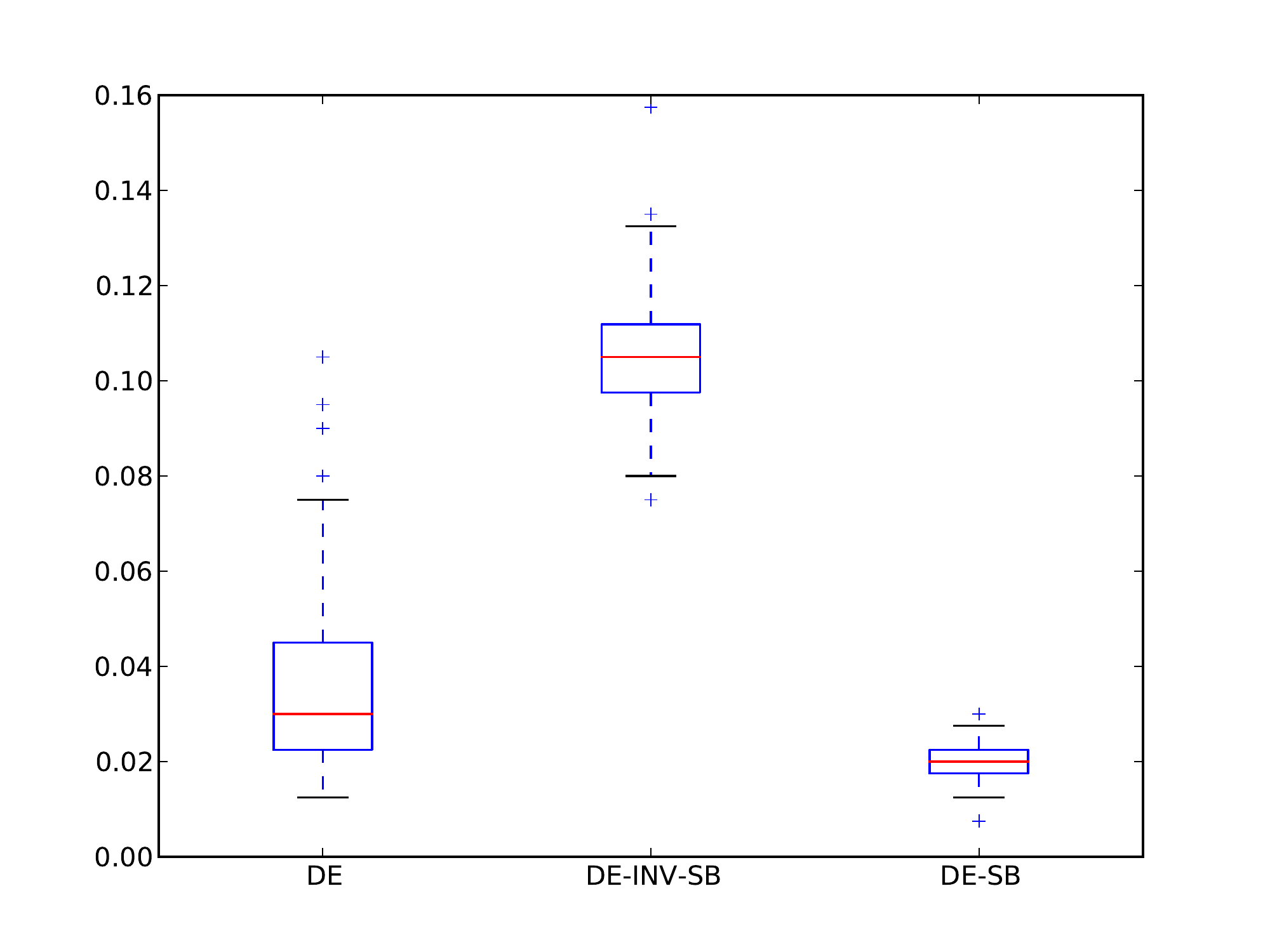}}\hspace{-0.6cm}
      \scalebox{0.26}{\includegraphics{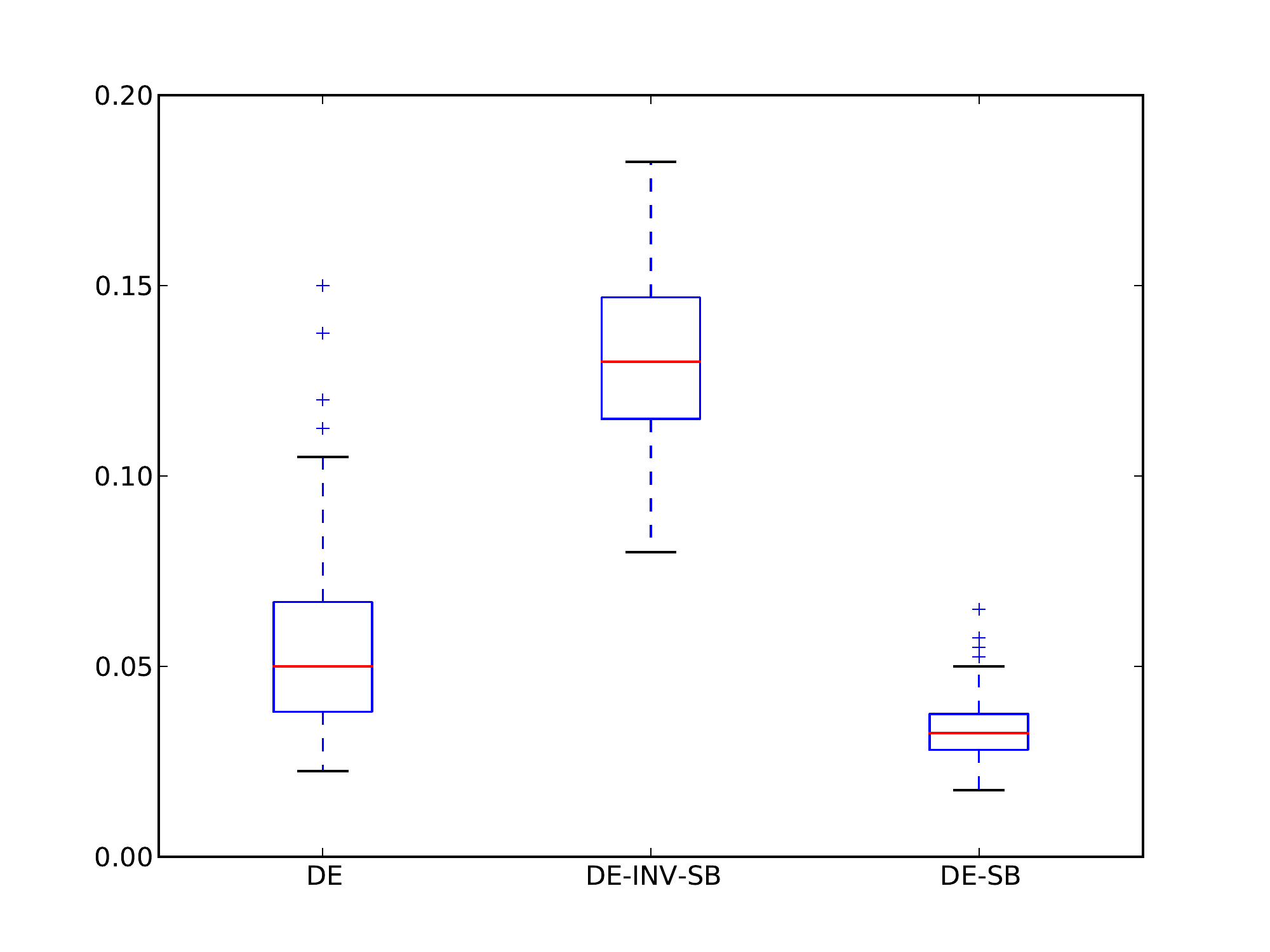}}
      \caption{\label{fig:two-circles-res-DE} \it Classification error rates over ANN-evaluations on the {\bf Two-Circles} dataset using the DE-variants.}
   \end{center}
\end{figure*}
\begin{figure*}[h!]
   \begin{center}
      \scalebox{0.26}{\includegraphics{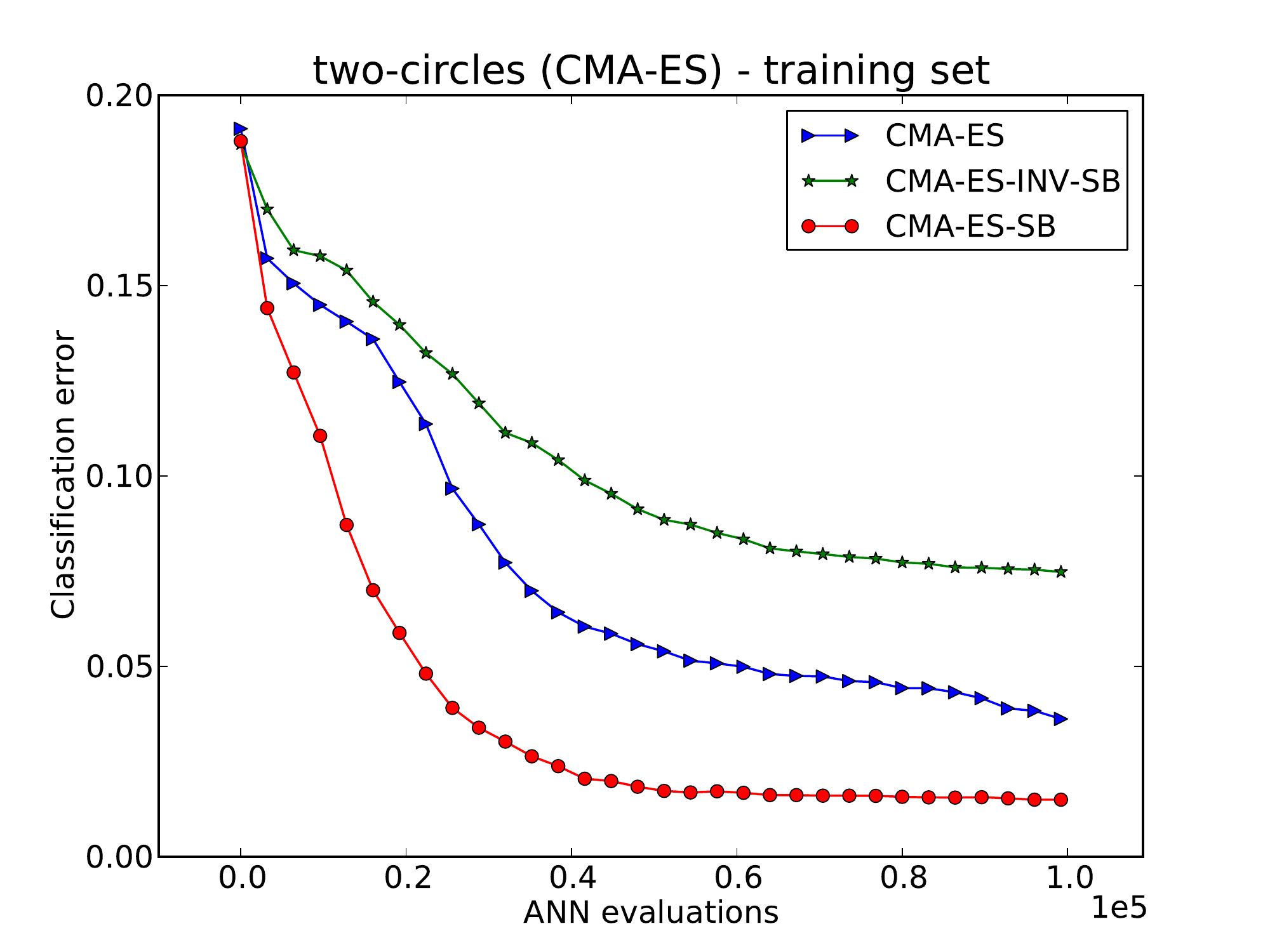}}\hspace{-0.6cm}
      \scalebox{0.26}{\includegraphics{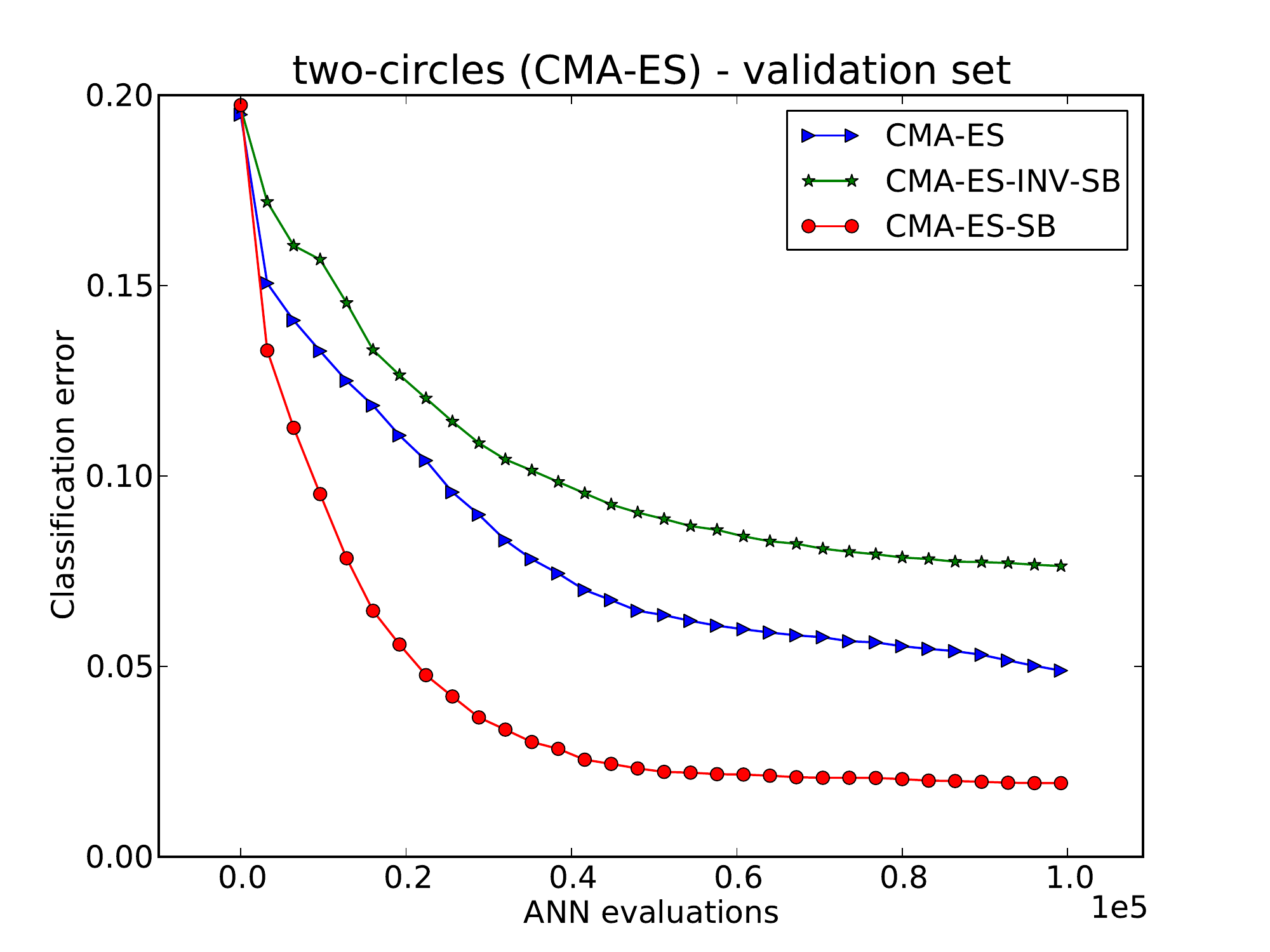}}\hspace{-0.6cm}
      \scalebox{0.26}{\includegraphics{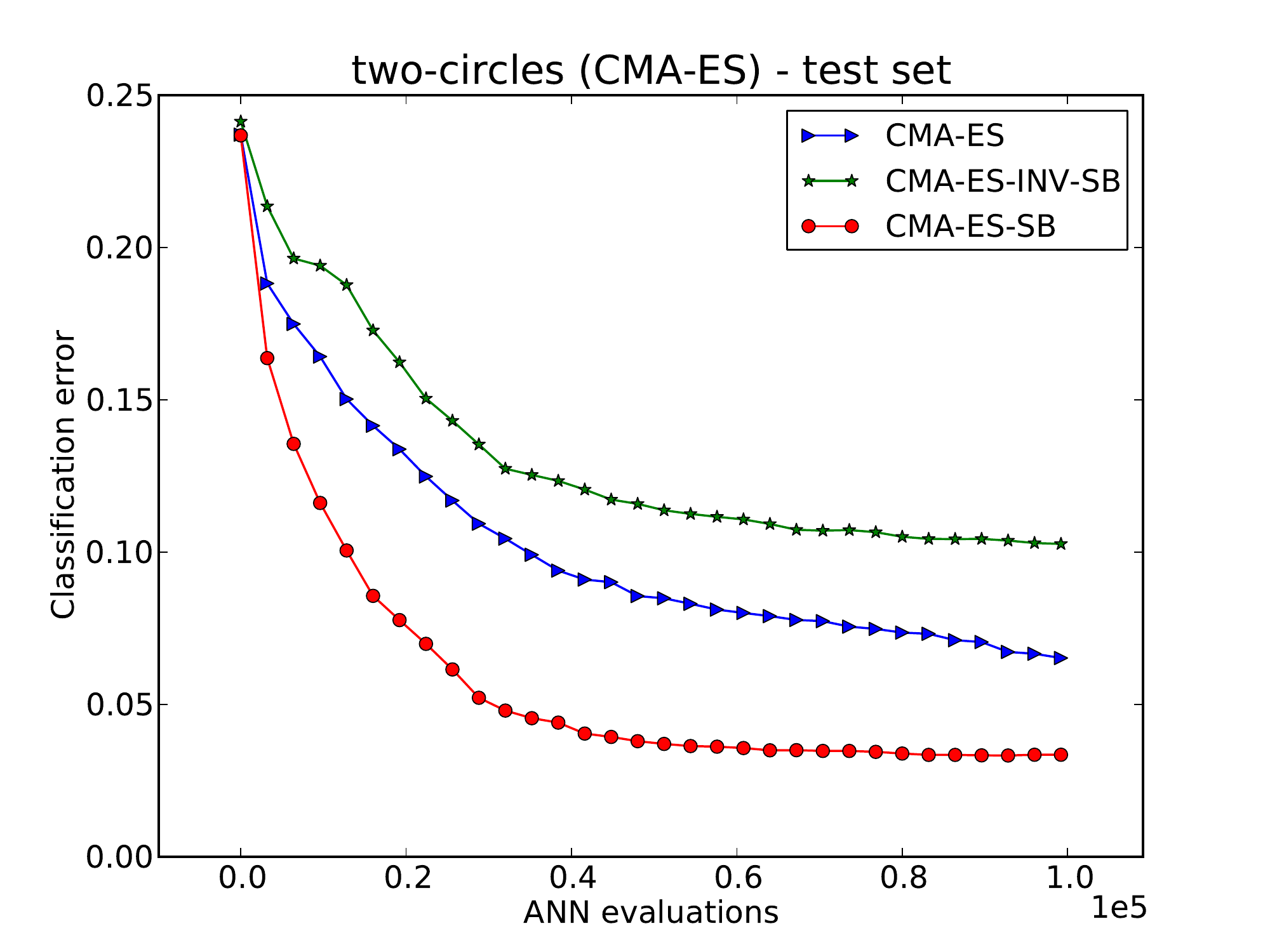}}\\
      \scalebox{0.26}{\includegraphics{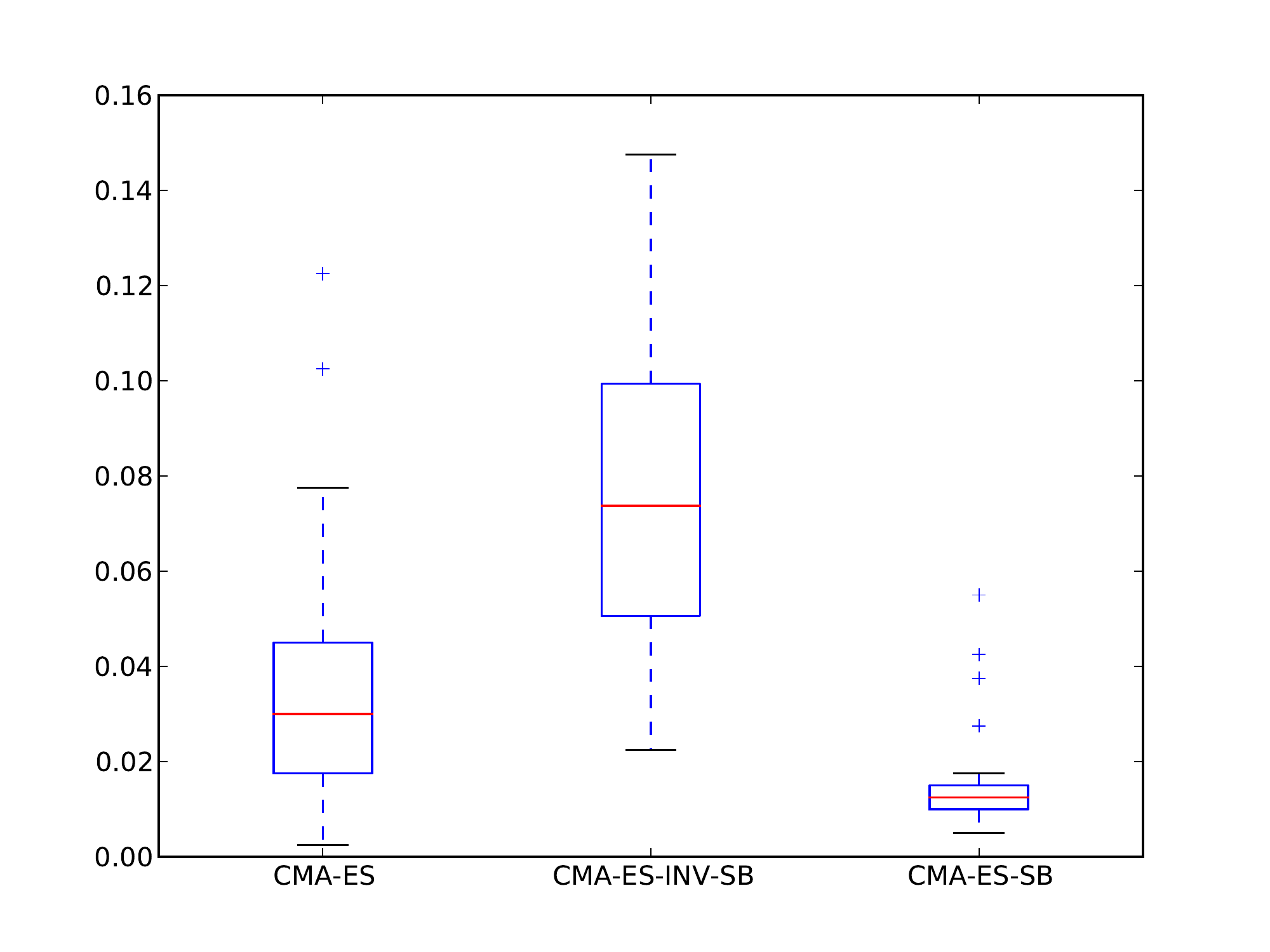}}\hspace{-0.6cm}
      \scalebox{0.26}{\includegraphics{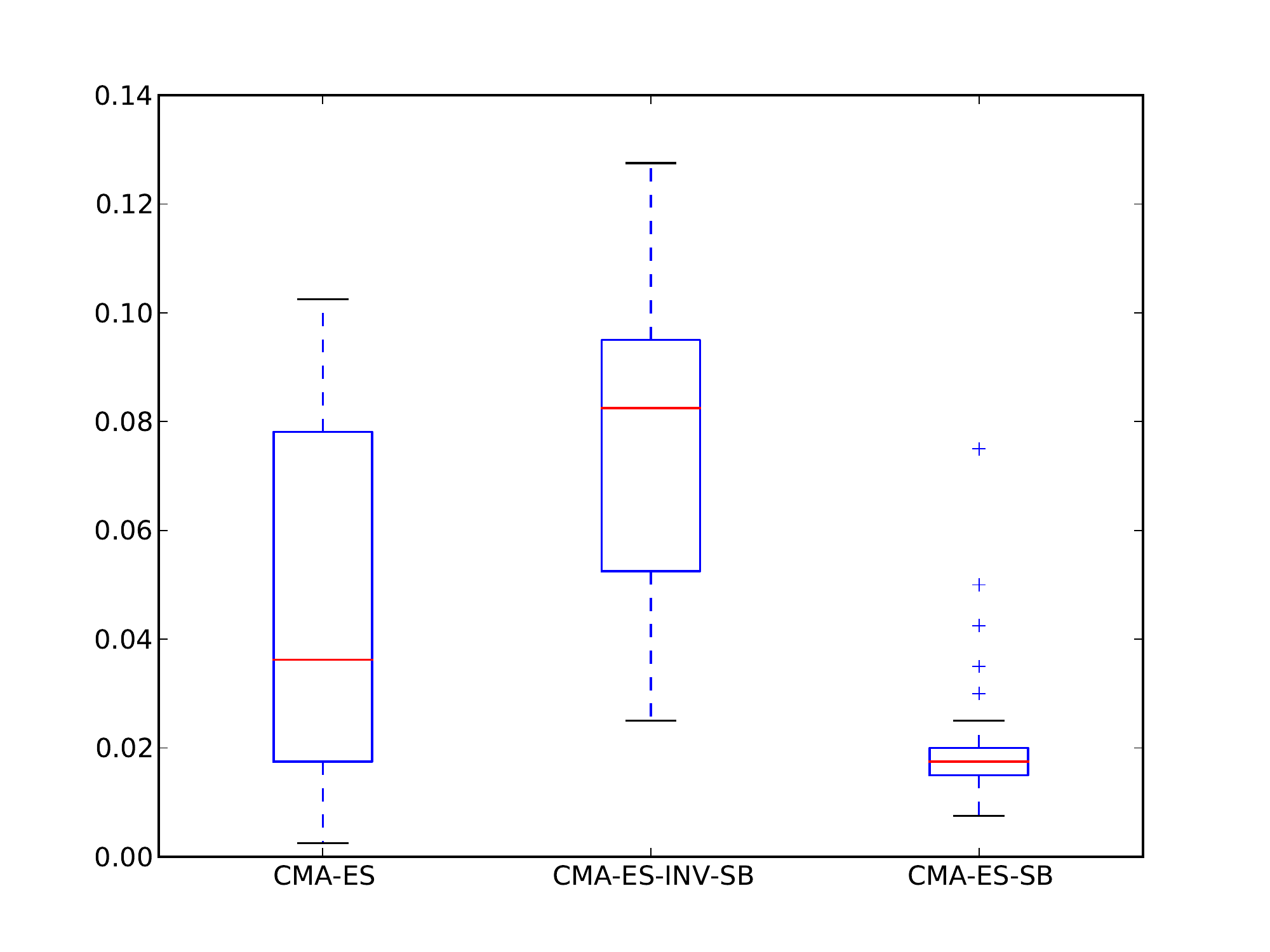}}\hspace{-0.6cm}
      \scalebox{0.26}{\includegraphics{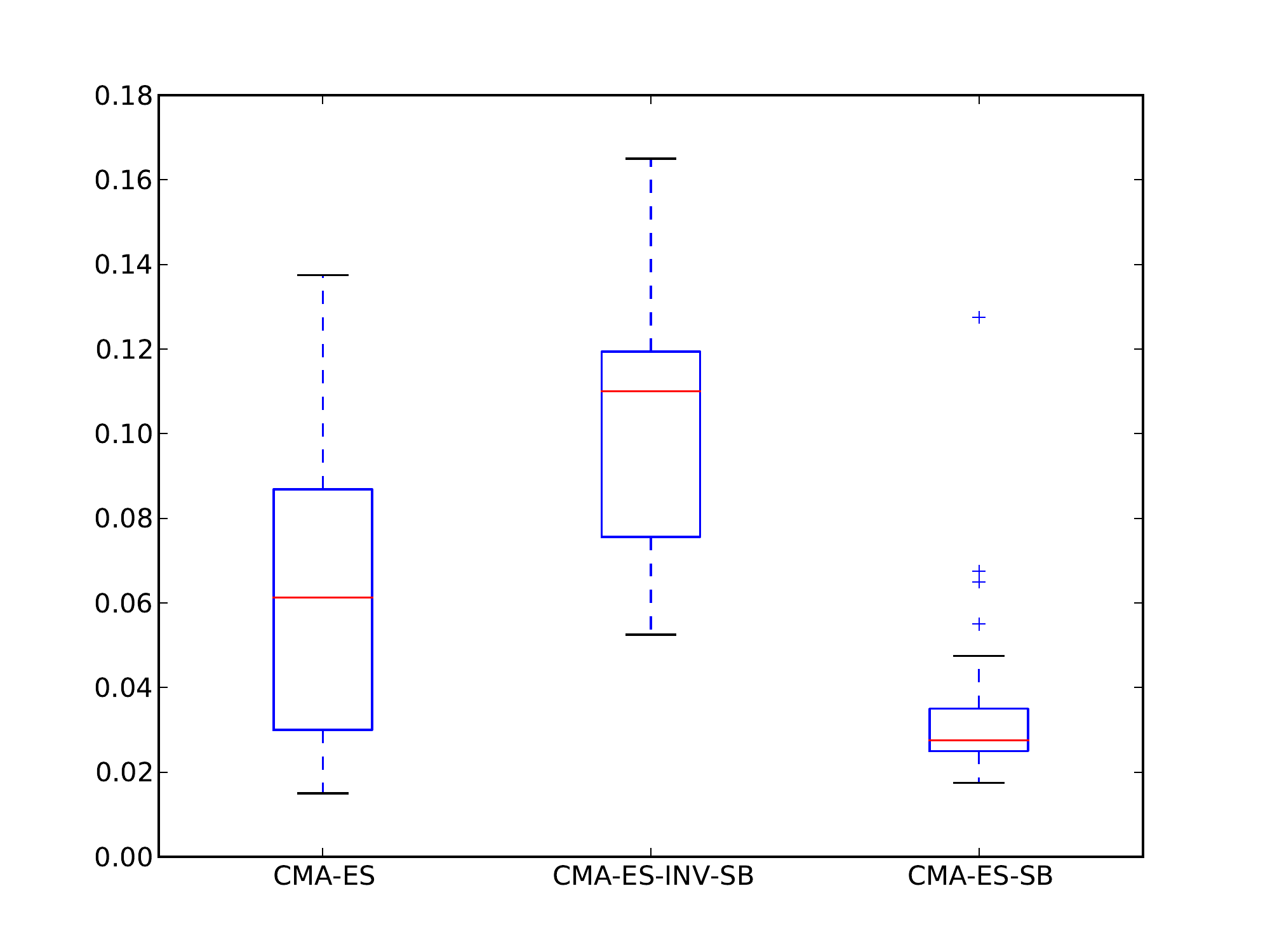}}
      \caption{\label{fig:two-circles-res-CMAES} \it Classification error rates over ANN-evaluations on the {\bf Two-Circles} dataset using the CMA-ES-variants.}
   \end{center}
\end{figure*}
All pairwise differences prove to be statistically significant. It can be seen that again DE-SB and CMA-ES-SB dominate the performances.

\clearpage
\subsubsection{Dataset: {\bf Two-Spirals}}
This problem~\cite{Lang1988} contains 2-D data-samples from two spirals on the plane, both starting at the origin and going around each other. The task is to classify each data sample by deciding to which spiral it belongs to. We use a 2-8-3-1-3-8-2 net, 114 samples for the training set, 40 samples for the validation set and 40 samples for the test set. The population size for all DE-based methods is $N_p=120$, and $N_p=1000$ for all CMA-ES-based methods. Fig.~\ref{fig:two-spirals-res-DE} and~\ref{fig:two-spirals-res-CMAES} show the resulting convergence curves and box plots for the learning process.
\begin{figure*}[h!]
   \begin{center}
      \scalebox{0.26}{\includegraphics{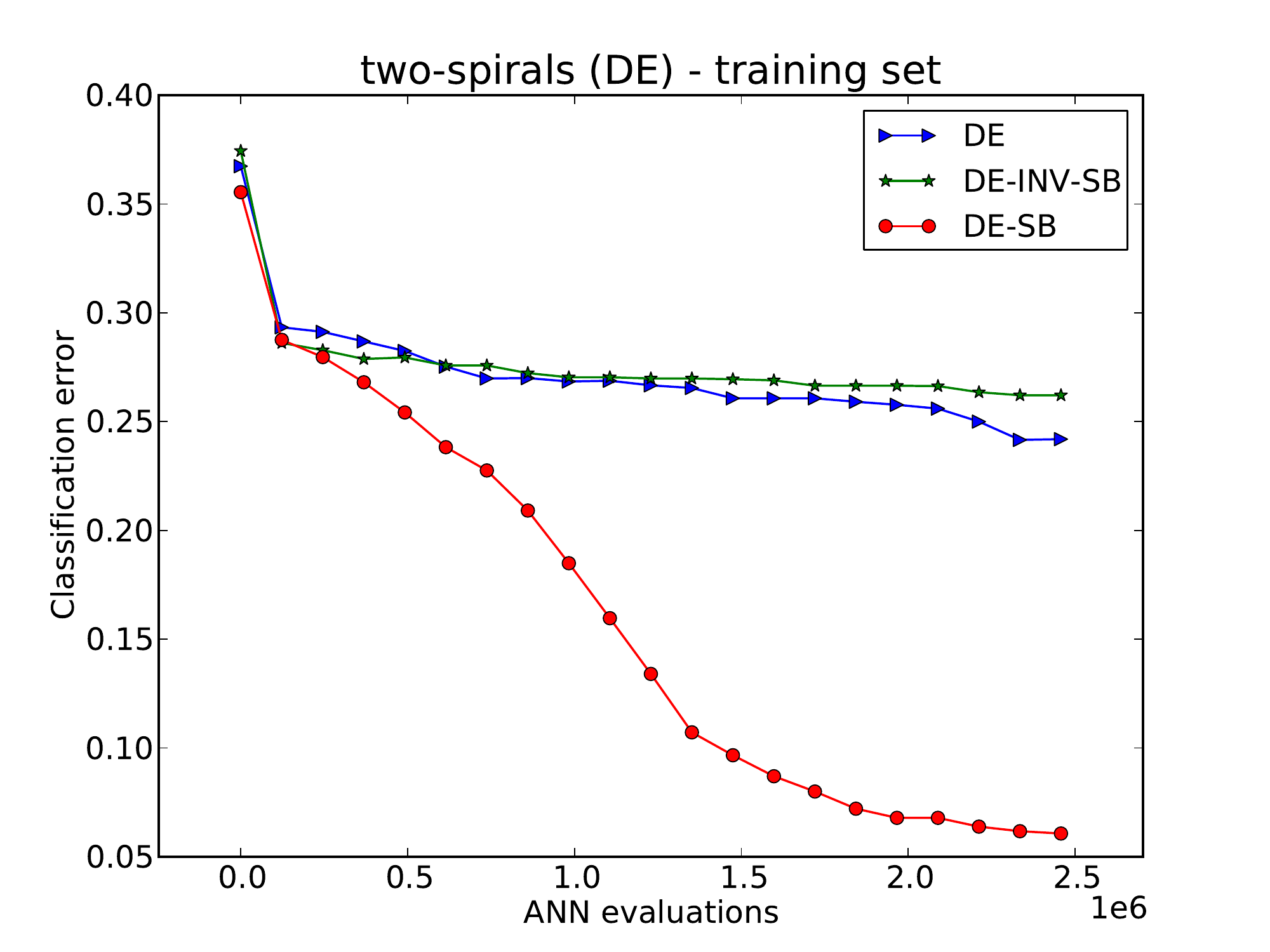}}\hspace{-0.6cm}
      \scalebox{0.26}{\includegraphics{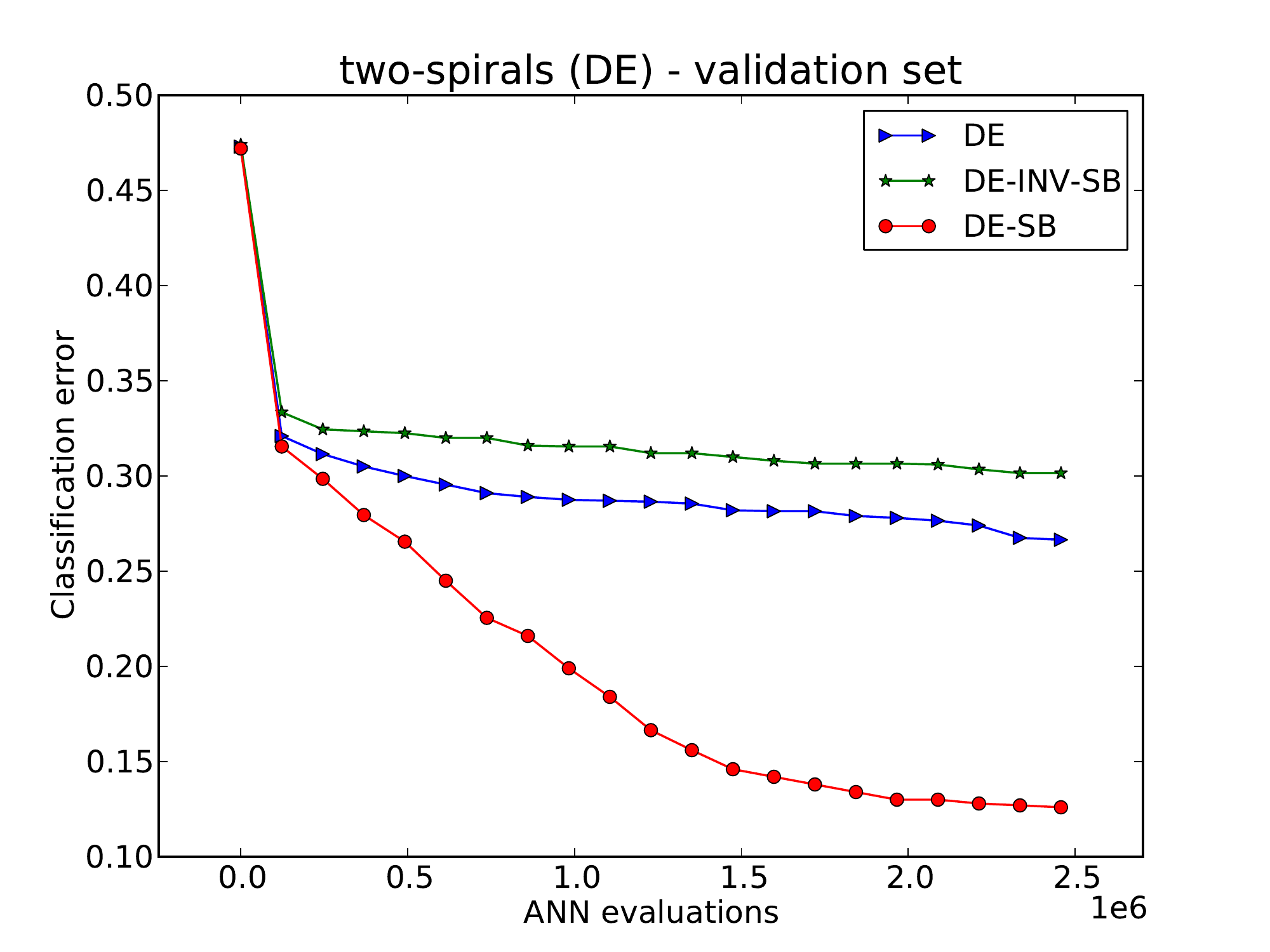}}\hspace{-0.6cm}
      \scalebox{0.26}{\includegraphics{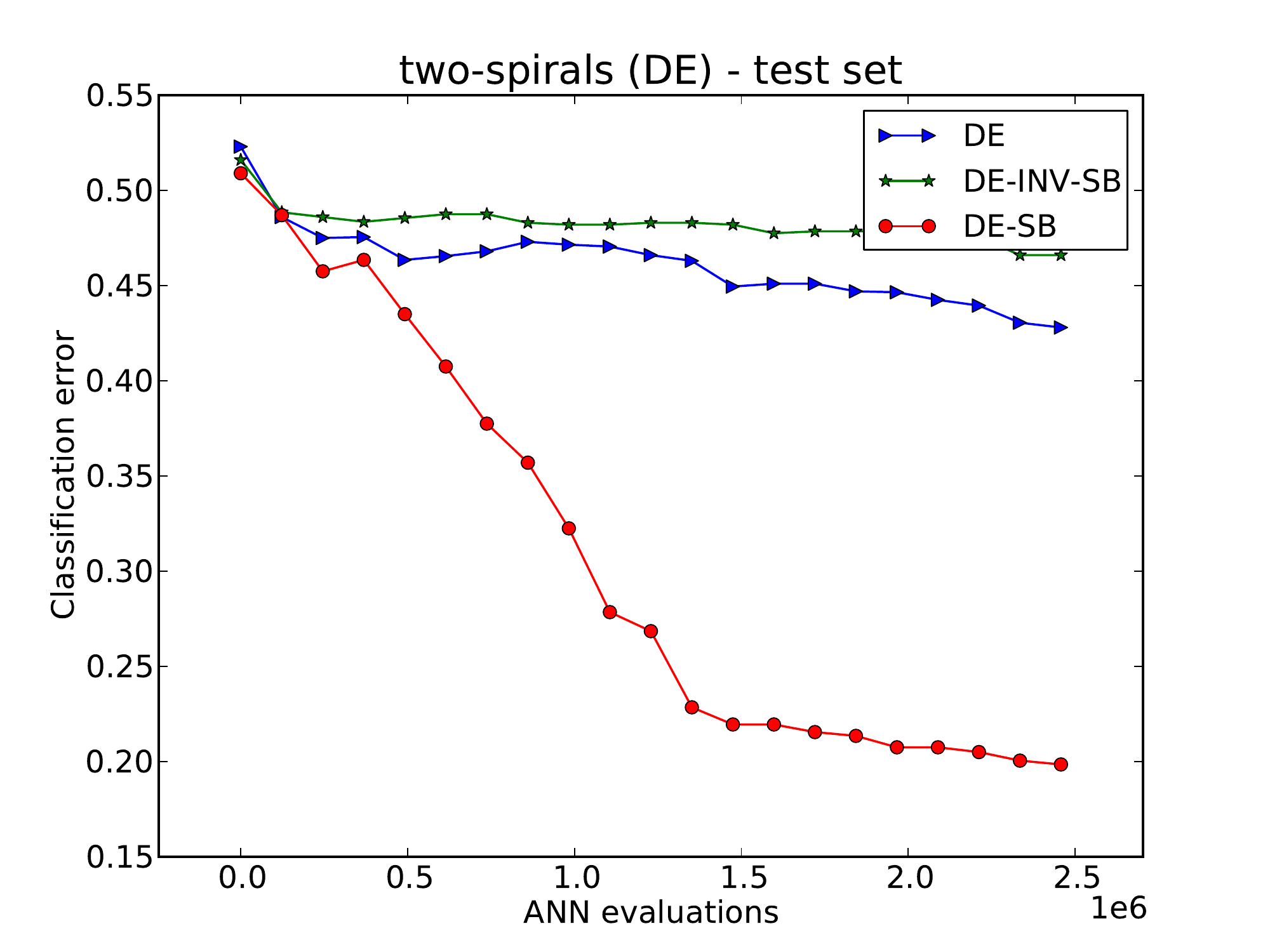}}\\
      \scalebox{0.26}{\includegraphics{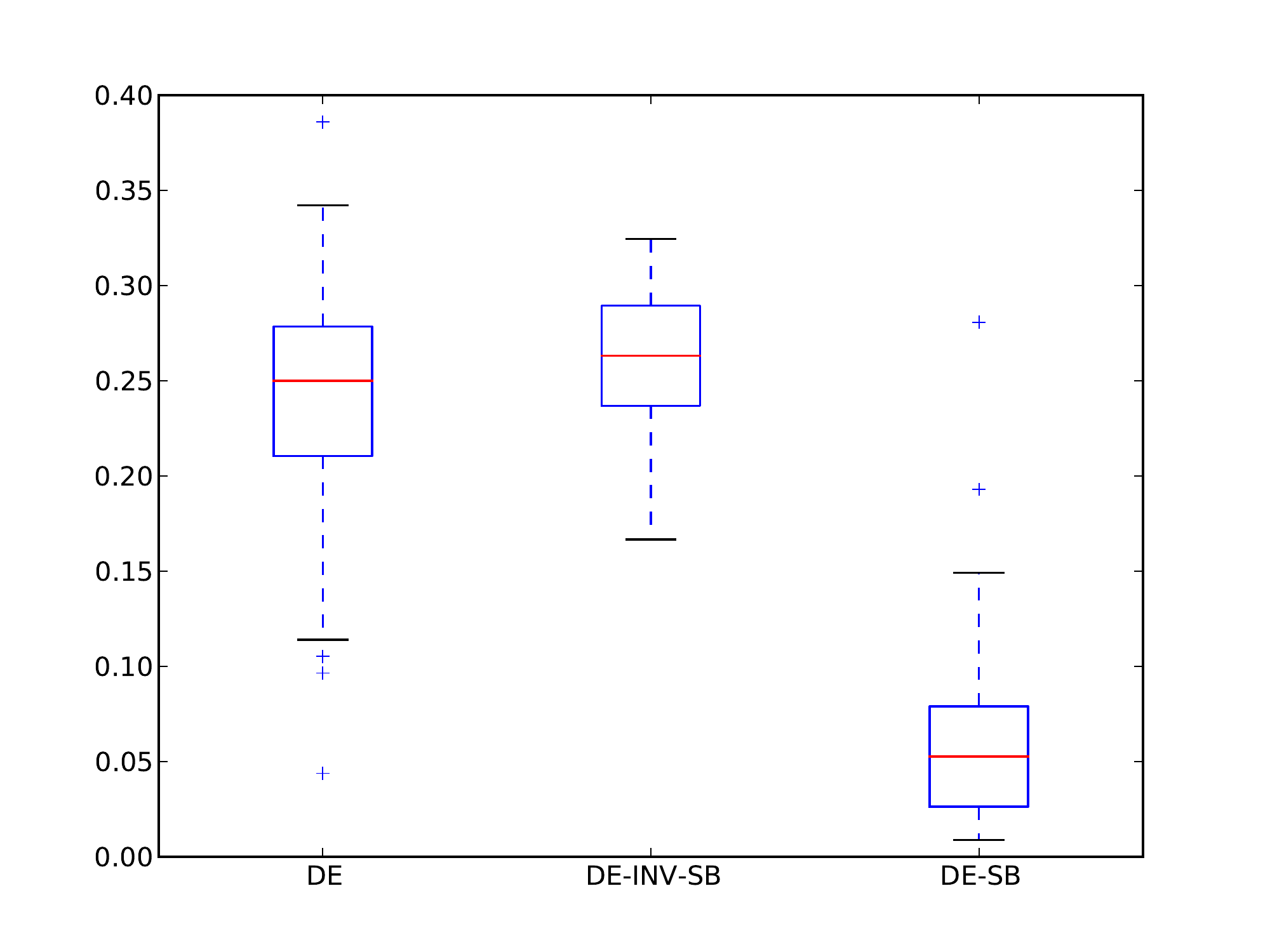}}\hspace{-0.6cm}
      \scalebox{0.26}{\includegraphics{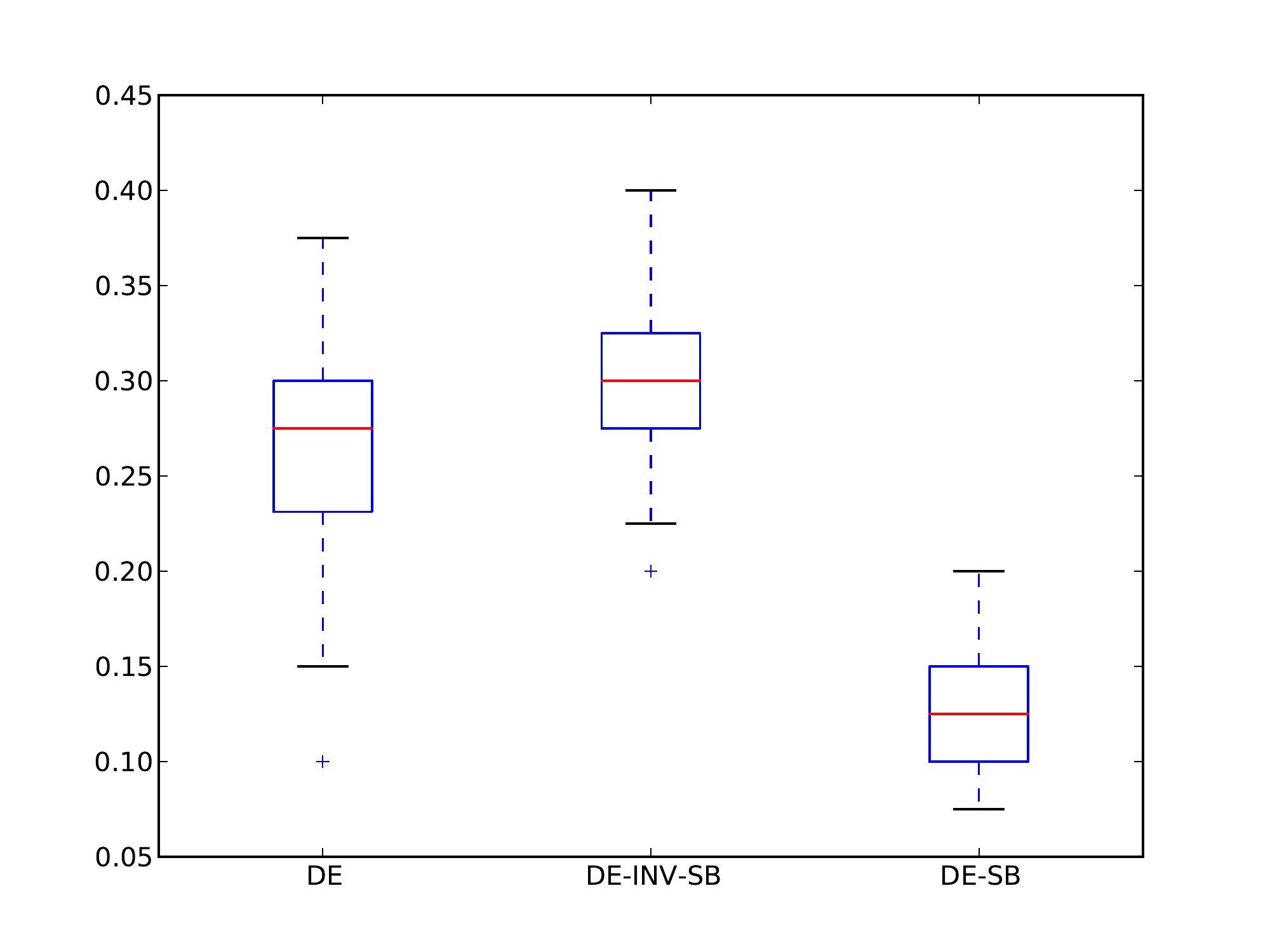}}\hspace{-0.6cm}
      \scalebox{0.26}{\includegraphics{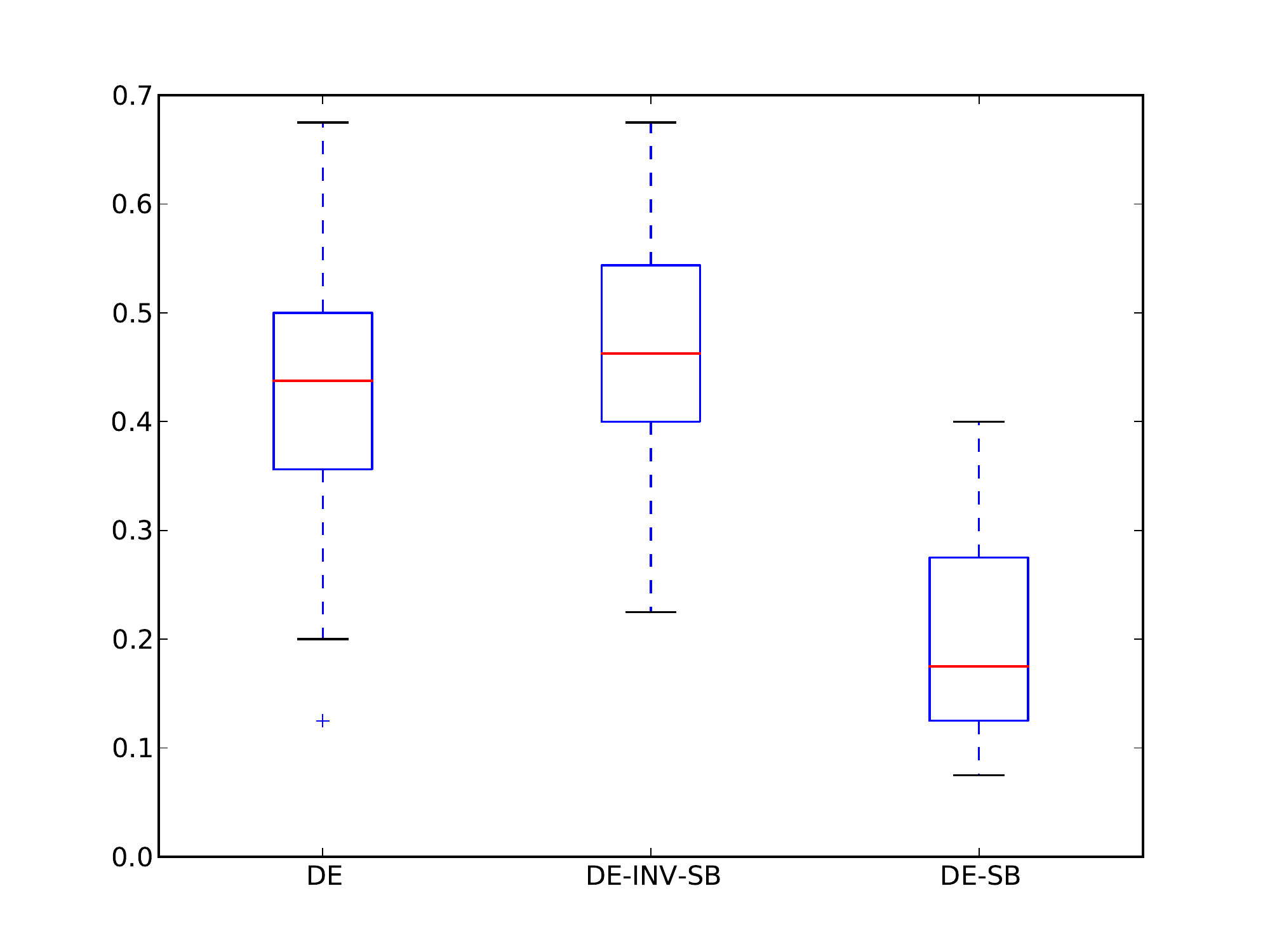}}
      \caption{\label{fig:two-spirals-res-DE} \it Classification error rates over ANN-evaluations on the {\bf Two-Spirals} dataset using the DE-variants.}
   \end{center}
\end{figure*}
\begin{figure*}[h!]
   \begin{center}
      \scalebox{0.26}{\includegraphics{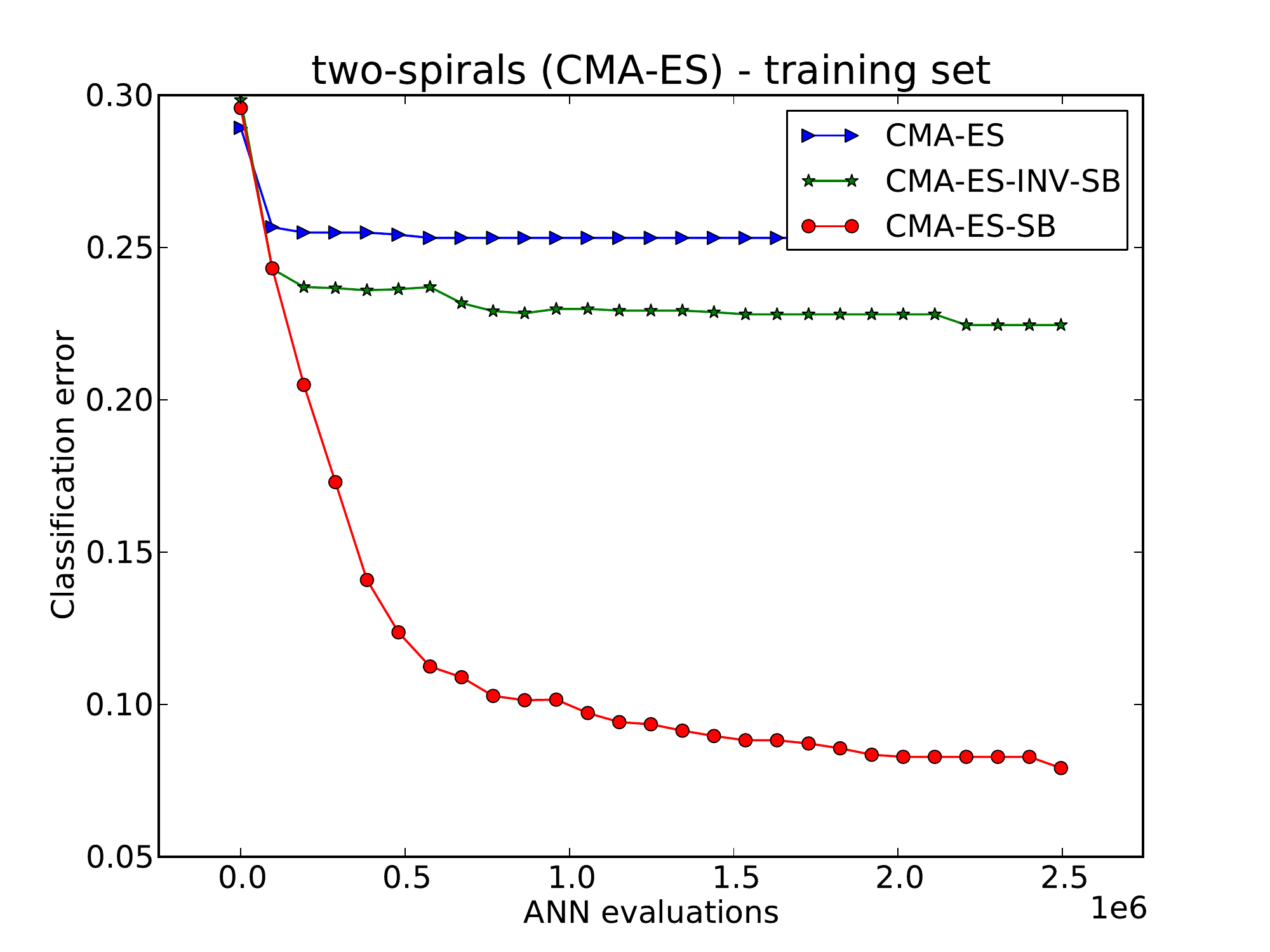}}\hspace{-0.6cm}
      \scalebox{0.26}{\includegraphics{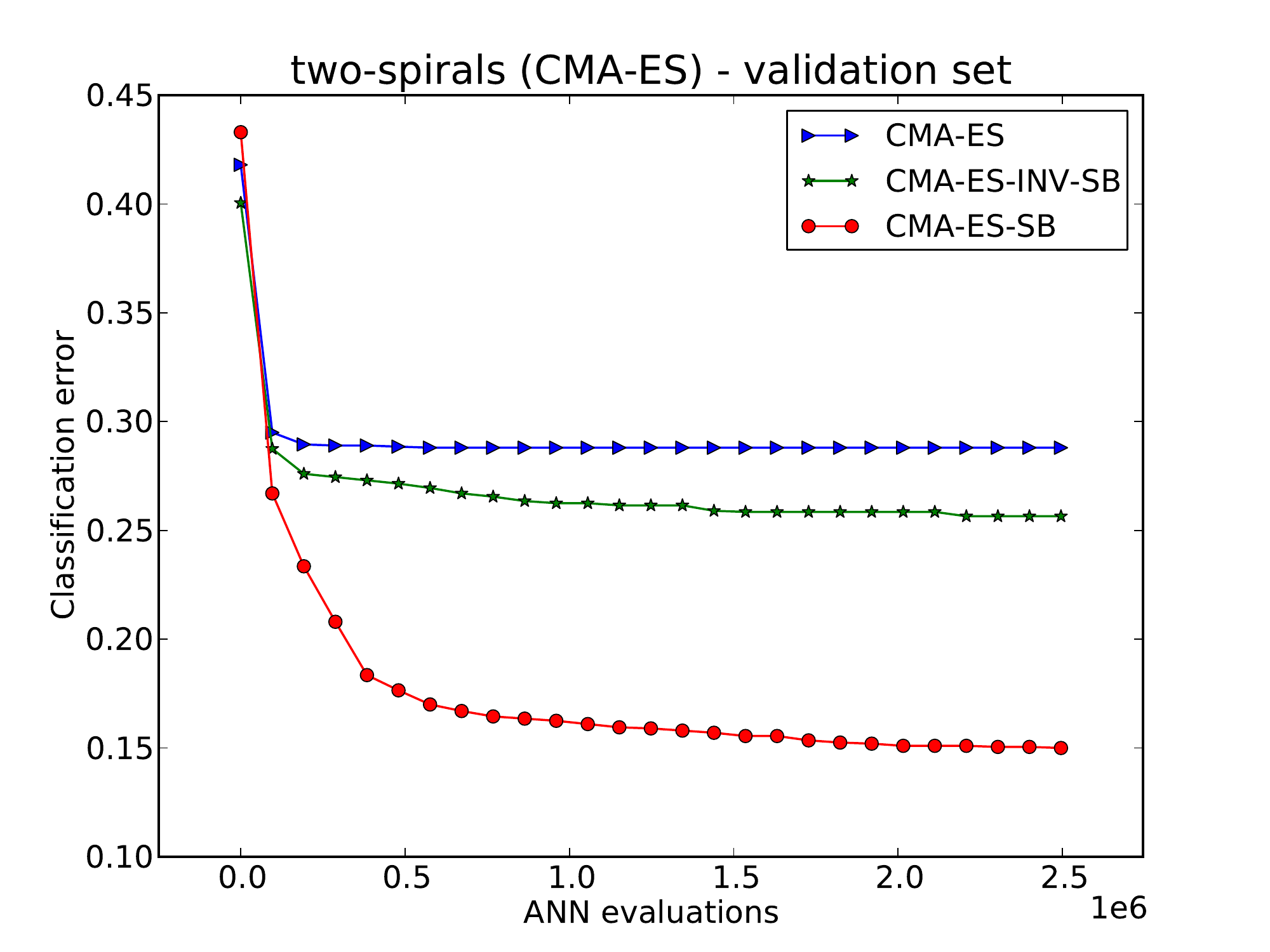}}\hspace{-0.6cm}
      \scalebox{0.26}{\includegraphics{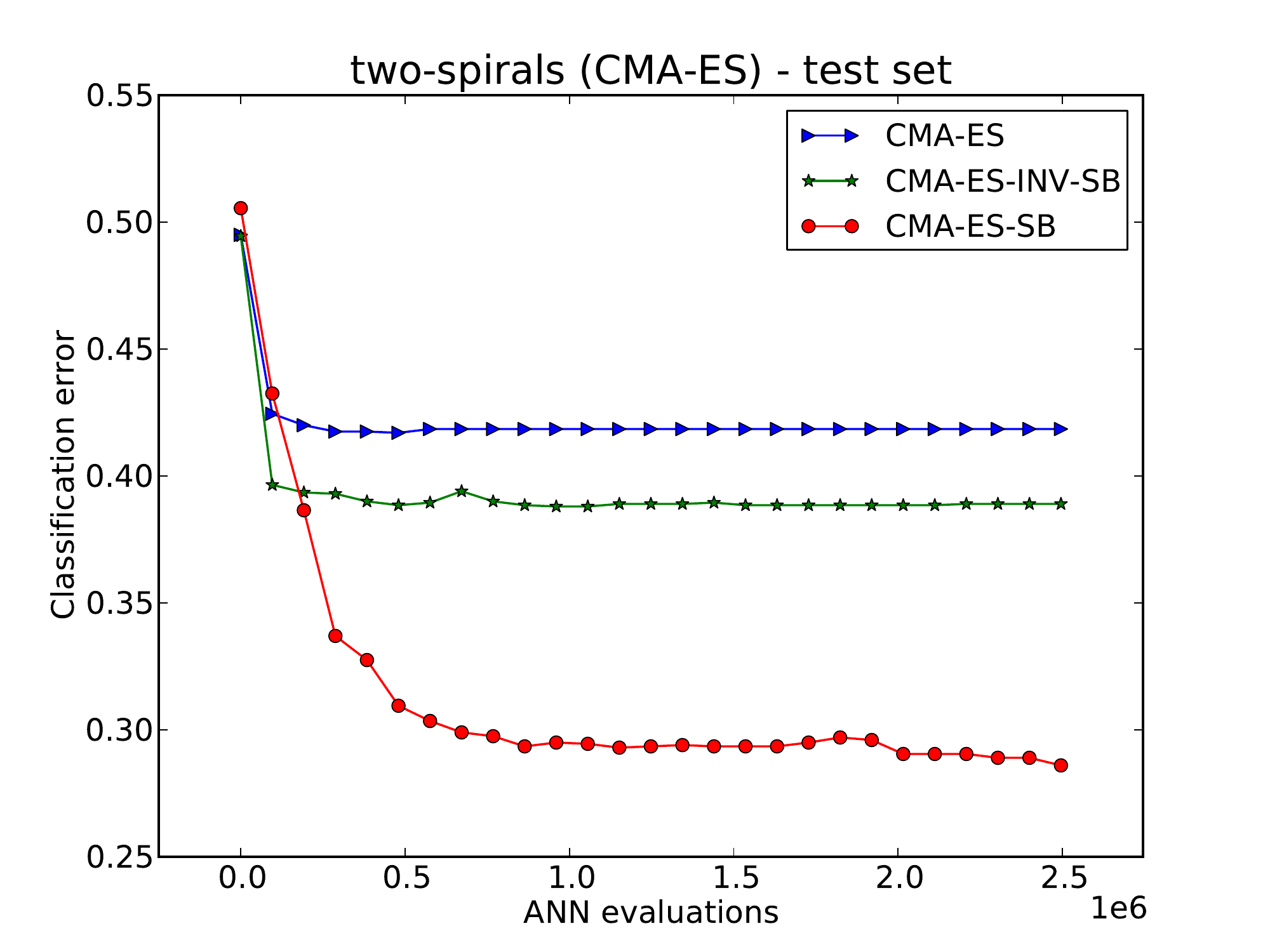}}\\
      \scalebox{0.26}{\includegraphics{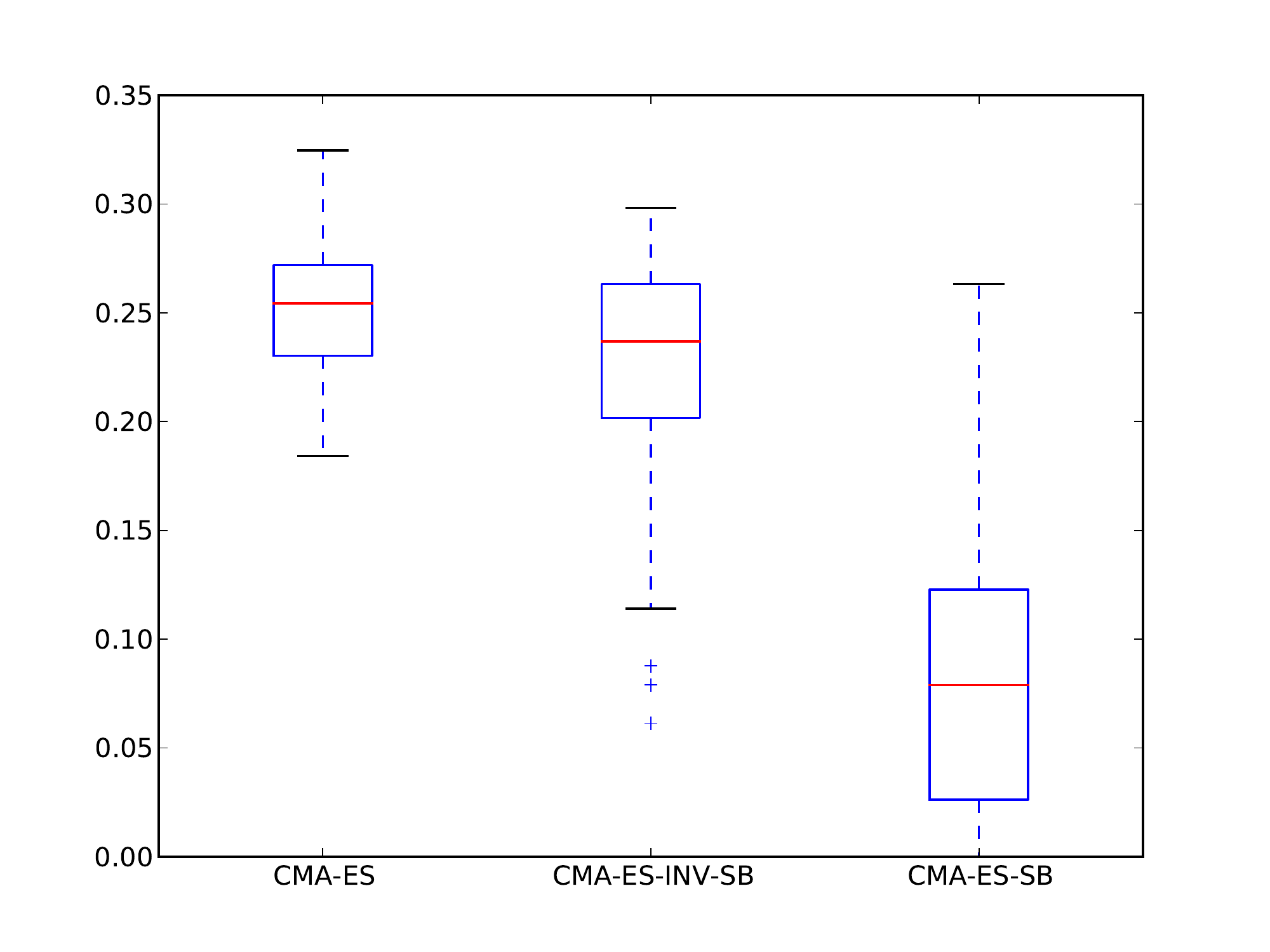}}\hspace{-0.6cm}
      \scalebox{0.26}{\includegraphics{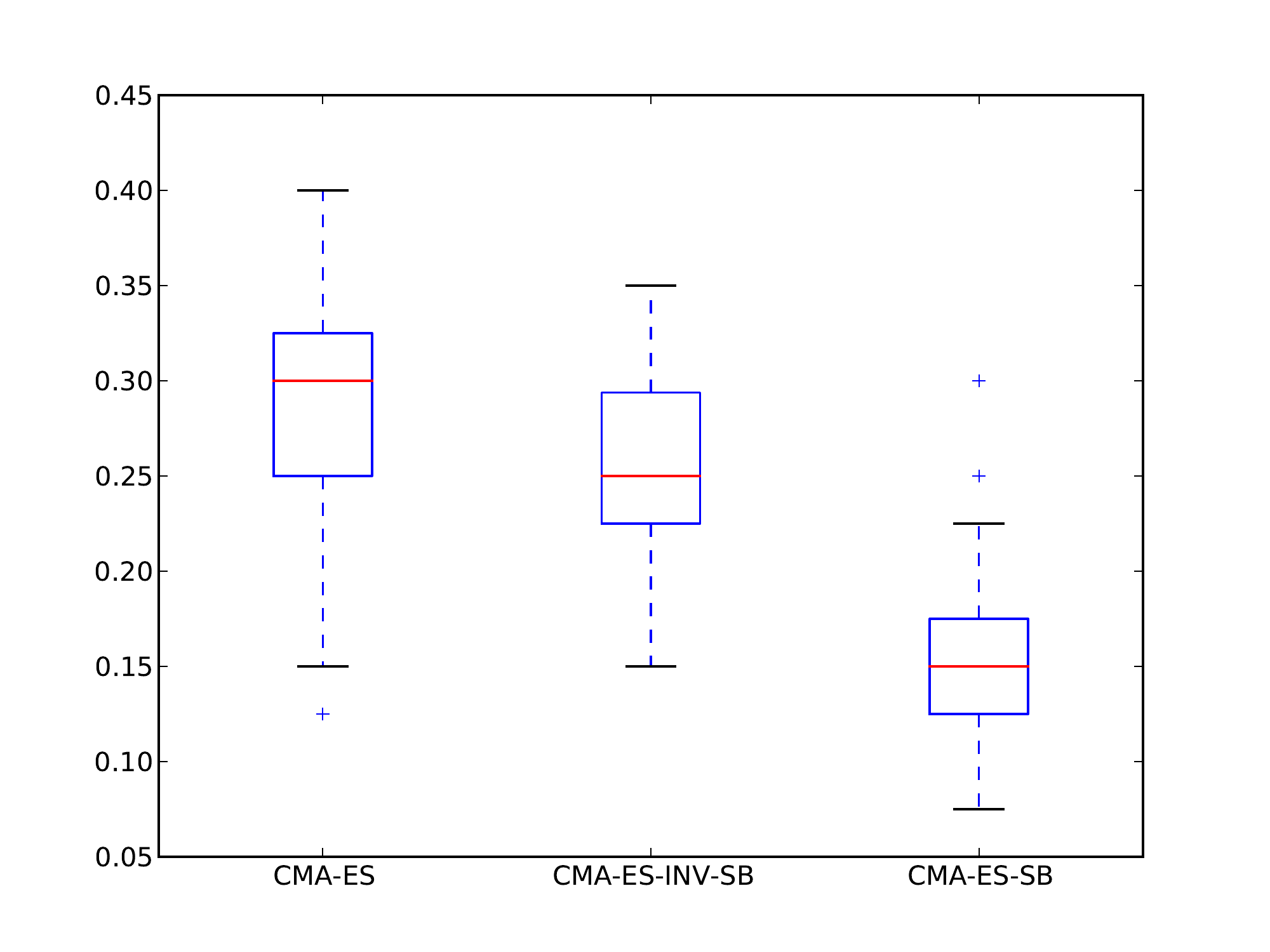}}\hspace{-0.6cm}
      \scalebox{0.26}{\includegraphics{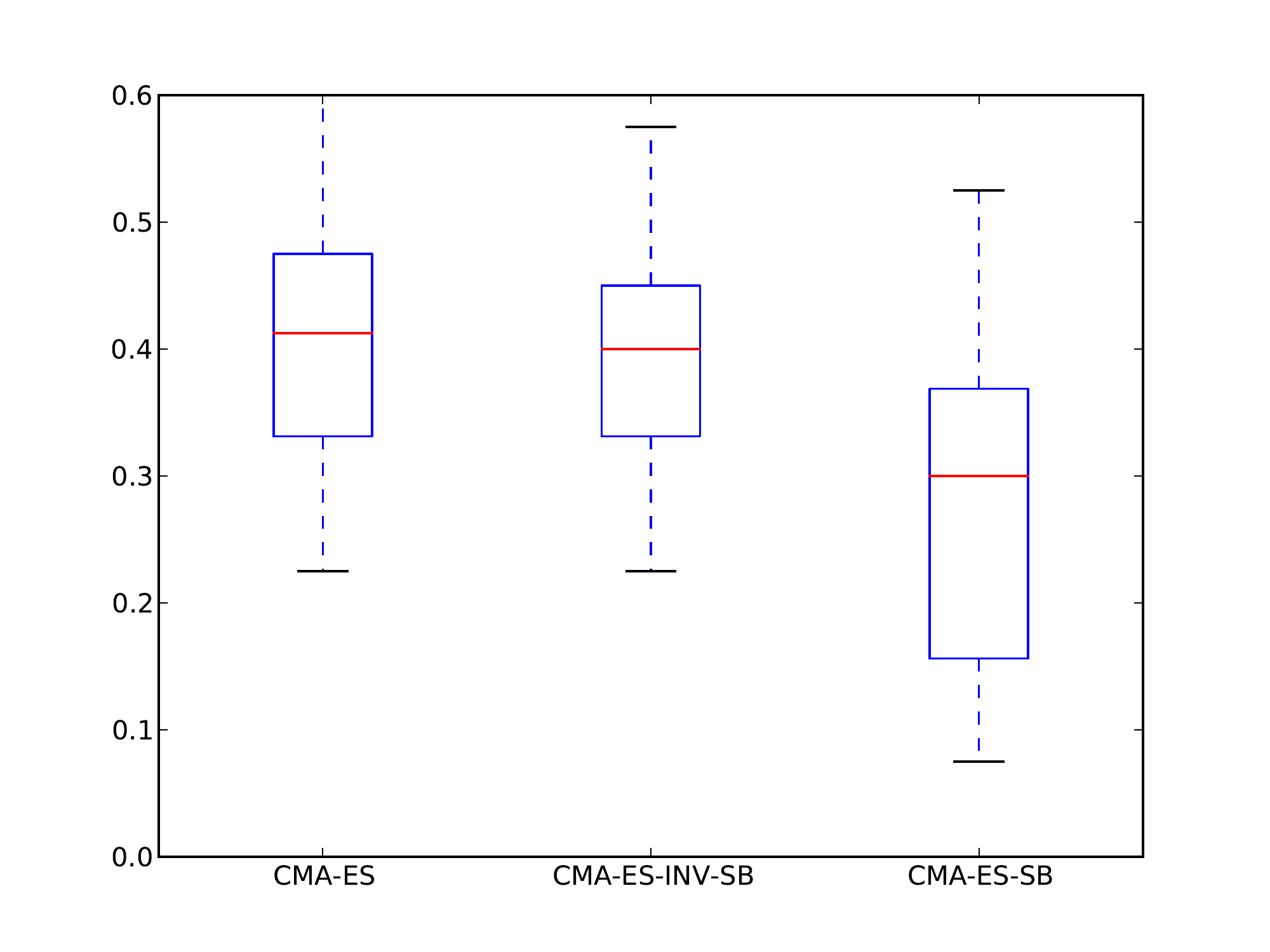}}
      \caption{\label{fig:two-spirals-res-CMAES} \it Classification error rates over ANN-evaluations on the {\bf Two-Spirals} dataset using the CMA-ES-variants.}
   \end{center}
\end{figure*}
On the training and test set, DE and DE-INV-SB mean results are not statistical significantly different. Furthermore, on the test set, CMA-ES and CMA-ES-INV-SB mean results are not significantly different. Otherwise, all other pairwise differences prove to be statistically significant. DE-SB and CMA-ES-SB continue to show the best results.

\clearpage
\subsubsection{Dataset: {\bf Digits}}
This problem~\cite{Alpaydin-digits-data-set} deals with the recognition of handwritten digits, which results in a classification problem with 10 classes. The data is generated by asking several writers to write 250 digits in random order inside boxes of 500 by 500 tablet pixel resolution. There are 16 features extracted from the digitized data. We use a 16-8-3-10-10 net and 1000 data samples each for the training set, the validation set as well as the test set.
\begin{figure*}[h!]
   \begin{center}
      \scalebox{0.26}{\includegraphics{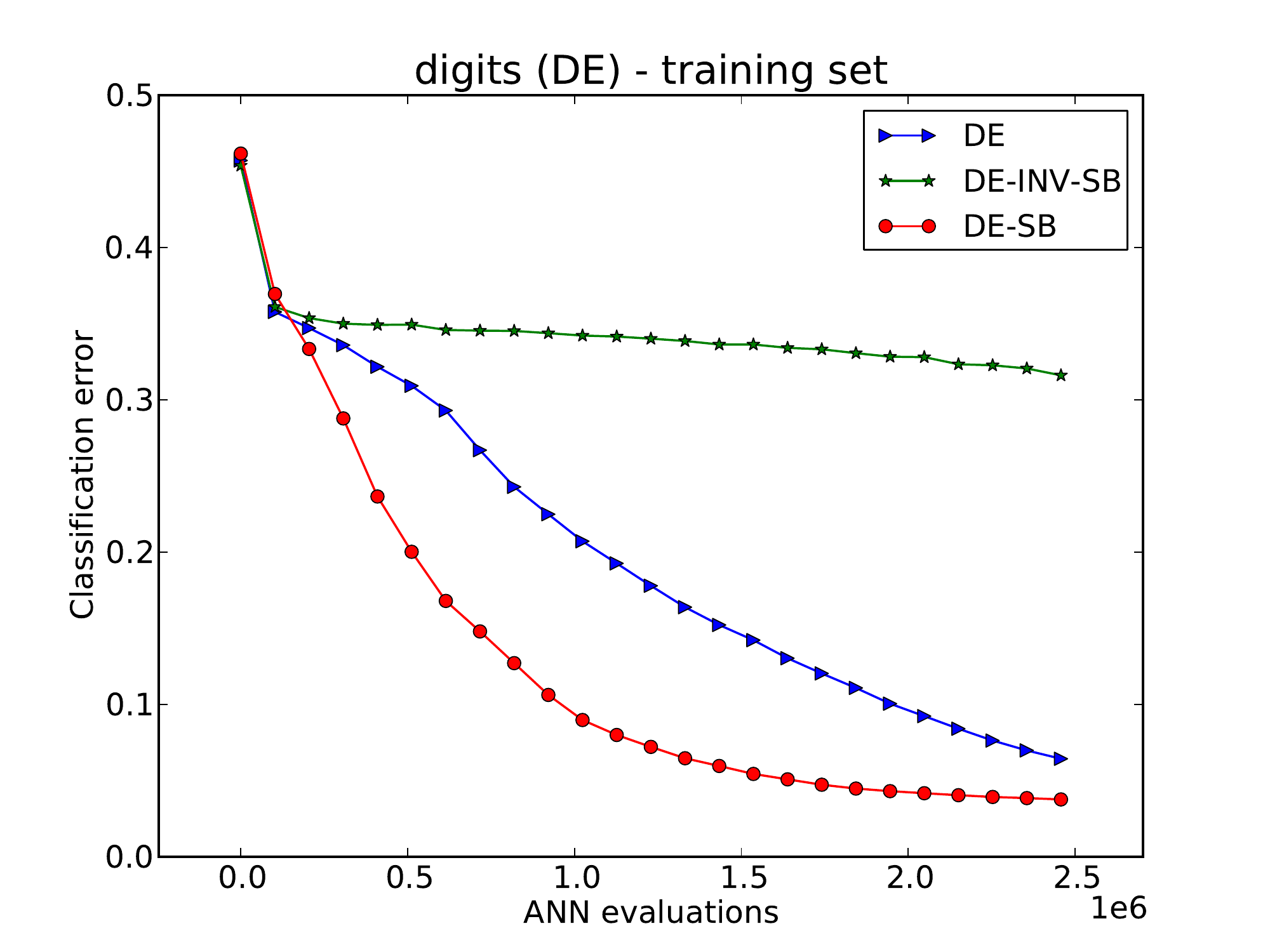}}\hspace{-0.6cm}
      \scalebox{0.26}{\includegraphics{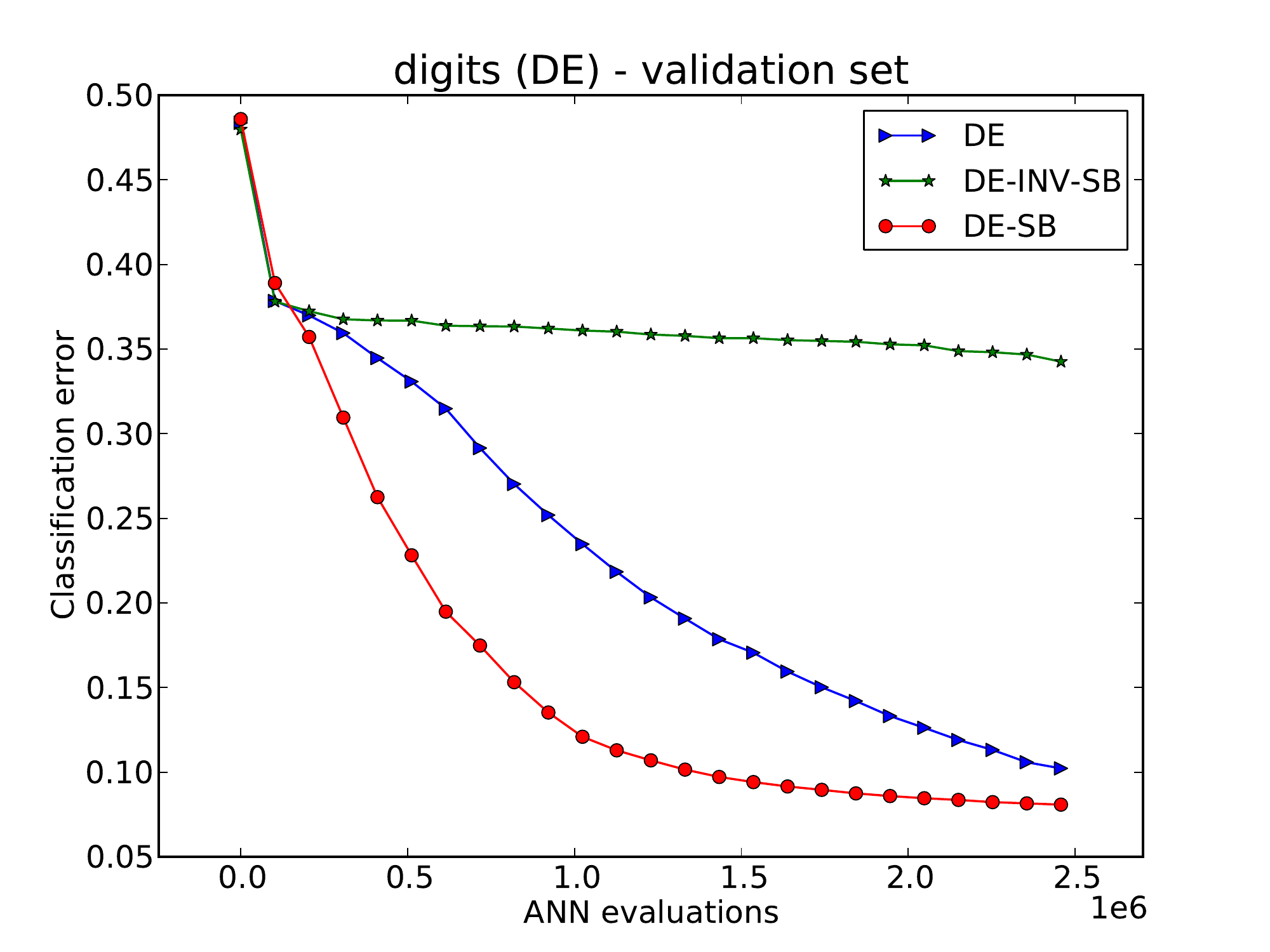}}\hspace{-0.6cm}
      \scalebox{0.26}{\includegraphics{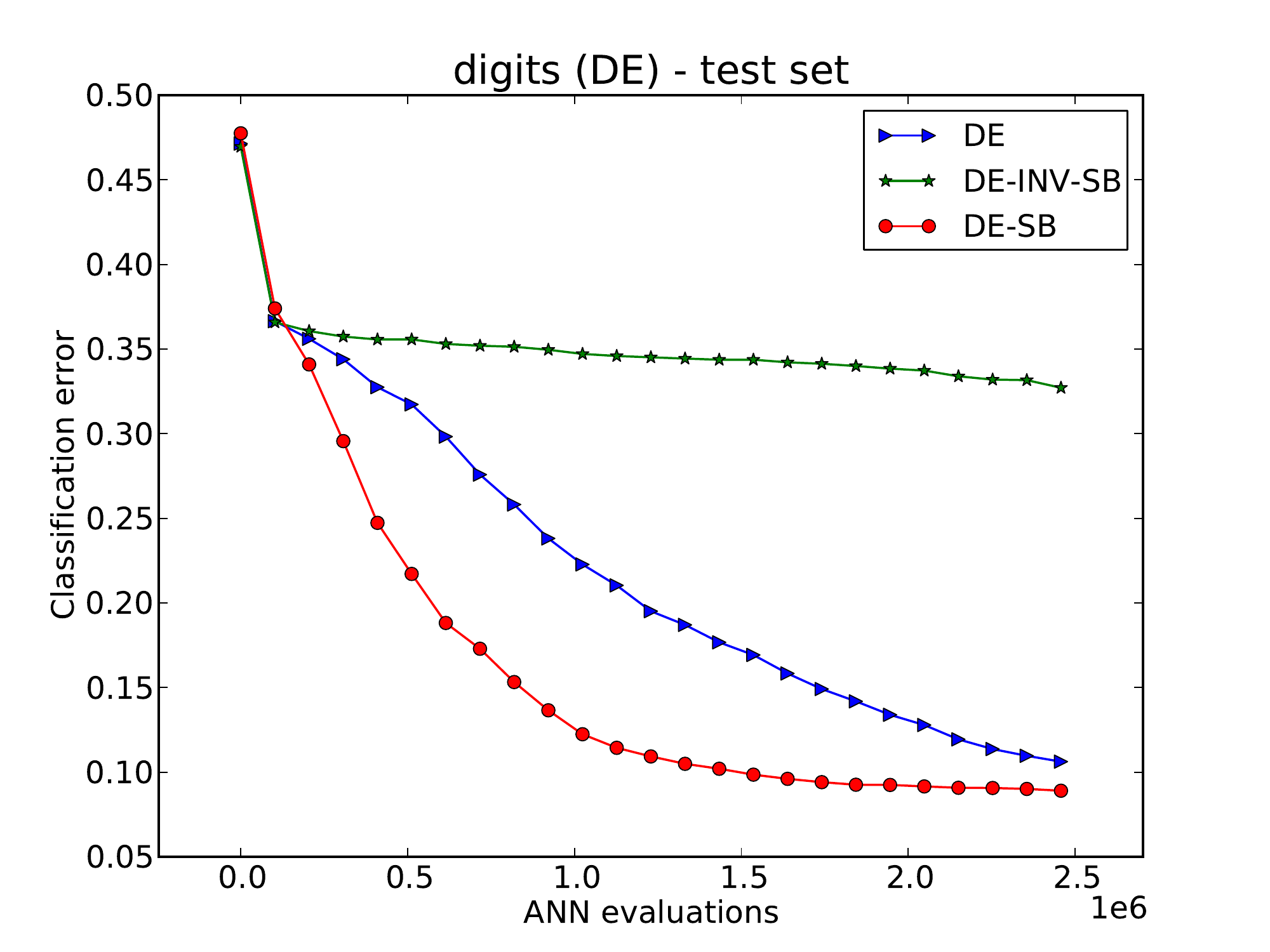}}\\
      \scalebox{0.26}{\includegraphics{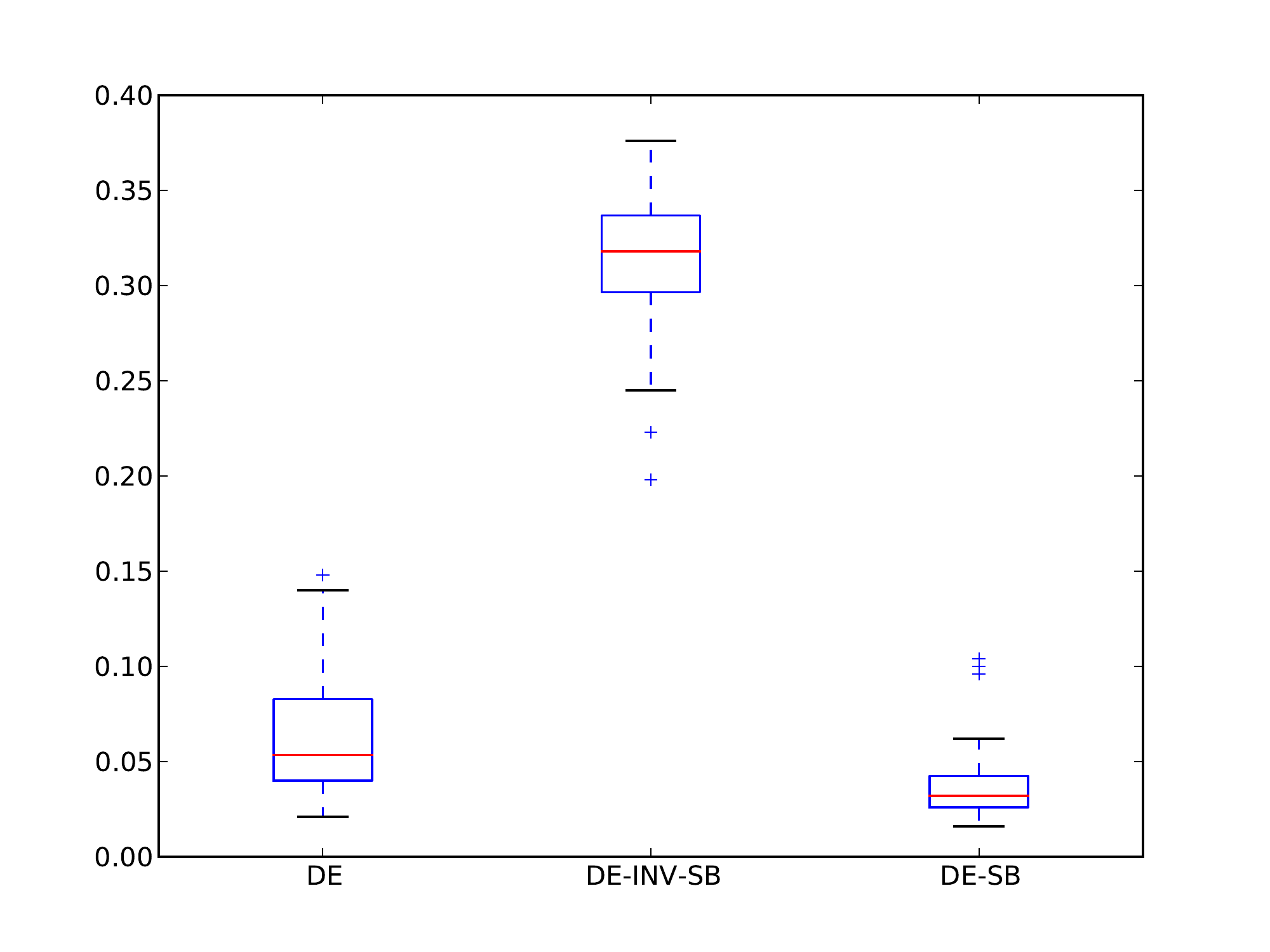}}\hspace{-0.6cm}
      \scalebox{0.26}{\includegraphics{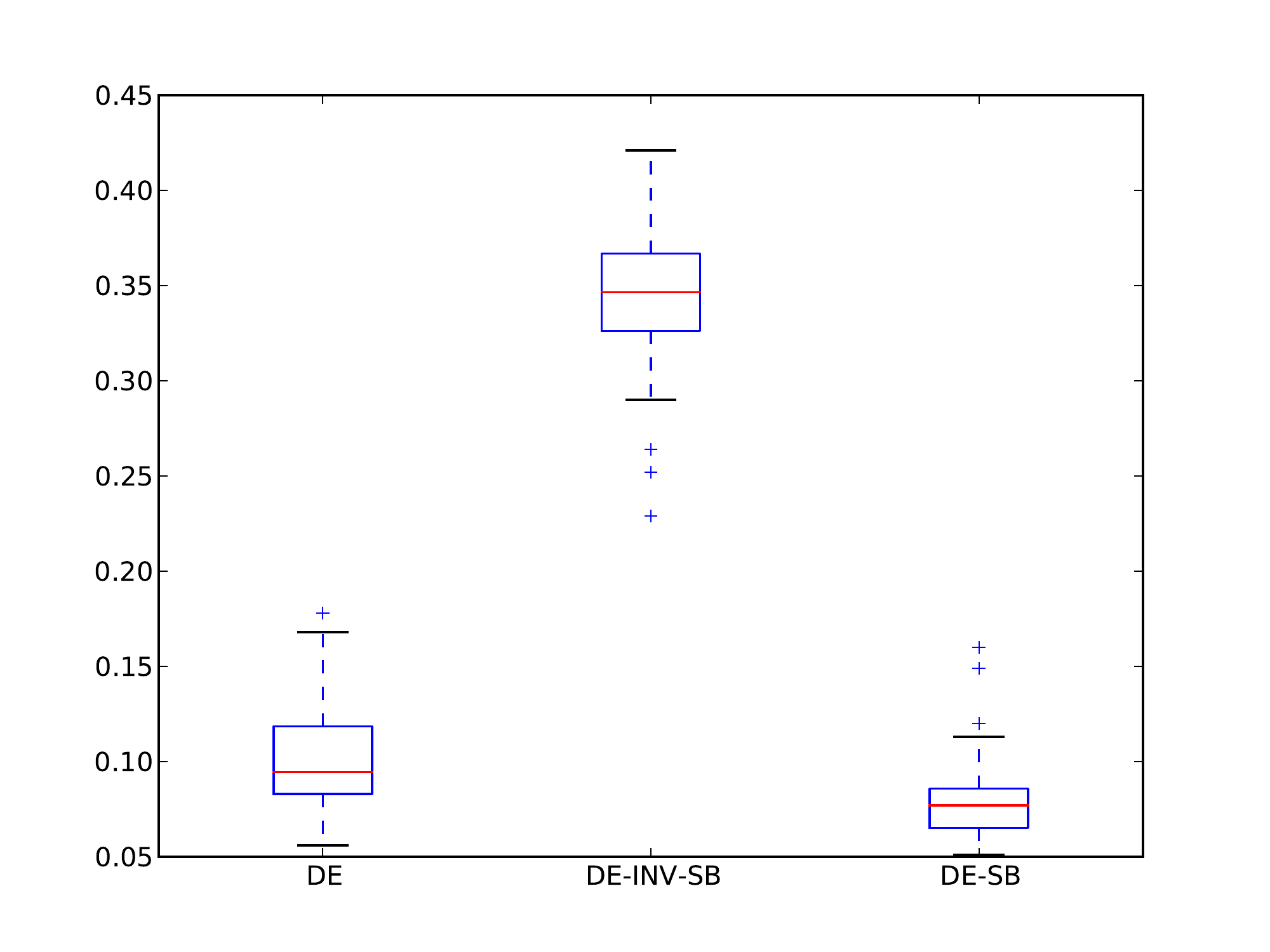}}\hspace{-0.6cm}
      \scalebox{0.26}{\includegraphics{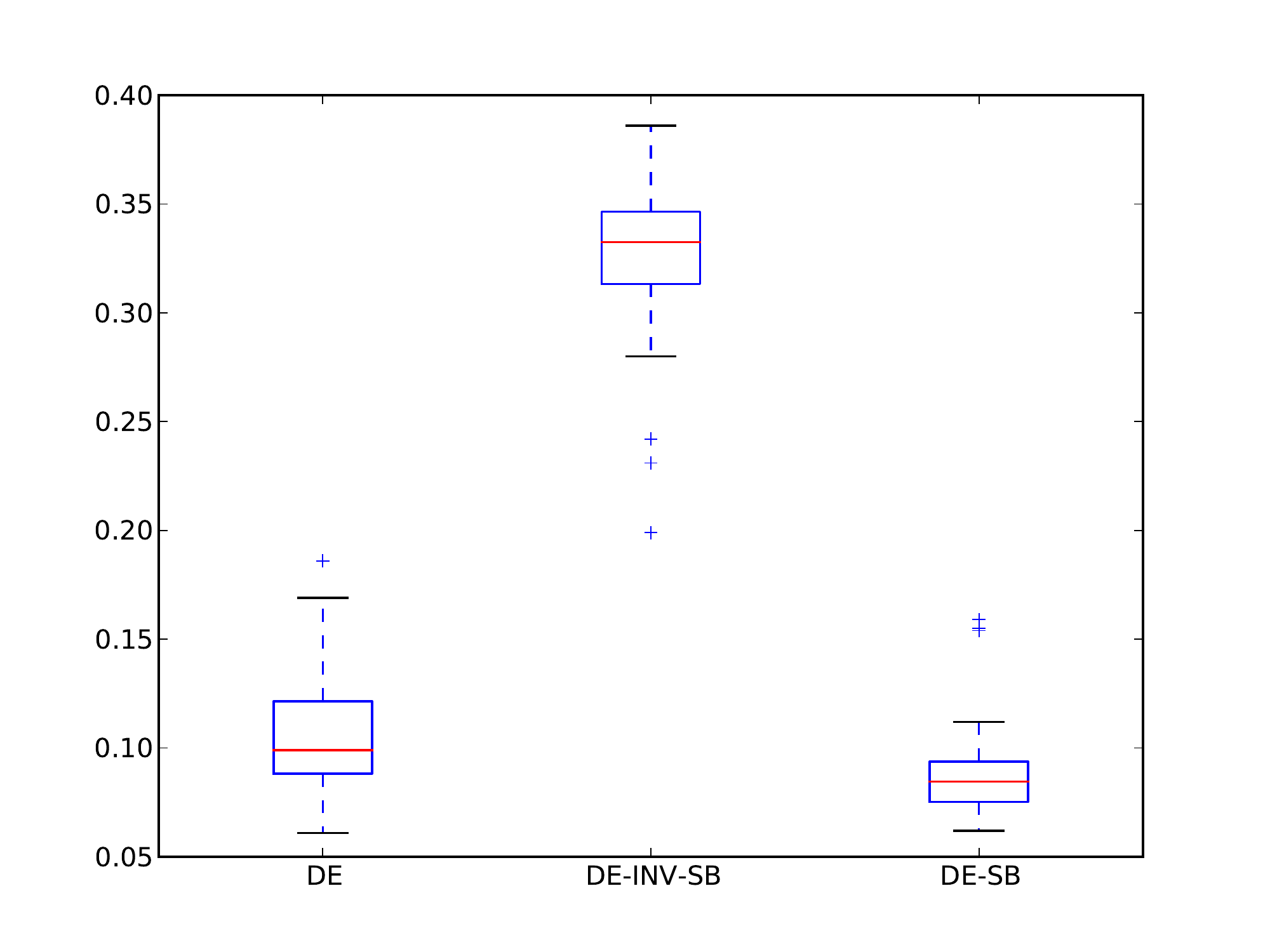}}
      \caption{\label{fig:digits-res-DE} \it Classification error rates over ANN-evaluations on the {\bf Digits} dataset using the DE-variants.}
   \end{center}
\end{figure*}
\begin{figure*}[h!]
   \begin{center}
      \scalebox{0.26}{\includegraphics{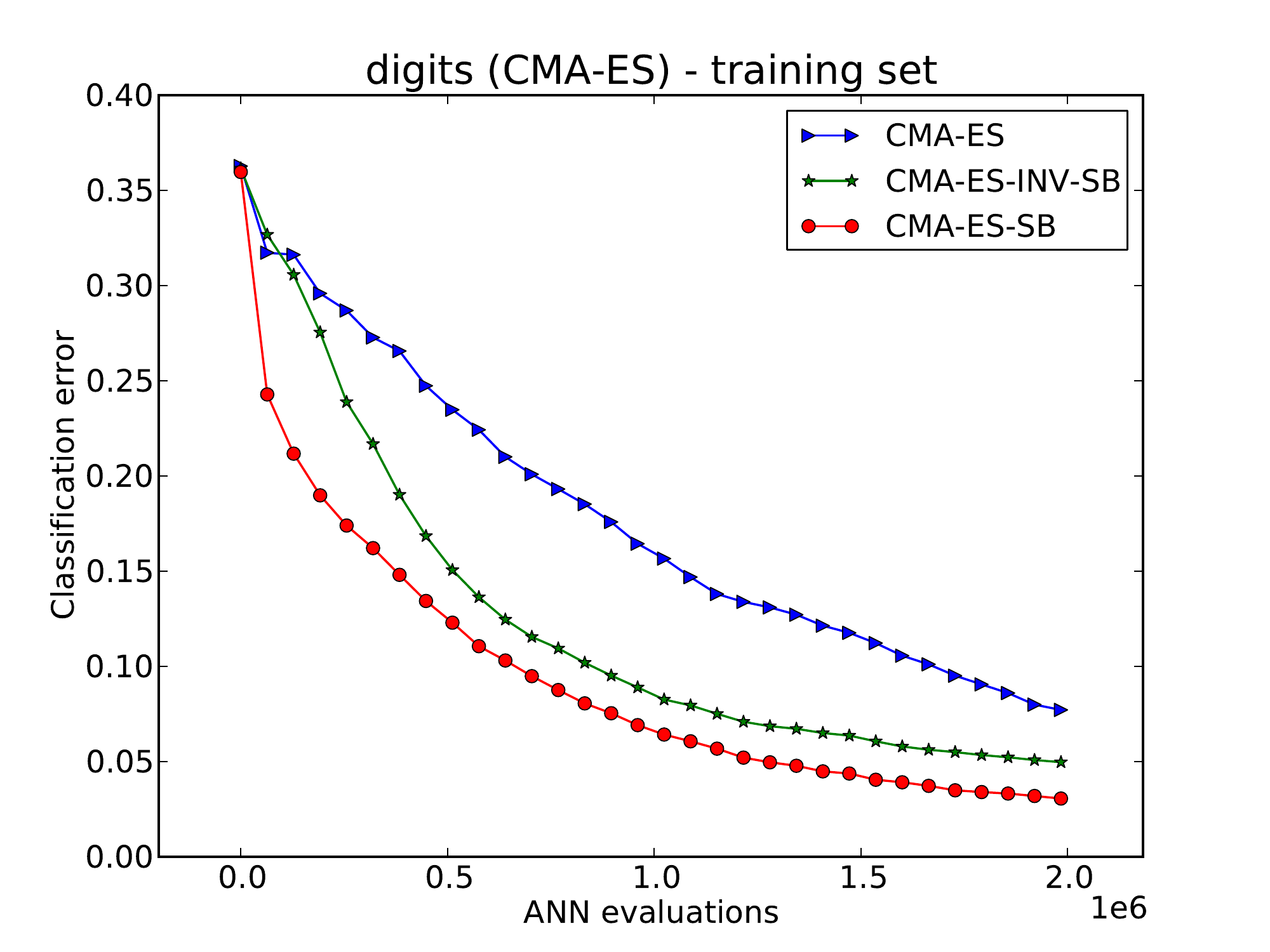}}\hspace{-0.6cm}
      \scalebox{0.26}{\includegraphics{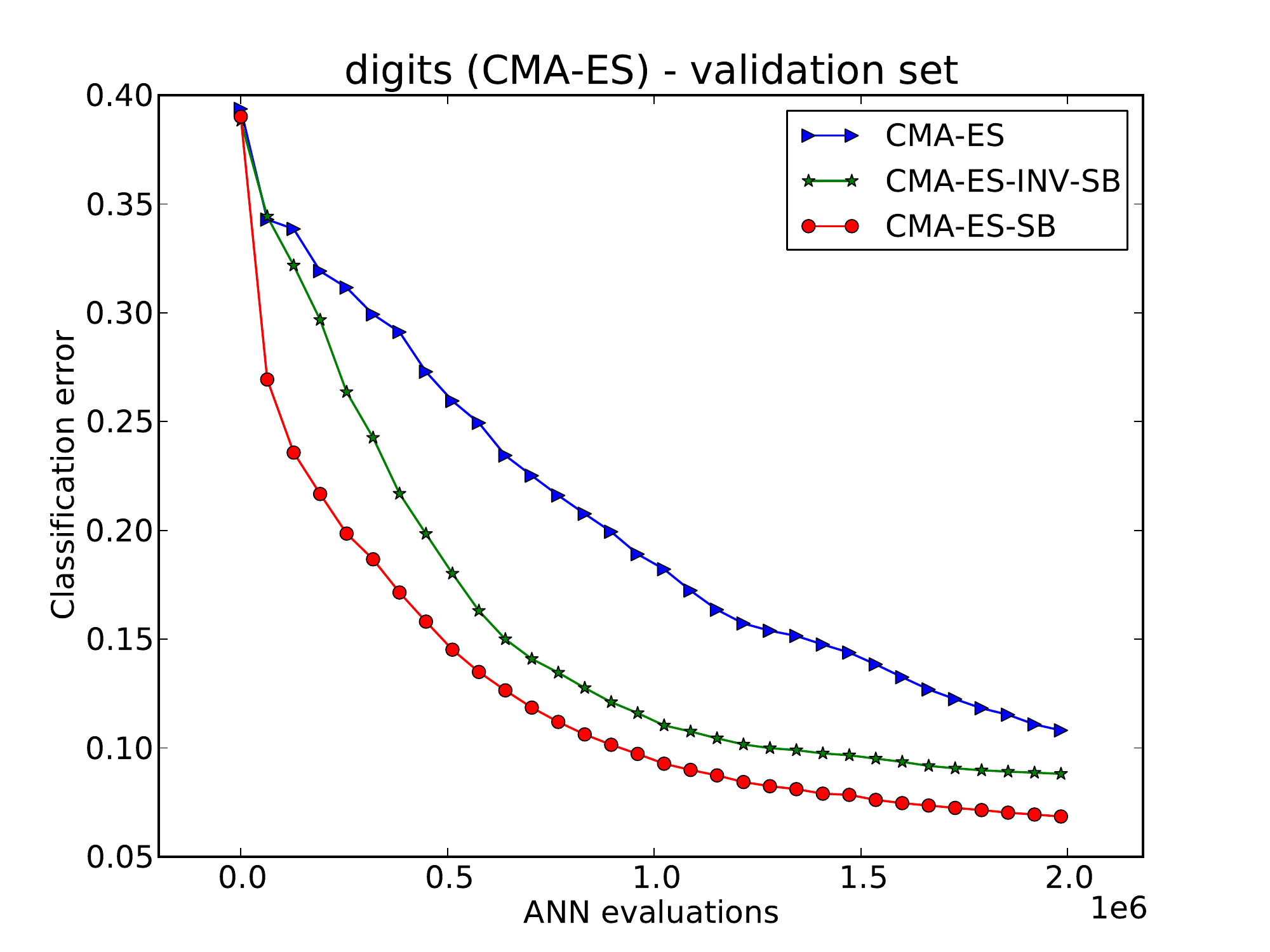}}\hspace{-0.6cm}
      \scalebox{0.26}{\includegraphics{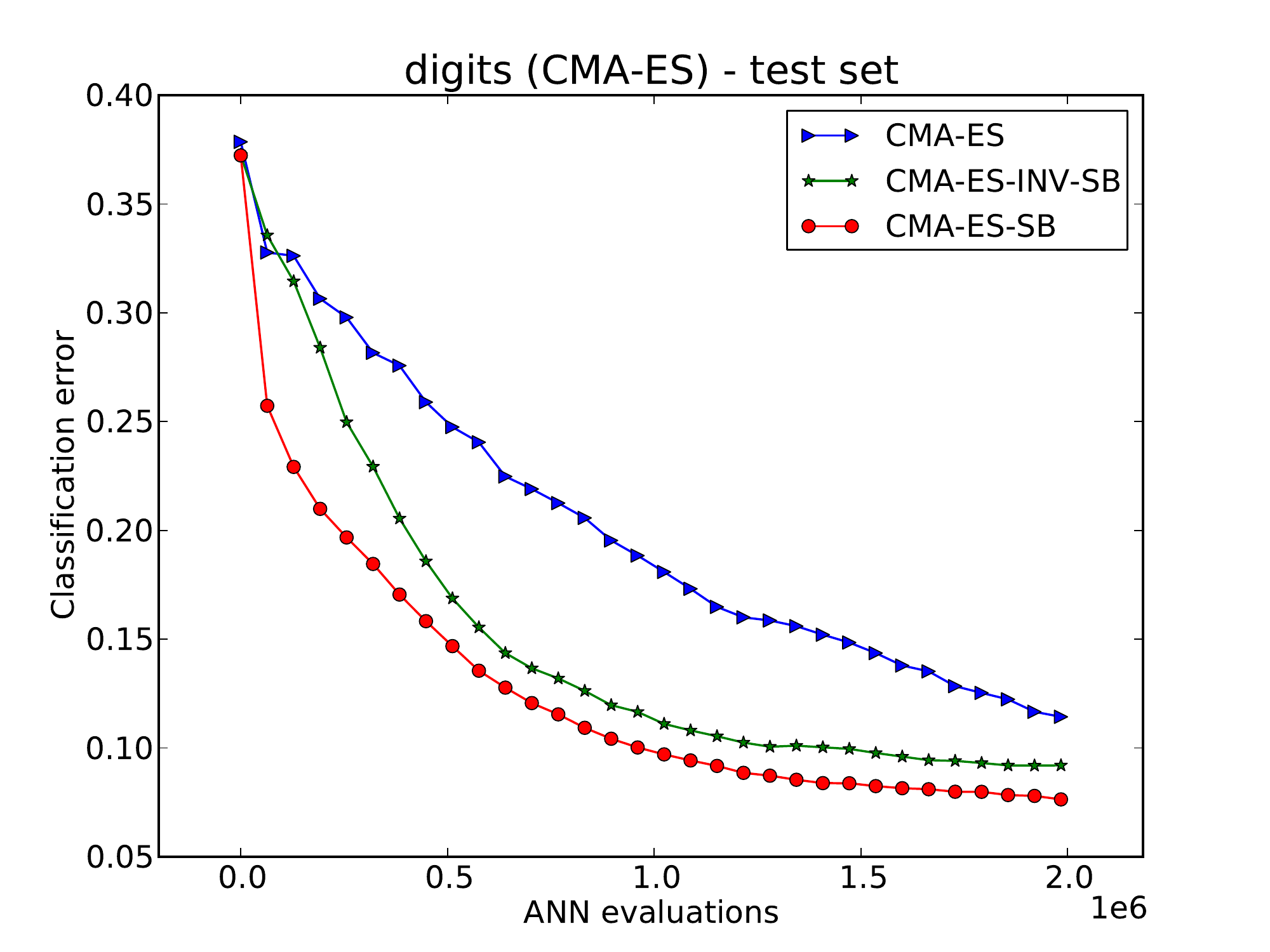}}\\
      \scalebox{0.26}{\includegraphics{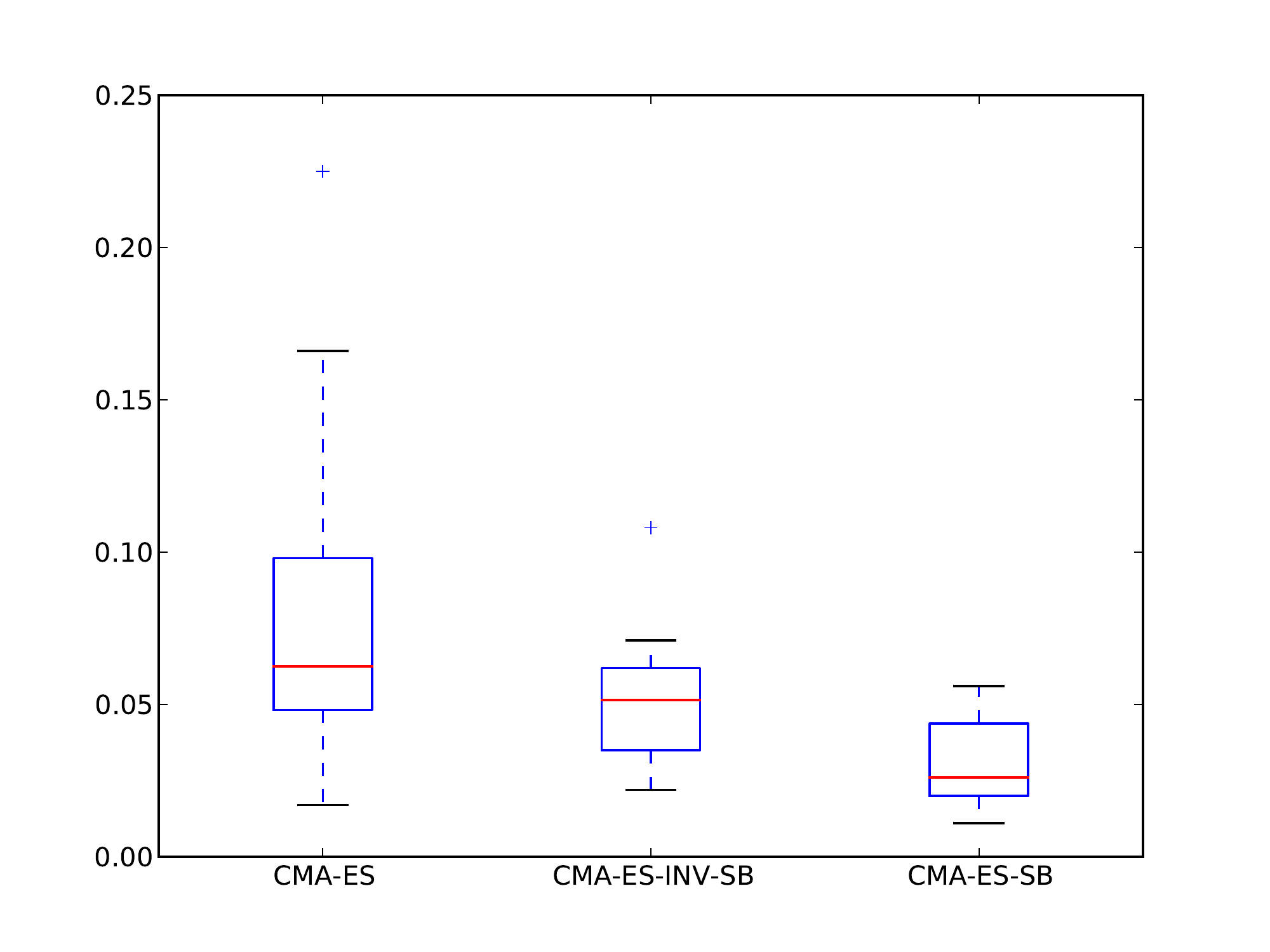}}\hspace{-0.6cm}
      \scalebox{0.26}{\includegraphics{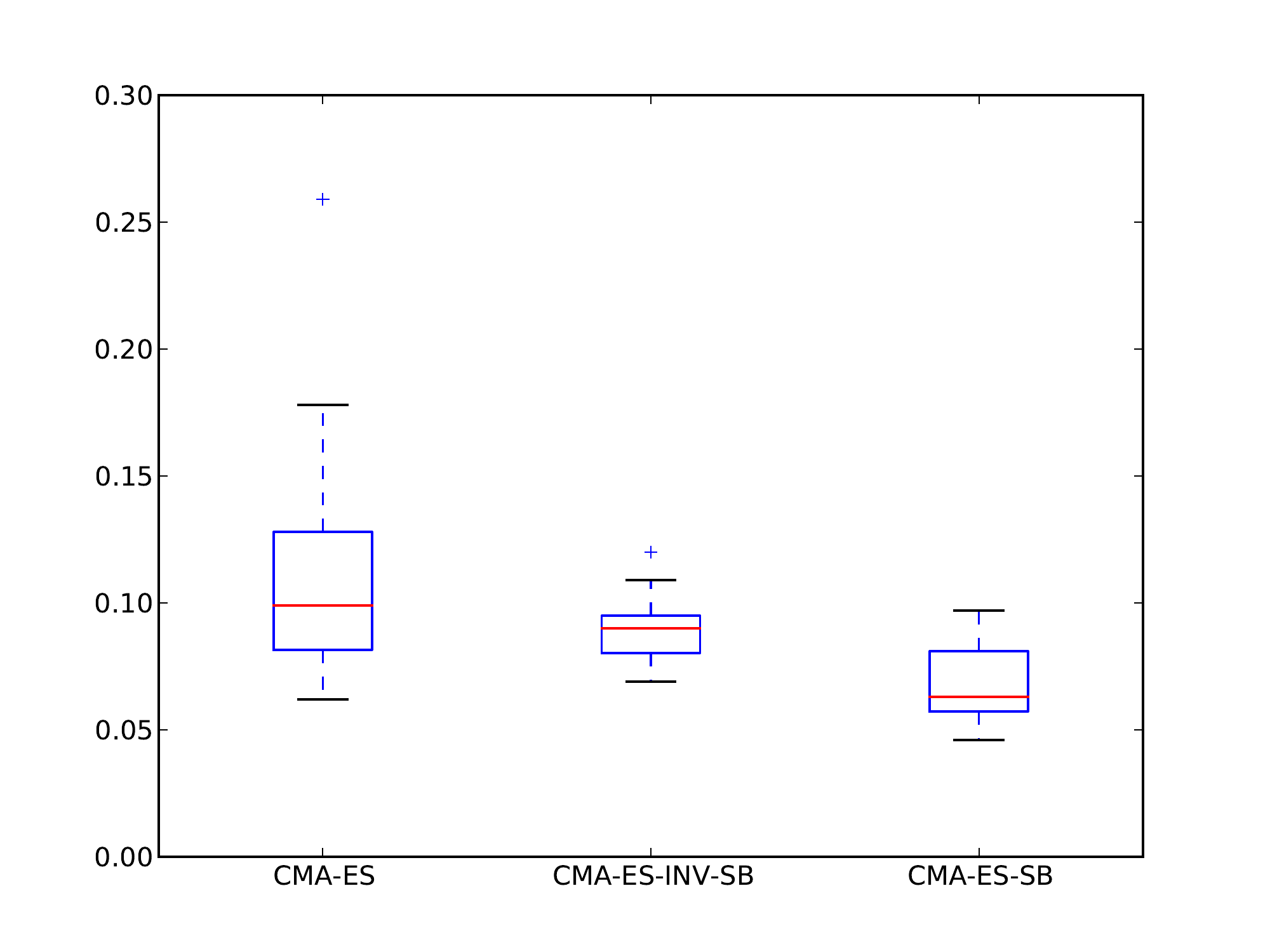}}\hspace{-0.6cm}
      \scalebox{0.26}{\includegraphics{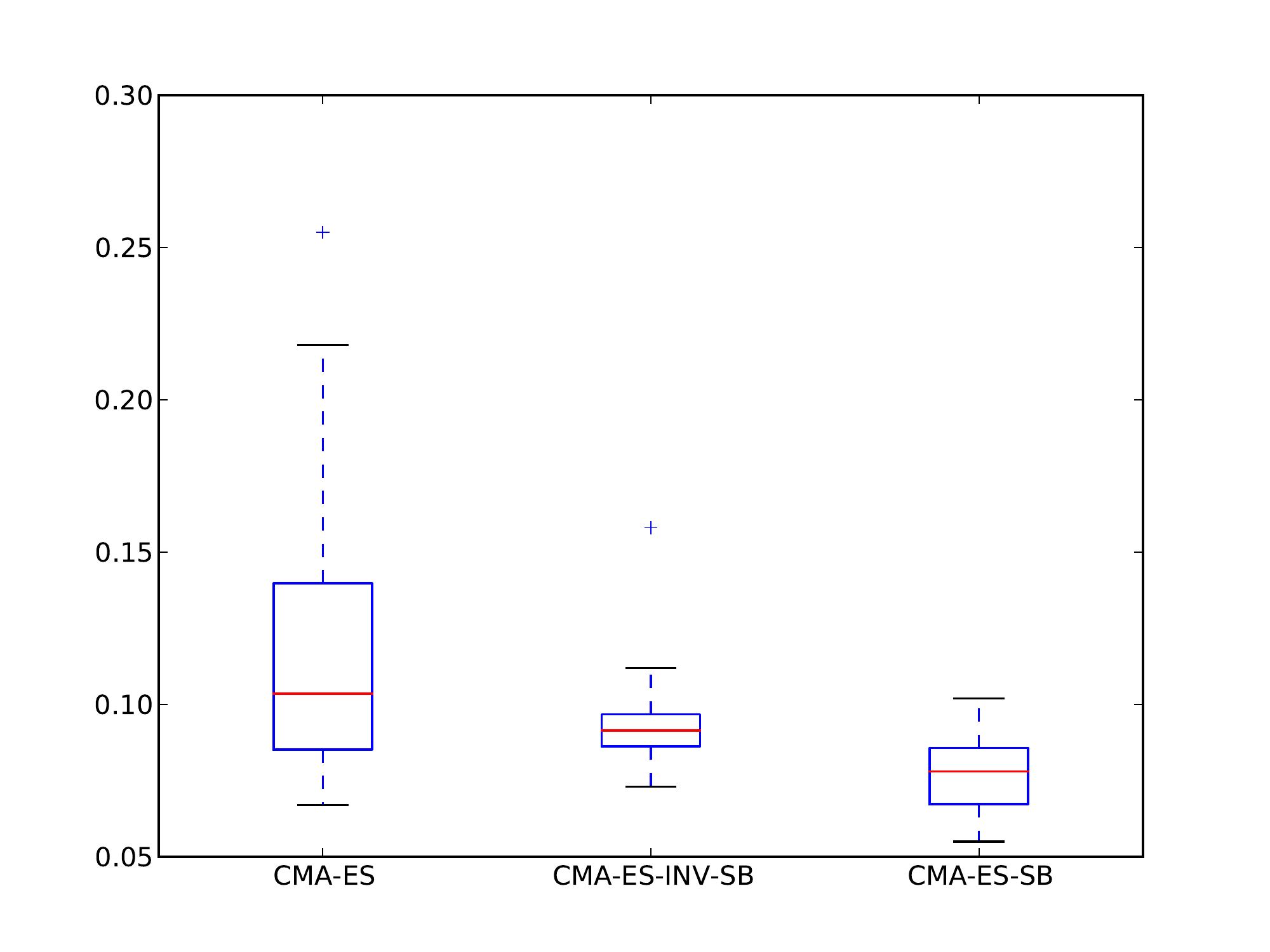}}
      \caption{\label{fig:digits-res-CMAES} \it Classification error rates over ANN-evaluations on the {\bf Digits} dataset using the CMA-ES-variants.}
   \end{center}
\end{figure*}
All pairwise differences prove to be statistically significant. Again, DE-SB and CMA-ES-SB are the fastest methods.
\clearpage

\subsection{Ideal separation}
In this Section, we compare the ideal separation to the proposed approximations. Since the complexity of the brute force method for the ideal separation is exponential, we restrict the experiments to small networks as used in the problems {\bf syn5}, {\bf sinc} and {\bf inc-sinc}. It can be seen that the results are almost identical. \\
\begin{figure*}[h!]
   \begin{center}
      \scalebox{0.26}{\includegraphics{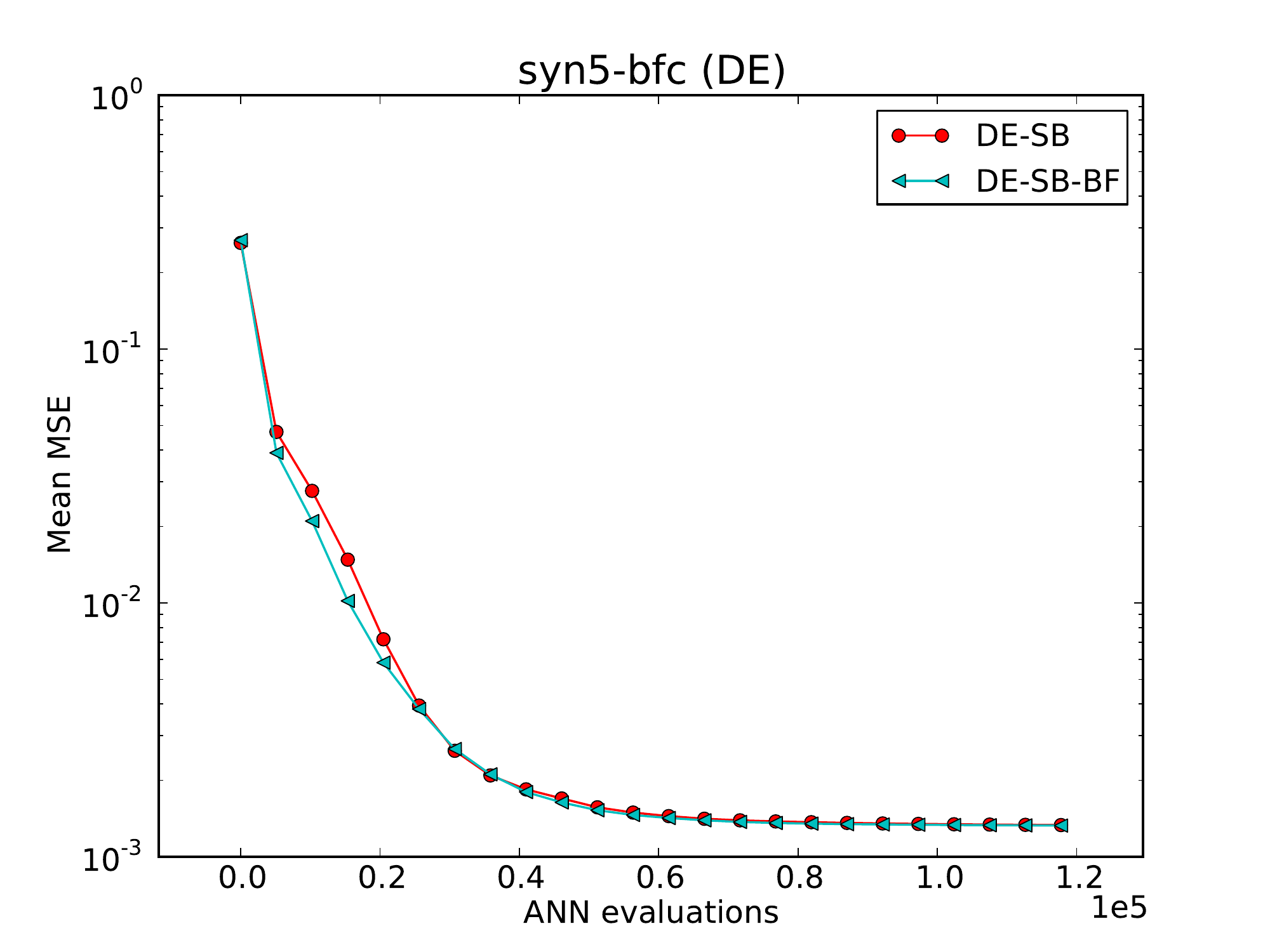}}\hspace{-0.6cm}
      \scalebox{0.26}{\includegraphics{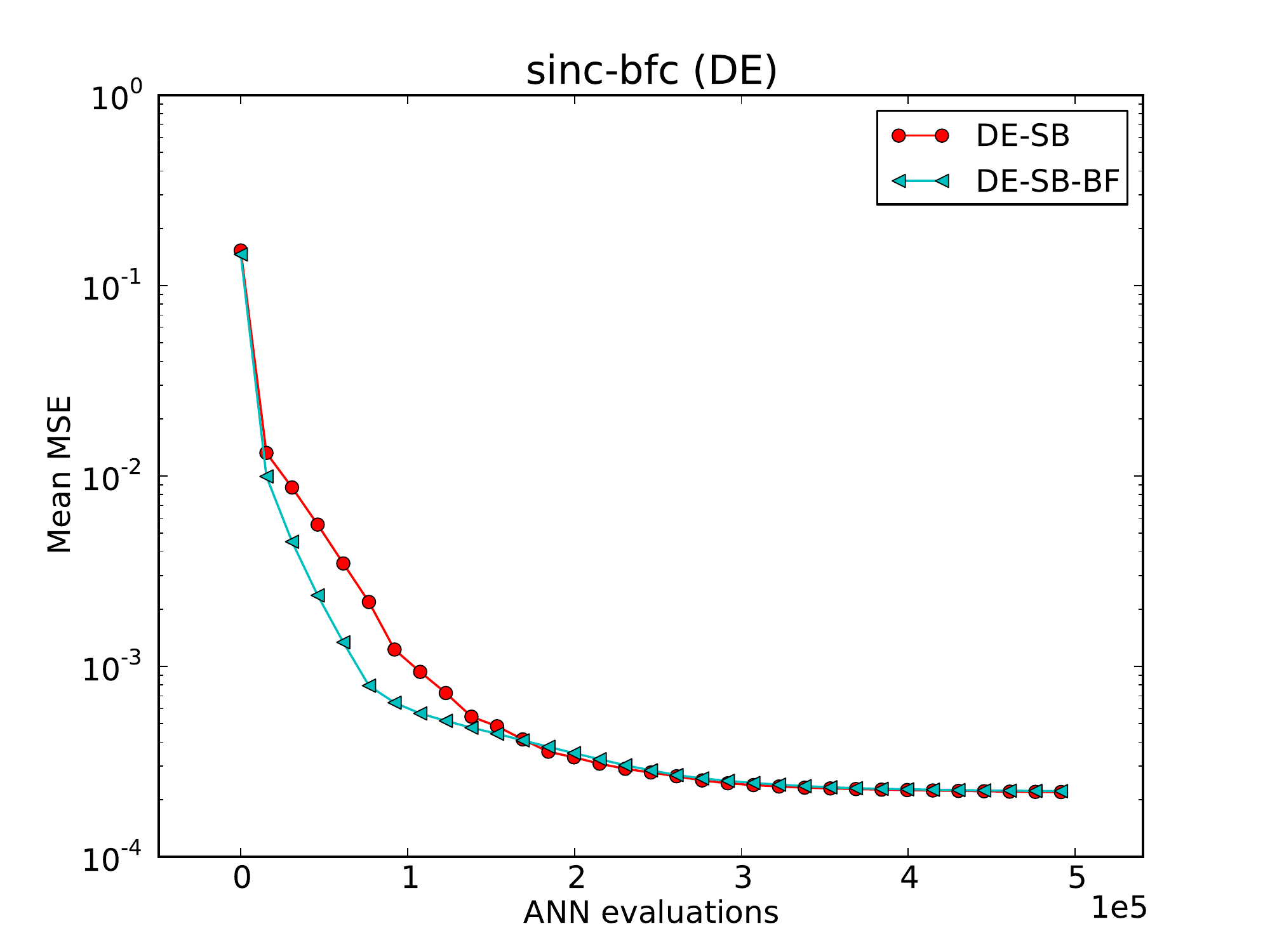}}\hspace{-0.6cm}
      \scalebox{0.26}{\includegraphics{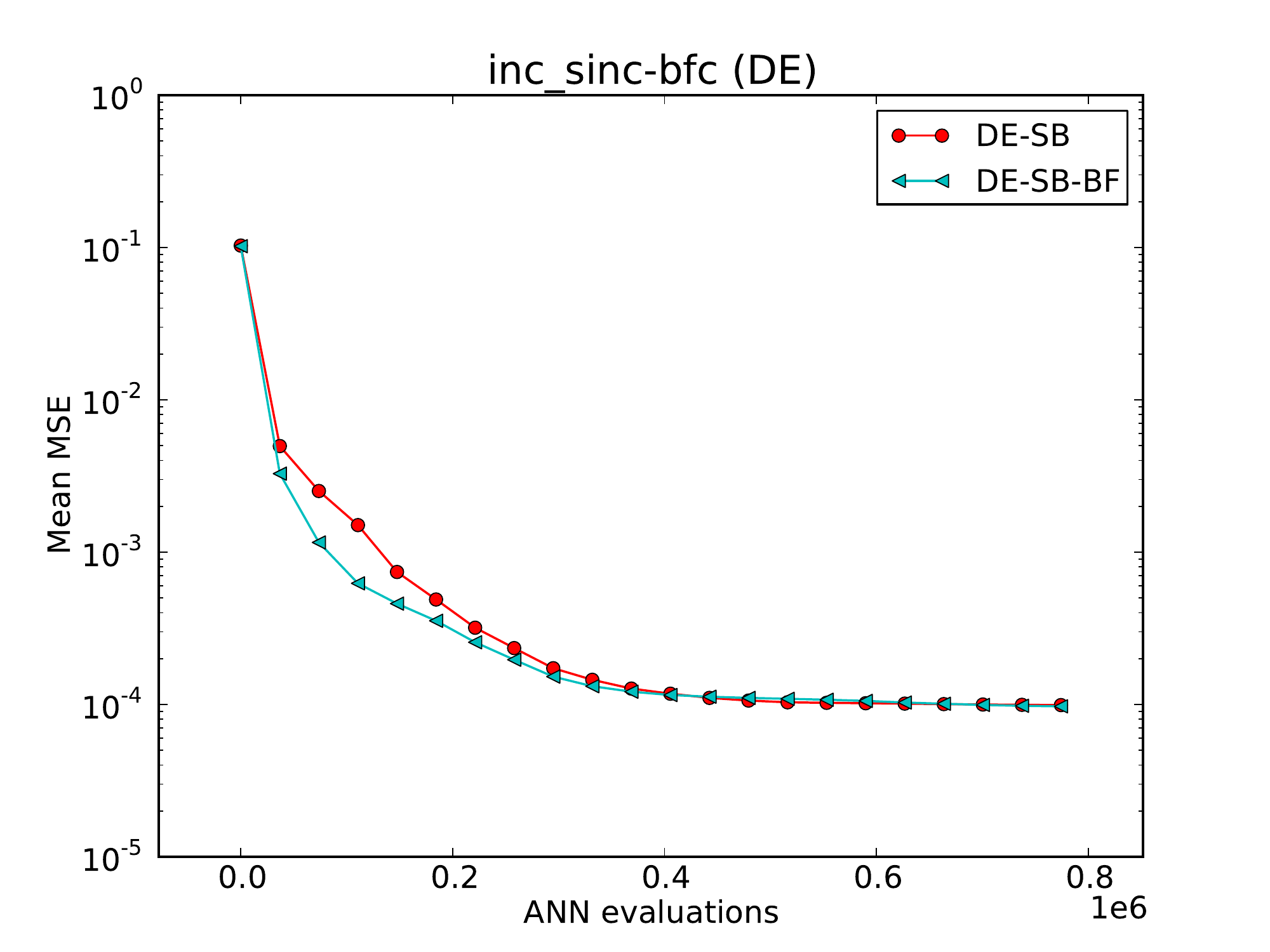}}\\
      \scalebox{0.26}{\includegraphics{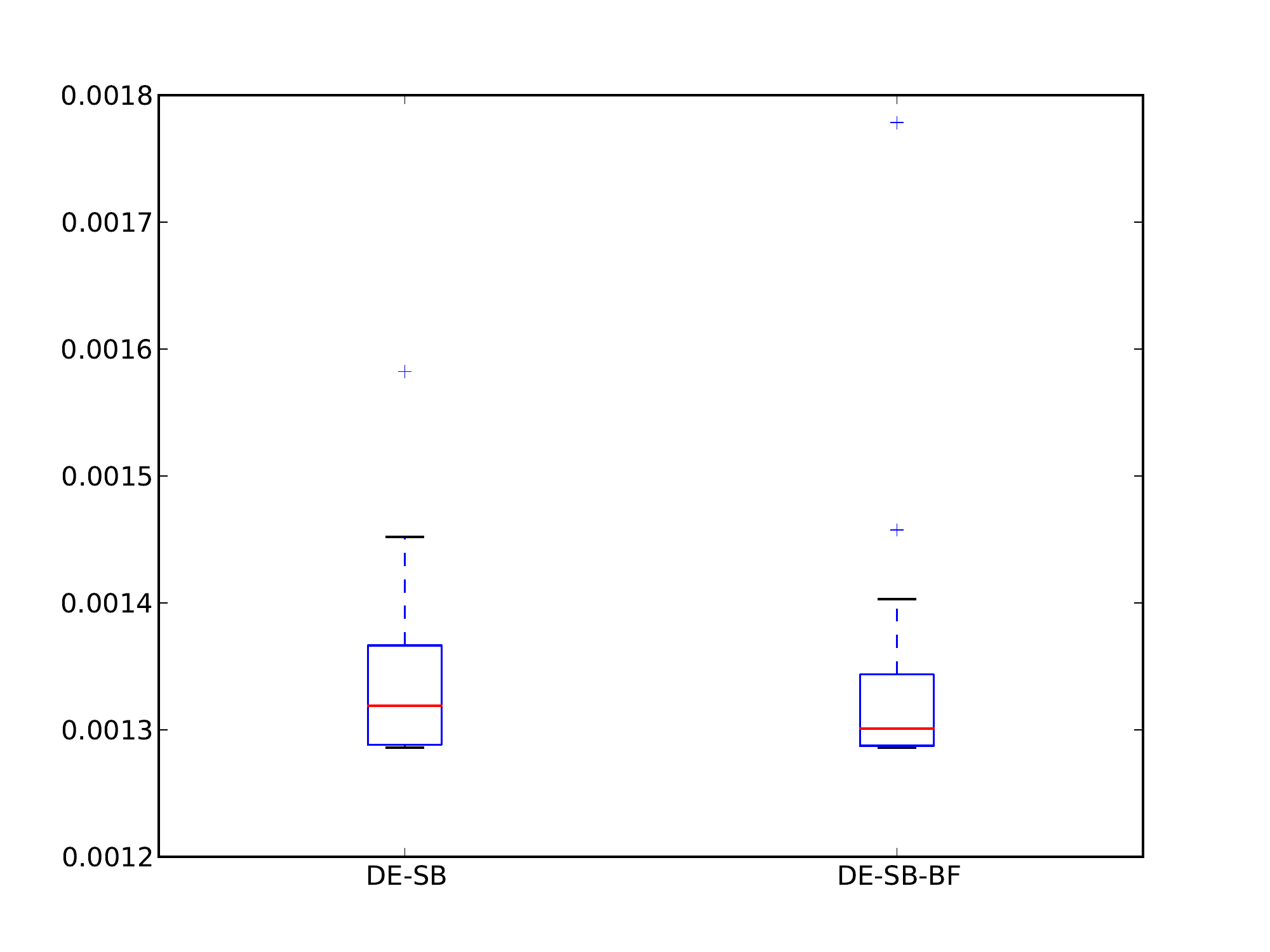}}\hspace{-0.6cm}
      \scalebox{0.26}{\includegraphics{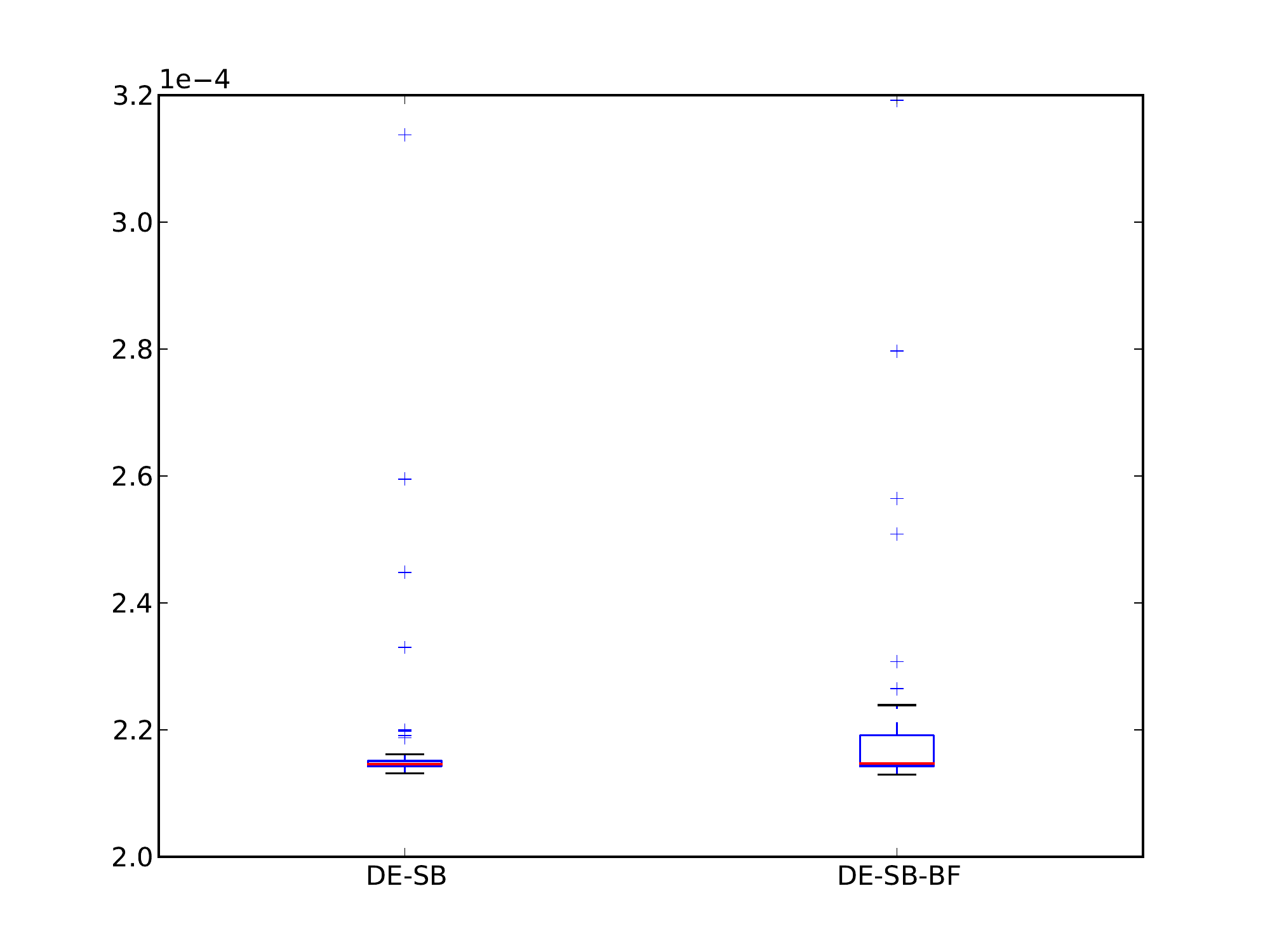}}\hspace{-0.6cm}
      \scalebox{0.26}{\includegraphics{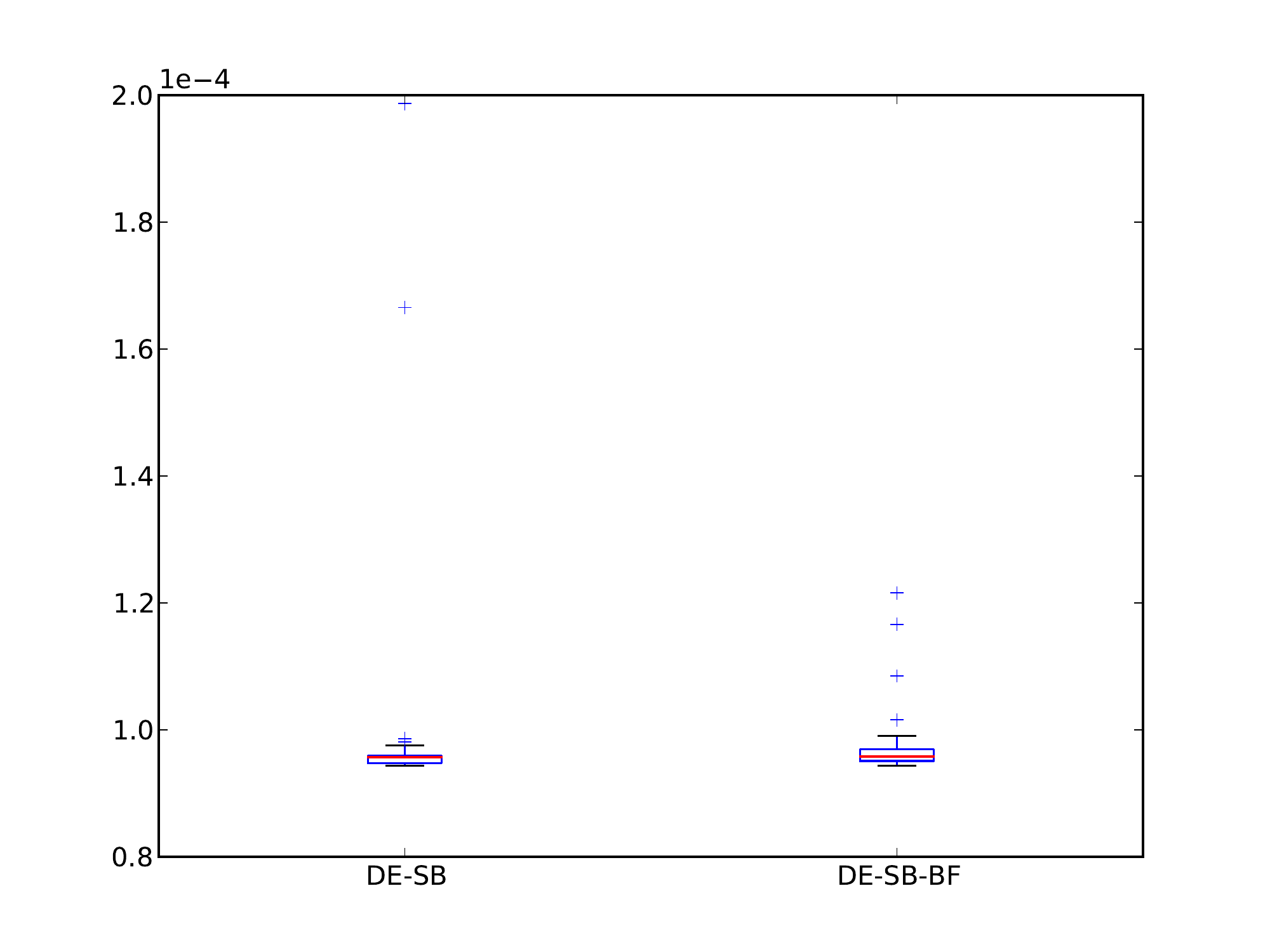}}
      \caption{\label{fig:ideal-de} \it Comparing DE-SB with DE using ideal separation by brute force symmetry breaking (DE-SB-BF) on the {\bf syn5}, {\bf sinc} and {\bf inc-sinc} datasets.}
   \end{center}
\end{figure*}
\begin{figure*}[h!]
   \begin{center}
      \scalebox{0.26}{\includegraphics{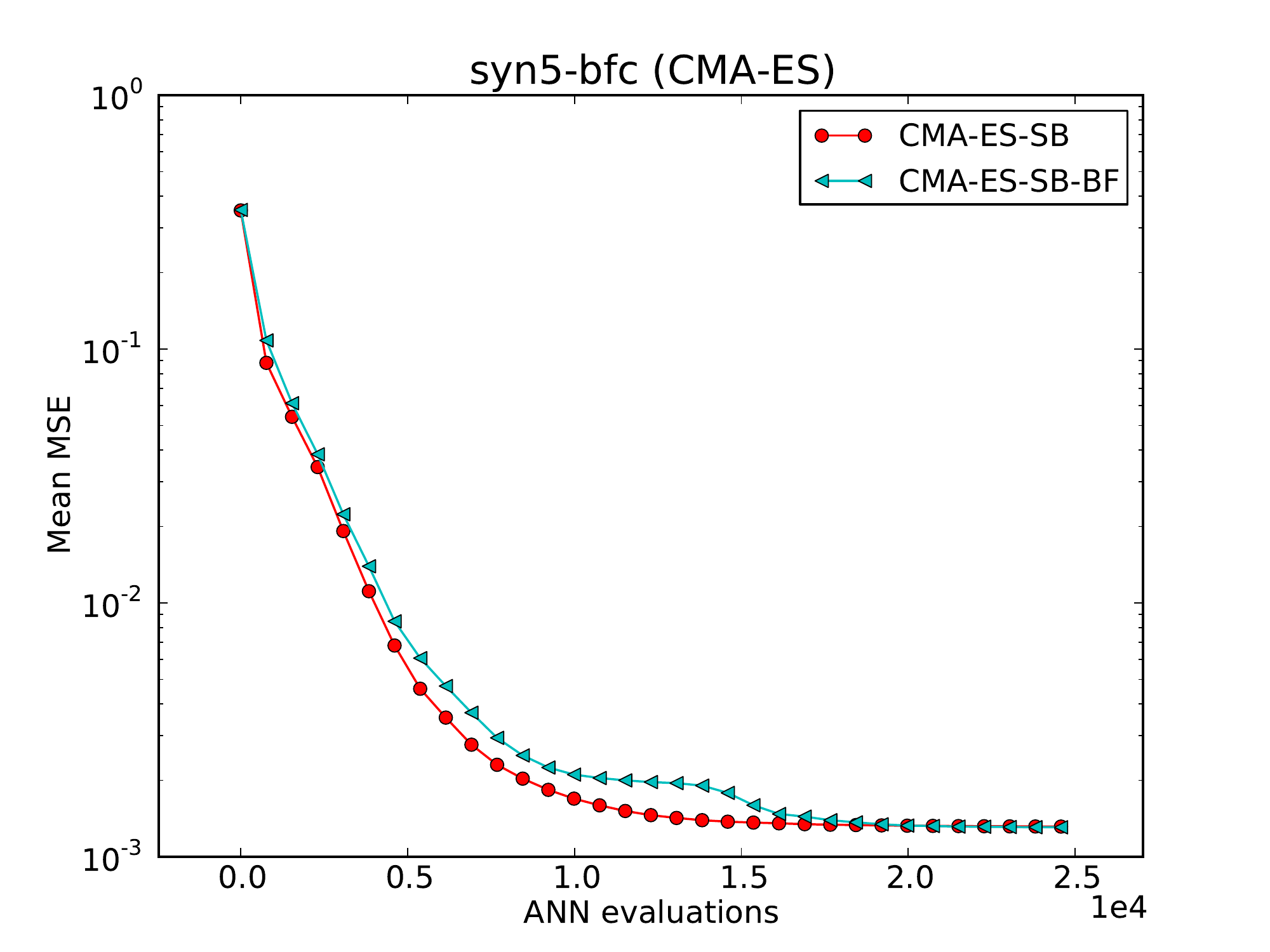}}\hspace{-0.6cm}
      \scalebox{0.26}{\includegraphics{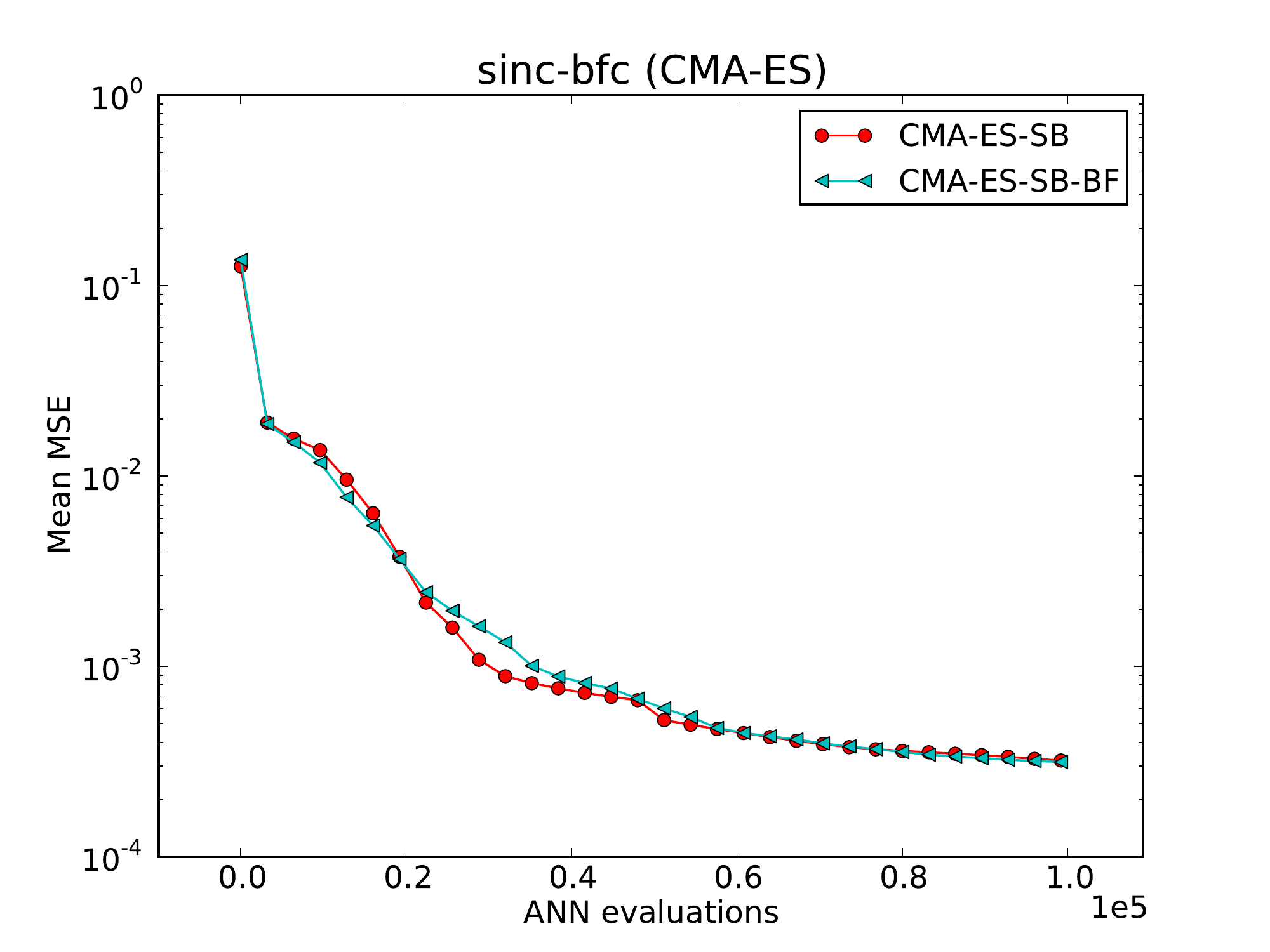}}\hspace{-0.6cm}
      \scalebox{0.26}{\includegraphics{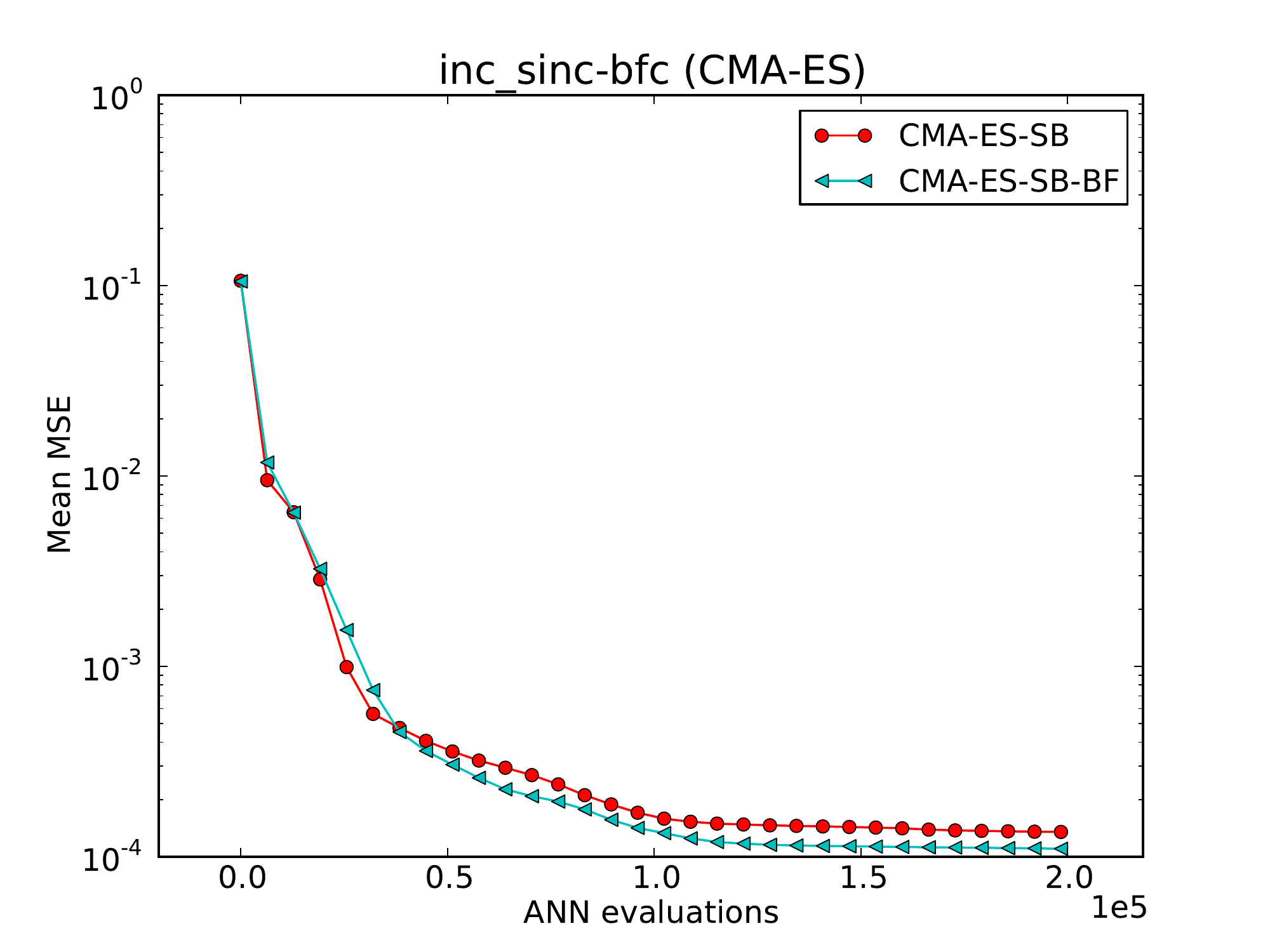}}\\
      \scalebox{0.26}{\includegraphics{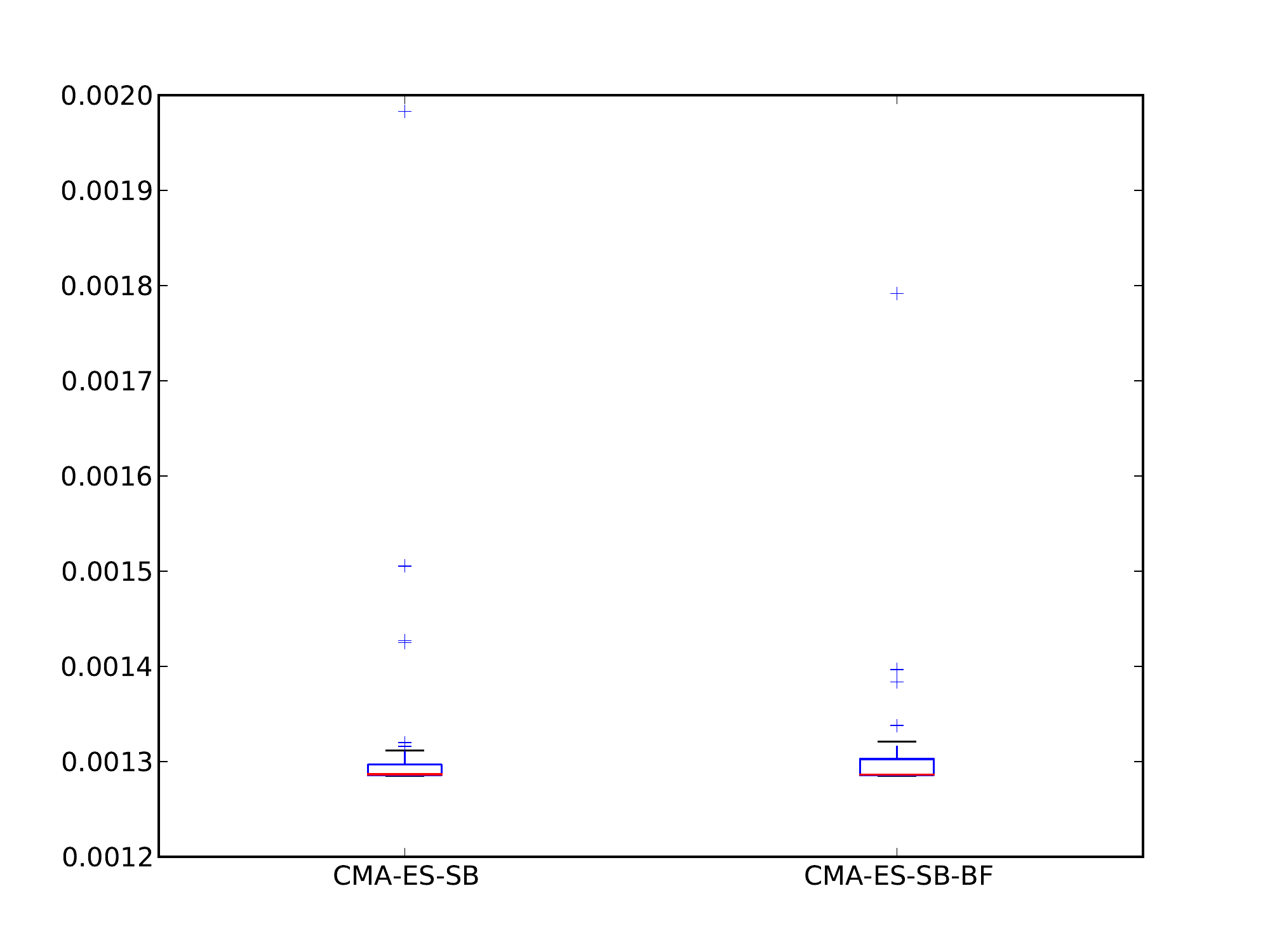}}\hspace{-0.6cm}
      \scalebox{0.26}{\includegraphics{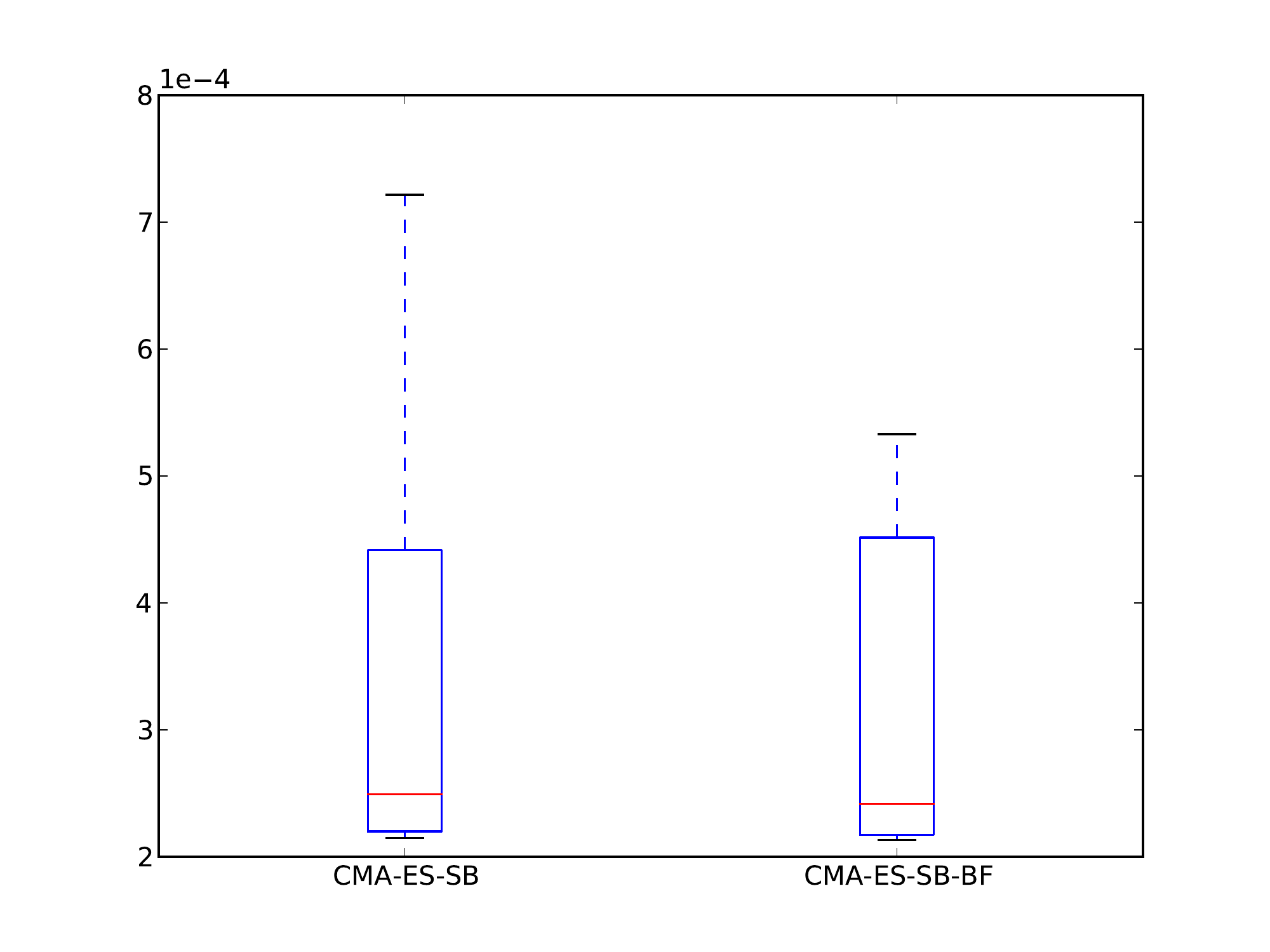}}\hspace{-0.6cm}
      \scalebox{0.26}{\includegraphics{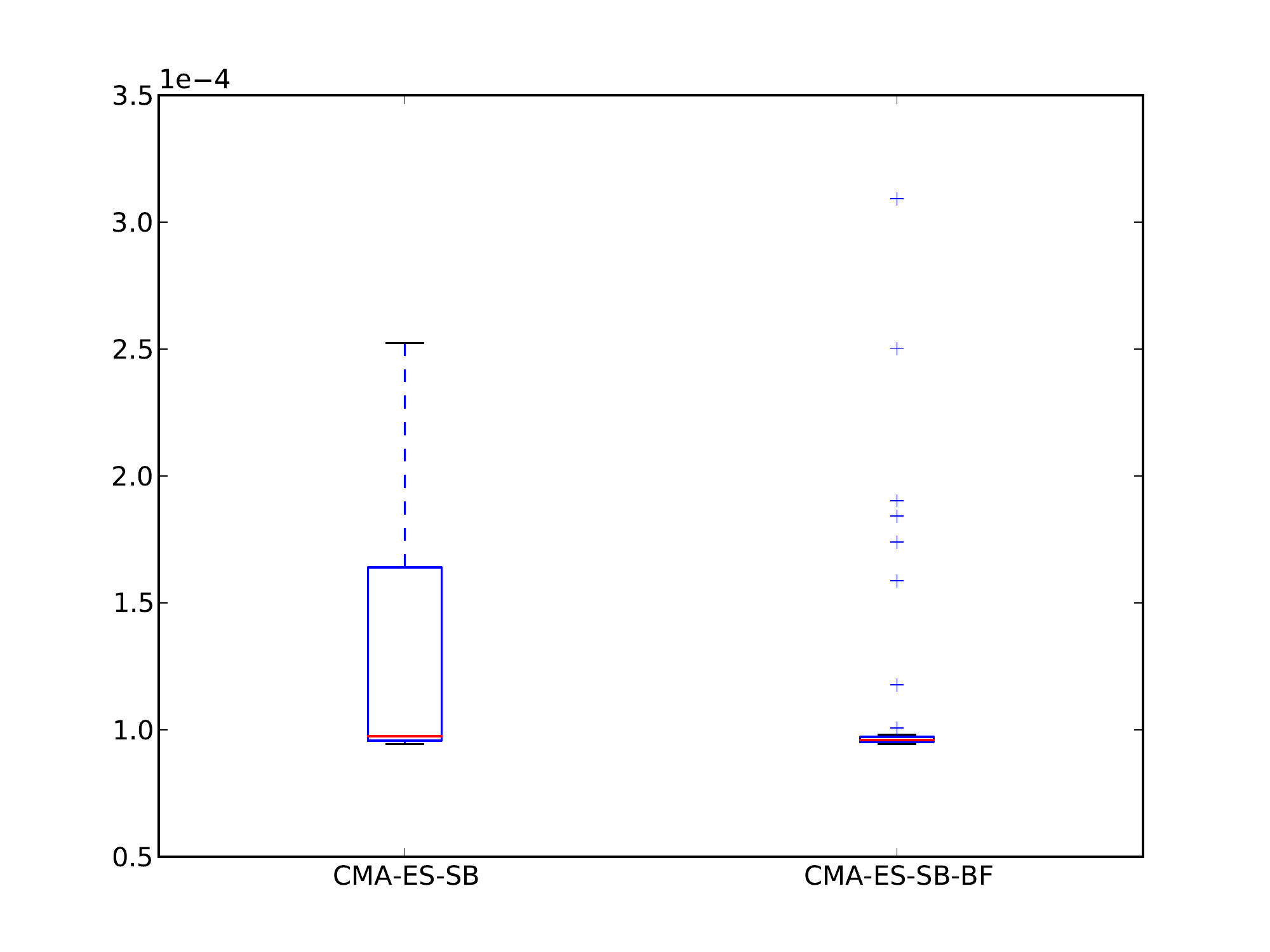}}
      \caption{\label{fig:ideal-cmaes} \it Comparing CMA-ES-SB with CMA-ES using ideal separation by brute force symmetry breaking (CMA-ES-SB-BF) on the {\bf syn5}, {\bf sinc} and {\bf inc-sinc} datasets.}
   \end{center}
\end{figure*}
\clearpage
\section{Conclusions}~\label{sec:discussion}
The problem of symmetries in ANN-parameter space is a well known problem resulting in important complication in the training of ANN's. 
However, a detailed investigation of this problem for Evolutionary Algorithms other than Genetic Algorithms is missing in the literature. Furthermore, there are contradictionary results about the efficacy of symmetry breaking methods in the performance of the global search. We show that a possible explanation for this situation is the use of symmetry breaking methods which are invariant to the global optimum and therefore can only be effective on a limited number of problems. Furthermore, we show theoretically and illustrate experimentally, that the application of global optimum invariant symmetry breaking may even lead to inferior performance. To circumvent these problems, we propose methods for global optimum variant symmetry breaking approaches for Differential Evolution (DE) and Covariance Matrix Adaptation Evolution Strategies (CMA-ES), which are two popular, robust and state-of-the-art global optimization methods. 

Experimental studies conducted on fixed topology feedforward neural networks indicate a significant improvement over standard DE and CMA-ES techniques in terms of global convergence speed. Further comparisons of the proposed approach with a common global optimum invariant symmetry breaking approach support our hypotheses. 

Based on the obtained results, we conclude that other global optimization based methods may also benefit from the use of the proposed global optimum variant symmetry breaking. Further research is required to adapt the proposed approach to other techniques to improve their performance.\\

The proposed method can be tested and verified using the open source C++ Monte Carlo Machine Learning Library (MCMLL), which is available under the GNU GPLv2 license. The website of the library can be found on: \href{http://mcmll.sourceforge.net}{mcmll.sourceforge.net}. The project website is available at \href{http://sourceforge.net/projects/mcmll/}{sourceforge.net/projects/mcmll}.
\small{
\bibliographystyle{plain}
\bibliography{myrefs}
}
\end{document}